\definecolor{shadecolor}{rgb}{0.92,0.92,0.92}
\theoremstyle{plain}
\newtheorem{theorem}{Theorem}[section]
\newtheorem{proposition}[theorem]{Proposition}
\newtheorem{lemma}[theorem]{Lemma}
\newtheorem{corollary}[theorem]{Corollary}
\theoremstyle{definition}
\newtheorem{assumption}[theorem]{Assumption}
\theoremstyle{remark}
\newtheorem{remark}[theorem]{Remark}
\newcommand{\RED}[1]{{\color{red} #1}}
\newcommand{\BLUE}[1]{{\color{blue} #1}}
\newcommand{\xki}{x_{k,i}}
\newcommand{\xkz}{x_{k,0}}
\newcommand{\mlki}{m_{l,k,i}}
\newcommand{\vlki}{v_{l,k,i}}
\newcommand{\mlkz}{m_{l,k,0}}
\newcommand{\vlkz}{v_{l,k,0}}
\newcommand{\AAA}{\sqrt{\frac{2\rho_3^2}{\beta_2^n}}}
\newcommand{\Ex}{\mathbb{E}}
\newcommand{\Ext}{\mathbb{E}_k}
\newcommand{\btwo}{\beta_2}
\newcommand{\partialtauki}{\partial_{l} f_{\tau_{k, i}}}
\newcommand{\betabeta}{(\beta_1,\beta_2)}
\newcommand{\functionclass}{\mathcal{F}_{L, D_0, D_1}^{n,f^*}  (\mathbb{R}^d)}
\title{Adam Can Converge Without Any Modification On Update Rules}
\author{%
  Yushun Zhang$^{13}$, Congliang Chen$^1$, Naichen Shi$^2$, Ruoyu Sun$^{13}$\thanks{Correspondence author} , 
    Zhi-Quan Luo$^{13}$
   \\
   $^1$The Chinese University of Hong Kong, Shenzhen, China $^2$University of Michigan, US\\
    $^3$Shenzhen Research Institute of Big Data
  \\
  \texttt{\{yushunzhang,congliangchen\}@link.cuhk.edu.cn}, naichens@umich.edu \\
  \texttt{\{sunruoyu,luozq\}@cuhk.edu.cn}
}
\begin{document}

\maketitle

\begin{abstract}

  Ever since \citet{reddi2019convergence} pointed out the divergence issue of Adam, many new variants  have been designed to obtain convergence. However, vanilla Adam remains exceptionally popular and it works well in practice. Why is there a gap between theory and practice? We point out there is a mismatch between the settings of theory and practice: \citet{reddi2019convergence} pick the problem after  picking the hyperparameters of Adam, i.e., $(\beta_1,\beta_2)$; while practical applications often fix the problem first and then tune $(\beta_1,\beta_2)$. Due to this observation, we conjecture that the empirical convergence can be theoretically justified, only if we change the order of picking the problem and hyperparameter.  In this work, we confirm this conjecture.  We prove that, when the 2nd-order momentum parameter $\beta_2$ is large and 1st-order momentum parameter $\beta_1 < \sqrt{\beta_2}<1$, Adam converges to the neighborhood of critical points. The size of the neighborhood is propositional to the variance of stochastic gradients. Under an extra condition (strong growth condition), Adam converges to critical points. It is worth mentioning that our results cover a wide range of hyperparameters: as $\beta_2$  increases, our convergence result can cover any $\beta_1 \in [0,1)$ including $\beta_1=0.9$, which is the default setting in deep learning libraries. To our knowledge, this is the first result showing that Adam can converge {\it without any modification} on its update rules. Further, our analysis does not require assumptions of bounded gradients or bounded 2nd-order momentum. When $\beta_2$ is small, we further point out  a large region of  $(\beta_1,\beta_2)$ combinations where  Adam can diverge to infinity. Our divergence result considers the same setting (fixing the optimization problem ahead) as our convergence result, indicating that there is a phase transition from divergence to convergence when increasing $\beta_2$. These positive and negative  results provide suggestions on  how to tune Adam hyperparameters: for instance,  when Adam does not work well, we suggest tuning up $\beta_2$ and trying $\beta_1< \sqrt{\beta_2}$.

\end{abstract}

\vspace{-6mm}
\section{Introduction}
\label{section:intro}
\vspace{-1mm}

Modern machine learning tasks often aim to solve the following finite-sum problem.
\begin{equation} \label{finite_sum}
  \min _{x \in \mathbb{R}^{d}} f(x)=\sum_{i=0}^{n-1} f_{i}(x),
\end{equation}
where $n$ is the number of samples or mini-batches and $x$ denotes the trainable parameters. 
In deep learning, Adam \citep{kingma2014adam} is one of the most popular algorithms for solving \eqref{finite_sum}.
It has been applied  to various machine learning domains such as natural language processing (NLP) \citep{vaswani2017attention,brown2020language,devlin2018bert}, generative adversarial networks (GANs) \citep{radford2015unsupervised,isola2017image,zhu2017unpaired}
and computer vision (CV) \citep{dosovitskiy2021an}.
Despite its prevalence,
\citet{reddi2019convergence} point out that 
Adam can diverge with a wide range of hyperparameters. A main result in  \citep{reddi2019convergence} states that \footnote{
We formally re-state their results in Appendix \ref{appendix:reddi}.}:
\begin{center}
\vspace{-1mm}
   {\it  For any $\beta_1, \beta_2$ s.t. $0 \leq \beta_1 < \sqrt{\beta_2} <1$, there exists a problem such that Adam diverges. } 
    \vspace{-1mm}
\end{center}
Here, $\beta_1$ and $\beta_2$ are the hyperparameter to control Adam's 1st-order and 2nd-order momentum.
More description of Adam can be seen in Algorithm \ref{algorithm} (presented later in Section \ref{section:preliminaries}). 
Ever since \citep{reddi2019convergence} pointed out the divergence issue, many new variants have been designed. For instance, 
AMSGrad \citep{reddi2019convergence} enforced the adaptor $v_t$ (defined later in Algorithm \ref{algorithm}) to be non-decreasing; AdaBound \citep{luo2018adaptive} imposed constraint $v_t \in [C_l, C_u]$ to ensure the boundedness on effective stepsize.  We introduce more variants in Appendix \ref{appendix:more_related}.


On the other hand, counter-intuitively, vanilla Adam remains exceptionally popular (see evidence at \citep{adamcitation}).  Without any modification on its update rules, Adam works well in practice. 
 Even more mysteriously, we find that the commonly reported hyperparameters actually satisfy the divergence condition stated  earlier. 
 For instance, 
\citet{kingma2014adam} claimed that  $\betabeta=(0.9,0.999)$ is a ``good choice for the tested machine learning problems" and it is indeed the default setting in deep learning libraries.
In super-large models GPT-3 and Megatron \citep{brown2020language, smith2022using}, $\betabeta$ is chosen to  be $(0.9,0.95)$.
GAN researchers (e.g. \citet{radford2015unsupervised,isola2017image}) use $\betabeta=(0.5,0.999)$. 
All these hyperparameters live  in the divergence region  {\small $\beta_1 < \sqrt{\beta_2}$}.
Surprisingly, instead of observing the divergence issue, these hyperparameters achieve good performances and they actually show the sign of convergence. 

 
 Why does Adam work well despite its theoretical divergence issue? Is there any mismatch between deep learning problems and the divergent example? 
We take a closer look into the divergence example and find out the mismatch {\it does} exist. 
In particular, we notice an important (but often ignored) characteristic of the divergence  example:
 \citep{reddi2019convergence} picks   $\betabeta$ {\it before} picking the sample size $n$.
 Put  in  another way, to construct the divergence example, they change $n$ for different $\betabeta$.
   For instance,  for $\betabeta=(0,0.99)$, they use one $n$ to construct the  divergent example; for $\betabeta = (0, 0.9999)$, they use  another $n$ to construct another divergent example. 
On the other hand, in practical applications of Adam listed above, 
practitioners tune the hyperparameters $\betabeta$ \textit{after} the sample size $n$ is fixed. 
So there is a gap between the setting of theory and practice: the order of picking $n$ and $\betabeta$ is different.

Considering the good performance of Adam under fixed $n$, we conjecture that Adam can converge in this setting. 
Unfortunately, the behavior of vanilla Adam is far less studied than its variants (perhaps due to the criticism of divergence).
 To verify this conjecture, 
 we run experiments for different choices of  $(\beta_1,\beta_2)$ on a few tasks.  First, we run Adam for a convex function \eqref{counterexample1} with fixed $n$ (see the definition in Section \ref{sec_div}). Second, we run Adam for the classification problem on data MNIST and CIFAR-10 with fixed batchsize. 
We observe some interesting phenomena in Figure \ref{fig:intro_paper} (a), (b) and (c). 

First,  
when $\beta_2$ is large, the optimization error is small for almost all values of $\beta_1$.  Second, when $\beta_1,\ \beta_2$ are both small, there is a 
red region with relatively large error. 
On MNIST, CIFAR-10, the error in the red region is  increased by 1.4 times than that in the blue region. The situation is a lot worse on function \eqref{counterexample1} (defined later in Section \ref{sec_div}): the error in the red region is 70 times higher. 

\begin{figure*}[htbp]
  \vspace{-4mm}
    \centering
    \subfigure[Function \eqref{counterexample1}]{
    \begin{minipage}[t]{0.25\linewidth}
    \centering
    \includegraphics[width=\linewidth]{./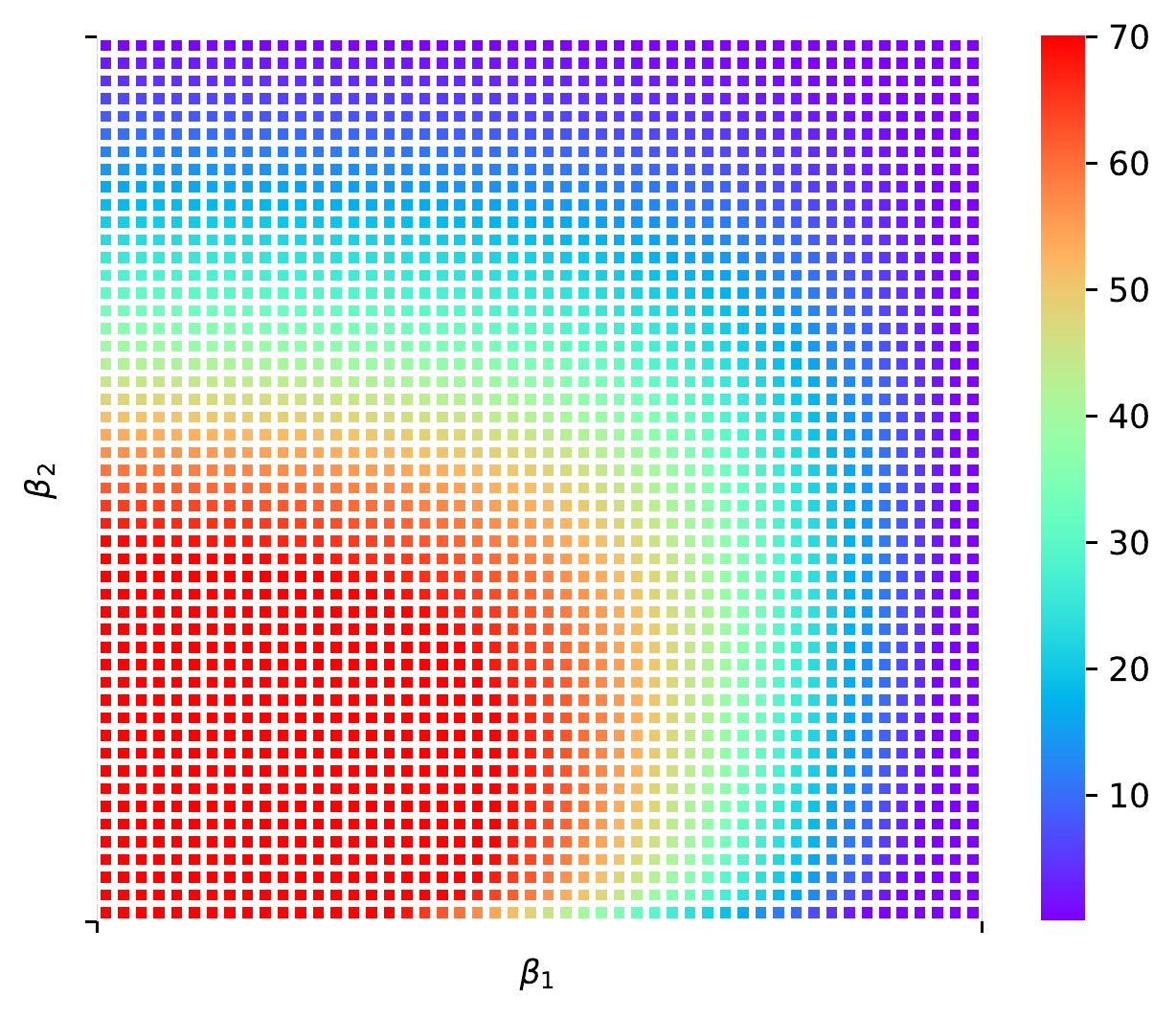}
    \end{minipage}%
    }%
    \subfigure[MNIST]{
      \begin{minipage}[t]{0.25\linewidth}
      \centering
    \includegraphics[width=\linewidth]{./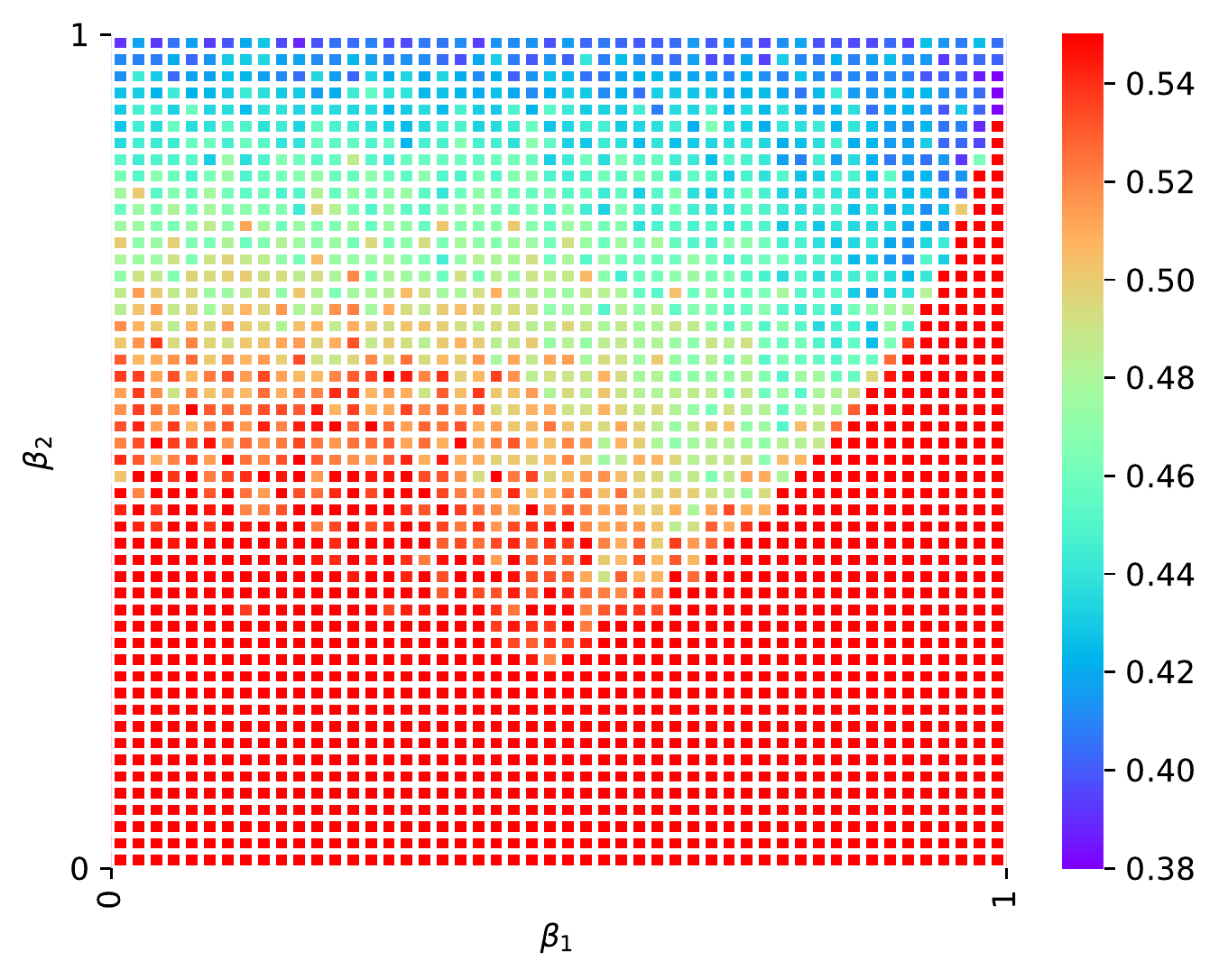}
      \end{minipage}%
      }%
    \subfigure[CIFAR-10]{
      \begin{minipage}[t]{0.25\linewidth}
      \centering
    \includegraphics[width=\linewidth]{./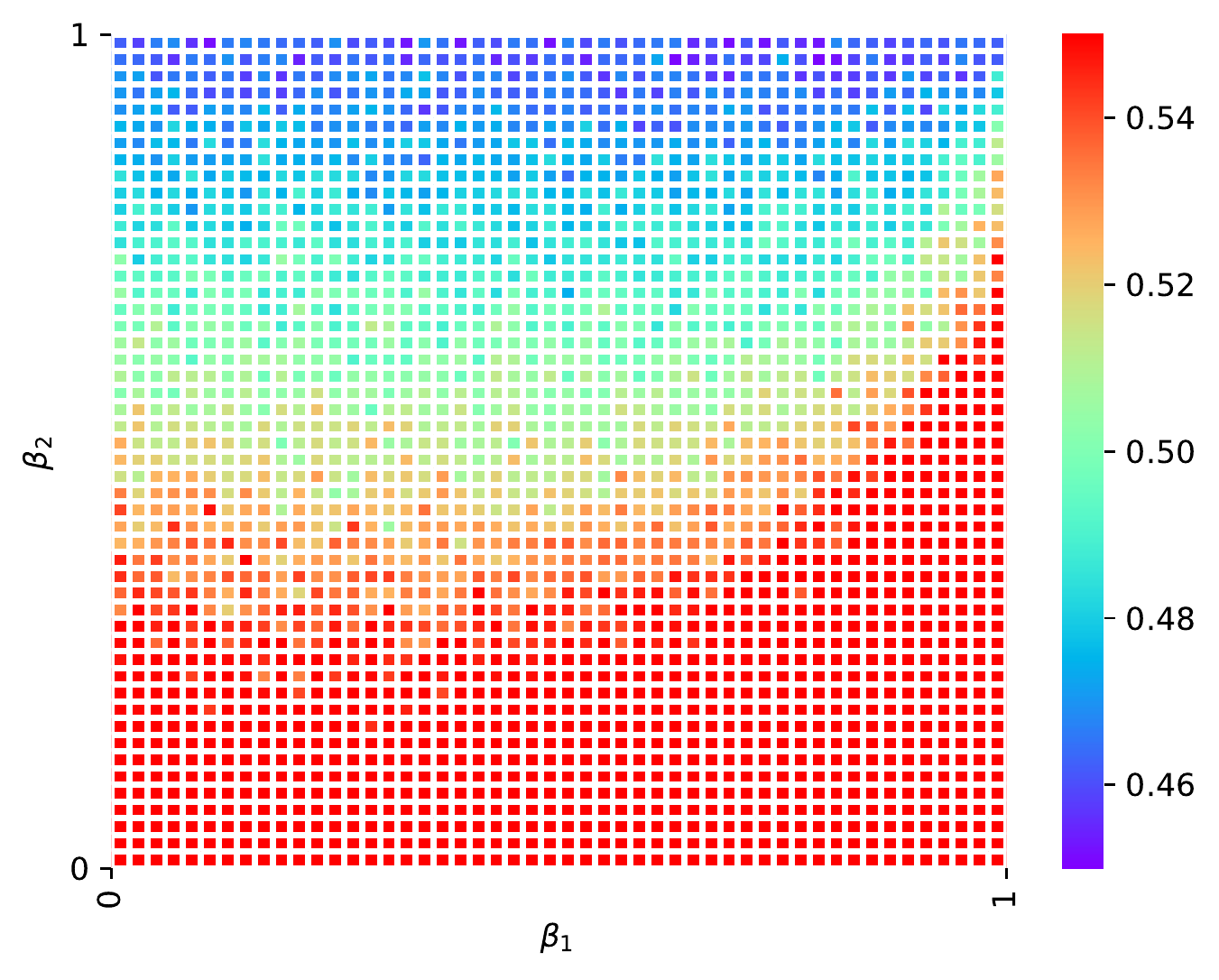}
      \end{minipage}%
      }%
    \subfigure[Our contribution]{
      \begin{minipage}[t]{0.25\linewidth}
      \centering
   \includegraphics[width=\linewidth]{./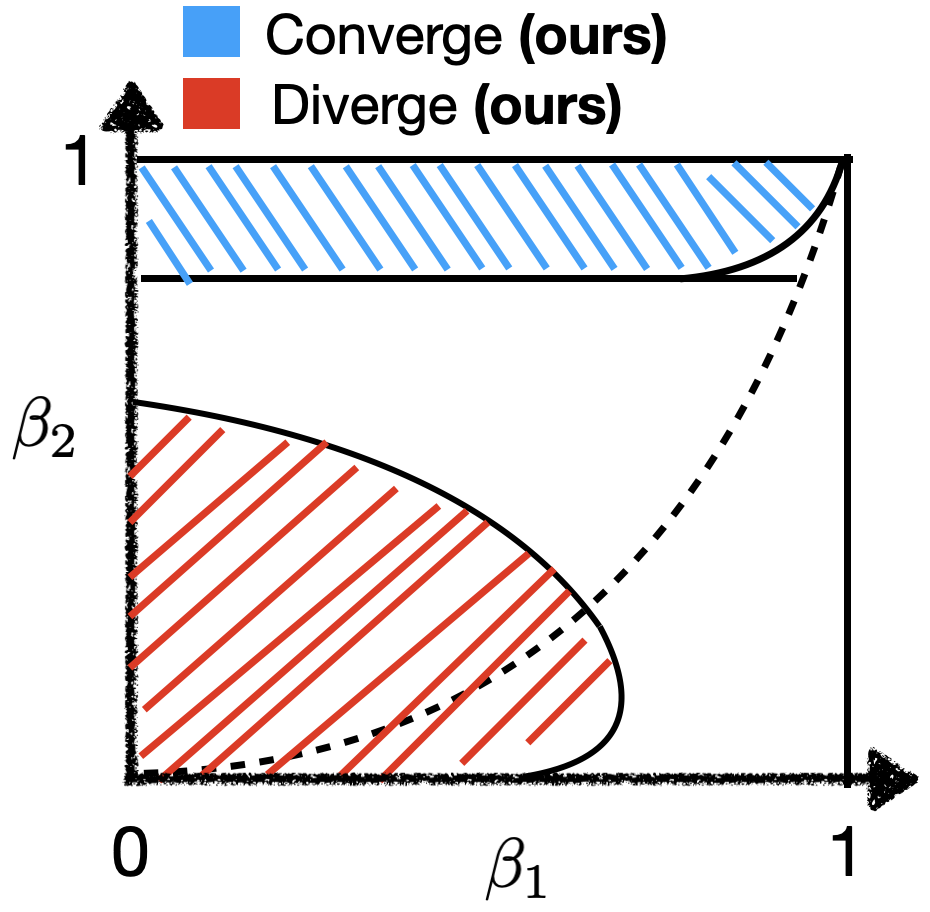}
      \end{minipage}%
      }%
    \centering
    \vspace{-4mm}
    \caption{ {\small {\bf (a), (b), (c)}: The performance of Adam on different tasks. For each task, we show the results with $\beta_1$ and  $\beta_2$ in grids $\{(k_1/50,k_2/50)| k_1 = 0,\cdots,49,k_2=0, \cdots, 49\}$. {\bf (a)}: the optimality gap $x-x^*$ on the convex function \eqref{counterexample1}. {\bf (b) (c)}: The training loss on MNIST and CIFAR-10.
    {\bf (d)}: An illustration of our contribution in $\betabeta$ phase diagram. The shape of the region follows the solution to our analytic conditions. The size of the region depends on $n$. The dotted curve satisfies $\beta_1 =\sqrt{\beta_2}$. In all figures, $n$ is fixed before picking $\betabeta$. }}%
    \label{fig:intro_paper}
\vspace{-1.5mm}
\end{figure*}

While Adam's performances seem unstable in the red region, we find that {\it Adam always performs well in the top blue region in Figure \ref{fig:intro_paper}.} This seems to suggest that Adam can converge without any algorithmic modification, as long as $\beta_1$ and $\beta_2$ are chosen properly.
We ask  the following question: 

\vspace{-1mm}
\begin{center}
\vspace{-2mm}
        {\it Can  Adam provably converge without any modification on its update rules?} 
\vspace{-1mm}
\end{center}
\vspace{-1mm}






In this work, we theoretically explore this question.
Our contributions are visualized in Figure \ref{fig:intro_paper} (d).  We prove the following results when $n$ is fixed (or more rigorously, when the function class is fixed):

\vspace{-2mm}
\begin{itemize} 
    \item  We prove that when $\beta_2$ is large enough
    and $\beta_1< \sqrt{\beta_2}$,  Adam converges to  the neighborhood of critical points. The size of the neighborhood is propositional to the variance of stochastic gradients.  With an extra condition (so-called strong growth condition), we prove that Adam can converge to critical points.
    As $\beta_2$  increases, these results can cover any momentum parameter $\beta_1 \in [0,1)$ including the default setting $\beta_1=0.9$. In particular, our analysis does {\it not} require bounded gradient assumption.
    
    \item 
  We study the  divergence issue of small-$\beta_2$ Adam.
    We prove that: for any fixed $n$ (or more rigorously, for any fixed function class), there exists a function such that, Adam diverges to infinity when $\betabeta$ is picked in the red region
    in Figure \ref{fig:intro_paper} (d). The size of the red region increases with $n$. The shape of the region follows the solution to our analytic conditions. 
    

\item 
We emphasize a few characteristics of our results. 
\textbf{(1) phase transition}. The divergence  result considers the same setting as our convergence result, indicating that there
is a phase transition from divergence to  convergence when changing $\beta_2$.
\textbf{(2) problem-dependent bounds}. 
Our convergence and divergence regions  of $\betabeta$ are problem-dependent,
which is drastically different from \citep{reddi2019convergence}  which 
 established the problem-independent worst-case choice of $\betabeta$. 
\textbf{(3) non-asymptotic characterization}. the ``divergence region'' of $\betabeta$ expands as $n$ increases and converges to the whole region $ [0,1)^2$ as $n$ goes to infinity, 
 which recovers (actually stronger than) the problem-independent divergence result  of \citep{reddi2019convergence} that requires $\beta_1 < \sqrt{\beta_2}$. 
 In this sense, we can view the divergence result of \citep{reddi2019convergence} 
as  an asymptotic characterization  of the divergence region (as $n \rightarrow \infty$)
 and our divergence  result as a non-asymptotic characterization (for any fixed $n$). We provide more discussion in Section \ref{sec:reconcile}.
 
 \item  Our positive and negative results  can provide suggestions for tuning $\beta_1$ and $\beta_2$: for instance,when Adam does not work well, we suggest tuning up $\beta_2$  and trying $\beta_1 <\sqrt{\beta_2}$. We provide more tuning suggestions in Appendix \ref{sec:implication}.

\end{itemize}


We believe our results can boost new understandings for Adam.
While  \citet{reddi2019convergence} reveal that ``Adam can diverge", our results show the other side of the coin: when $n$ is fixed (or when function class is fixed), Adam can still converge without any  modification on its update rules.  
Our results suggest that Adam is still a theoretically justified algorithm and practitioners can use it confidently. 



We further emphasize that 
our convergence results can cover any $\beta_1 \in [0,1)$, which allows the algorithm to bring arbitrarily heavy momentum signals.
It turns out that large-momentum Adam is not easy to analyze. 
Even with stronger assumptions like bounded gradient ($\| \nabla f(x) \| < \text{C}, \forall x $), its convergence is not well understood (see related works in Section \ref{section:related}).
To our best knowledge, this is the first result that proves vanilla Adam with any $\beta_1$ can converge {\it without} any assumption of  bounded gradient or bounded 2nd-order momentum. 
The proof contains a new method to handle unbounded momentum in the stochastic non-linear dynamics system. 
We will highlight our technical novelties in Section \ref{sec:proofidea}.

 \vspace{-2mm}
\section{Preliminaries}
 \vspace{-2mm}

\subsection{Review of Adam}
\label{section:preliminaries}
 \vspace{-2mm}

We consider finite-sum problem \eqref{finite_sum}.
We use $x$  to denote the optimization variable. 
We denote $\nabla f_{j}$ as the gradient of $f_{j}$ and let $\circ$ be the component-wise product. The division and square-root operator are component-wise as well. 
We present randomly shuffled Adam in Algorithm \ref{algorithm}.

\vspace{-1mm}
\begin{algorithm}
    \caption{Adam }
    \label{algorithm}
    \begin{algorithmic}
     \STATE Initialize $x_{1,0}  = x_0$, $m_{1,-1}= 
      \nabla f(x_0)$ 
     and $v_{1,-1}= \max_i \nabla f_i(x_0)\circ \nabla f_i(x_0)$.
      \FOR {$k=1\to\infty$}
      \STATE Sample $\{\tau_{k,0},\tau_{k,1},\cdots,\tau_{k,n-1}\}$ as a random permutation of $\{0,1,2,\cdots,n-1\}$
      \FOR  {$i=0 \to n-1$}
      \STATE 
      $m_{k,i}=\beta_1 m_{k,i-1}+\left(1-\beta_1\right)\nabla f_{\tau_{k,i}}(x_{k,i})$ 
      \STATE $v_{k,i}= \beta_2 v_{k,i-1}+\left(1-\beta_2\right)\nabla f_{\tau_{k,i}}(x_{k,i})\circ \nabla f_{\tau_{k,i}}(x_{k,i})$ 
      \STATE $x_{k,i+1}=x_{k,i}-\frac{\eta_{k}}{\sqrt{v_{k,i}}+\epsilon}\circ m_{k,i}$	
       \ENDFOR
      \STATE $x_{k+1,0}=x_{k,n}$; $v_{k+1,-1}=v_{k,n-1}$; $m_{k+1,-1}=m_{k,n-1}$
       \ENDFOR
    \end{algorithmic}
  \end{algorithm}

In Algorithm \ref{algorithm}, $m$ denotes the 1st-order momentum and $v$ denotes the 2nd-order momentum.  they are weighted averaged by hyperparameter  $\beta_{1}, \beta_{2}$, respectively.
Larger $\beta_1$, $\beta_{2}$ will adopt more history information. 
We denote $x_{k, i}, m_{k, i}, v_{k, i} \in \mathbb{R}^{d}$ as the value of $x, m, v$ at the $k$-th outer loop (epoch) and $i$-th inner loop (batch), respectively. 
We choose $\eta_{k}=\frac{\eta_{1}}{\sqrt{nk}}$ as the stepsize. In practice, $\epsilon$ is adopted for numerical stability and it is often chosen to be $10^{-8}$. In our theory, we allow $\epsilon$ to be an arbitrary non-negative constant including $0$.

In the original version of Adam in \citep{kingma2014adam}, it has an additional ``bias correction'' step. This ``bias correction'' step can be implemented by changing the stepsize $\eta_k$ into $\hat{\eta_k}=\frac{\sqrt{1-\beta_2^k}}{1-\beta_1^k} \eta_k$ and using zero initialization. In Algorithm \ref{algorithm}, the ``bias correction'' step is replaced by a special initialization, which  corrects the bias as well. Note that  $\hat{\eta_k} \in [\sqrt{1-\beta_2} \eta_k,\frac{1}{1-\beta_1}\eta_k]$ is well-bounded near $\eta_k$, so $\eta_k$ and $\hat{\eta_k}$ brings the same convergence rate. In addition, 
as the effect of initialization becomes negligible when the training progresses, Adam with zero \& our initialization will have the same asymptotic behavior. In the main body of our proof, we follow the form of Algorithm \ref{algorithm}, which makes results cleaner. For completeness, we add the proof on the convergence of Adam with ``bias correction'' steps in Appendix \ref{appendix:bias_correction}. 

In our analysis, we make the  assumptions below.

\begin{assumption} \label{assum1}
    we consider $x \in \mathbb{R}^d$ and $f_i(x) $ satisfies gradient Lipschitz continuous with constant $L$. We assume $f(x)$ is lower bounded by a finite constant $f^*$. 
 \end{assumption}
\begin{assumption}\label{assum2}
  $f_i(x) $ and $f(x)$ satisfy: $      \sum_{i=0}^{n-1}\left\|\nabla f_{i}(x)\right\|_{2}^{2} \leq D_{1}\|\nabla f(x)\|_{2}^{2}+D_{0}, \forall x\in \mathbb{R}^d.$
  \end{assumption}
  \vspace{-1mm}
  
      Assumption \ref{assum2} is quite general. When $D_1=1/n$, it becomes the ``constant variance'' with constant $D_0/n$.
        ``constant variance" condition is commonly used in both SGD and Adam analysis (e.g. \citep{ghadimi2016mini, Manzil2018adaptive, huang2021super}).  Assumption \ref{assum2}  allows more flexible choices of $D_1 \neq n$ and thus it is weaker than ``constant variance''. 
      
      When $D_{0}>0$, the problem instance is sometimes called ``non-realizable" \citep{shi2020rmsprop}. In this case, adaptive gradient methods are not guaranteed to reach the exact critical points. Instead, they only converge to a bounded region (near critical points) \citep{Manzil2018adaptive,shi2020rmsprop}. 
      This phenomenon indeed  occurs for Adam in experiments, even with diminishing stepsize (see Figure \ref{fig:cifar_nlp_mnist} (a)). The behavior of SGD is similar: constant stepsize SGD converges to a bounded region with its size propositional to the noise level $D_0$ \citep{yan2018unified,yu2019linear,liu2020improved}.

    When $D_{0}=0$,  Assumption \ref{assum2} 
      is often called ``strong growth condition" (SGC) \citep{vaswani2019fast}. 
      When $\|\nabla f(x)\|=0$, under SGC we have $\left\|\nabla f_{j}(x)\right\|=0$ for all $j$.
     SGC is increasingly popular recently e.g.\citep{ schmidt2013fast,vaswani2019fast}. 
     This condition is known to be reasonable in the overparameterized regime where neural networks  can interpolate all data points
    \citep{vaswani2019fast}. 
     We will show that Adam can converge to critical points if SGC holds.
      




When  $n, f^*, L, D_0, D_1$ are fixed a priori, we use  $\functionclass$ to denote the function class containing $f(x)$ satisfying Assumption \ref{assum1} and \ref{assum2} with constant $n, f^*$, etc..  Since  $n$ is fixed when the function class $\functionclass$, we 
 introduce this notation to clearly present the divergence result in Proposition \ref{thm_diverge}. Without this pre-defined function class, the claim of divergence might be confusing.


\vspace{-2mm}
\subsection{Related Works}
\label{section:related}
\vspace{-2mm}

Ever since \citet{reddi2019convergence} pointed out the divergence issue, there are many attempts on designing new variants of Adam. 
Since we focus on understanding Adam {\it without modification} on its update rules, we introduce more variants later in Appendix \ref{appendix:more_related}. 

Compared with proposing new variants, 
the convergence of vanilla Adam is far less studied than its variants (perhaps due to the criticism of divergence). 
There are only a few works analyzing vanilla Adam and they require extra assumptions. 
\citet{zhou2018adashift} analyze the counter-example  in \citep{reddi2019convergence} and find certain hyperparameter can work. However, their analysis is restricted to the counter-example.
\citet{Manzil2018adaptive} study the relation between   mini-batch sizes and (non-)convergence of Adam. 
However,  this work require $\beta_1=0$ and Adam is reduced to  RMSProp \citep{hinton2012neural}.  \citet{de2018convergence} analyze RMSProp and non-zero-$\beta_1$ Adam, but they assume the sign of all stochastic gradients to keep the same. It seems unclear how to check this condition a priori.
Additionally, they require $\beta_1$ to be inversely related to the upper bound of gradient, which forces $\beta_1$ to be small (as a side note, this result only applies to full-batch Adam).
\citet{defossez2020convergence} analyze Adam with $\beta_1<\beta_2$ and  provide some insights on the momentum mechanisms. 
However, their bound is inversely proportional to $\epsilon$ (the hyperparameter for numerical stability) and the bound goes to infinity when $\epsilon$ goes to 0. 
This is different from practical application since small $\epsilon $ such as $10^{-8}$ often works well. Further, using large $\epsilon$ is against the nature of adaptive gradient methods because $\sqrt{v}$ no longer dominates in the choice of stepsize. In this case, Adam is essentially transformed back to SGD. 
 Two recent works \citep{huang2021super} and \citep{guo2021novel}  propose novel and simple frameworks to analyze Adam-family with large $\beta_1$. 
Yet, they require the effective stepsize of Adam to be bounded in certain interval, i.e.,  $\frac{1}{\sqrt{v_t}+ \epsilon} \in [C_l, C_u]$ \footnote{For completeness, we explain why they require this condition in Appendix \ref{appendix:more_related}.}. This boundedness condition changes Adam into AdaBound \citep{luo2018adaptive} and thus they cannot explain the observations on original Adam in Section \ref{section:intro}.
To summarize, all these works require at least one  strong assumption (e.g. large $\epsilon$).
Additionally, they all (including those for new variants) require bounded gradient assumptions.

A recent work \citep{shi2020rmsprop} takes the first attempt to analyze RMSProp 
without  bounded gradient assumption.
They show that RMSProp can converge 
to the neighborhood of critical points. 
\footnote{We notice that they also provide a convergence result for Adam with $\beta_1$ close enough to 0. However, a simple calculation by \citet{yushun2022doesadamconverge} shows that they require $\beta_1<10^{-7}$. Thus their result does not provide much extra information other than RMSProp.} We believe it is important to study Adam rather than RMSProp: Numerically, Adam often outperforms RMSProp on complicated tasks (e.g. on Atari games, the mean reward is improved from 88\% to 110\% \citep{agarwal2020optimistic}).  Theoretically, 
literature on RMSProp cannot reveal the interaction between $\beta_1$ and $\beta_2$; or how these hyperparameters jointly affect (or jeopardize) the convergence of Adam.
However, it is non-trivial to jointly analyze the effect of  $\beta_1$ and $\beta_2$. 
We point out there are at least three challenges. First, it seems unclear how to control the massive momentum $m_t$ of Adam. Second, $m_t$ is multiplied by  $1/\sqrt{v_t}$, causing non-linear perturbation. Third, $m_t$ and  $1/\sqrt{v_t}$ are statistically dependent and cannot be decoupled. 
 We propose new methods to resolve these issues. We highlight our  technical novelties in Section \ref{sec:proofidea}. 

\vspace{-2mm}
\subsection{The Importance and Difficulties of Removing Bounded Gradient Assumptions}
\label{section:beta1}
\vspace{-2mm}

Here, we emphasize the importance to remove bounded gradient assumption.
First, unlike the assumptions in Section \ref{section:preliminaries}, bounded gradient is {\it not} common in SGD analysis. So it is of theoretical interests to remove this condition for Adam. 
Second, bounded gradient condition rules out the chances of  gradient divergence a priori. However, there are numerical evidences  showing that Adam's gradient can diverge (see Section \ref{section:exp} and Appendix \ref{appendix:more_exp}).
Removing the boundedness assumption helps us point out the divergence and convergence phase transition in the $\betabeta$ diagram. 




However, it is
often difficult to analyze convergence without  bounded gradient assumption.  First,  it is non-trivial to control stochastic momentum. Even for SGD, this task is challenging. 
For instance, An early paper \citet{bertsekas2000gradient} analyzed SGD-type methods without any boundedness condition. But it is not until recently that \citet{yu2019linear, liu2020improved, jin2022convergence} prove SGDM (SGD with momentum) converges without bounded gradient assumption.
 Such attempts of removing boundedness assumptions are often appreciated for general optimization problems  where ``bounded-assumption-free" is considered as a major contribution.  

Secondly, for Adam, the role of momentum $m_{t}$ is even more intricate since it is multiplied by  $1/\sqrt{v_t}$. Combined with $v_t$, the impact of previous signals not only affect the update direction, but also change the stepsize for each component. Further, both momentum $m_t$ and stepsize $1/\sqrt{v_t}$ are random variables and they are highly correlated. Such statistical dependency causes  trouble for analysis. 
In summary, the role of momentum in Adam could be much different from that in  SGDM or GDM.
 Even with boundedness conditions, the convergence of large-$\beta_1$ Adam is still not well understood (see related works in Section \ref{section:related}).  
In this work, we propose new techniques to handle Adam's momentum under any large $\beta_1$, regardless of the gradient magnitude. These techniques are not revealed in any existing works. We introduce our technical contribution in Section \ref{sec:proofidea}.

\vspace{-2mm}
\section{Main Results}
\label{sec_main}
\vspace{-2mm}
\subsection{Convergence Results}
\label{sec_conv}
\vspace{-2mm}
Here, we give the convergence results under large $\beta_2$.
\begin{theorem}
\label{thm1}
For any $f(x) \in \mathcal{F}_{L, D_0, D_1}^{n,f^*}  (\mathbb{R}^d)$, we assume the hyperparameters in  Algorithm \ref{algorithm} satisfy: $\beta_1 < \sqrt{\beta_2} <1$;  $\beta_2$ is greater or equal to a threshold $ \gamma_1(n)$; and $\eta_{k}=\frac{\eta_{1}}{\sqrt{nk}}$. Let $k_m \in \mathbb{N}$ satisfies $k_m\ge 4$ and  {\small $\beta_{1}^{(k_m-1) n} \leq \frac{\beta_{1}^{n}}{\sqrt{k_m-1}}$}, \footnote{When $\beta_1=0.9$, $k_m = 15$ for any $n \geq1$. } we  have the following results for any $T>k_m$: 

\vspace{-3mm}
{\small
\begin{align*}
     \min_{k \in [k_m, T]} \Ex\left\{\min  \left[ \sqrt{\frac{ 2D_1 d }{D_{0} } } \|\nabla f(x_{k,0})\|_2^2 , \|\nabla f(x_{k,0})\|_2\right]  \right\}
   =\mathcal{O}\left(\frac{\log T }{\sqrt{T}} \right)  + \mathcal{O}(\sqrt{D_0}) .\nonumber  
\end{align*}
}%





\end{theorem}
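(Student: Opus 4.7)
The plan is to proceed via a descent-lemma argument adapted to Adam, but with the key observation that large $\beta_2$ essentially freezes the preconditioner $\sqrt{v_{k,i}}+\epsilon$ within each epoch, allowing the algorithm to behave like a preconditioned SGD with momentum over short horizons. First I would apply the $L$-smoothness inequality across a full epoch (from $x_{k,0}$ to $x_{k+1,0}$) rather than a single inner step, because Adam's randomness is naturally tied to the random permutation $\tau_{k,\cdot}$ of $\{0,\dots,n-1\}$: summing the within-epoch updates, the aggregated stochastic gradient telescopes toward $\nabla f(x_{k,0})$ up to Lipschitz error terms controlled by $\eta_k L$.

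The crucial technical step is to decompose $m_{k,i}/(\sqrt{v_{k,i}}+\epsilon)$ into (i) a ``good'' term proportional to $\nabla f(x_{k,0})/\sqrt{v_{k,0}}$ that drives the descent, and (ii) residual terms from (a) stale gradients weighted by $\beta_1^j$, (b) drift of $x_{k,i}$ away from $x_{k,0}$ within the epoch, and (c) fluctuation of $v_{k,i}$ within the epoch. For (a) I would use that $\sum_{j\geq k_m n} \beta_1^j$ is small when the epoch index exceeds $k_m$ (this is exactly the role of the condition $\beta_1^{(k_m-1)n}\le \beta_1^n/\sqrt{k_m-1}$). For (b) I would use $L$-smoothness together with $\|x_{k,i}-x_{k,0}\|=O(\eta_k)$ times an effective-stepsize factor. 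For (c) I would exploit the large-$\beta_2$ threshold $\gamma_1(n)$ to show $\sqrt{v_{k,i}}/\sqrt{v_{k,0}} = 1+O(1-\beta_2^n)$, so the preconditioner can be pulled outside the inner sum up to controllable error; Assumption~\ref{assum2} then bounds the squared gradients that appear in $v$ by $D_1 \|\nabla f\|^2 + D_0$.

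The main obstacle, and the place where a new idea is really needed, is simultaneously handling \emph{unbounded} $m_{k,i}$ and the nonlinear, statistically dependent factor $1/\sqrt{v_{k,i}}$ without invoking bounded-gradient assumptions. My plan is to prove an a priori inequality of the form $\|m_{k,i}\|^2 \le C_1 \|v_{k,i}\|_1 + C_2$ using the same averaging structure (both are EMAs of functions of the same gradients), so that the ratio $\|m_{k,i}\|/(\sqrt{v_{k,i}}+\epsilon)$ remains effectively bounded in a self-referential manner even when individual gradients are large. Combined with Assumption~\ref{assum2}, this turns $\sum \eta_k^2 \|m_{k,i}\|^2/(\sqrt{v_{k,i}}+\epsilon)^2$ into something of order $\sum \eta_k^2 \cdot \mathrm{poly}(d)$, which is summable and finite under the chosen stepsize $\eta_k=\eta_1/\sqrt{nk}$.

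Once these residuals are absorbed, the per-epoch descent inequality reads, schematically,
\begin{equation*}
\mathbb{E}_k[f(x_{k+1,0})] \le f(x_{k,0}) - c\,\eta_k \,\mathbb{E}_k\!\left[\tfrac{\|\nabla f(x_{k,0})\|^2}{\sqrt{\|\nabla f(x_{k,0})\|^2 + D_0/D_1}}\right] + \mathcal{O}(\eta_k^2),
\end{equation*}
where the $\sqrt{\cdot}$ denominator comes from bounding $\sqrt{v_{k,0}}$ using Assumption~\ref{assum2}. Telescoping from $k=k_m$ to $T$, using $f\ge f^*$, dividing by $\sum_{k=k_m}^T \eta_k = \Theta(\sqrt{T}/\sqrt{n})$, and noting that $x/\sqrt{x^2+D_0/D_1}$ is comparable to $\min(x^2\sqrt{D_1/D_0},\,x)$ up to constants involving $d$, would yield the stated bound $\mathcal{O}(\log T/\sqrt{T}) + \mathcal{O}(\sqrt{D_0})$. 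The $\log T$ factor comes naturally from $\sum_k 1/(nk)$ in the $\eta_k^2$ residual, and the $\sqrt{D_0}$ floor reflects that in the non-realizable case the stochastic gradients do not vanish at critical points.
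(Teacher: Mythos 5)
There is a genuine gap at exactly the point the paper identifies as the core difficulty. Your plan controls the momentum residual by splitting the history into an ``ancient'' part killed by the geometric tail $\sum_{j\ge k_m n}\beta_1^j$ and a ``recent'' part absorbed into Lipschitz drift, but this does not work for large $\beta_1$: with $\beta_1=0.9$ or $0.999$ the weights $\beta_1^j$ on the gradients from the current and the last few epochs are order one, and even after replacing every stale gradient by its value at $x_{k,0}$ (which your drift argument legitimately allows, up to $\mathcal{O}(1/\sqrt{k})$), the per-realization mismatch $\sum_{i=0}^{n-1}\bigl(m_{l,k,i}-\partial_l f_i(x_{k,0})\bigr)$ is \emph{not} small --- it is of the size of the stochastic gradients themselves, because the exponential weights distribute unevenly over the permuted component functions. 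The condition $\beta_1^{(k_m-1)n}\le \beta_1^n/\sqrt{k_m-1}$ only tames the contribution of the initialization/first epoch; it does not make $\beta_1^n$ small. What actually saves the argument in the paper is that this mismatch vanishes \emph{in expectation over the random permutations}, through a cancellation of error terms across successive epochs (the ``color-ball'' computation behind Lemma \ref{lemma_paper_toy}/Lemma \ref{lemma_b1}), and this cancellation must be carried out while the random prefactor $\partial_l f(x_{k,0})/\sqrt{v_{l,k,0}}$ and the random set of large-gradient coordinates are swapped epoch-by-epoch for their $(k-1)$-measurable counterparts (Lemmas \ref{lemma_k-k-1} and \ref{lemma_indicator}). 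Your proposal never produces this expectation-level cancellation, so the descent inequality you write down cannot be reached for general $\beta_1\in[0,1)$; at best your route recovers the small-$\beta_1$ (RMSProp-like) regime.

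Your proposed ``new idea,'' an a priori bound of the form $\|m_{k,i}\|^2\le C_1\|v_{k,i}\|_1+C_2$, is essentially the standard coordinatewise bound $|m_{l,k,i}|/\sqrt{v_{l,k,i}}\le \frac{1-\beta_1}{\sqrt{1-\beta_2}}\frac{1}{1-\beta_1/\sqrt{\beta_2}}$ already used in the paper (inequality \eqref{eq:m_over_v}); it controls the second-order term $\sum_k\eta_k^2\|x_{k+1,0}-x_{k,0}\|^2$ and hence the $\log T$ factor, but it gives no sign information and therefore cannot lower-bound the first-order inner product $\Ex\langle\nabla f(x_{k,0}),\sum_i m_{k,i}/\sqrt{v_{k,i}}\rangle$, which is where boundedness-free, large-$\beta_1$ convergence is actually decided. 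A secondary issue: your claim that $\sqrt{v_{k,i}}/\sqrt{v_{k,0}}=1+\mathcal{O}(1-\beta_2^n)$ uniformly is false for coordinates whose gradient components are smaller than the per-step drift, which is why the paper must split coordinates by $\max_i|\partial_l f_i(x_{k,0})|\gtrless Q_k$ and treat the small-gradient case separately; without that split the relative-error bound on the preconditioner has no justification.
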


\vspace{-2mm}
  \paragraph{Remark 1: the choice of $\beta_2$.} 
    Our theory suggests that large $\beta_2$ should be used to ensure convergence. This message matches our experimental findings  in Figure \ref{fig:intro_paper}. We would like to point out that the requirement of ``large $\beta_2$" is neccessary, because small $\beta_2$ will indeed lead to divergence (shown later in Section \ref{sec_div}).
    We here comment a bit on the the threshold $\gamma_1(n)$. $\gamma_1(n)$ satisfies $\beta_2 \geq 1- \mathcal{O}\left(\frac{1-\beta_1^n}{n^{2}\rho}\right)$ (see inequality \eqref{condition} and Remark \ref{remark_rho}), where $\rho$ is a constant that depends on the training trajectory. In worst cases, $\rho$ is upper bounded by $n^{2.5}$, but we find the practical $\rho$ to be much smaller.
    In Appendix \ref{appendix:more_exp}, we estimate $\rho$ on MNIST and CIFAR-10. In practical training process,  we empirically observe that $\rho \approx \mathcal{O}(n)$, thus the required $\gamma_1(n) \approx 1- \mathcal{O}\left(n^{-3}\right)$.  
    Note that our threshold of $\beta_2$ is a sufficient condition for convergence, so there may be a gap between the practical choices and the theoretical bound of $\beta_2$. Closing the gap will be an interesting future direction. 
  
  We find that $\gamma_1(n)$ increases with $n$. This property suggests that larger $\beta_2$ should be used when $n$ is large. This phenomenon is also verified by our experiments in Appendix \ref{appendix:more_exp}.
We also remark that $\gamma_1(n)$ slowly increases with $\beta_1$. This property is visualized in Figure \ref{fig:intro_paper} (d) where the lower boundary of blue region slightly lifts up when $\beta_1$ increases.
  


\vspace{-2mm}
 \paragraph{ Remark 2: the choice of $\beta_1$.} Theorem \ref{thm1} requires  $\beta_1 < \sqrt{\beta_2}$. 
  Since $\beta_2$ is suggested to be large, our convergence result can cover flexible choice of $\beta_1 \in [0,1)$. For instance,  $\beta_2=0.999$ brings the threshold of $\beta_1 < 0.9995$, which covers  basically all practical choices of $\beta_1$ reported in the literature (see Section \ref{section:intro}), including the default setting $\beta_1=0.9$. This result is much stronger than those in the RMSProp literature (e.g. \citep{shi2020rmsprop,Manzil2018adaptive}). To our knowledge, we are the first to prove convergence of Adam under any $\beta_1 \in [0,1)$ without bounded gradient assumption.
  

\paragraph{Remark 3: convergence to a bounded region.}
  When {\small $D_0>0$}, Adam converges to a bounded region near critical points.
  As discussed in Section \ref{section:preliminaries}, converging to bounded region is common for stochastic methods including constant-stepsize SGD  \citep{yan2018unified,yu2019linear,liu2020improved} and  diminishing-stepsize RMSProp \citep{Manzil2018adaptive,shi2020rmsprop}.  This phenomenon is also observed in practice: even for convex quadratic function with  {\small $D_0>0$},  Adam with diminishing stepsize {\it cannot} reach exactly zero gradient (see Figure \ref{fig:cifar_nlp_mnist} (a) in Section  \ref{section:exp}).  This is because: even though $\eta_k$ is decreasing, the effective stepsize $\eta_k/\sqrt{v_{k,i}}$ might not decay.   
  The good news is that, the constant  {\small $\mathcal{O}(\sqrt{D_0})$} vanishes to 0 as $\beta_2$ goes to 1 (both in theory and experiments). The relation between $\beta_2$ and constant {\small $\mathcal{O}(\sqrt{D_0})$} are introduced in Remark \ref{remark_f4} in Appendix  \ref{appendix_lemma_a1}. The size shrinks to 0 because the movement of  $\sqrt{v_{k,i}}$ shrinks as  $\beta_2$ increases.

  As a corollary of Theorem \ref{thm1}, we have the following result under SGC (i.e., $D_0=0$).
  
  
\begin{corollary}\label{thm1_sgc}
  Under the setting in Theorem \ref{thm1}. When $D_0=0$ for Assumption \ref{assum2}, we have
  \vspace{-0.5mm}
{\small
\begin{align*}
     \min_{k \in [k_m, T]} \Ex\|\nabla f(x_{k,0})\|_2
   =\mathcal{O}\left(\frac{\log T }{\sqrt{T}} \right).\nonumber  
\end{align*}
}%
 \end{corollary}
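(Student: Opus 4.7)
My plan is to derive Corollary 3.2 directly from Theorem 3.1 by specializing to $D_0=0$, with a backup strategy of re-running the core inequality in the proof of Theorem 3.1 under this simpler hypothesis. The key observation is that the inner $\min$ appearing inside Theorem 3.1, namely $\min\bigl[\sqrt{2D_1 d/D_0}\,\|\nabla f(x_{k,0})\|_2^{2},\,\|\nabla f(x_{k,0})\|_2\bigr]$, is the outcome of a two-case split according to whether the ``noise floor'' $D_0$ dominates or is dominated by $D_1\|\nabla f\|_2^{2}$; once $D_0=0$ this split is moot and only the linear branch survives.

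For the direct reduction, I would first verify that the constants hidden in $\mathcal{O}(\log T/\sqrt{T})$ in Theorem 3.1 remain bounded as $D_0\to 0^{+}$ (they should, since they only involve $L$, $D_1$, $\beta_1$, $\beta_2$, $\eta_1$, $n$, $k_m$, and the initialization of $v$ and $m$). Apply Theorem 3.1 to $f$ with Assumption 3.5 instantiated at any $D_0>0$ (the hypothesis $D_0=0$ trivially implies every larger $D_0$). For any realization with $\|\nabla f(x_{k,0})\|_2>0$, the first entry $\sqrt{2D_1 d/D_0}\,\|\nabla f(x_{k,0})\|_2^{2}$ diverges as $D_0\to 0$, so the inner $\min$ coincides with $\|\nabla f(x_{k,0})\|_2$; when the gradient vanishes both entries are zero. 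Letting $D_0\to 0^{+}$ then kills the additive $\mathcal{O}(\sqrt{D_0})$ term on the right-hand side and yields the corollary.

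If the limit step feels unclean, my fallback is to re-enter the proof of Theorem 3.1 with $D_0=0$. Any Adam-style descent argument of the type used there produces a weighted-sum inequality roughly of the shape
\begin{equation*}
\sum_{k=k_m}^{T}\eta_k\,\mathbb{E}\!\left[\frac{\|\nabla f(x_{k,0})\|_2^{2}}{\sqrt{D_0+D_1\|\nabla f(x_{k,0})\|_2^{2}}}\right]\;\le\; C\log T,
\end{equation*}
after using Assumption 3.5 to control $\sqrt{v_{k,i}}$. With $D_0=0$ the denominator simplifies to $\sqrt{D_1}\,\|\nabla f(x_{k,0})\|_2$, so the summand collapses to $\|\nabla f(x_{k,0})\|_2/\sqrt{D_1}$ with no case split. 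Combining with $\eta_k=\eta_1/\sqrt{nk}$ and $\sum_{k=k_m}^{T}\eta_k=\Theta(\sqrt{T/n})$, and using $\min_k a_k\le \sum_k w_k a_k/\sum_k w_k$, I obtain
\begin{equation*}
\min_{k\in[k_m,T]}\mathbb{E}\|\nabla f(x_{k,0})\|_2 \;=\; \mathcal{O}\!\left(\frac{\log T}{\sqrt{T}}\right),
\end{equation*}
as claimed.

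The main (small) obstacle I anticipate is tracking whether any constants in Theorem 3.1 implicitly require $D_0>0$, for example through a term like $1/\sqrt{D_0}$ sneaking in when lower-bounding $1/\sqrt{v_{k,i}}$. Since the $v$-iterate is initialized from a finite quantity and the stabilizer $\epsilon\ge 0$ does not depend on $D_0$, such a term should be avoidable; if it is not, the fallback re-derivation is strictly easier than the full proof of Theorem 3.1 because the $D_0=0$ regime removes precisely the noise-dominated case that creates the inner $\min$. Either way, Corollary 3.2 follows without any new machinery beyond what the proof of Theorem 3.1 already supplies.
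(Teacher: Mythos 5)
Your fallback argument is essentially the paper's own route: Corollary~\ref{thm1_sgc} is obtained by reading the proof of Theorem~\ref{thm1} at $D_0=0$, where the noise-dominated branch of the $\min$ is vacuous (it can only fire when $\partial_\alpha f(x_{k,0})^2\le D_0/(D_1 d)$, i.e.\ when the gradient is zero) and the additive constants $F_4,F_5\propto\sqrt{D_0}$ in Lemma~\ref{lemma_a1} and Lemma~\ref{lemma_all_together} vanish, leaving exactly the weighted-sum/descent-lemma inequality you describe and the rate $\mathcal{O}(\log T/\sqrt{T})$.

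Your primary route, however, does not work as a black-box limit, and the obstacle you flagged is real rather than hypothetical: the constant hidden in the $\mathcal{O}(\log T/\sqrt{T})$ term of Theorem~\ref{thm1} is \emph{not} bounded as $D_0\to 0^+$. Concretely, in Appendix~\ref{appendix_lemma_a1} the constant $G_2$ contains the term $\bigl(4\sqrt{2}\,\Delta_1/((1-\beta_2)\sqrt{D_1 n d})\bigr)^2/\sqrt{5D_0}$ (coming from Claim~1 in Case~(b), i.e.\ precisely the noise-dominated branch), and $G_2$ enters the final bound through $H_2$ multiplying $\log(T+1)$. So applying the theorem at a fixed $D_0>0$ and letting $D_0\to 0$ gives a bound of the form $(C_1+C_2/\sqrt{D_0})\log T/\sqrt{T}+C_3\sqrt{D_0}$, from which the claimed rate cannot be recovered by any choice of $D_0$; one must instead observe that the $1/\sqrt{D_0}$-singular terms are only ever invoked on the branch that disappears when $D_0=0$, which is exactly what your fallback (and the paper) does. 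So the corollary stands, but the justification has to be the re-derivation at $D_0=0$, not the limiting argument.
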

  Under  SGC (i.e. $D_0=0$), Corollary \ref{thm1_sgc} states that Adam can converge to critical points.  This is indeed the case in practice. For instance, function \eqref{counterexample1} satisfies SGC and we observe 0 gradient norm after  Adam converges (see Section \ref{section:exp}  and Appendix \ref{appendix:more_exp}). The convergence rate in Corollary \ref{thm1_sgc}  is comparable to that of SGD under the same condition in \citep{vaswani2019fast}.

\vspace{-2mm}
\subsection{Divergence Results}
\label{sec_div}
\vspace{-2mm}
Theorem \ref{thm1} shows that when $\beta_2$ is large, any $\beta_1 < \sqrt{\beta_2}$ ensures convergence. Now we consider the case where $\beta_2$ is small. We will show that in this case, a wide range of $\beta_1$ is facing the risk of diverging to infinity. The divergence of small-$\beta_2$ Adam  suggests that ``large $\beta_2$" is necessary in the convergence result Theorem \ref{thm1}. 
We construct a counter-example in $\functionclass$.  Consider $f(x)=\sum_{i=0}^{n-1} f_i(x)$ for  $x \in \mathbb{R}$
, we define  $ f_i(x)$ as:
{\small
\begin{eqnarray}
  f_{i}(x)&=&\left\{\begin{array}{ll} nx, &  x \geq -1\\ \frac{n}{2}(x+2)^2 -\frac{3n}{2}, & x < -1 \end{array}\right.  \text{for $i=0$,} \nonumber \\
  f_{i}(x)&=&\left\{\begin{array}{ll} -x, &  x \geq -1\\ -\frac{1}{2}(x+2)^2 +\frac{3}{2}, & x < -1 \end{array}\right.  \text{for $i>0$.} \label{counterexample1}
\end{eqnarray}
}%
Summing up all the $f_i(x)$, we can see that 
{\small
$$
f(x)= \left\{\begin{array}{ll} x, &  x \geq -1\\ \frac{1}{2}(x+2)^2 -\frac{3}{2}, & x < -1 \end{array}\right.
$$
}%
is a lower bounded convex smooth function with optimal solution $x^*=-2$.   
Function \eqref{counterexample1} allows both iterates and gradients to diverge to infinity.
As shown in Figure \ref{fig:intro_paper} (a), when running Adam on \eqref{counterexample1}, there exists a red large-error region. This shows the sign of divergence. We further theoretically verify the conjecture in Proposition \ref{thm_diverge}.

\begin{proposition}\label{thm_diverge} 
For any  function class $\functionclass$, there exists a $f(x) \in \functionclass$, s.t. when $\betabeta$ satisfies analytic condition \eqref{diverge_c1}, \eqref{diverge_c2}, \eqref{diverge_c3} in Appendix \ref{appendix:thm_diverge}, Adam's iterates and function values diverge to infinity.
By solving these  conditions in \texttt{NumPy}, 
we plot the orange region in Figure  \ref{fig:counter_boundary_paper}.
The size of the region depends on $n$ and it expands to the whole region when $n$ goes to infinity.


\end{proposition}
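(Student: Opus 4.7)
The plan is to verify divergence on the explicit function \eqref{counterexample1}, exploiting the key structural property that on the half-line $x \geq -1$ the sample gradients are deterministic constants, namely $\nabla f_0 = n$ and $\nabla f_i = -1$ for $i \geq 1$. This collapses Adam's iteration on this region to a tractable stochastic recurrence whose only randomness is the epoch-level permutation $\tau_k$. The first step is to verify that (after a standard rescaling if necessary) this $f$ lies in the given $\functionclass$: smoothness is immediate with $L \asymp n$, $f^* = -3/2$, and Assumption \ref{assum2} holds because $\sum_i \|\nabla f_i\|^2 = (n^2+n-1)(x+2)^2$ on the quadratic region and equals $n^2+n-1$ on the linear region, while $\|\nabla f\|^2 = (x+2)^2$ and $1$ respectively, so the class constants $D_0, D_1$ absorb these ratios. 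I would initialize at $x_{1,0} \geq -1$ and argue that once the positive-drift mechanism described below engages, the iterates stay in the linear region forever (any transient dip below $-1$ is quickly corrected by the stronger restoring force of the quadratic branch).

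The core calculation is to unroll the $m$- and $v$-recursions in the linear region. Let $\pi_k \in \{0,\ldots,n-1\}$ denote the position of index $0$ inside the random permutation $\tau_k$; by symmetry $\pi_k$ is uniform, and conditional on $\pi_k$ the gradient sequence is a deterministic string of $-1$'s with a single $n$ at position $\pi_k$. The closed forms
\begin{equation*}
  m_{k,i} = \beta_1^{i+1} m_{k,-1} + (1-\beta_1)\sum_{j=0}^{i}\beta_1^{i-j} g_{k,j}, \qquad v_{k,i} = \beta_2^{i+1} v_{k,-1} + (1-\beta_2)\sum_{j=0}^{i}\beta_2^{i-j} g_{k,j}^2,
\end{equation*}
then split into pre-$\pi_k$ and post-$\pi_k$ contributions. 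The per-epoch displacement is $\Delta_k = -\eta_k \sum_{i=0}^{n-1} m_{k,i}/(\sqrt{v_{k,i}}+\epsilon)$, and the central estimate is to lower-bound $\Ext[\Delta_k]$ by a strictly positive multiple of $\eta_k$ under the analytic conditions \eqref{diverge_c1}--\eqref{diverge_c3}. The intuition is standard: the $n-1$ steps with gradient $-1$ each nudge $x$ in the positive direction with effective stepsize $\sim \eta_k/\sqrt{v_{k,i}}$, while the single step at gradient $n$ pushes $x$ negatively but is heavily normalized by the freshly spiked $\sqrt{v}$; when $\beta_2$ is small enough that the spike decays within the epoch and $\beta_1$ is not so large as to over-smooth $m$, the positive contributions win. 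Conditions \eqref{diverge_c1}--\eqref{diverge_c3} are precisely the quantitative form of this dominance, and their solution set is the orange region in Figure \ref{fig:counter_boundary_paper}.

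Once $\Ext[\Delta_k] \geq c(\beta_1,\beta_2,n)\,\eta_k$ with $c > 0$ is established, the conclusion follows by summation: since $\eta_k = \eta_1/\sqrt{nk}$ has divergent series, iterated expectation gives $\Ex[x_{k,0}] \to +\infty$, and by the linearity of $f$ on the half-line, $\Ex[f(x_{k,0})] \to +\infty$. As $n \to \infty$, the asymmetry between the $n$-gradient sample and the $-1$-gradient samples sharpens, so the analytic region expands and in the limit covers all of $[0,1)^2$, which recovers the asymptotic divergence statement of \citet{reddi2019convergence}. The region boundary is then plotted by solving the three inequalities in \texttt{NumPy} as the proposition states.

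The main obstacle will be handling the nonlinear ratio $m_{k,i}/(\sqrt{v_{k,i}}+\epsilon)$ in expectation over $\pi_k$, because $m_{k,i}$ and $v_{k,i}$ are both random functions of $\pi_k$ and cannot be decoupled by Jensen-type inequalities. My plan is to sidestep this by conditioning explicitly on $\pi_k$ and exploiting the deterministic geometric decay of $v_{k,i}$ after position $\pi_k$ (a tractable geometric series in $\beta_2$), which gives a clean lower bound on $1/(\sqrt{v_{k,i}}+\epsilon)$ in the post-spike segment and a complementary upper bound at the single spike step. A secondary difficulty is the carry-over of $(m_{k,-1}, v_{k,-1})$ across epochs, which introduces non-Markov dependence on past permutations; I would absorb this into an error term that contracts at rate $\beta_1^n$ for momentum and $\beta_2^n$ for the denominator, so that for $k$ large the drift is governed by the fresh-epoch calculation. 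Together these two reductions should collapse the proof to verifying the scalar inequalities \eqref{diverge_c1}--\eqref{diverge_c3} ensuring $c(\beta_1,\beta_2,n) > 0$.
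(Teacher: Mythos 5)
You use the right counterexample \eqref{counterexample1} and the right intuition (the $n-1$ gradients of $-1$ outrun the single gradient of $n$ because the latter is normalized by its own freshly spiked $v$), but your execution diverges from what the stated conditions can support, and it has genuine gaps. The paper's proof is entirely deterministic: following \citet{reddi2019convergence}, it fixes the sampling order $f_0, f_1, \dots, f_{n-1}$ in every epoch, writes the per-epoch displacement as $(I)+(II)+(III)$ (momentum carry-over, the $-1$ steps, the single $n$ step), shows $(II)$ dominates $|(III)|$ under \eqref{diverge_c1} while \eqref{diverge_c2} forces $m_{k-1,n-1}<0$ so that $(I)\geq 0$, and uses the stepsize bound \eqref{diverge_c3} to guarantee the trajectory launched at $x=1$ never leaves the linear branch. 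Your plan instead averages over a uniformly random position $\pi_k$ of the index $0$ and lower-bounds the conditional expected drift. That is a different computation, and the inequalities it produces are not \eqref{diverge_c1}--\eqref{diverge_c3}: those conditions are calibrated to the adversarial arrangement where $f_0$ opens every epoch (e.g.\ the denominator $\sqrt{(1-\beta_2)n^2+\beta_2 v_{k-1,n-1}}$ hitting the big step at position $0$), so asserting that your expected-drift dominance "is precisely" the given analytic condition is unsubstantiated; you would have to re-derive a (different) region.

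Two further gaps. First, the proposition claims the iterates and function values diverge to infinity, but your argument only yields $\Ex[x_{k,0}]\to\infty$; upgrading this to divergence of the trajectory itself requires controlling the fluctuations induced by the random permutations (the increments are bounded by $\mathcal{O}(\eta_k)$, so a martingale/Borel--Cantelli argument could be added, but it is absent), whereas the deterministic cyclic-order proof needs none of this. Second, your containment step is a heuristic, not a proof: "any transient dip below $-1$ is quickly corrected by the stronger restoring force of the quadratic branch" is unsupported, and once the iterate crosses $-1$ the constant-gradient unrolling on which your drift estimate rests is no longer valid (on $x<-1$ the sample gradients scale with $x+2$ and the same tug-of-war reappears with shrinking magnitudes, so no easy "restoring" conclusion is available). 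The paper closes exactly this hole with \eqref{diverge_c3}: bounding each within-epoch partial sum of displacements below by $-2$ so that, starting from $x=1$, the iterates provably never enter the quadratic region. You should either adopt that device (initialize at $x=1$, keep the stepsize condition as a containment guarantee) or prove a genuine invariance statement; as written, the drift estimate and the containment claim each rely on the other.
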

  \vspace{-1mm}
\begin{wrapfigure}{r}{0.3\textwidth}
\begin{center}
\vspace{-6mm}
\centerline{\includegraphics[width=0.3\textwidth]{./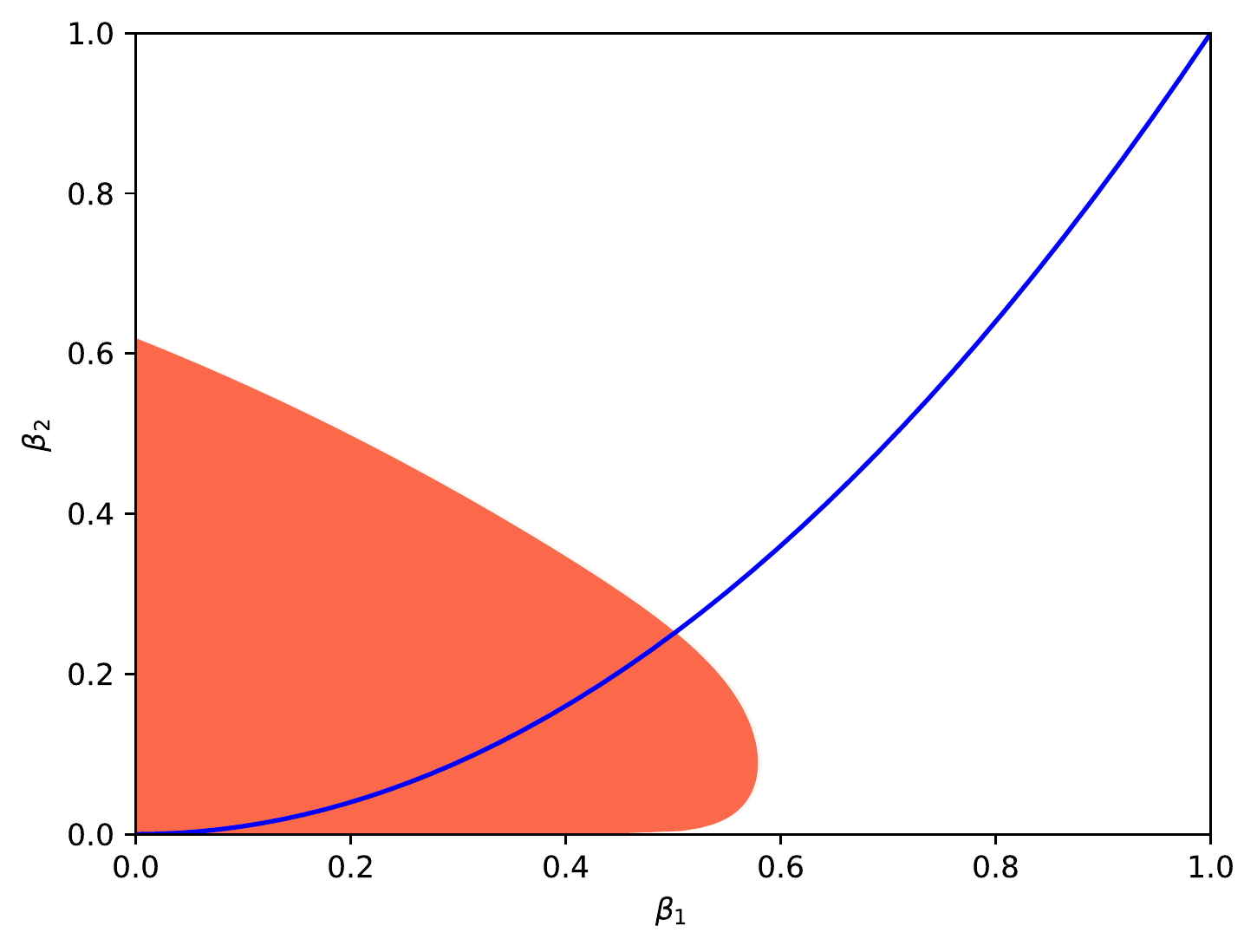}}
\vspace{-3mm}
\caption{{\small 
On function \eqref{counterexample1} with  $n=20$, Adam diverges in the colored region. The region is plotted by solving  condition \eqref{diverge_c1}, \eqref{diverge_c2}, \eqref{diverge_c3} in \texttt{NumPy}.
The blue curve satisfies  $\beta_1 = \sqrt{\beta_2}$.
}
}
\label{fig:counter_boundary_paper}
\vspace{-3mm}
\end{center}
\vspace{-6mm}
\end{wrapfigure}
\vspace{-1mm}

  The proof can be seen in Appendix \ref{appendix:thm_diverge}. We  find 
 the ``divergence region" 
 always stays below the ``convergence threshold" $\gamma_1(n)$ in Theorem \ref{thm1}, so the two results are self-consistent (see the remark in Appendix \ref{appendix:thm_diverge}). Proposition \ref{thm_diverge} states the divergence of iterates and function values. 
 Consistently, our experiments also show the divergence of gradient (see Section \ref{section:exp} and Appendix \ref{appendix:more_exp}). These results characterize Adam's divergence behavior both numerically and theoretically.


We emphasize that the orange region is {\it not} discussed in \citep{reddi2019convergence} because we consider $n$ fixed while they allow $n$ changing. When $n$ is allowed to increase,  our orange region will expand to the whole region and thus we can derive a similar (actually stronger)  result as \citep{reddi2019convergence}.
We provide more explanation in Section \ref{sec:reconcile}.
Combining Theorem \ref{thm1} and Proposition \ref{thm_diverge}, we establish a clearer image on the relation between $\betabeta$ and qualitative behavior of Adam.

\vspace{-2mm}
\section{Reconciling Our Results with \citep{reddi2019convergence} }
\label{sec:reconcile}
\vspace{-2mm}

\begin{wrapfigure}{r}{0.3\textwidth}
\vskip -0.2in
\begin{center}
\vspace{-4mm}
\centerline{\includegraphics[width=0.3\textwidth]{./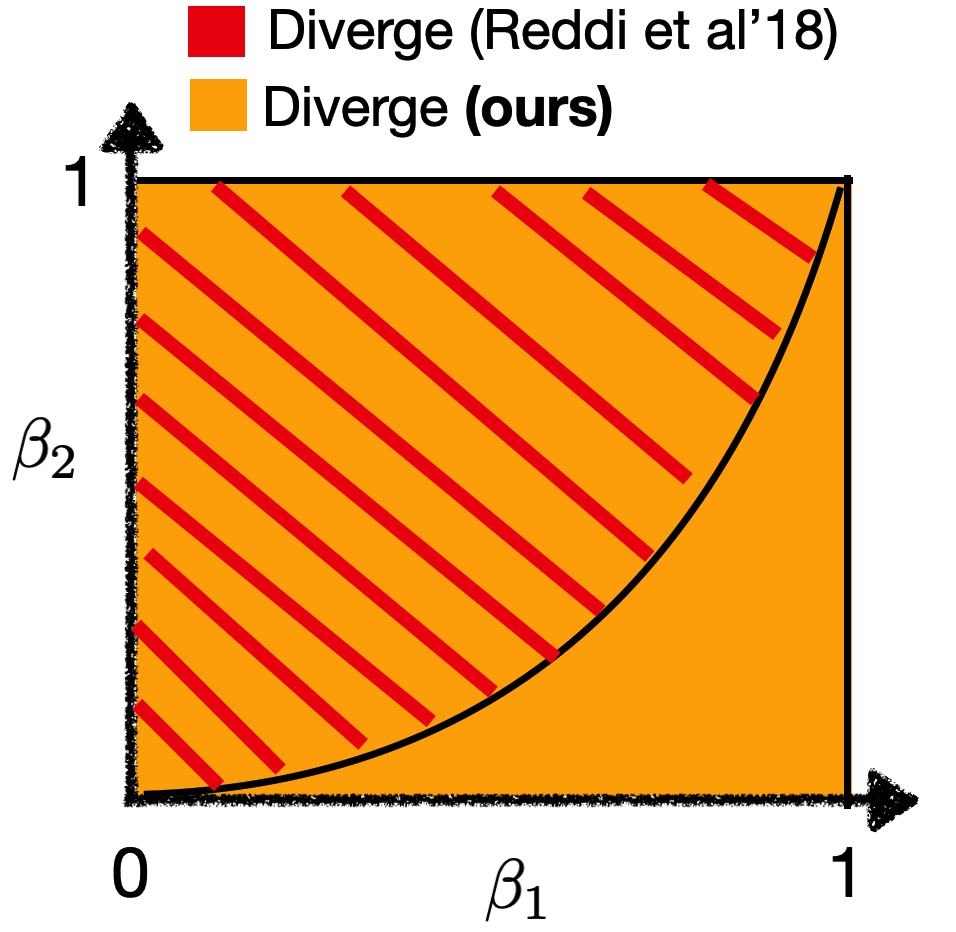}}
\vspace{-2mm}
\caption{{\small Adam's behavior when $\betabeta$ in {\bf Case I}. 
}
}
\label{fig:reconcile_case1}
\end{center}
\vskip -0.4in
\end{wrapfigure}

We  discuss more on the relation between \citep{reddi2019convergence} and our results. The divergence result shown in Section \ref{section:intro} does not contradict with our convergence results in   Theorem \ref{thm1}. Further, it is different from our divergence result in Proposition \ref{thm_diverge}. The key difference lies in whether  $\betabeta$ is picked {\it before or after} picking  the function class $\functionclass$. We discuss the following two cases.


\paragraph{Case I: When $\betabeta$ is picked before picking $\functionclass$.} 
As discussed in Section \ref{section:intro}, the divergence result requires different $n$ for different $\betabeta$. In this sense, the considered function class is constantly changing. It does {\it not}  contradict with our Theorem \ref{thm1}  
which considers a fixed function class with fixed $n$. For {\bf Case I}, we illustrate Adam's behavior in Figure \ref{fig:reconcile_case1}. The red region  is proved by \citep{reddi2019convergence}. For completeness, we remove the condition ``$\beta_1 <\sqrt{\beta_2}$" and further prove that Adam will diverge to infinity for {\it any} $\betabeta \in [0,1)^2$. The result is shown in  the following Corollary \ref{corollary_diverge}.

\begin{corollary} \label{corollary_diverge}
For any $\betabeta \in [0,1)^2$, there exists a function satisfying Assumption \ref{assum1} and \ref{assum2} that the Adam's iterates and function values diverge to infinity.
\end{corollary}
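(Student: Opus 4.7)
The plan is to derive Corollary~\ref{corollary_diverge} as a near-immediate consequence of Proposition~\ref{thm_diverge}, exploiting the fact that the corollary reverses the order of quantifiers relative to that proposition. In Proposition~\ref{thm_diverge} we first fix a function class (and hence fix $n$) and then identify a divergence region of $(\beta_1,\beta_2)$; in the corollary we are instead handed an arbitrary $(\beta_1,\beta_2) \in [0,1)^2$ and are allowed to pick the function afterwards, which in particular means we may choose $n$ to depend on $(\beta_1,\beta_2)$. This extra freedom is exactly what is needed to drop the $\beta_1 < \sqrt{\beta_2}$ restriction of \citep{reddi2019convergence}, because Proposition~\ref{thm_diverge} already asserts that its divergence region expands to fill the whole square $[0,1)^2$ as $n \to \infty$.

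Concretely, given any target $(\beta_1,\beta_2) \in [0,1)^2$, I would first invoke the monotone-expansion property of the region carved out by the analytic conditions \eqref{diverge_c1}, \eqref{diverge_c2}, \eqref{diverge_c3} to choose a finite $n^\star = n^\star(\beta_1,\beta_2)$ large enough that $(\beta_1,\beta_2)$ lies inside the divergence region associated with the class $\mathcal{F}_{L, D_0, D_1}^{n^\star,f^*}(\mathbb{R})$. I would then take the witness to be the finite sum \eqref{counterexample1} with exactly $n^\star$ components. By the conclusion of Proposition~\ref{thm_diverge}, running Adam with the given $(\beta_1,\beta_2)$ on this objective forces both the iterates $x_{k,i}$ and the function values $f(x_{k,i})$ to diverge to infinity, which is exactly the statement of the corollary.

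The remaining routine work is to record that the witness genuinely belongs to some class $\functionclass$, i.e.\ to check Assumption~\ref{assum1} and Assumption~\ref{assum2}. Each component $f_i$ in \eqref{counterexample1} is continuously differentiable and piecewise affine/quadratic, so $\nabla f_i$ is Lipschitz with some constant $L$; the aggregate $f$ is lower bounded by $-3/2$; and on each of the two branches $x \ge -1$ and $x < -1$, the ratio $\sum_i \|\nabla f_i(x)\|^2 / \|\nabla f(x)\|^2$ is controlled, so one can extract constants $(D_0,D_1)$ (allowed to depend on $n^\star$) that satisfy Assumption~\ref{assum2}. This dependence is harmless because the corollary permits the function class itself to be chosen after $(\beta_1,\beta_2)$.

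The only substantive ingredient is the monotone-expansion claim in Proposition~\ref{thm_diverge}; all of the algorithmic dynamics have already been analyzed there. I therefore expect the main effort in a careful write-up to be essentially bookkeeping: making the choice of $n^\star$ explicit in terms of how close $(\beta_1,\beta_2)$ sits to the corners $\{\beta_1 \to 1\}$ or $\{\beta_2 \to 0\}$ of the square, and checking that $L, D_0, D_1$ remain finite for each such $n^\star$. Since $n^\star$ can be forced to be enormous when $(\beta_1,\beta_2)$ lies well above the curve $\beta_1 = \sqrt{\beta_2}$, it is worth emphasising that a different witness function is used for different $(\beta_1,\beta_2)$, mirroring the order-of-quantifier convention used in \citep{reddi2019convergence}.
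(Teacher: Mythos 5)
Your proposal coincides with the paper's own proof: the paper disposes of Corollary \ref{corollary_diverge} in a single line by observing that, for any fixed $\betabeta \in [0,1)^2$, the analytic conditions \eqref{diverge_c1}, \eqref{diverge_c2}, \eqref{diverge_c3} can be satisfied by taking $n$ sufficiently large, and then invoking Proposition \ref{thm_diverge} on the counterexample \eqref{counterexample1} with that $n$ --- a different witness function (hence a different function class) for each $\betabeta$, which is exactly the quantifier reversal you describe. Your additional bookkeeping (verifying Assumptions \ref{assum1} and \ref{assum2} for the witness and making $n^\star(\beta_1,\beta_2)$ explicit) only spells out what the paper leaves implicit; in both your argument and the paper's, the sole load-bearing step is the asserted expansion of the divergence region to all of $[0,1)^2$ as $n\to\infty$, which you correctly identify as the only substantive ingredient.
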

\vspace{-1mm}


Proof of Corollary \ref{corollary_diverge} can be seen in the final paragraph in Appendix \ref{appendix:thm_diverge}. In the proof, we also require different $n$ to cause divergence for different $\betabeta$. So the function class is constantly changing.
As a result, in {\bf Case I}, we cannot prove any convergence result.





\vspace{-4mm}
{\small 
\begin{table*}[h]
\caption{Possible algorithmic behaviors of Adam  in {\bf Case II}.
}
\label{sample-table}
\begin{center}
\begin{small}
\begin{tabular}{lll}
\toprule
Setting  & Hyperparameters  & Adam's behavior\\
\midrule
\makecell[l]{$\forall f \in \functionclass$ \\with $D_0=0$ }& $\beta_2$ is large and  $\beta_1 < \sqrt{\beta_2}$ &\makecell[l]{converge to \\critical points {\bf(Ours)} }\\
\makecell[l]{$\forall f \in \functionclass$ \\ with $D_0 \neq 0$}& $\beta_2$ is large and  $\beta_1 < \sqrt{\beta_2}$ &\makecell[l]{converge to a bounded \\ region with size    $\mathcal{O}(D_0)$ {\bf(Ours)} }\\
$\exists f \in \functionclass$ & The orange region in Figure \ref{fig:counter_boundary_paper} &diverge to infinity {\bf(Ours)} \\
\bottomrule
\end{tabular}
\end{small}
\end{center}
\vspace{-2mm}
\label{table_adam_behavior}
\end{table*} }

\paragraph{Case II: When $\betabeta$ is picked after picking  $\functionclass$.} When the function class is picked in advance,
sample size $n$ will also be fixed. This case is closer to most practical applications.  In this case, we find that Adam's  behavior changes significantly in the different region of Figure \ref{fig:reconcile_case1}.
First, $\forall f(x) \in \functionclass$ will converge when $\beta_1<\sqrt{\beta_2}$ and  $\beta_2$ is large.  Second,  $\exists f(x) \in \functionclass$ will diverge to infinity when $(\beta_1, \beta_2)$ are in the orange region in Figure \ref{fig:counter_boundary_paper}. Since {\bf Case II} is closer to practical scenarios, these results can provide better guidance for hyperparameter tuning for Adam users. We provide some suggestions for practitioners in  Appendix \ref{sec:implication}.

For {\bf Case II}, we summarize the possible behaviors of Adam in Table \ref{table_adam_behavior}. 
We  also illustrate our convergence and divergence results in Figure \ref{fig:intro_paper} (d). Note that there are some blanket areas where Adam's behavior remains unknown, this part will be left as interesting future work.

\vspace{-2mm}
\section{Proof Ideas for the Convergence Result}
\label{sec:proofidea}
\vspace{-2mm}

We now (informally) introduce our proof ideas for the convergence result in Theorem \ref{thm1}. Simply put, we want to control the update direction $m_{k,i}/\sqrt{v_{k,i}}$ inside the dual cone of  gradient direction. Namely:
{\small 
\begin{equation} \label{eq_sketch_goal_paper_0}
  \Ex \langle \nabla f(x_{k,0}), \sum_{i=0}^{n-1} \frac{m_{k,i}}{\sqrt{v_{k,i}}} \rangle > 0.
\end{equation}
}
However, directly proving \eqref{eq_sketch_goal_paper_0} could be difficult because both $m_{k,i}$ and $v_{k,i}$ distort the trajectory. To start with, we try to control the movement of $v_{k,i}$ by increasing $\beta_2$ (similar idea as  \citep{shi2020rmsprop,zou2019sufficient,chen2021towards}). 
Recall $ v_{k,i} =  (1-\beta_2) \sum_{j=1}^{i}\beta_2^{i-j}\nabla f_{\tau_{k,j}} (x_{k,j}) \circ \nabla f_{\tau_{k,j}} (x_{k,j}) +\beta_2^{i} v_{k,0}$, we have $v_{k,i} \approx v_{k,0} $ when $\beta_2$ is large.
In this case, we have: 
{\small
$$ \Ex \left\langle \nabla f(x_{k,0}), \sum_{i=0}^{n-1} \frac{m_{k,i}}{\sqrt{v_{k,i}}} \right\rangle \approx \Ex \left\langle \frac{\nabla f(x_{k,0})}{\sqrt{v_{k,0}}}, \sum_{i=0}^{n-1} m_{k,i} \right\rangle \approx \Ex \left\langle \frac{\nabla f(x_{k,0})}{\sqrt{v_{k,0}}}, \nabla f(x_{k,0}) \right\rangle >0,$$ }
where the first ``$\approx$" is due to the large $\beta_2$ and the second ``$\approx$" is our goal. Now we need to show:
{\small
\begin{equation} \label{eq_sketch_goal_paper}
 \Ex \left\langle \frac{\nabla f(x_{k,0})}{\sqrt{v_{k,0}}}, \left(\sum_{i=0}^{n-1} m_{k,i} \right) -\nabla f(x_{k,0}) \right\rangle  \overset{(*)}{=}  \Ex \left(\sum_{l=1}^d \sum_{i=0}^{n-1} \frac{\partial_l f(x_{k,0})}{\sqrt{v_{l,k,0}}}\left( m_{l,k,i} -\partial_l f_{\tau_{k,i}}(x_{k,0})\right)   \right) 
  \approx 0,
\end{equation}
}
where $\partial_l f(x_{k,0})$ is the $l$-th component of $\nabla f(x_{k,0})$, similarly for $m_{l,k,0}$ and $v_{l,k,0}$.  $(*)$ is due to the finite-sum structure.
However, it is not easy to prove \eqref{eq_sketch_goal_paper}. We point out some technical issues below.

\vspace{-1mm}
   \paragraph{Issue I: massive momentum.}
     Directly proving \eqref{eq_sketch_goal_paper} is still not easy. We need to first consider a simplified problem: for every $l \in [d]$, assume we treat  $ \partial_l f(x_{k,0})/\sqrt{v_{l,k,0}}$ as a constant, how to bound {\small$ \Ex \sum_{i=0}^{n-1}\left(m_{l,k,i} -\partial_l f_{\tau_{k,i}}(x_{k,0}) \right) $}?
     
     It turns out that this simplified problem is still non-trivial.
    When $\beta_1$ is large, $m_{l,k,i}$ contains heavy historical signals which significantly distort the trajectory from gradient direction.  
Existing literature \citep{Manzil2018adaptive,de2018convergence,shi2020rmsprop} take a naive approach: 
	they set  $\beta_1 \approx 0$ so that  $m_{l,k,i} \approx  \partial_l f_{\tau_{k,i}} (x_{k,i})$. Then we get $\eqref{eq_sketch_goal_paper}  \approx 0$. 
	However, this method cannot be applied here since we are interested in practical cases where $\beta_1$ is large in $[0,1)$. 
	
	\vspace{-1mm}
	\paragraph{Issue II: stochastic non-linear dynamics.}   Even if we solve {\bf Issue I}, it is still unclear how to prove \eqref{eq_sketch_goal_paper}. This is because: for every $l \in [d]$, $ \partial_l f(x_{k,0})/\sqrt{v_{l,k,0}}$ is a r.v. instead of a constant. With this term involved, we are facing with a  stochastic non-linear dynamics, which could be difficult to analyze. Further, $ \partial_l f(x_{k,0})/\sqrt{v_{l,k,0}}$ is statistically dependent with  $\left(m_{l,k,i} -\partial_l f_{\tau_{k,i}}(x_{k,0}) \right)$, so we are not allowed to handle the expectation $\Ex(\partial_l f(x_{k,0})/\sqrt{v_{l,k,0}})$    separately .

Unfortunately, even with additional assumptions like bounded gradient, there is no general approach to tackle the above issues.  In this work, we propose solutions regardless of gradient magnitude.

\vspace{-1mm}
\paragraph{Solution to Issue I.} We prove the following Lemma to resolve {\bf Issue I}.

\begin{lemma} 
\label{lemma_paper_toy}
(Informal)
Consider Algorithm \ref{algorithm}. For every $l \in [d]$ and any $\beta_1 \in [0,1)$, we have the following result  under Assumption \ref{assum1}.
{\small
\begin{equation*}
   \delta(\beta_1) := \Ex \sum_{i=0}^{n-1}\left(m_{l,k,i} -\partial_l f_{\tau_{k,i}}(x_{k,0}) \right)  = \mathcal{O}\left(\frac{1}{\sqrt{k}}\right),
\end{equation*}}
where $\partial_l f(x_{k,0})$ is the $l$-th component of $\nabla f(x_{k,0})$; $m_{l,k,i} = (1-\beta_1)\partial_l f_{\tau_{k,i}}(x_{k,i})+ \beta_1 m_{l,k,i-1}$. 
\end{lemma}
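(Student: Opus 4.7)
The strategy is to introduce the stochastic gradient evaluated at the current iterate, $g_{k,i}:=\partial_l f_{\tau_{k,i}}(x_{k,i})$, and decompose
\begin{equation*}
  \sum_{i=0}^{n-1}\bigl(m_{l,k,i}-\partial_l f_{\tau_{k,i}}(x_{k,0})\bigr) \;=\; A_k + B_k,
\end{equation*}
where $A_k:=\sum_{i}(m_{l,k,i}-g_{k,i})$ captures the lag of the momentum behind the current stochastic gradient, and $B_k:=\sum_{i}(g_{k,i}-\partial_l f_{\tau_{k,i}}(x_{k,0}))$ is the intra-epoch gradient drift. I bound $\Ex A_k$ and $\Ex B_k$ separately.

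For $B_k$, $L$-smoothness gives $|B_k|\le L\sum_i\|x_{k,i}-x_{k,0}\|$, and the Adam update gives $\|x_{k,i}-x_{k,0}\|\le i\,\eta_k\,\max_j\|m_{k,j}/(\sqrt{v_{k,j}}+\epsilon)\|$. An exponentially-weighted Cauchy-Schwarz argument exploits $\beta_1<\sqrt{\beta_2}$ together with the standard lower bound of $v_{l,k,j}$ by a geometrically-discounted past squared gradient to show $|m_{l,k,j}/(\sqrt{v_{l,k,j}}+\epsilon)|\le C(\beta_1,\beta_2)$ componentwise, so $\|m_{k,j}/(\sqrt{v_{k,j}}+\epsilon)\|\le C\sqrt{d}$. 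With $\eta_k=\eta_1/\sqrt{nk}$, this yields $\Ex|B_k|=\mathcal{O}(n^{3/2}/\sqrt{k})$.

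For $A_k$, the key identity $m_{l,k,i}-g_{k,i}=\tfrac{\beta_1}{1-\beta_1}(m_{l,k,i-1}-m_{l,k,i})$ (immediate from the Adam update) telescopes to
\begin{equation*}
  A_k \;=\; \tfrac{\beta_1}{1-\beta_1}\bigl(m_{l,k,-1}-m_{l,k,n-1}\bigr).
\end{equation*}
Unrolling Adam over a full epoch produces the cross-epoch recursion $M_{k+1}=\beta_1^{n}M_k+(1-\beta_1^{n})\bar g_k$, where $M_k:=m_{l,k,-1}$ and $\bar g_k:=(1-\beta_1)\sum_{i=0}^{n-1}\beta_1^{n-1-i}g_{k,i}/(1-\beta_1^{n})$. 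Setting $e_k:=M_k-\bar g_k$ gives $\Ex e_{k+1}=\beta_1^{n}\Ex e_k+\Ex(\bar g_k-\bar g_{k+1})$, a geometric contraction of rate $\beta_1^{n}<1$ driven by the epoch-to-epoch drift of $\bar g_k$.

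To close the argument I would bound $|\Ex(\bar g_k-\bar g_{k+1})|=\mathcal{O}(1/\sqrt{k})$: replacing each $g_{k,i}$ by $\partial_l f_{\tau_{k,i}}(x_{k,0})$ costs $\mathcal{O}(L\|x_{k,i}-x_{k,0}\|)=\mathcal{O}(\sqrt{n/k})$ via the same displacement bound, and permutation exchangeability (the marginal of each $\tau_{k,i}$ is uniform) turns $\Ex\sum_i w_i\,\partial_l f_{\tau_{k,i}}(x_{k,0})$ into $\tfrac{1}{n}\Ex\partial_l f(x_{k,0})$ with weights $w_i:=(1-\beta_1)\beta_1^{n-1-i}/(1-\beta_1^n)$ summing to one. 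Thus $|\Ex(\bar g_k-\bar g_{k+1})|\le\tfrac{L}{n}\Ex\|x_{k,0}-x_{k+1,0}\|+\mathcal{O}(\sqrt{n/k})=\mathcal{O}(1/\sqrt{k})$, and iterating the linear recursion for $\Ex e_k$ (the geometric kernel sums to $(1-\beta_1^n)^{-1}$) yields $|\Ex e_k|=\mathcal{O}(1/\sqrt{k})$, so $|\Ex A_k|=\tfrac{\beta_1(1-\beta_1^n)}{1-\beta_1}|\Ex e_k|=\mathcal{O}(1/\sqrt{k})$. The main obstacle is exactly this last step: without a bounded-gradient assumption the individual $|g_{k,i}|$ cannot be controlled directly (Assumption~\ref{assum2} only bounds aggregate sums), so every bound must be routed through the effective step size $\|m/(\sqrt{v}+\epsilon)\|$, whose boundedness under $\beta_1<\sqrt{\beta_2}$ is the core technical leverage enabling both the displacement control for $B_k$ and the drift control for $\bar g_k-\bar g_{k+1}$.
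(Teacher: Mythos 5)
Your argument is correct, but it reaches the bound by a genuinely different route than the paper. The paper proves Lemma \ref{lemma_paper_toy} via the first color-ball model (Lemma \ref{lemma_toy_2}, Appendix \ref{appendix:sketch}): it expands $\sum_i m_{l,k,i}$ and $\sum_i\partial_l f_{\tau_{k,i}}(x_{k,0})$ into matched $\beta_1$-weighted series over all past epochs, takes conditional expectations epoch by epoch, and shows that the contributions of successive epochs cancel, leaving a residue of order $\beta_1^{nk}$ plus $\mathcal{O}(1/\sqrt{k})$ Lipschitz errors coming from the per-step displacement bound (Lemma \ref{lemma_delta}). You bypass that bookkeeping entirely: the in-epoch identity $m_{l,k,i}-\partial_l f_{\tau_{k,i}}(x_{k,i})=\frac{\beta_1}{1-\beta_1}\left(m_{l,k,i-1}-m_{l,k,i}\right)$ telescopes the momentum lag to the boundary difference $\frac{\beta_1}{1-\beta_1}\left(m_{l,k,-1}-m_{l,k,n-1}\right)$, and the cross-epoch recursion $e_{k+1}=\beta_1^n e_k+(\bar g_k-\bar g_{k+1})$ with $\mathcal{O}(1/\sqrt{k})$ drift closes the estimate; here the step $\Ex\left[\sum_i w_i\,\partial_l f_{\tau_{k,i}}(x_{k,0})\right]=\frac{1}{n}\Ex\,\partial_l f(x_{k,0})$ (uniform marginal of $\tau_{k,i}$ with weights summing to one, legitimate because $x_{k,0}$ is fixed by the conditioning) plays the role that cross-epoch cancellation plays in the paper. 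Both routes lean on the same bound $|m_{l,k,j}/\sqrt{v_{l,k,j}}|\le\frac{1-\beta_1}{\sqrt{1-\beta_2}}\frac{1}{1-\beta_1/\sqrt{\beta_2}}$, so like the paper you implicitly need $\beta_1<\sqrt{\beta_2}$ even though the informal statement quantifies only over $\beta_1$; and your remark that the geometric kernel sums to $(1-\beta_1^n)^{-1}$ should be fleshed out with the standard split of $\sum_j\beta_1^{n(k-1-j)}j^{-1/2}$ (the paper's constant $\delta_2$ serves exactly this purpose), but these are routine details. What your route buys is a shorter, more elementary proof of this scalar statement. What the paper's heavier machinery buys is extensibility: in the main argument the same momentum lag appears multiplied by the random weight $\partial_l f(x_{k,0})/\sqrt{v_{l,k,0}}$ and restricted to the random set of large coordinates (Lemma \ref{lemma_paper_idea_2}, Lemma \ref{lemma_b1}), and your telescoping-plus-contraction trick does not pass through that weighting without reintroducing the sequential conditioning and weight-drift control of Lemmas \ref{lemma_k-k-1} and \ref{lemma_indicator}; the paper's cancellation also yields the sharper $\mathcal{O}(\beta_1^{nk})$ leading residue, though $\mathcal{O}(1/\sqrt{k})$ suffices for the statement as given.
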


We present the proof idea in Appendix \ref{appendix:sketch}. Simply put,
we construct a simple  toy example called ``color-ball" model (of the 1st kind). This toy model shows a special property of $\delta(\beta_1)$. We find out:  for Algorithm \ref{algorithm}, error terms from successive epochs can be  canceled, which keeps the momentum roughly in the descent direction. This important property is not revealed in any existing work. 

\vspace{-1mm}
\paragraph{Remark 4:}
When assuming bounded gradient {\small $\|\nabla f(x)\| \leq G$},
a naive upper bound would be  $\delta(\beta_1) = \mathcal{O}(G) $. However, such constant upper bound does not imply $\delta (\beta_1)$ is close to 0. It will not help prove the convergence. This might be partially the reason why large-$\beta_1$ Adam is hard to analyze even under bounded gradient (see related works in Section \ref{section:related}). 
We emphasize Lemma \ref{lemma_paper_toy} holds true regardless of gradient norm, so it could be deployed in both bounded or unbounded gradient analysis.

\paragraph{Solution to Issue II.}

We try to show \eqref{eq_sketch_goal_paper}  by adopting Lemma \ref{lemma_paper_toy}. However, the direct application cannot work since $\frac{\partial_l f(\xkz)}{\sqrt{\vlkz}}$ is random. 
Despite its randomness, 
we find out that when $\beta_2$ is large, the changes of {\small $\frac{\partial_l f(\xkz)}{\sqrt{\vlkz}} $}  shrinks along iteration.
As such, although $\frac{\partial_l f(\xkz)}{\sqrt{\vlkz}}$ brings extra perturbation, 
the quantity in \eqref{eq_sketch_goal_paper} share the similar asymptotic behavior as $\delta(\beta_1)$. We prove the following Lemma \ref{lemma_paper_idea_2}.

\begin{lemma} \label{lemma_paper_idea_2}
(Informal)  Under Assumption \ref{assum1} and \ref{assum2}, consider Algorithm \ref{algorithm} with large $\beta_2$ and $\beta_1< \sqrt{\beta_2}$. For those $l$ with gradient component larger than certain threshold, we have:
{\small
\begin{equation}\label{eq_k-k-1_paper}
\left|\frac{\partial_l f(x_{k,0})}{\sqrt{v_{k,0}}}  - \frac{\partial_l f(x_{k-1,0})}{\sqrt{v_{k-1,0}}}\right| = \mathcal{O}\left(\frac{1}{\sqrt{k}}\right);
\end{equation}
}
{\small
\begin{equation} \label{eq_b1_paper}
\Ex \left( \frac{\partial_l f(\xkz)}{\sqrt{\vlkz}} \sum_{i=0}^{n-1}(m_{l,k,i} - \partial_l f_{\tau_{k,i}}(x_{k,0})) \right) = \mathcal{O}\left(\frac{1}{\sqrt{k}}\right).
\end{equation}
}

\end{lemma}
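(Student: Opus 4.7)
The plan is to handle the two bounds in turn. For the slow-change estimate \eqref{eq_k-k-1_paper}, I would decompose
$$\frac{\partial_l f(\xkz)}{\sqrt{\vlkz}} - \frac{\partial_l f(x_{k-1,0})}{\sqrt{v_{l,k-1,0}}} = \frac{\partial_l f(\xkz) - \partial_l f(x_{k-1,0})}{\sqrt{\vlkz}} + \partial_l f(x_{k-1,0})\left(\frac{1}{\sqrt{\vlkz}} - \frac{1}{\sqrt{v_{l,k-1,0}}}\right)$$
and bound each summand separately. The first is controlled by $L$-smoothness applied to the one-epoch displacement $\|\xkz - x_{k-1,0}\| \le \eta_{k-1}\sum_{i=0}^{n-1}\|m_{k-1,i}/(\sqrt{v_{k-1,i}}+\epsilon)\|$; the elementary coordinatewise bound $|\mlki/\sqrt{\vlki}| = \mathcal{O}(1)$, which follows from the EMA recursions together with $\beta_1<\sqrt{\beta_2}$, combined with $\eta_k=\eta_1/\sqrt{nk}$, yields $\|\xkz-x_{k-1,0}\| = \mathcal{O}(\sqrt{n/k})$. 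For the denominator-change summand I would unroll $\vlkz$ and $v_{l,k-1,0}$ via the $\beta_2$-EMA and express their difference as a geometric factor $(1-\beta_2^n)$ times a weighted average, plus a correction generated by how much the coordinate squared-gradients change from one epoch to the next; $L$-smoothness and the same displacement estimate turn the latter into another $\mathcal{O}(\sqrt{n/k})$ quantity. Under the above-threshold condition on $|\partial_l f(\xkz)|$, the EMA lower bound on $\sqrt{\vlkz}$ (derived from the finite-sum structure and Assumption \ref{assum2}) keeps the ratio well-defined, so the overall change is $\mathcal{O}(1/\sqrt{k})$.

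For the decoupling estimate \eqref{eq_b1_paper}, the key observation is that $A_k := \partial_l f(\xkz)/\sqrt{\vlkz}$ is measurable with respect to the $\sigma$-algebra $\mathcal{F}_k$ generated by the history just before the permutation $\{\tau_{k,0},\ldots,\tau_{k,n-1}\}$ is drawn, while $B_k := \sum_{i=0}^{n-1}(\mlki - \partialtauki(\xkz))$ carries its new randomness over $\mathcal{F}_k$ through precisely that permutation and the short trajectory it generates. I would therefore condition on $\mathcal{F}_k$, pull $A_k$ outside, and invoke a conditional version of Lemma \ref{lemma_paper_toy} to deduce $\Ex[B_k \mid \mathcal{F}_k] = \mathcal{O}(1/\sqrt{k})$; the color-ball argument behind that lemma is essentially a one-epoch statement that naturally conditions on the past. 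Then $|\Ex(A_k B_k)| \le \Ex[|A_k|\cdot \mathcal{O}(1/\sqrt{k})]$, and it remains to bound $\Ex|A_k|$ uniformly in $k$. Under the above-threshold condition, Assumption \ref{assum2} controls the numerator $|\partial_l f(\xkz)|$ while the EMA structure of $v$ combined with smoothness provides a matching lower bound on $\sqrt{\vlkz}$ along the trajectory; together they give $|A_k| = \mathcal{O}(1)$.

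The main obstacle is producing the conditional form of Lemma \ref{lemma_paper_toy}: one must argue that the across-epoch cancellation its proof exploits survives after conditioning on the pre-epoch filtration, which requires revisiting the color-ball construction to track which quantities are genuinely $\mathcal{F}_k$-measurable versus freshly generated by the new permutation. A secondary obstacle is to guarantee that the above-threshold condition propagates stably from $x_{k-1,0}$ to $\xkz$; here the slow-change estimate \eqref{eq_k-k-1_paper}, once established, is bootstrapped to keep $|\partial_l f(\xkz)|$ and $\sqrt{\vlkz}$ on comparable orders over the range $k \in [k_m,T]$ considered in Theorem \ref{thm1}. With both inequalities in hand, the product appearing in \eqref{eq_sketch_goal_paper} is driven to $\mathcal{O}(1/\sqrt{k})$, which is the per-epoch control needed by the overall descent argument.
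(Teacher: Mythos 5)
There is a genuine gap, and it sits at the heart of your argument for \eqref{eq_b1_paper}. You propose to condition on the pre-epoch $\sigma$-algebra, pull out $A_k=\partial_l f(\xkz)/\sqrt{\vlkz}$, and invoke a ``conditional version'' of Lemma \ref{lemma_paper_toy} giving $\Ext\bigl[\sum_{i=0}^{n-1}(\mlki-\partial_l f_{\tau_{k,i}}(\xkz))\bigr]=\mathcal{O}(1/\sqrt{k})$. That conditional statement is false, and it is precisely the point the paper's machinery exists to circumvent. Conditioned on the history up to $x_{k,0}$, the momentum entering epoch $k$ is a frozen constant, and the one-epoch conditional mismatch has an order-$\beta_1$ residue that does not decay in $k$: in the color-ball computation this is exactly \eqref{sketch_toy_k}, $\Ext[\sum_i m_{i,k}-\sum_i f_{i,k}]=-\tfrac{\beta}{2}(c_0+c_1)$. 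The smallness of $\delta(\beta_1)$ in Lemma \ref{lemma_paper_toy} comes only from telescoping these residues against the randomness of epochs $k-1,k-2,\dots$, i.e.\ from the \emph{full} expectation. Because that telescoping must be run through all past epochs while the random weight $A_k$ is attached, the paper (Lemmas \ref{lemma_b1} and \ref{lemma_Mp-F_1}) repeatedly swaps $A_k\to A_{k-1}\to\cdots$ using Lemma \ref{lemma_k-k-1}, and swaps the random ``$l$ large'' index set using Lemma \ref{lemma_indicator}, paying an error at every step; these accumulated errors are exactly why the rigorous version of \eqref{eq_b1_paper} carries an extra $\mathcal{O}\bigl(\delta_1\,\Ex\sum_{l,i}|\partial_l f_i(x_{k,0})|\bigr)$ term that is small only because $\beta_2$ is large, not because $k$ is large. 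Your plan collapses all of this into a single conditioning step that does not hold, so the ``main obstacle'' you flag is not a technical check but the place where the argument breaks.

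A related, smaller overclaim affects \eqref{eq_k-k-1_paper}. In your two-term decomposition, the numerator-change term is indeed $\mathcal{O}(1/\sqrt{k})$ by smoothness and the step-size bound (this matches Lemma \ref{lemma_delta}), but the denominator-change term is not: even if the iterates did not move, $\vlkz$ and $v_{l,k-1,0}$ differ because the EMA re-weights a freshly permuted set of squared gradients, and this drift is of size $\mathcal{O}(\delta_1)$ --- constant in $k$, vanishing only as $\beta_2\to1$. This is why the formal statement, Lemma \ref{lemma_k-k-1}, bounds the difference by $\frac{n^2\triangle_{n(k-j)}}{\sqrt{v_{l,k-j,0}}}$ plus a $j\,\delta_1$-proportional term rather than a clean $\mathcal{O}(1/\sqrt{k})$, with the lower bound on $\sqrt{v}$ supplied by Lemma \ref{lemma_f1} under the threshold condition. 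Your sketch mentions the $(1-\beta_2^n)$-weighted re-shuffling contribution but then asserts it is absorbed into $\mathcal{O}(\sqrt{n/k})$, which cannot be done for fixed $\beta_2<1$; and since your treatment of \eqref{eq_b1_paper} silently relies on the same too-strong decay, the two gaps compound rather than offset each other.
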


In Appendix \ref{appendix:sketch}, we introduce how to derive \eqref{eq_b1_paper} from \eqref{eq_k-k-1_paper}. To do so, we introduce a new type of ``color-ball" model (we call it color-ball of the 2nd kind) which adopts the random perturbation of $\frac{\partial_l f(\xkz)}{\sqrt{\vlkz}}$. Understanding color-ball model of the 2nd kind is crucial for proving  Lemma \ref{lemma_paper_idea_2}.

We conclude the proof of \eqref{eq_sketch_goal_paper} by some additional analysis on ``those $l$ with small gradient component". This case is a bit easier since it reduces to bounded gradient case. 
For readers who wants to learn more about the idea of tackling {\bf Issue I} and {\bf II}, please refer to Appendix \ref{appendix:sketch}
where we formally introduce the 1st and 2nd kind of color-ball models.  
Since the whole proof is quite long, we  provide a proof sketch in Appendix \ref{appendix:roadmap}. The whole proof is presented in Appendix \ref{appendix:thm1}.

\section{Experiments}
\label{section:exp}
\vspace{-2mm}

To support our theory, we provide more simulations and real-data experiments. All the experimental settings and hyperparameters are presented in Appendix \ref{appendix:exp_setting}. We aim to show:


{\bf (I).} When $\beta_2$ is large, a large range of $\beta_1$ gives good performance, including all $\beta_1 <\sqrt{\beta_2}$. 

{\bf (II).} When $\beta_2$ is small, a large range of $\beta_1$ performs relatively badly.

\vspace{-2mm}
\begin{figure*}[htbp]
  \vspace{-2mm}
    \centering
    \subfigure[{\tiny Function \eqref{eq_non_realizable} with $D_0>0$}]{
      \begin{minipage}[t]{0.25\linewidth}
      \centering
    \includegraphics[width=\linewidth]{./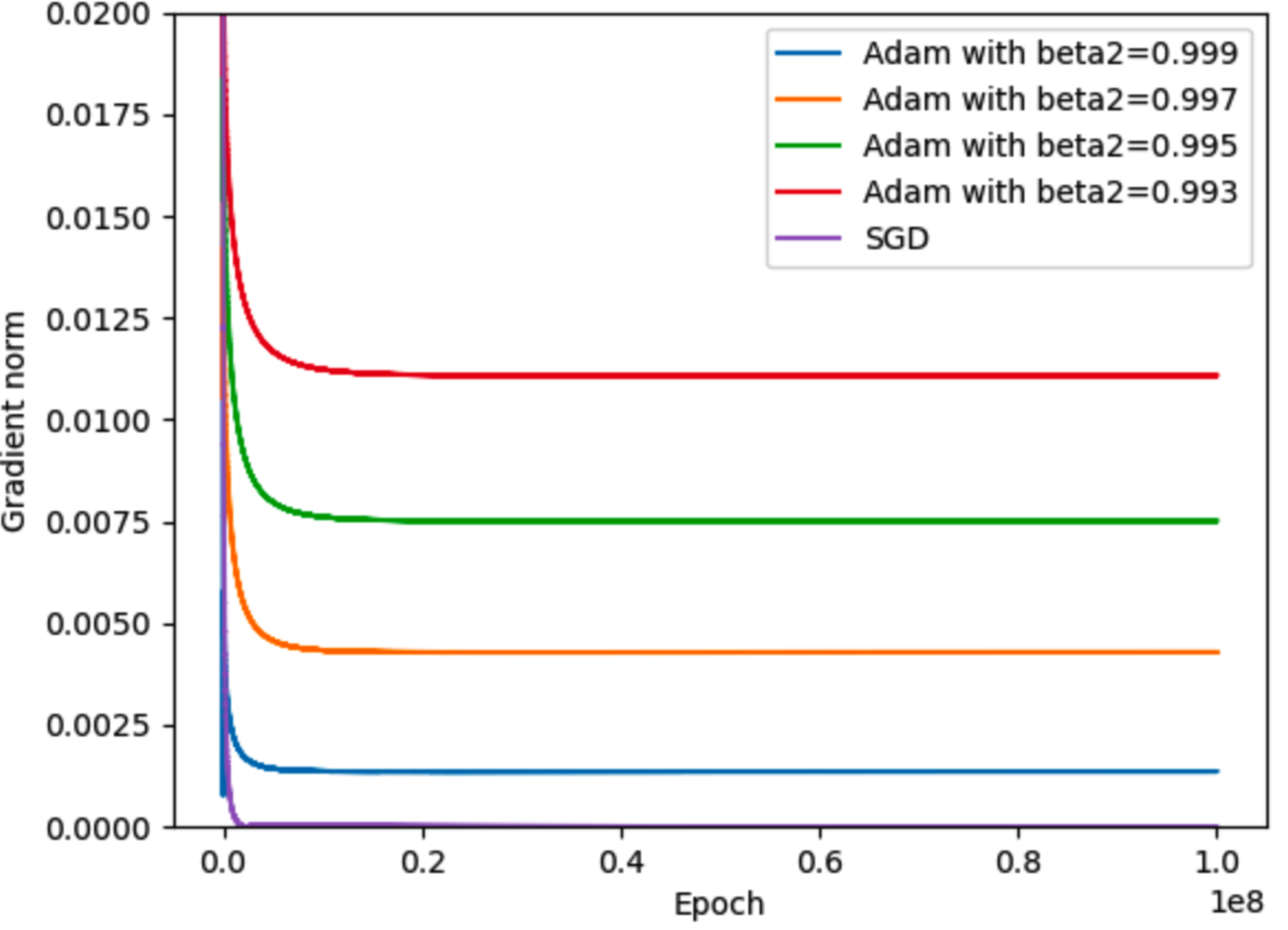}
      \end{minipage}%
      }%
    \subfigure[{\tiny Function \eqref{counterexample1} with $D_0 = 0$}]{
    \begin{minipage}[t]{0.25\linewidth}
    \centering
\includegraphics[width=\linewidth]{./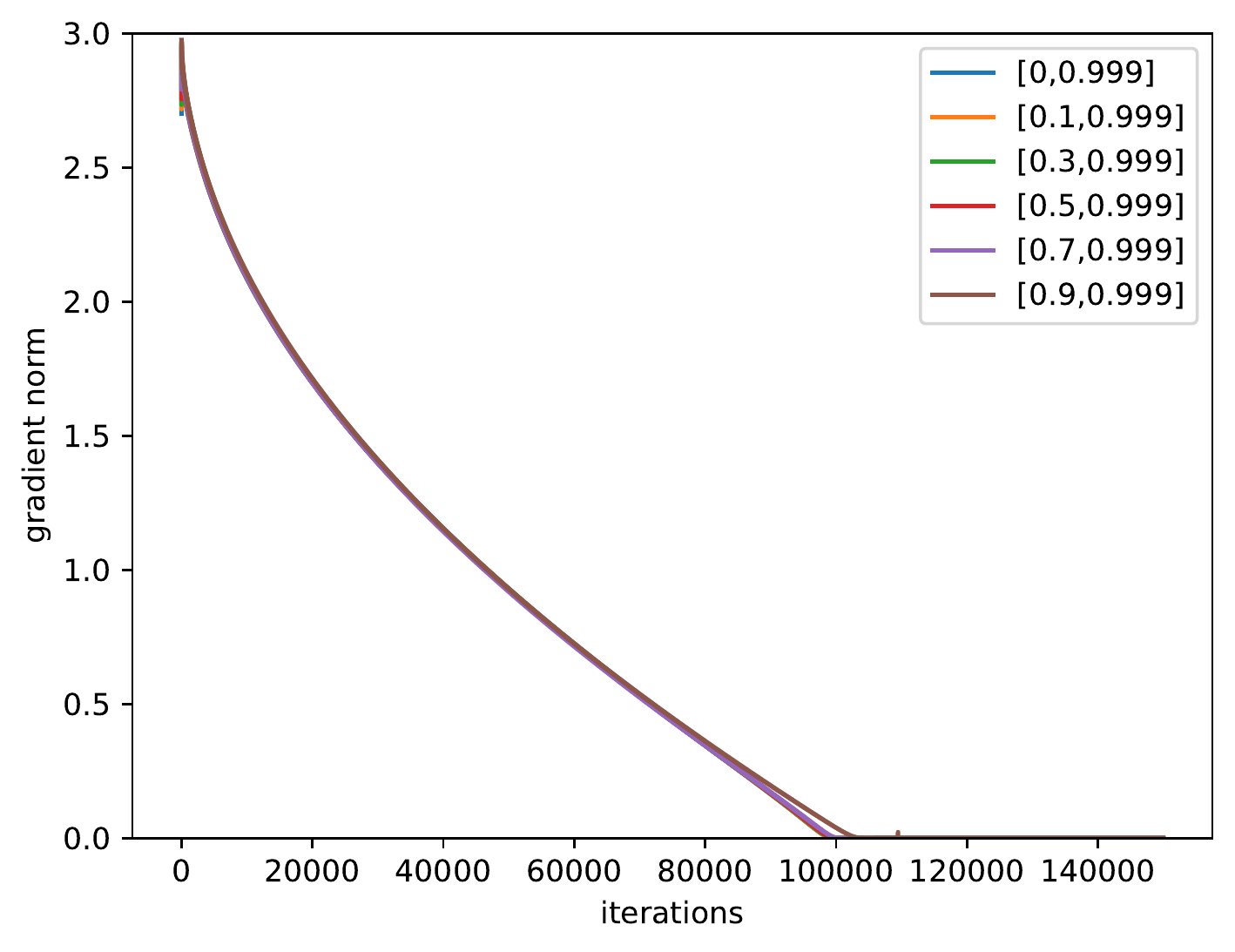}
    \end{minipage}%
    }%
    \subfigure[{\tiny The trajectories of  small-$\beta_2$ Adam }]{
    \begin{minipage}[t]{0.25\linewidth}
    \centering
    \includegraphics[width=\linewidth]{./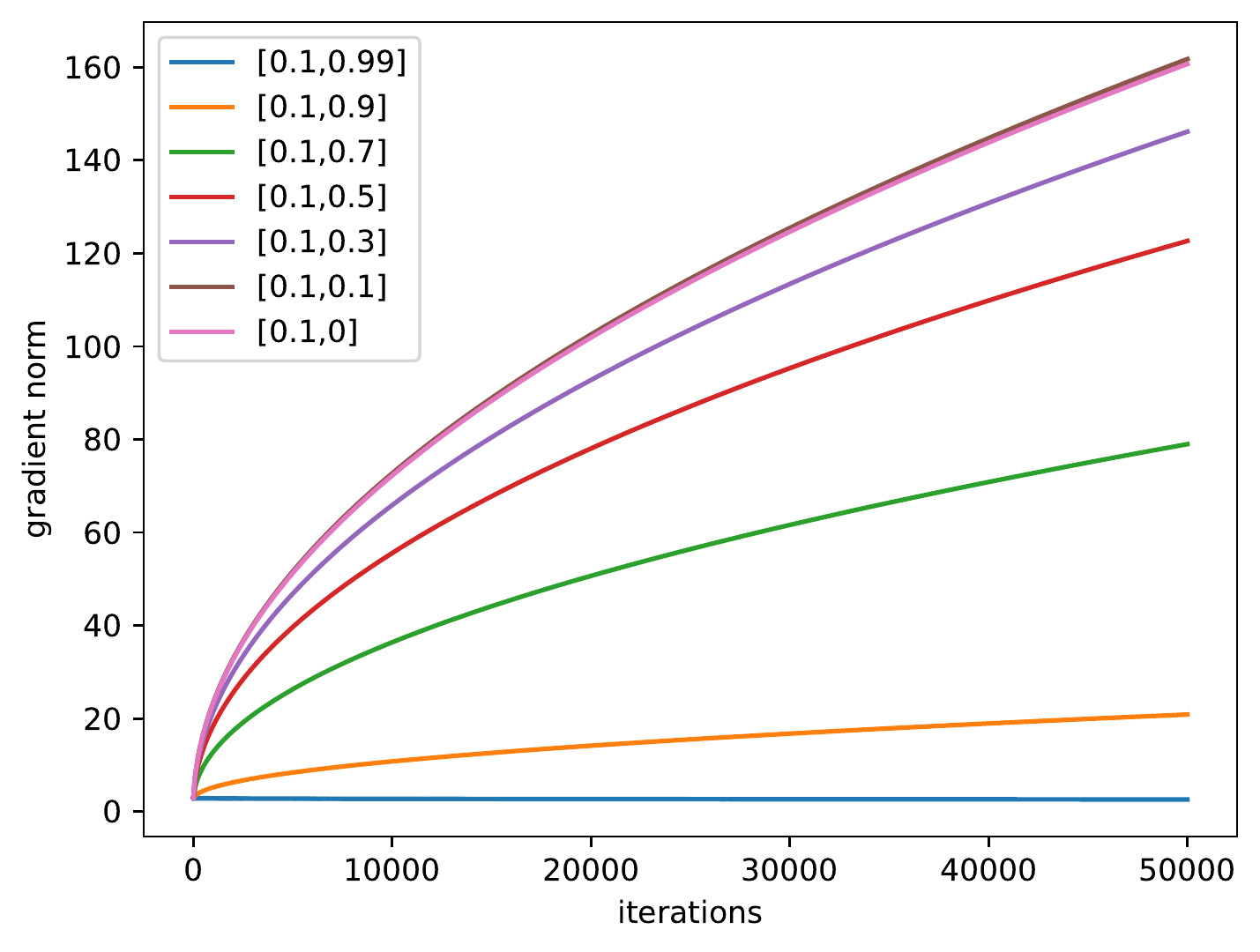}
    \end{minipage}%
    }%
    \subfigure[{\tiny WikiText-103}]{
      \begin{minipage}[t]{0.25\linewidth}
      \centering
    \includegraphics[width=\linewidth]{./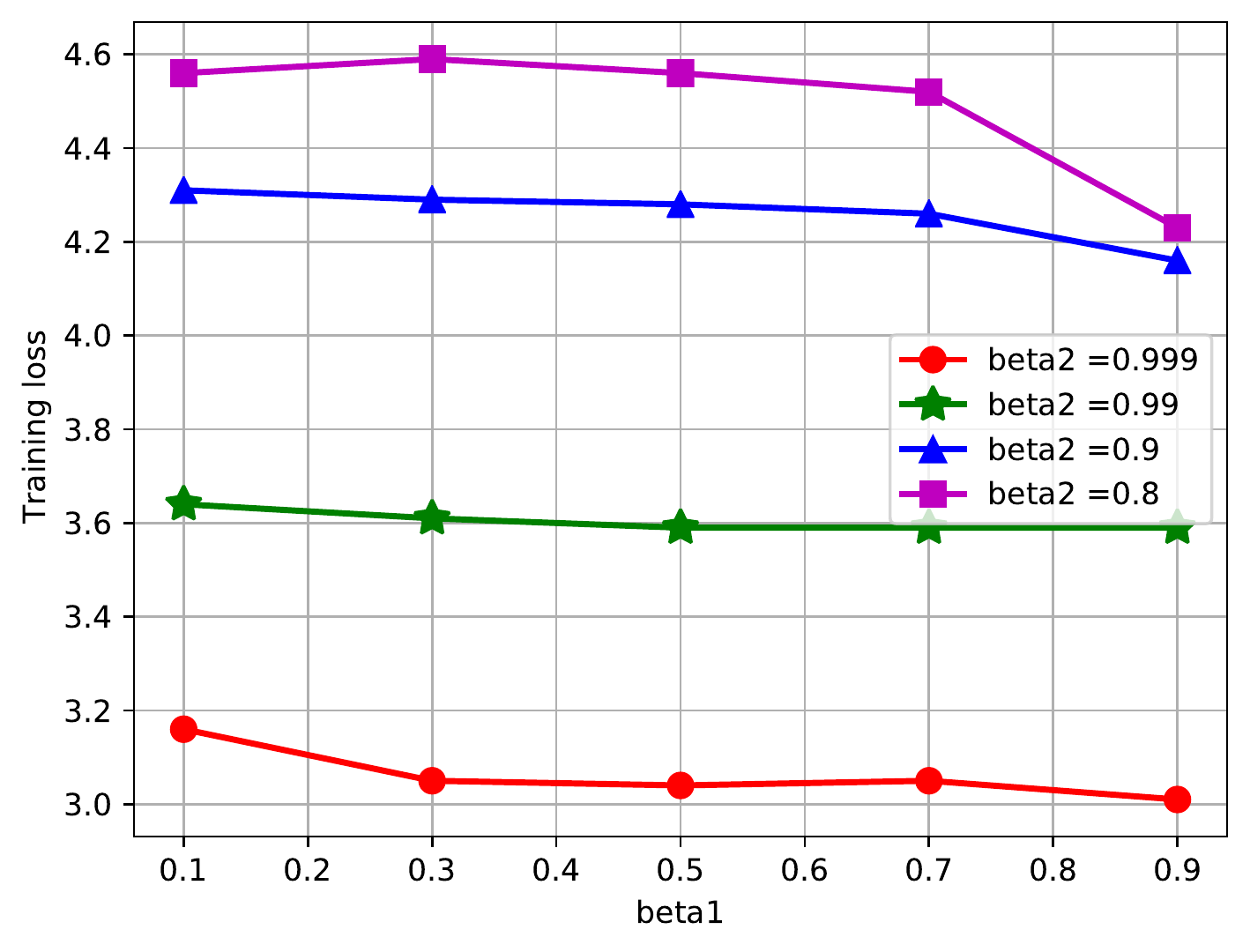}
      \end{minipage}%
      }%
    \centering
    \vspace{-3mm}
    \caption{{\small The performance of Adam on different tasks. 
    {\bf (a) (b):} 
  large-$\beta_2$ Adam converges to bounded region when $D_0>0$ and converges to critical points when $D_0 = 0$. We use diminishing stepsize $\eta_k = 0.1 /\sqrt{k}$. 
    {\bf (c):} When $\beta_2$  is small, gradient norm of Adam iterates can be unbounded. We use function \ref{counterexample1} with initialization $x =-5$ and $n=20$.  The legends in (b) and (c) stand for $[\beta_1,\beta_2]$.
    {\bf (d):}
    The training loss under different $\betabeta$ on NLP tasks.  We use Adam to train Transformer XL on WikiText-103 dataset. }} 
    \vspace{-2mm}
    \label{fig:cifar_nlp_mnist}
\end{figure*}

\paragraph{Convergence to bounded region when $D_0>0$.} In Figure \ref{fig:cifar_nlp_mnist}, we run large-$\beta_2$ Adam on function \eqref{eq_non_realizable} (defined later in Appendix \ref{appendix:more_exp}). This function satisfies with $D_0 >0$. We find that even with diminishing stepsize $\eta_k = 1/ \sqrt{k}$,  Adam may not converge to an exact critical point. Instead, it converges to a bounded region.  This is because: even though $\eta_k$ is decreasing, the effective stepsize $\eta_k/\sqrt{v_{k,i}}$ might not decay.   Further, the size of the region shrinks when $\beta_2$ increases. This is because the movement of  $\sqrt{v_{k,i}}$ shrinks as  $\beta_2$ increases. These phenomena  match  {\bf Remark 3} and claim {\bf (I)}.

\paragraph{Convergence to critical points when $D_0 = 0$} Since function  \eqref{counterexample1} satisfies $D_0=0$, we run more experiments on \eqref{counterexample1} with initialization $x=-5$ and $n=5,10,15,20$.  We show the result of $n=20$ in Figure \ref{fig:cifar_nlp_mnist} (a), (b); the rest are shown in Appendix \ref{appendix:more_exp}. We find that: when $\beta_2$ is large, Adam converges to critical points for $\beta_1< \sqrt{\beta_2}$. These phenomena  match  claim {\bf  (I)}.

\paragraph{Gradient norm of iterates can be unbounded when $\beta_2$  is small.} On function \eqref{counterexample1}, We further run Adam with small $\beta_2$ at initialization $x=-5$. 
In this case, gradient norms of iterates increase dramatically.  This emphasizes the importance of discarding bounded gradient assumptions. These phenomena  match claim {\bf (II)}.


\vspace{-2mm}
\paragraph{MNIST and CIFAR-10.} 

As shown in Figure \ref{fig:intro_paper} (b)\& (c) in Section \ref{section:intro},  the training results match both  claim {\bf (I)} and {\bf (II).} 
In addition, there is a convex-shaped boundary on the transition from low loss to higher loss, this boundary roughly matches the condition in Theorem \ref{thm1}. 


\vspace{-2mm}
\paragraph{NLP.}  We use Adam to train Transformer XL \citep{dai2019transformer} on the WikiText-103 dataset \citep{merity2016pointer}.  
This architecture and dataset is widely used in NLP tasks (e.g. \citep{howard2018universal, see2017get}).
As shown in Figure \ref{fig:cifar_nlp_mnist} (d), the training results match both claim {\bf (I)} and {\bf (II).}

\vspace{-2mm}
\section{Conclusions}
\label{sec_conclusion}
\vspace{-2mm}
In this work, we explore the (non-)convergence of Adam. When $\beta_2$ is large, we  
prove that Adam can converge 
 with any $\beta_1 < \sqrt{\beta_2}$.
When $\beta_2$ is small, we further show that Adam might diverge to infinity for  a wide range of $\beta_1$.  One interesting question is to
verify  the advantage of Adam over SGD.
In this work, we focus on the fundamental issue of convergence. Proving faster convergence of Adam would be our future work.

\begin{ack}
 Yushun Zhang would  like to thank Bohan Wang, Reviewer xyuf, Reviewer UR9H, Reviwer tmNN and Reviewer V9yg for the careful proof reading and helpful comments. Yushun Zhang would like to thank Bohan Wang for the valuable discussions around Lemma G.3. Yushun Zhang would also like to  thank Reviewer UR9H for the valuable discussions around Lemma G.13. This work is supported by the Internal Project Fund from Shenzhen Research Institute of Big Data under Grant J00220220001. 
 This work is supported by NSFC-A10120170016, NSFC-617310018 and the Guandong
 Provincial Key Laboratory of Big Data Computing.   

\end{ack}

\bibliographystyle{icml2022}
\bibliography{references}








\section*{Checklist}


\begin{enumerate}

\item For all authors...
\begin{enumerate}
  \item Do the main claims made in the abstract and introduction accurately reflect the paper's contributions and scope?
    \answerYes{}
  \item Did you describe the limitations of your work?
    \answerYes{}
  \item Did you discuss any potential negative societal impacts of your work?
    \answerYes{}
  \item Have you read the ethics review guidelines and ensured that your paper conforms to them?
    \answerYes{}
\end{enumerate}

\item If you are including theoretical results...
\begin{enumerate}
  \item Did you state the full set of assumptions of all theoretical results?
    \answerYes{}
        \item Did you include complete proofs of all theoretical results?
    \answerYes{}
\end{enumerate}

\item If you ran experiments...
\begin{enumerate}
  \item Did you include the code, data, and instructions needed to reproduce the main experimental results (either in the supplemental material or as a URL)?
    \answerNo{}
  \item Did you specify all the training details (e.g., data splits, hyperparameters, how they were chosen)?
    \answerYes{}{}
        \item Did you report error bars (e.g., with respect to the random seed after running experiments multiple times)?
    \answerNo{}
        \item Did you include the total amount of compute and the type of resources used (e.g., type of GPUs, internal cluster, or cloud provider)?
    \answerNo{}
\end{enumerate}

\item If you are using existing assets (e.g., code, data, models) or curating/releasing new assets...
\begin{enumerate}
  \item If your work uses existing assets, did you cite the creators?
    \answerYes{}
  \item Did you mention the license of the assets?
    \answerNo{}
  \item Did you include any new assets either in the supplemental material or as a URL?
    \answerNo{}
  \item Did you discuss whether and how consent was obtained from people whose data you're using/curating?
    \answerNo{}
  \item Did you discuss whether the data you are using/curating contains personally identifiable information or offensive content?
    \answerNo{}
\end{enumerate}

\item If you used crowdsourcing or conducted research with human subjects...
\begin{enumerate}
  \item Did you include the full text of instructions given to participants and screenshots, if applicable?
    \answerNA{}
  \item Did you describe any potential participant risks, with links to Institutional Review Board (IRB) approvals, if applicable?
    \answerNA{}
  \item Did you include the estimated hourly wage paid to participants and the total amount spent on participant compensation?
    \answerNA{}
\end{enumerate}

\end{enumerate}


\newpage
\appendix


\section*{Negative Social Impact}
This script may provide better guidance for neural nets training.  It would have certain negative social impact if neural nets are deployed for illegal usage.

\section*{Appendix Organization}

The Appendix is organized as follows.

\begin{itemize}
    \item Appendix \ref{appendix:sketch} introduces two ``color-ball" models  and their applications in proving Lemma \ref{lemma_paper_toy} and Lemma \ref{lemma_paper_idea_2}. This part is important for proving Theorem \ref{thm1}.
    
      \item Appendix \ref{appendix:more_exp} introduces more experiments to support our theory.
    \item Appendix \ref{sec:implication} provide some suggestions for hyperparameter tuning of Adam.
    \item Appendix \ref{appendix:more_related} provide more discussions on some recent related works.  Appendix \ref{appendix:reddi} re-state the non-convergence results in \citep{reddi2019convergence}. 
    
    \item Appendix \ref{appendix:thm_diverge} provides detailed proof for Proposition \ref{thm_diverge}. 
    \item Appendix \ref{appendix_notations} provides some more notations and technical lemmas that serve for the proof of Theorem \ref{thm1}.
    \item Appendix \ref{appendix:thm1} provides detailed proof for Theorem \ref{thm1}. Especially, Appendix \ref{appendix:roadmap} provide a proof roadmap.
\end{itemize}

\section{Introduction to the ``Color-Ball'' Models: the Key Ingredients to Prove Theorem \ref{thm1}}
 \label{appendix:sketch}

 We now introduce two  ``color-ball" models  and their applications in tackling {\bf Issue I} and {\bf Issue II} mentioned in Section \ref{sec:proofidea}.  These two color-ball models are important for proving Theorem \ref{thm1}.

\paragraph{Solution to Issue I.} As discussed in Section \ref{sec:proofidea}, we wish to show that $\delta(\beta_1)$ vanishes with $k$.  Formally, we wish to get the following equation \eqref{sketch_m_goal} for every $l \in [d]$.

\begin{equation}
  \label{sketch_m_goal}
  \delta(\beta_1) = \left|\Ex \left[\sum_{i=0}^{n-1}\left(m_{l, k, i}-\partial_l f_{i}\left(x_{k, 0}\right)\right)\right] \right|= \mathcal{O}(\beta_1^{nk})  , \quad \forall \beta_1 \in [0,1)
\end{equation}

When $k$ is large, $\mathcal{O}(\beta_1^{nk})$ vanishes faster than   $\mathcal{O}(\frac{1}{\sqrt{k}})$ and thus Lemma \ref{lemma_paper_toy} can be proved.
In the following context, we will carefully quantify the mismatch between $m_{l,k,i}$  and  the $\partial_l f(x_{k,0})$. 
We find out that the error terms from successive epochs can be cancelled, which keeps the momentum roughly in the descent direction.  
To help readers understand our idea, we introduce the  ``color-ball " model (of the 1st kind) as follows. 

\paragraph{The color-ball model of the 1st kind.} Consider a box containing two balls labeled with constant $c_0, c_1 \in \mathbb{R}$, respectively. In each round (epoch), we randomly sample balls from the box without replacement, then we put them back. We denote the 1st sampled label in the $k$-th epoch as $a_{k}$ and  the 2nd sampled one as $b_{k}$;  $a_{k}, b_k \in \{c_0, c_1\}$. We define two random variables  $m_0$  and $m_1$ as follows (assume $\beta \in [0,1)$):

$$m_1 = \underbrace{ b_k +\beta
a_{k} }_{m_{1,k}} + \underbrace{\beta^{2} b_{k-1}   + \beta^{3} a_{k-1}  }_{m_{1,k-1}} +  \cdots  +\underbrace{\beta^{2(k-1)} b_{1}   + \beta^{2(k-1)+1} a_{1}  }_{m_{1,1}};  $$ 

$$m_0 = \underbrace{\quad\quad\quad
   a_{k} }_{m_{0,k}} + \underbrace{\beta^{1} b_{k-1}   + \beta^{2} a_{k-1}  }_{m_{0,k-1}} +  \cdots  +\underbrace{\beta^{2(k-1)-1} b_{1}   + \beta^{2(k-1)} a_{1}  }_{m_{1,1}};  $$

   where $m_{0,k}$ denotes the summand of $m_{0}$ in $k$-th epoch, similarly for $m_{1,k}$. Note that in each epoch,  $m_0$ and $m_1$ share the same sample order.
  Further, we introduce the following deterministic constants.

    $$f_1 = 
    c_1(\underbrace{1+\beta}_{f_{1,k}}+ \underbrace{\beta^2 +\beta^3}_{f_{1,k-1} }   \cdots +\underbrace{\beta^{2(k-1)}+ \beta^{2(k-1)+1} }_{f_{1,1}});  $$ 

    $$f_0 = 
    c_0(\underbrace{1+\beta}_{f_{0,k}}+ \underbrace{\beta^2 +\beta^3}_{f_{0,k-1} }   \cdots +\underbrace{\beta^{2(k-1)}+ \beta^{2(k-1)+1} }_{f_{0,1}});  $$ 

    where $f_{0,k}$ denotes the summand of $f_{0}$ in $k$-th epoch, similarly for $f_{1,k}$.  Now we prove the following Lemma \ref{lemma_toy_2}.

    \begin{lemma}\label{lemma_toy_2}
      In the color-ball model of the 1st kind, we have 
    
      \begin{equation*}
       \left\vert \Ex \left[ \sum_{i=0}^1 m_i -\sum_{i=0}^1 f_i \right] \right\vert = \beta^{2(k-1)+1} \left(\frac{c_0}{2} + \frac{c_1}{2}\right),
      \end{equation*}
    
      where the expectation is taken on all the possible draws.  For the color-ball example with  $n\geq 2$ balls, we have
    
      \begin{equation*}
        \left\vert\Ex \left[ \sum_{i=0}^{n-1} m_i -\sum_{i=0}^{n-1} f_i \right] \right \vert= \beta^{n(k-1)} \sum_{i=0}^{n-1} c_i (\frac{1}{n} \beta^1 \cdots+\frac{n-1}{n} \beta^{n-1}).
      \end{equation*}
    \end{lemma}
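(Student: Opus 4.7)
My plan is to compute $\Ex\!\left[\sum_{i=0}^{n-1} m_i\right]$ and $\sum_{i=0}^{n-1} f_i$ directly in closed form and subtract. A useful observation is that once expectations are taken, the random permutation inside each epoch plays no role: only the marginal mean $\bar c := \frac{1}{n}\sum_{j=0}^{n-1} c_j$ of each individual draw survives, so no combinatorial cancellation is needed and the entire identity becomes an elementary manipulation of geometric sums. The $n=2$ case will fall out as an immediate specialization.

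First I would flatten the draws into a single sequence $x_0, x_1, \ldots$, where $x_{(j-1)n + i'}$ is the $(i'{+}1)$-th sample of epoch $j$. Unrolling the recursion $m \leftarrow \beta m + x$ gives $m_{k,i} = \sum_{s=0}^{(k-1)n+i} \beta^{(k-1)n+i-s} x_s$. Summing over $i \in \{0,\ldots,n-1\}$ and swapping the order of summation yields
\begin{equation*}
\sum_{i=0}^{n-1} m_{k,i} \;=\; \frac{1-\beta^n}{1-\beta}\sum_{s=0}^{(k-1)n-1} \beta^{(k-1)n - s}\, x_s \;+\; \sum_{i'=0}^{n-1}\frac{1-\beta^{n-i'}}{1-\beta}\, x_{(k-1)n + i'},
\end{equation*}
the two groups being samples from prior epochs (each seen by all $n$ positions of the current epoch, hence the full coefficient $(1-\beta^n)/(1-\beta)$) and samples from the current epoch (each seen only by positions $i \geq i'$, hence a partial coefficient). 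Taking expectations with $\Ex[x_s] = \bar c$, summing the two geometric series, and noting that $\sum_i f_i = n\bar c (1-\beta^{kn})/(1-\beta)$ from the definition, after cancellation I obtain
\begin{equation*}
\Ex\!\left[\sum_{i=0}^{n-1} m_{k,i}\right] - \sum_{i=0}^{n-1} f_i \;=\; \frac{\bar c\, \beta^{(k-1)n}}{(1-\beta)^2}\bigl[\,n\beta^n(1-\beta) - \beta(1-\beta^n)\,\bigr].
\end{equation*}

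To finish, I would establish the algebraic identity $(1-\beta)^2 \sum_{j=1}^{n-1} j\beta^j = -\bigl[n\beta^n(1-\beta) - \beta(1-\beta^n)\bigr]$. Writing $S := \sum_{j=1}^{n-1} j\beta^j$, two Abel summations (or equivalently, differentiating $\sum_{j=0}^{n-1}\beta^j = (1-\beta^n)/(1-\beta)$) give $S(1-\beta)^2 = \beta - n\beta^n + (n-1)\beta^{n+1}$, which is exactly the negative of the bracket above. Substituting recovers the claimed formula $\bigl|\Ex\bigl[\sum_i m_i - \sum_i f_i\bigr]\bigr| = \bar c\, \beta^{(k-1)n} S = \beta^{n(k-1)}\sum_{i=0}^{n-1} c_i\bigl(\tfrac{1}{n}\beta + \tfrac{2}{n}\beta^2 + \cdots + \tfrac{n-1}{n}\beta^{n-1}\bigr)$, with the stated $n=2$ case being the immediate specialization $S = \beta$ giving $\beta^{2(k-1)+1}(c_0 + c_1)/2$. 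The main obstacle is purely bookkeeping: keeping the exponents and index ranges straight in the flattening and swap-of-summation step, and executing the final algebraic identity without a sign error. No deep probabilistic insight is required—the proof lives entirely at the level of marginal means—so the risk is arithmetic rather than conceptual.
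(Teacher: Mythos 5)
Your proof is correct, and the computations check out: the unrolled coefficient of a prior-epoch draw is indeed $\beta^{(k-1)n-s}\frac{1-\beta^n}{1-\beta}$, the current-epoch draw $x_{(k-1)n+i'}$ picks up $\frac{1-\beta^{n-i'}}{1-\beta}$, and your bracket identity $(1-\beta)^2\sum_{j=1}^{n-1}j\beta^j=\beta-n\beta^n+(n-1)\beta^{n+1}$ is exactly the negative of $n\beta^n(1-\beta)-\beta(1-\beta^n)$, so the stated formula (and its $n=2$ specialization) follows. But your route is genuinely different from the paper's. The paper never computes the two sums globally; it proceeds epoch by epoch, taking $\Ex_k$, then $\Ex_{k-1}$, and so on, and shows that the residue produced in each epoch is cancelled by the next one, so that ``only the highest order term remains'' — a telescoping/cancellation argument illustrated with the color-ball figures. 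Your argument instead observes that the quantity is linear in the draws, that each position of a within-epoch random permutation has marginal mean $\bar c=\frac{1}{n}\sum_i c_i$, and that the discrepancy is just the tail of a geometric series: the expected $m_i$ series is $n-1-i$ terms shorter than the $f_i$ series, which immediately yields $-\bar c\,\beta^{(k-1)n}\sum_{j=1}^{n-1}j\beta^j$. For the toy lemma itself this is shorter and arguably cleaner, since no combinatorial cancellation is needed. What the paper's heavier machinery buys is transferability: in the actual application (bounding $\Ex[(b_1)]$ in Lemma G.7 via Lemmas 5.1--5.2), the ``ball labels'' are gradients $\partial_l f_i(x_{k,0})$ that change with the iterate and are multiplied by the random factor $\partial_l f(x_{k,0})/\sqrt{v_{l,k,0}}$, so one cannot replace every draw by a common constant mean; the epoch-by-epoch conditional-expectation cancellation is precisely the mechanism that survives there, whereas your marginal-mean shortcut exploits the fixed labels of the toy model and does not carry over. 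As a standalone proof of the stated lemma, though, your argument is complete and correct.
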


    {\bf How is Lemma \ref{lemma_toy_2} related to \eqref{sketch_m_goal}? } In this color-ball toy example, $m_i$ mimics the possible realization of momentum up to the $i$-th inner loop in $k$-th epoch. $f_{i}$ mimics the stochastic gradient $\nabla f_{i}\left(x_{k, 0}\right)$. This is because we can expand  $\nabla f_{i}\left(x_{k, 0}\right)$ into an infinite-sum sequence $\nabla f_{i}\left(x_{k, 0}\right)=(1-\beta_1) \nabla f_{i}\left(x_{k, 0}\right) \sum_{j=0}^{\infty} \beta_1^j$, which shares a similar structure as  $f_{i}$. In this sense, Lemma \ref{lemma_toy_2} may provide ideas to prove \eqref{sketch_m_goal}. Nevertheless, there are still gap between these two, we will explain the gap later.

\begin{proof}
  We use $\Ext \left[\cdot\right]$ to denote the conditional expectation given all the history up to the beginning of $k$-th epoch.  
  Since $\Ex \left[ \cdot\right]= \Ex \left[\Ext \left[\cdot\right]\right]$, we first calculate $\Ext \left[ \sum_{i=0}^1 m_i \right]$.  Since all the history before $k$-th epoch is fixed, we relegate this part to later discussion and first focus on the expectation of $ \sum_{i=0}^1 m_{i,k} $. 
  As shown in Figure \ref{fig:toy_2_k} (upper part), there are $2$ possible realization.

\begin{figure}[htbp]
  \centering 
  \includegraphics[width=3.5in]{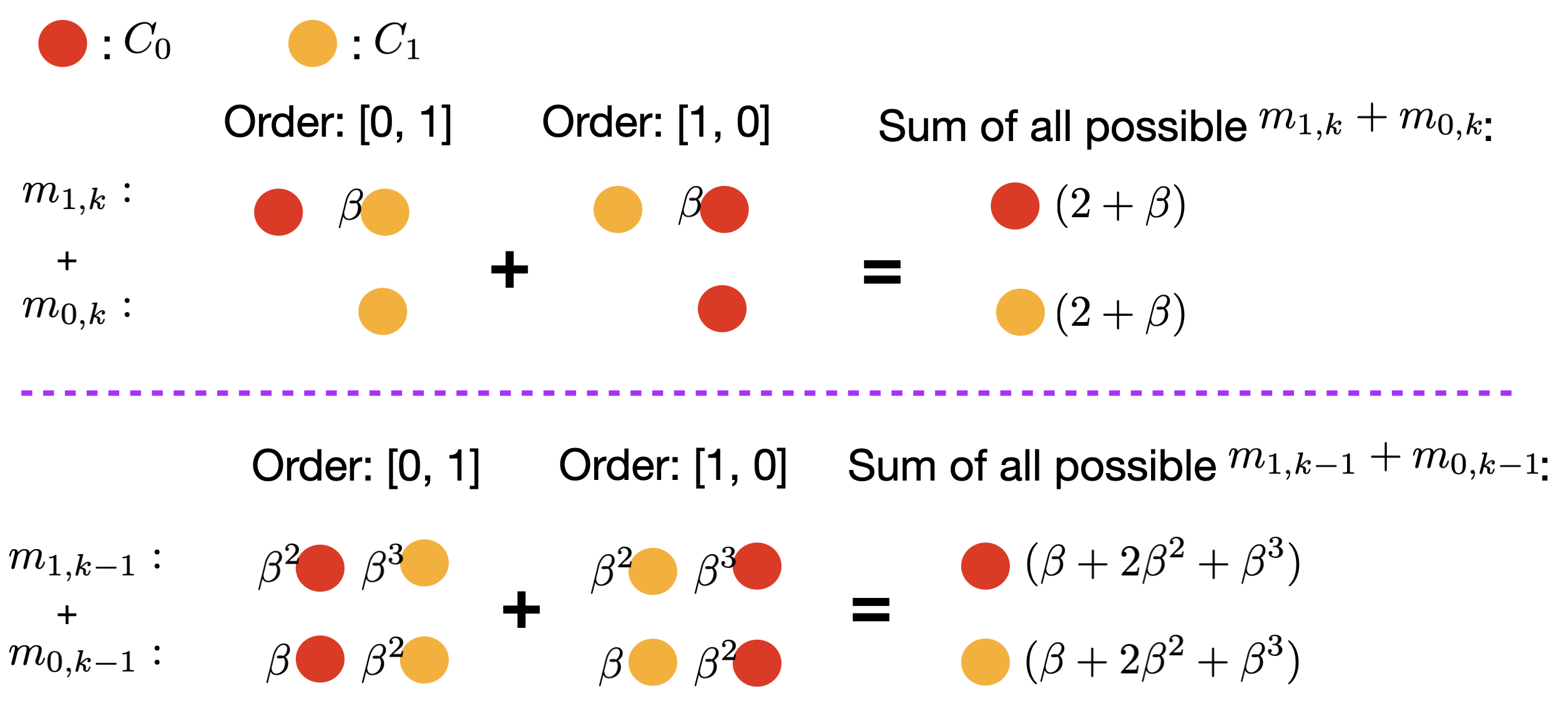}
 
  \caption{ All possible realization of $\sum_{i=0}^1 m_{i,k} $ and $\sum_{i=0}^1 m_{i,k-1} $.}
  \label{fig:toy_2_k}
\end{figure}

With the help of Figure \ref{fig:toy_2_k} (upper part), we have the following result.

 \begin{eqnarray}
  \Ext \left[\sum_{i=0}^1 m_{i,k}  -\sum_{i=0}^1 f_{i,k} \right] &=&
  \Ext \left[\sum_{i=0}^1 m_{i,k}  \right]- \left(1+ \beta\right)(c_0+c_1) \nonumber \\
  &=&  -\frac{\beta}{2}(c_0+c_1)  
  \label{sketch_toy_k}
 \end{eqnarray}

 Now we move one step further to calculate 
$ \Ex_{k-1} \Ext \left[ \sum_{i=0}^1 m_i -\sum_{i=0}^1 f_i \right]$. Using the similar strategy as \eqref{sketch_toy_k}, we have the following result. The calculation is  illustrated in Figure \ref{fig:toy_2_k} (lower part) and Figure \ref{fig:toy_2_cancel}.

\begin{figure}[htbp]
  \centering 
  \includegraphics[width=3.5in]{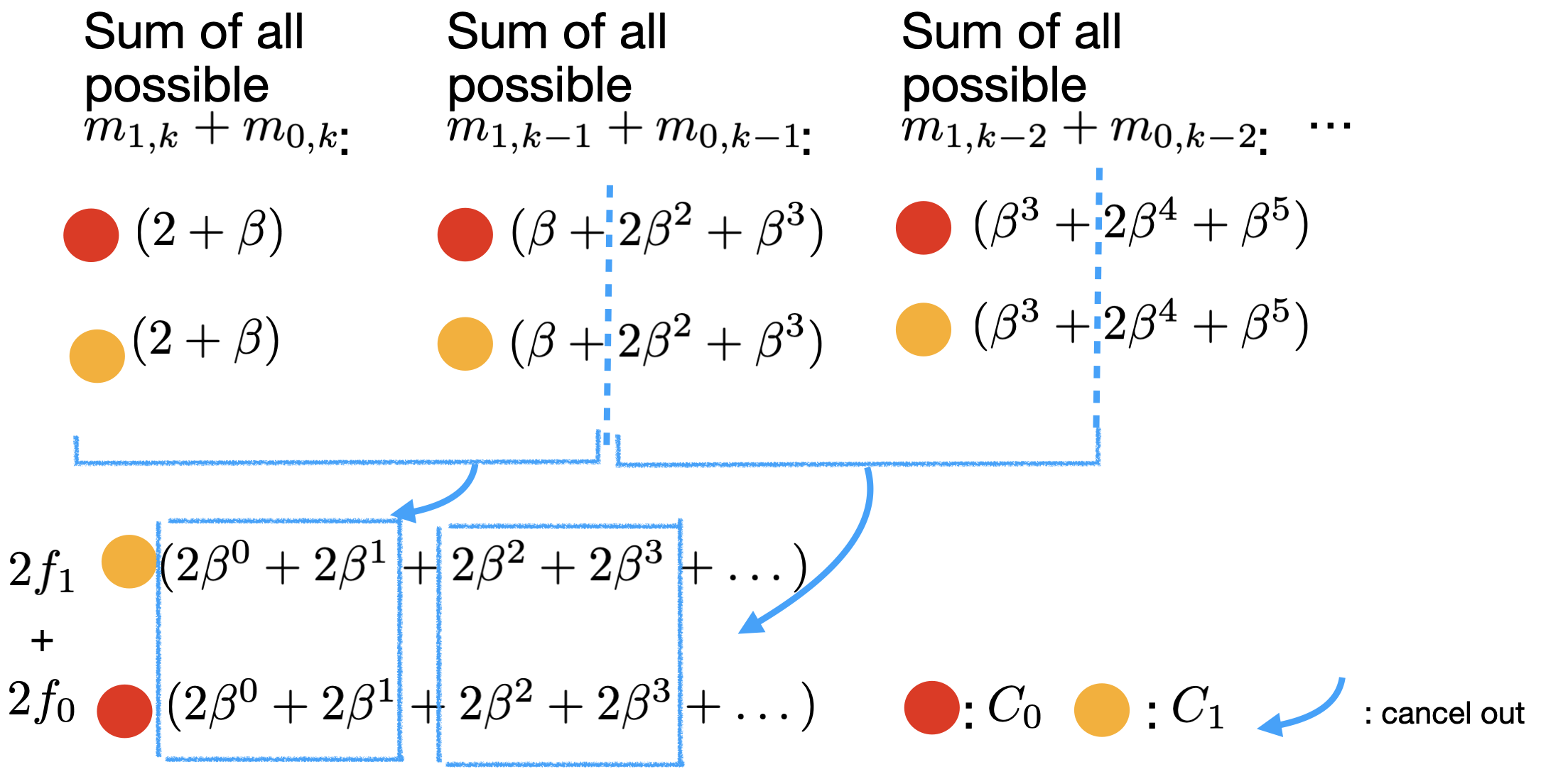}
  
  \caption{For every $k$, $\Ext \left[\sum_{i=0}^1 m_{i,k}  -\sum_{i=0}^1 f_{i,k} \right]$ will create residues, while these residues will be canceled out in the $(k-1)$-th epoch.}
  \label{fig:toy_2_cancel}
\end{figure}

\begin{eqnarray*}
  &&\Ex_{k-1} \left\{\Ext  \left[ \BLUE{\sum_{i=0}^1 m_{i,k} -\sum_{i=0}^1 f_{i,k} }
  + \RED{\sum_{i=0}^1 m_{i,k-1} }  -\sum_{i=0}^1 f_{i,k-1}\right] \right\} \\
  &\overset{ \eqref{sketch_toy_k}}{=}&  
    \BLUE{-\frac{\beta}{2}(c_0+c_1)}   + \RED{\Ex_{k-1} \left\{\sum_{i=0}^1 m_{i,k-1}  \right\}} -\sum_{i=0}^1 f_{i,k-1} \\
    &\overset{\text{Figure \ref{fig:toy_2_k} and \ref{fig:toy_2_cancel}}}{=}&\BLUE{ 
     -\frac{\beta}{2}(c_0+c_1)  }  + \RED{\frac{1}{2} (\beta+2\beta^2+\beta^3)(c_0+c_1) }  -(\beta^2+\beta^3)(c_0+c_1) \\
    &=& -\frac{\beta^3}{2}(c_0+c_1)
 \end{eqnarray*}

 We observe that {\it only the highest order term remains in the calculation.} 
Repeat this process until $k=1$, we will get the results in Lemma \ref{lemma_toy_2}.
The above analysis also holds for general $n \geq 2$. 

\end{proof}

\paragraph{The gap between Lemma \ref{lemma_toy_2} and  Equation \eqref{sketch_m_goal}.}  Lemma \ref{lemma_toy_2} shows the key idea of 
proving equation \eqref{sketch_m_goal}. However, due to its idealized setting, the color-ball toy example is still far from our real goal \eqref{sketch_m_goal}. We list some of the gaps here.

\begin{itemize}
    \item In each possible trajectory: $x_{k,i}$ is changing with $k$ and $i$, while the balls are fixed in the color-ball example.
    \item When taking expectation: $x$ is changing in different trajectory, while the balls is fixed in the color-ball example.
    \item $x$ is a vector in $\mathbb{R}^d$ while the label of the balls are constant in $ \mathbb{R}$. 
\end{itemize}
It requires extra technical lemmas to handle these differences. We provide more discussions in Appendix \ref{appendix:roadmap}.

 \paragraph{Solution to Issue II.} 
  We now discuss how to resolve {\bf Issue II} mentioned in Section \ref{sec:proofidea}. The solution contains two parts. First, we need to prove \eqref{eq_k-k-1_paper}. Second,  
  we  derive \eqref{eq_b1_paper} from \eqref{eq_k-k-1_paper}.
  Due to the limited space, we relegate the first part to Appendix \ref{appendix:roadmap} (related to Lemma \ref{lemma_k-k-1}). Now,  we discuss the second part: Assume we have \eqref{eq_k-k-1_paper}, how do we  use it to prove \eqref{eq_b1_paper}? To answer this question, we introduce the color-ball model of the 2nd kind.

\paragraph{The color-ball model of the 2nd kind.} Consider the same setting as the color-ball model in {\bf Step 1}. We define a sequence of real random variable  $\{r_j\}_{j=1}^{k}$ with 
the following relation.
    \begin{equation*}
    |r_j - r_{j-1}| = \frac{1}{\sqrt{j}}, \quad j = 1, \cdots k.
  \end{equation*}
  Further, we assume $r_j$ is fixed when fixing the history up to $j$-th round.   The sequence $\{r_j\}_{j=1}^{k}$ mimics the sequence $\{\frac{\partial_l f(x_{k,0})}{\sqrt{v_{l,k,0}}}\}_{k=1}^{\infty}$ in \eqref{eq_k-k-1_paper}. 
Now, we consider the following quantities. 

	{\small	
	$ \quad\quad\quad\quad \BLUE{r_k } m_1 =\BLUE{r_k } \left( \underbrace{ b_k +\beta
	a_{k} }_{m_{1,k}} + \underbrace{\beta^{2} b_{k-1}   + \beta^{3} a_{k-1}  }_{m_{1,k-1}} +  \cdots  +\underbrace{\beta^{2(k-1)} b_{1}   + \beta^{2(k-1)+1} a_{1}  }_{m_{1,1}} \right);  $ 
	
	$\quad\quad\quad\quad \BLUE{r_k }m_0 = \BLUE{r_k } \left(\underbrace{\quad\quad\quad
	   a_{k} }_{m_{0,k}} + \underbrace{\beta^{1} b_{k-1}   + \beta^{2} a_{k-1}  }_{m_{0,k-1}} +  \cdots  +\underbrace{\beta^{2(k-1)-1} b_{1}   + \beta^{2(k-1)} a_{1}  }_{m_{1,1}} \right);  $
		
	}

	{\small	 
	$\quad\quad\quad\quad  \BLUE{r_k } f_1 = \BLUE{r_k } \left(
	c_1(\underbrace{1+\beta}_{f_{1,k}}+ \underbrace{\beta^2 +\beta^3}_{f_{1,k-1} }   \cdots +\underbrace{\beta^{2(k-1)}+ \beta^{2(k-1)+1} }_{f_{1,1}})  \right);  $
	
	$ \quad\quad\quad\quad \BLUE{r_k }f_0 = \BLUE{r_k }\left(
	c_0(\underbrace{1+\beta}_{f_{0,k}}+ \underbrace{\beta^2 +\beta^3}_{f_{0,k-1} }   \cdots +\underbrace{\beta^{2(k-1)}+ \beta^{2(k-1)+1} }_{f_{0,1}}) \right);  $}.
	
	We now prove the following Lemma \ref{lemma_toy_new}.

\begin{lemma}\label{lemma_toy_new}
  Consider the color-ball model of the 2nd kind,  we have 
  \begin{equation*}
    \Ex \left[ \sum_{i=0}^1 r_k m_i -\sum_{i=0}^1 r_k f_i \right]=  \beta^{2(k-1)+1} \left(-\frac{c_0}{2} - \frac{c_1}{2}\right) + \mathcal{O}(\frac{1}{\sqrt{k}}).
  \end{equation*}

 For general $n=1,2,3,\cdots$, we have

  \begin{equation*}
    \Ex \left[ \sum_{i=0}^{n-1}r_k  m_i -\sum_{i=0}^{n-1} r_k f_i \right]=  \sum_{i=0}^{n-1} c_i \beta^{(k-1)n}(-\frac{1}{n} \beta^1 \cdots-\frac{n-1}{n} \beta^{n-1})+ \mathcal{O}(\frac{1}{\sqrt{k}}).
  \end{equation*}
\end{lemma}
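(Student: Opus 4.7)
}
The plan is to piggy-back on the telescoping identity of Lemma \ref{lemma_toy_2} and to control the extra randomness contributed by $r_k$ using the small-increment bound $|r_j - r_{j-1}| = 1/\sqrt{j}$.

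First I decompose the total signal epoch by epoch, writing $\sum_{i=0}^{n-1}(m_i - f_i) = \sum_{j=1}^{k} M_j$, where $M_j$ collects the summands in $\sum_i m_i$ and $\sum_i f_i$ that originate from round $j$. A direct inspection of the color-ball construction shows $|M_j|\le C\,\beta^{n(k-j)}$ for a constant $C$ depending only on $(c_0,\ldots,c_{n-1})$ and $\beta$, and that $M_j$ is a function only of the permutation drawn in round $j$. By the round-wise independence of the sampling procedure, $M_j$ is therefore independent of all history strictly before round $j$.

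For each $j$ I use the telescoping identity $r_k = r_{j-1} + (r_k - r_{j-1})$ to split
\begin{equation*}
  \Ex[r_k M_j] \;=\; \Ex[r_{j-1}]\,\Ex[M_j] \;+\; \Ex\bigl[(r_k - r_{j-1})\,M_j\bigr],
\end{equation*}
where the product form of the first term uses that $r_{j-1}$ is measurable with respect to the history up to round $j-1$, hence independent of $M_j$. Summing the first terms over $j$ and replacing each $\Ex[r_{j-1}]$ by a common reference value (e.g.\ $\Ex[r_{k-1}]$) reproduces, up to perturbations bounded by $|\Ex[r_{j-1}] - \Ex[r_{k-1}]| \le \sum_{i=j}^{k-1} i^{-1/2}$, a scalar multiple of $\Ex[\sum_j M_j]$; Lemma \ref{lemma_toy_2} then supplies the explicit main term $\sum_{i=0}^{n-1} c_i\,\beta^{(k-1)n}(-\tfrac{1}{n}\beta - \cdots - \tfrac{n-1}{n}\beta^{n-1})$. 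The second terms are bounded by $|r_k - r_{j-1}|\cdot|M_j| \le \bigl(\sum_{i=j}^{k} i^{-1/2}\bigr)\cdot C\beta^{n(k-j)}$.

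After summing the two types of corrections over $j$, the relevant quantity is $\sum_{j=1}^{k}\beta^{n(k-j)}\sum_{i=j}^{k}i^{-1/2}$: the geometric factor $\beta^{n(k-j)}$ concentrates the mass near $j = k$, where the inner sum is already $O(1/\sqrt{k})$, and the tail contributes at most the same order, delivering the advertised $\mathcal{O}(1/\sqrt{k})$ remainder. The main obstacle I anticipate is the careful bookkeeping of $\sigma$-algebra dependencies, since $r_k$ depends on all of rounds $1,\ldots,k$ and is neither independent of nor measurable with respect to the epoch-$j$ randomness driving $M_j$; the decomposition $r_k = r_{j-1} + (r_k - r_{j-1})$ must be performed separately for each $j$, and the accumulated perturbations must be shown not to destroy the cancellations that Lemma \ref{lemma_toy_2} exploits. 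The extension to general $n$ is immediate once the decay estimate $|M_j|\le C\beta^{n(k-j)}$ is in hand.
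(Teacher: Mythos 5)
Your proposal is correct, but it is organized quite differently from the paper's own argument, and the difference is worth spelling out. The paper proves this lemma by a backward recursion (its Steps 2.1--2.4): it conditions epoch by epoch, first taking $\Ext[\cdot]$ so that $r_k$ is frozen, computing the epoch-$k$ residue as in the first color-ball model, then swapping $r_k\to r_{k-1}$ at the cost of an $|r_k-r_{k-1}|=1/\sqrt{k}$ error weighted by the geometrically small residual, taking $\Ex_{k-1}[\cdot]$, cancelling against the epoch-$(k-1)$ block, and repeating down to the first epoch; the cancellation of Lemma \ref{lemma_toy_2} is thus re-derived level by level with the $r$-factors attached. You instead decompose once into per-round contributions $M_j$ with $|M_j|\le C\beta^{n(k-j)}$, use the fact that in the toy model the round-$j$ permutation is independent of the history (so $r_{j-1}$, being history-measurable, factors out: $\Ex[r_{j-1}M_j]=\Ex[r_{j-1}]\Ex[M_j]$), replace $\Ex[r_{j-1}]$ by a common reference value with total error $\sum_j \beta^{n(k-j)}\sum_{i=j}^{k}i^{-1/2}=\mathcal{O}(1/\sqrt{k})$, and then invoke Lemma \ref{lemma_toy_2} as a black box for the main cancellation; the fluctuation terms $\Ex[(r_k-r_{j-1})M_j]$ are handled by the same deterministic increment bound. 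All of these steps check out (your geometric-times-$i^{-1/2}$ sum is indeed $\mathcal{O}(1/\sqrt{k})$ after splitting at $j\approx k/2$), and the only cosmetic mismatch is that your main term carries the factor $\Ex[r_{k-1}]$ while the stated one does not; since $\Ex[r_{k-1}]=\mathcal{O}(\sqrt{k})$ and the main term is $\mathcal{O}(\beta^{n(k-1)})$, the discrepancy is exponentially small and absorbed by the $\mathcal{O}(1/\sqrt{k})$ slack, so your conclusion is equivalent to the lemma as stated. What each approach buys: yours is shorter and more modular for the toy model precisely because it leans on round-to-round independence; the paper's heavier interleaved conditioning is shaped to transfer to the real target $\Ex[\sum_{l\text{ large}}(b_1)]$ (Lemma \ref{lemma_b1}), where the analogue of $M_j$ depends on the iterates $x_{k,i}$ and hence on the whole past, so the factorization $\Ex[r_{j-1}M_j]=\Ex[r_{j-1}]\Ex[M_j]$ you rely on is unavailable and one must condition epoch by epoch and use Lipschitz-type swaps instead. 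So your proof is a valid and cleaner route for this lemma, but be aware it would not carry over verbatim to the non-toy setting the lemma is meant to model.
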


\begin{proof}
We only describe the proof idea here. The proof contains the following 4 steps.
 
 {\bf Step 2.1. } We firstly take $\Ext [\cdot]$ and thus $ r_k$  can be viewed as a constant. We use the color-ball procedure as in Figure \ref{fig:step2_1} to calculate $\Ext \left[r_k \sum_{i=0}^{1} m_{i, k} - r_k \sum_{i=0}^{1} f_{i} \right]$. 
 
 	 \begin{figure}[htbp]
		\centering
		\includegraphics[width=3.5in]{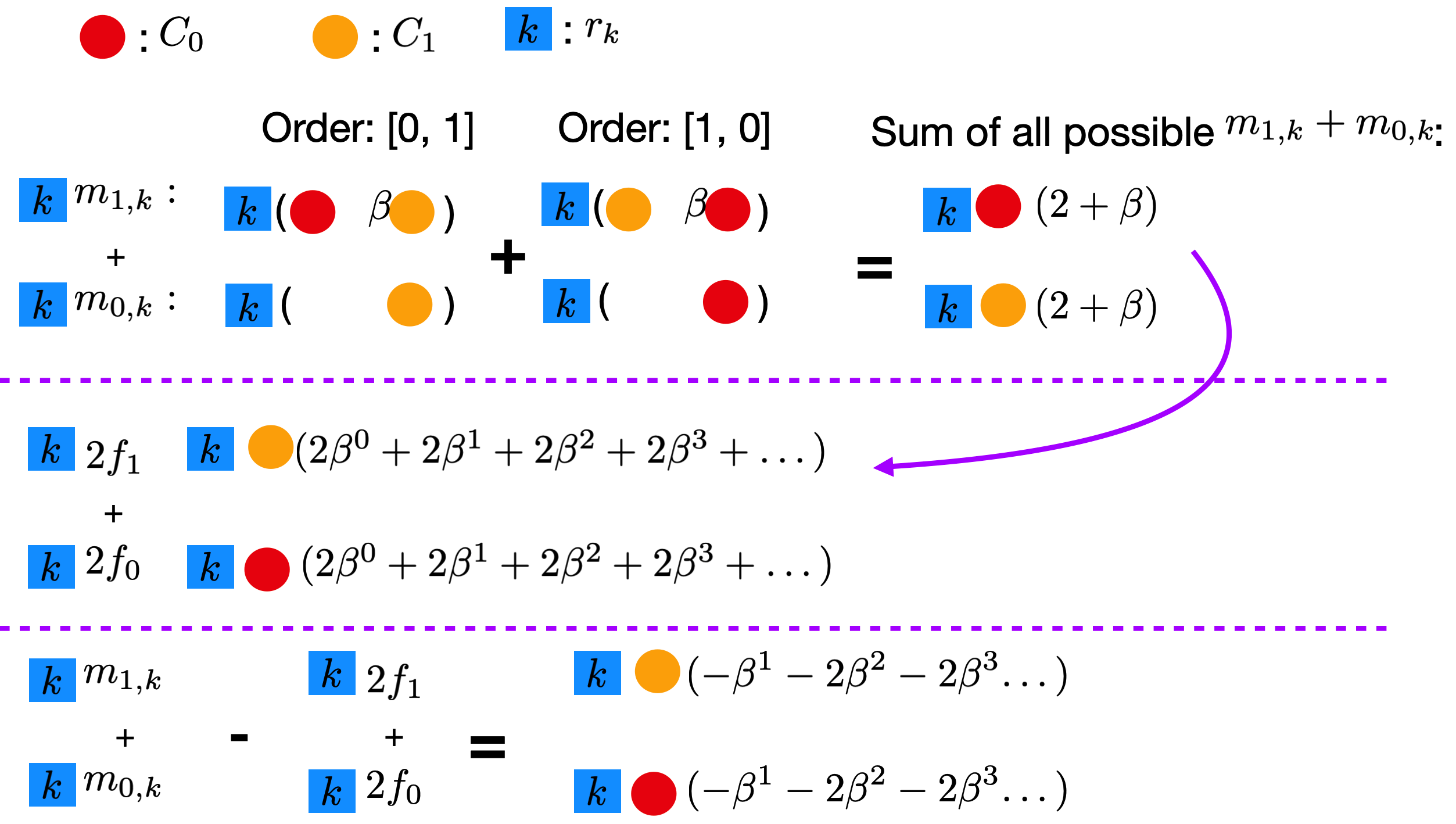}
		\caption{\small {\bf Step 2.1} in the new color-ball model.}
		\label{fig:step2_1}
	\end{figure}
	
	 {\bf Step 2.2. }  We change $\Ext\left[r_k \sum_{i=0}^{1} m_{i, k-1}  \right]$ into $\Ext\left[r_{k-1} \sum_{i=0}^{1} m_{i, k-1}  \right] + \text{Error}$; where $\text{Error} = \mathcal{O}(1/ \sqrt{k}) $. 

	{\bf Step 2.3. }  We  calculate $\Ex_{k-1}\Ext\left[r_{k-1} \sum_{i=0}^{1} m_{i, k-1}  \right]  = \Ex_{k-1}\left[r_{k-1} \sum_{i=0}^{1} m_{i, k-1}  \right] $. This part is illustrated in the top row in Figure  
	\ref{fig:step2_3}

	{\bf Step 2.4. } For the leftovers  in {\bf Step 2.1},  we change all  $r_k$ into $r_{k-1}$. Then we do the cancellation to calculate $\Ex_{k-1} \Ext \left[   r_{k-1} \sum_{i=0}^{1} m_{i, k-1}     + r_{k-1} \sum_{i=0}^{1} m_{i, k} - r_{k-1} \sum_{i=0}^{1} f_{i}   \right]$. This step is shown in the second and third row in Figure \ref{fig:step2_3}.

	 \begin{figure}[htbp]
		\centering
		\includegraphics[width=3.5in]{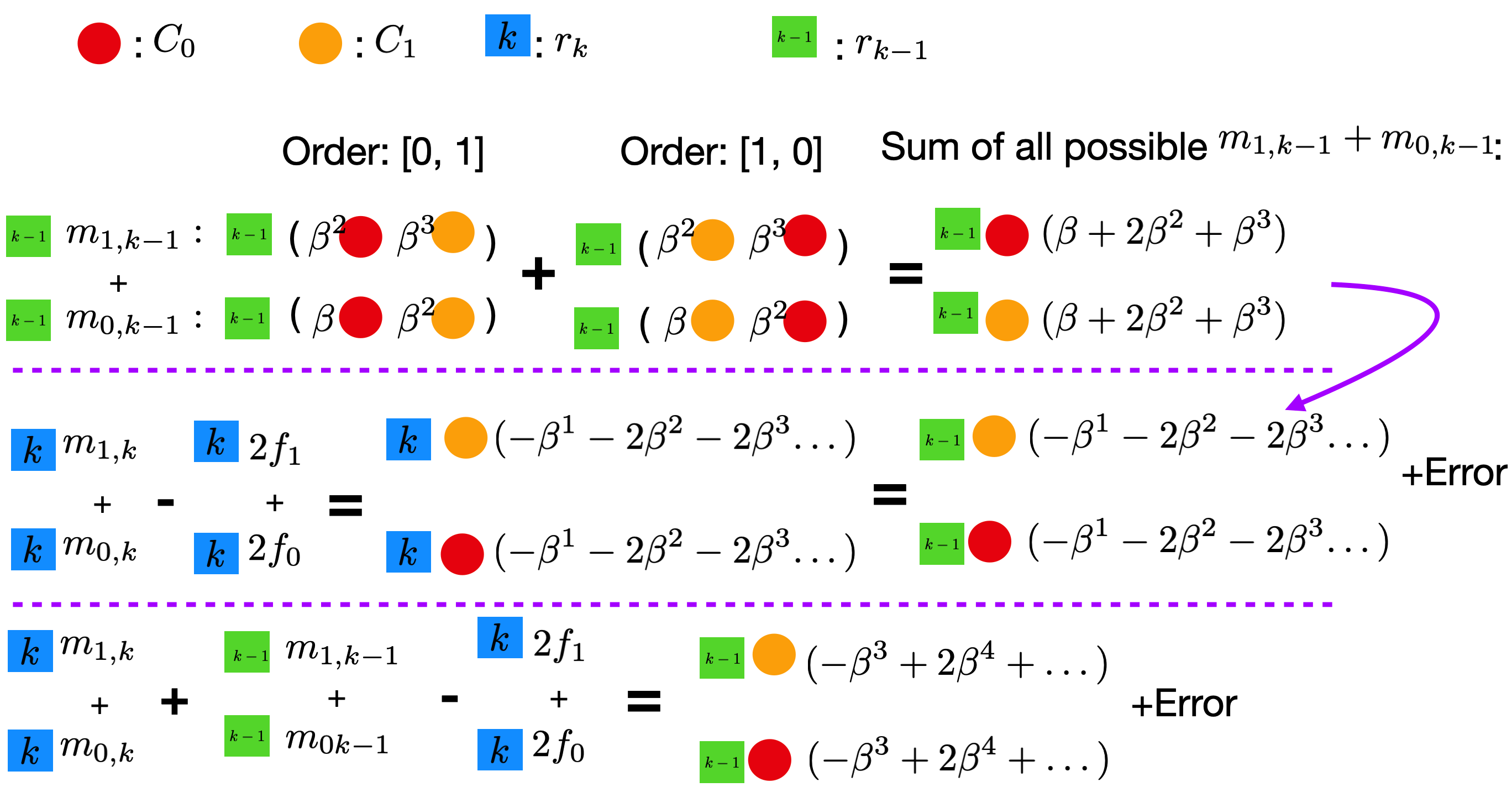}
		\caption{\small {\bf Step 2.3 and 2.4} in the new color-ball model.}
		\label{fig:step2_3}
	\end{figure}

	Repeat this process to the 1st epoch. We can   prove  the Lemma \ref{lemma_toy_new}.

\end{proof}

We emphasize that here are still some gap between Lemma \ref{lemma_toy_new} and our goal in \eqref{eq_b1_paper} in Lemma \ref{lemma_paper_idea_2}. First, we have the similar gap as discussed at the end of the {\bf Solution to Issue I}. Second, the condition in Lemma \ref{lemma_paper_idea_2} has requirement on the gradient norm, while this requirement is temporarily ignored in the  color-ball method of the 2nd kind.  It requires some technical lemmas to handle these gaps. 

  For more details, please refer to the complete  proof in Appendix \ref{appendix:thm1}. 



\section{More experiments}
\label{appendix:more_exp}

\paragraph{Estimation on $\rho$ in Theorem \ref{thm1}.} To ensure convergence, Theorem \ref{thm1} requires $\beta_2 \geq \gamma_1(n) = 1-\mathcal{O}((1-\beta_1^n)/n^2 \rho)$. Now we estimate the constant $\rho$.
According to our definition in Appendix \ref{sec_notations} and Remark \ref{remark_rho} in Appendix \ref{appendix:roadmap}, $\rho= \rho_1\rho_2\rho_3$, where $\rho_1, \rho_2, \rho_3$ are defined as follows.

  $$\rho_{1} \geq \frac{\sum_{i=1}^{n}\left|\partial_l f_i (x_{k,0})\right|}{\sqrt{\sum_{i=1}^{n}\left| \partial_l f_i (x_{k,0})\right|^{2}}};$$
  
  $$\rho_{2} \geq \frac{\left|\max_i \partial_l f_i (x_{k,0})\right|^{2}}{\frac{1}{n} \sum_{i=1}^{n}\left|\partial_l f_i (x_{k,0})\right|^{2}};$$
  
  $$\rho_{3} \geq \frac{\left|\sum_{i=1}^{n} \partial_l f_i (x_{k,0})\right|}{\sqrt{\frac{1}{n} \sum_{i=1}^{n}\left|\partial_l f_i (x_{k,0})\right|^{2}}}.$$

  These constants are firstly introduced by \citep{shi2020rmsprop}.  In worst case, we have $0 \leq \rho_{3} \leq \sqrt{n} \rho_{1} \leq n$.
 However, $\rho$ is highly dependent on the problem instance $f(x)$ and training process. We now estimate how $\rho$ changes with Adam's trajectory on MNIST and CIFAR-10. We use  $\beta_1= 0.9$, $\beta_2 = 0.99$. On both datasets, we set batchsize to be 64, which brings $n= 937$ on MNIST and  $n= 781$ on CIFAR-10. We collect $\rho_1, \rho_2, \rho_3$ along the training process and estimate their distribution density. The results are shown in Figure \ref{fig:rho_mnist} and \ref{fig:rho_cifar}.

\begin{figure*}[htbp]
      \vspace{-2mm}
        \centering
        \subfigure[Histogram of $\rho_1$]{
        \begin{minipage}[t]{0.3\linewidth}
        \centering
        \includegraphics[width=\linewidth]{./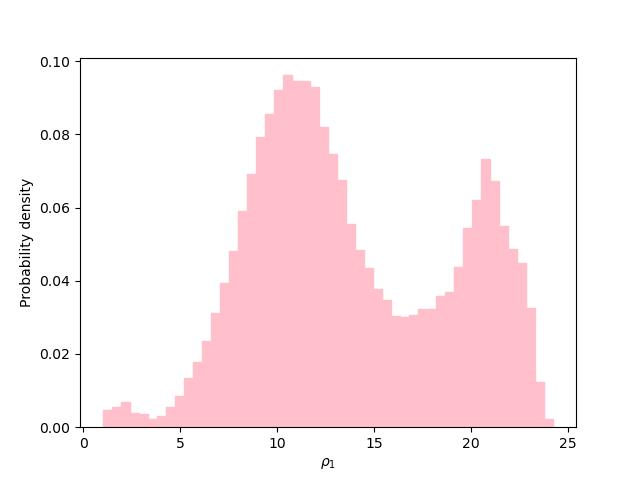}
        \end{minipage}%
        }%
        \subfigure[Histogram of $\rho_2$]{
          \begin{minipage}[t]{0.3\linewidth}
          \centering
        \includegraphics[width=\linewidth]{./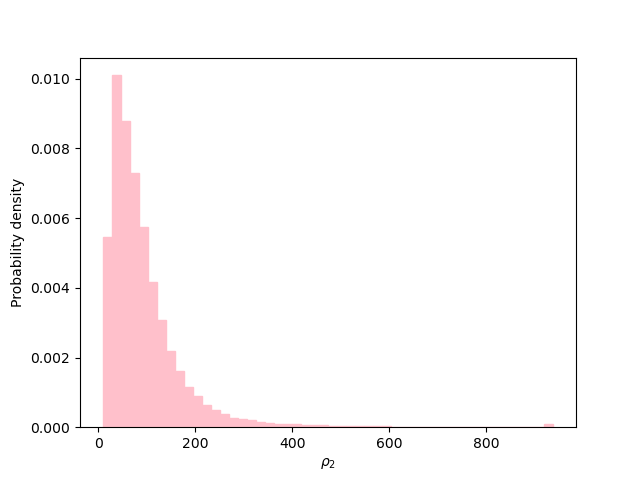}
          \end{minipage}%
          }%
        \subfigure[Histogram of $\rho_3$]{
          \begin{minipage}[t]{0.3\linewidth}
          \centering
        \includegraphics[width=\linewidth]{./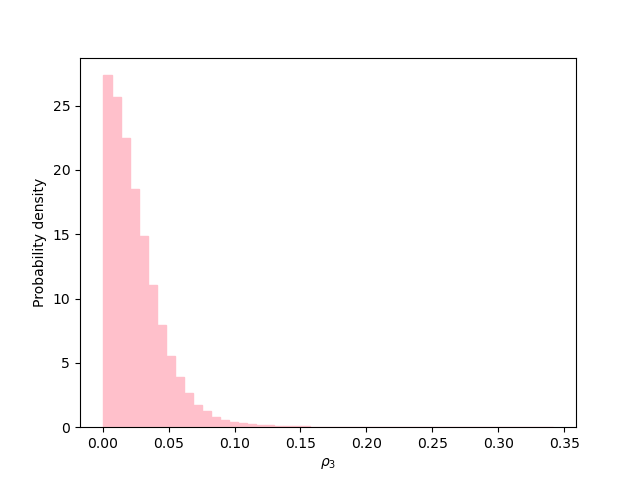}
          \end{minipage}%
          }%
        \centering
        \vspace{-3mm}
        \caption{Histograms of $\rho_1$, $\rho_2$, $\rho_3$ along the training process on MNIST. }
        \vspace{-2mm}
        \label{fig:rho_mnist}
\end{figure*}

\begin{figure*}[htbp]
      \vspace{-2mm}
        \centering
        \subfigure[Histogram of $\rho_1$]{
        \begin{minipage}[t]{0.3\linewidth}
        \centering
        \includegraphics[width=\linewidth]{./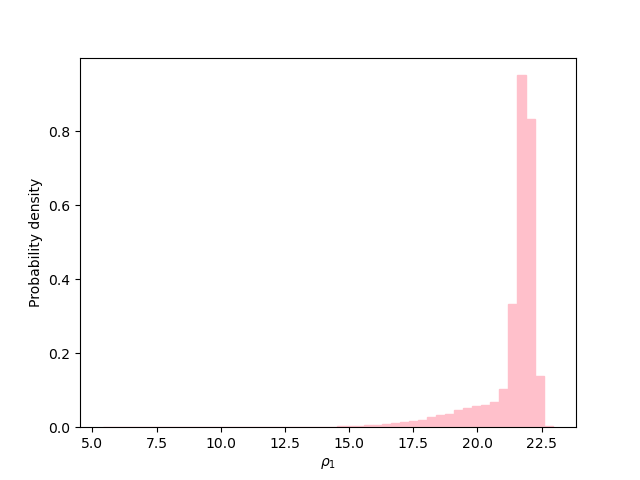}
        \end{minipage}%
        }%
        \subfigure[Histogram of $\rho_2$]{
          \begin{minipage}[t]{0.3\linewidth}
          \centering
        \includegraphics[width=\linewidth]{./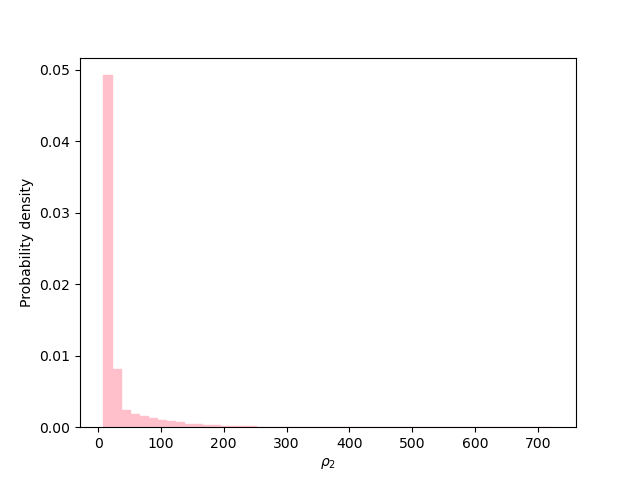}
          \end{minipage}%
          }%
        \subfigure[Histogram of $\rho_3$]{
          \begin{minipage}[t]{0.3\linewidth}
          \centering
        \includegraphics[width=\linewidth]{./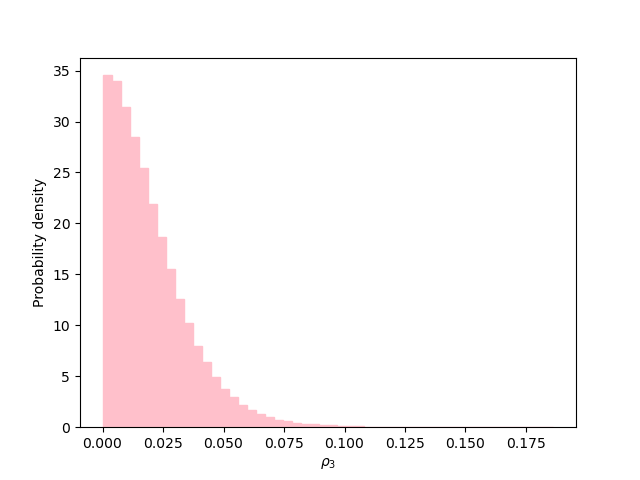}
          \end{minipage}%
          }%
        \centering
        \vspace{-3mm}
        \caption{Histograms of $\rho_1$, $\rho_2$, $\rho_3$ along the training process on CIFAR-10. }
        \vspace{-2mm}
        \label{fig:rho_cifar}
\end{figure*}

 On both  CIFAR-10 and MNIST, we observe that the maximal $\rho_1 < 25  \approx \mathcal{O}(\sqrt{n})$, 
 $\rho_2 < 400  \approx \mathcal{O}(n)$,
 $\rho_3 < 0.1  \approx \mathcal{O}(1/\sqrt{n})$. Therefore, $\rho = \rho_1\rho_2\rho_3 \approx \mathcal{O}(n)$. 
 
\paragraph{Batchsize and $\beta_2$.} 
As shown  in Figure \ref{fig:batchsize}: on MNIST, smaller batchsize requires larger $\beta_2$ to reach small loss. Since batchsize equals to (number of total sample)/(number of batches). In the context of finite-sum setting with $n$ summand, $n$ usually stands for the number of batches (e.g., In the extreme case  when batchsize $=1$, $n$ equals to the  number of total samples).  Therefore, smaller batchsize brings larger $n$.
As such,  Figure \ref{fig:batchsize}  matches the message by Theorem \ref{thm1}: the threshold of $\beta_2$ increases with $n$.

\begin{figure}[ht]
\begin{center}
\centerline{\includegraphics[width=2 in]{./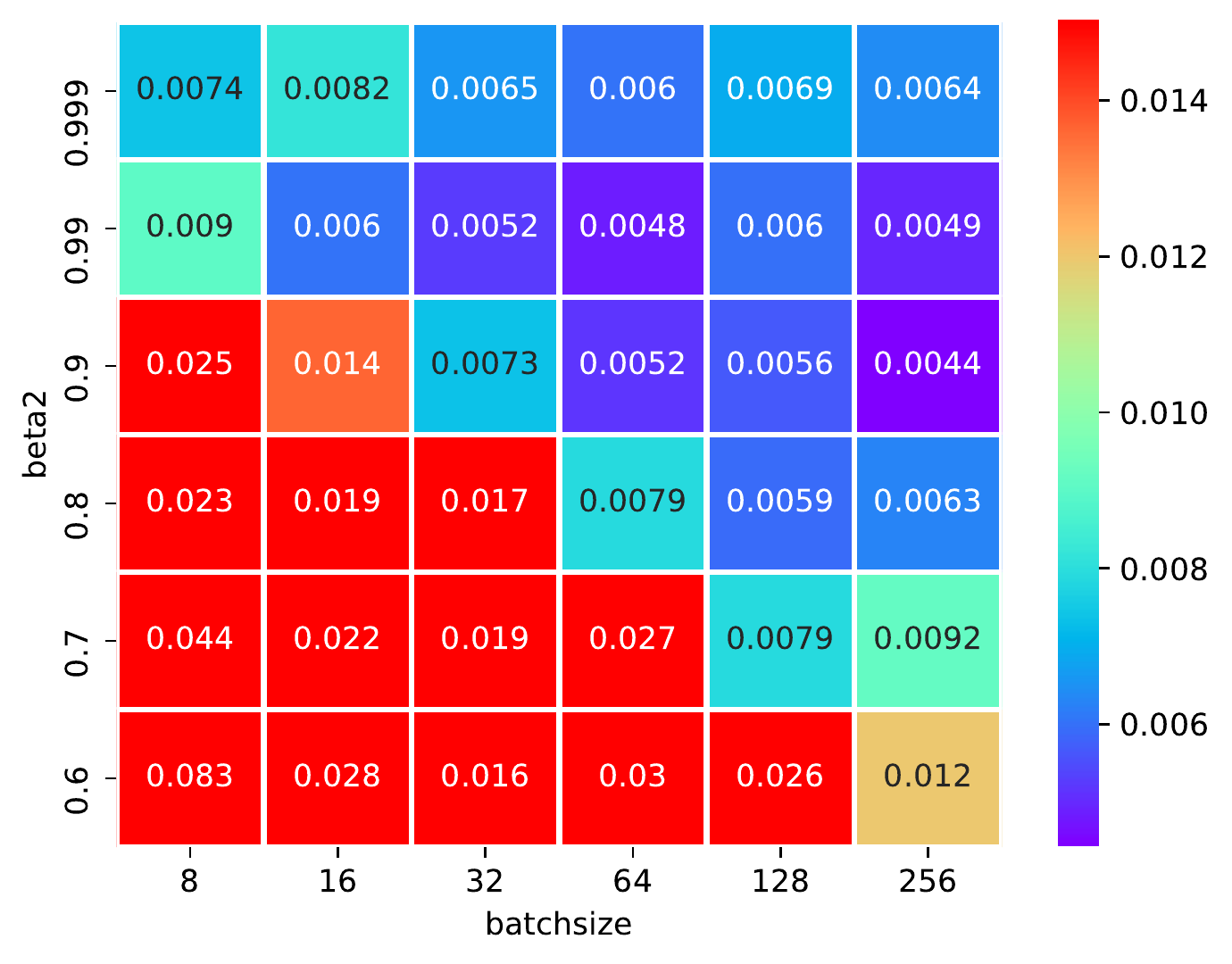}}
\caption{The training
loss on MNIST under different batchsize and $\beta_2$. Here, $\beta_1$ is fixed to be 0.9
}
\label{fig:batchsize}
\end{center}
\vskip -0.2in
\end{figure}

\paragraph{More experiments on function \eqref{counterexample1}.}  Figure \ref{fig:intro_paper} (a) shows the optimality gap after 50k iterations when $n=10$ and initialization  $x=1$. Here, we provide more relevant experiments. First, when initialized at $x=1$, we run experiments with $n =5, 10 ,15, 20$. The results are shown in Figure \ref{fig:toy}. We observe that the blue region shrinks as $n$ increases. This matches our conclusion in Theorem \ref{thm1}: when $n$ increases, the convergent threshold of $\beta_2$ increases, which means we need larger $\beta_2$ to ensure convergence. 
  It also matches the conclusion in Theorem \ref{thm_diverge}: when $n$ increases, the divergence region will expand (more evidence can be seen in Appendix \ref{appendix:thm_diverge}).

\begin{figure*}[htbp]
  \vspace{-2mm}
    \centering
    \subfigure[$n=5$]{
    \begin{minipage}[t]{0.25\linewidth}
    \centering
    \includegraphics[width=\linewidth]{./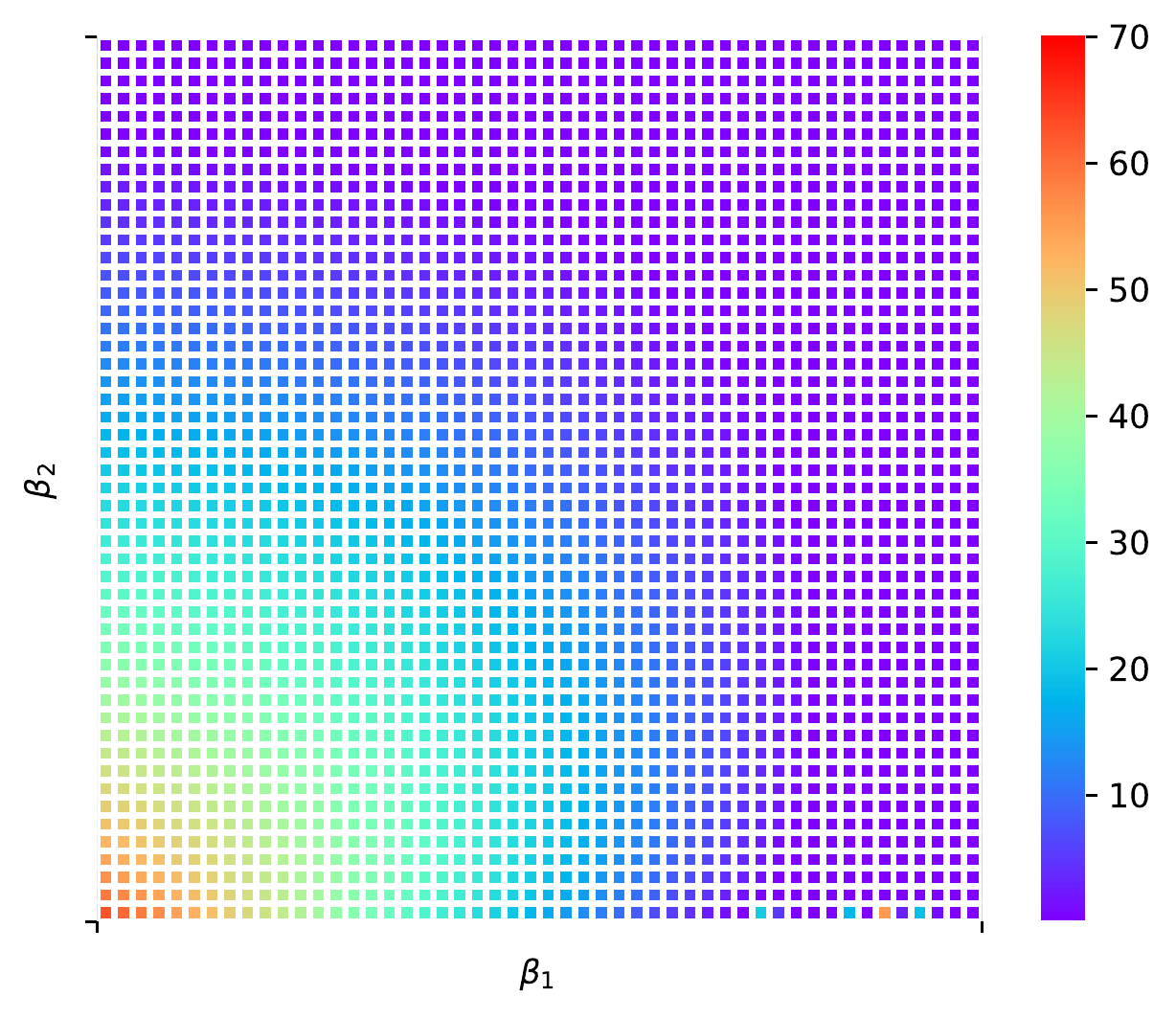}
    \end{minipage}%
    }%
    \subfigure[$n=10$]{
      \begin{minipage}[t]{0.25\linewidth}
      \centering
    \includegraphics[width=\linewidth]{./figures/0127_nosearch_toyexample_50000_10lr_0.1.pdf}
      \end{minipage}%
      }%
    \subfigure[$n=15$]{
      \begin{minipage}[t]{0.25\linewidth}
      \centering
    \includegraphics[width=\linewidth]{./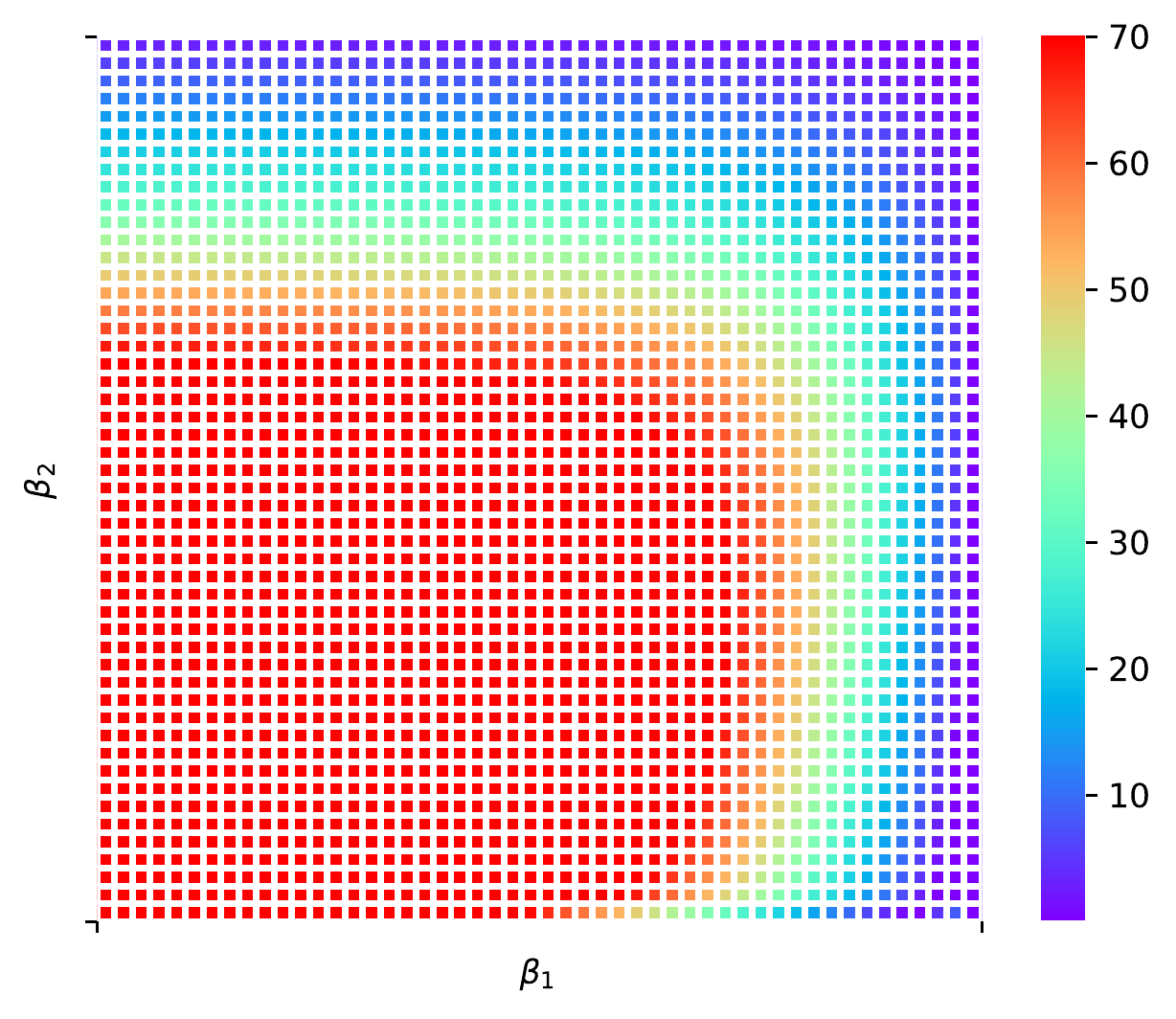}
      \end{minipage}%
      }%
    \subfigure[$n=20$]{
      \begin{minipage}[t]{0.25\linewidth}
      \centering
  \includegraphics[width=\linewidth]{./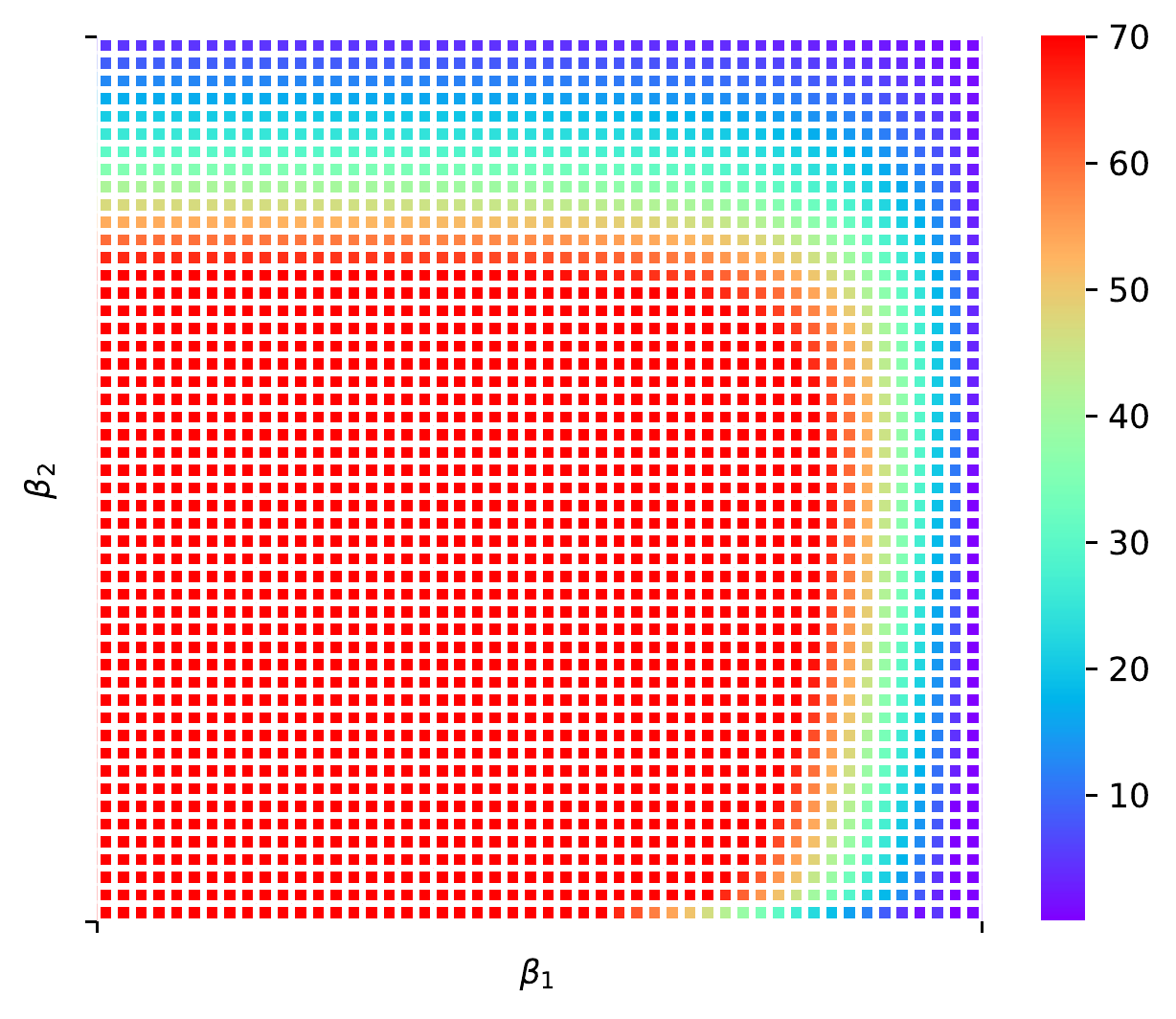}
      \end{minipage}%
      }%
    \centering
    \vspace{-2mm}
    \caption{ The optimality gap $x-x^*$ after running 50k iterations of Adam on function \eqref{counterexample1}. We use initialization $x = 1$.
    }%
    \label{fig:toy}
\end{figure*}

When initializing at $x=-5$, we further demonstrate that the gradient norm  of $f(x)$ can dramatically increase. All the setting is the same as that in Figure \ref{fig:toy} except for the change of initialization.   Starting at $x=-5$,  the algorithm can touch the ``quadratic" side of function \eqref{counterexample1} where the gradient is unbounded.   The results are shown in Figure \ref{fig:toy_gradient}.  We observe a  similar pattern as that in Figure \ref{fig:toy}. As a result, the gradient norm of $f(x)$ is large in the left bottom corner.

We further plot the change of gradient norm along the iterations. We pick $\beta_1=0.1$ and $\beta_2 = 0, 0.1,0.3,0.5,0.7,0.9,0.99$ to see the phase transition when increasing $\beta_2$.  The result is shown in Figure \ref{fig:toy_gradient_trajectory}. When $\beta_2$ is small, the gradient norm of  $f(x)$  increases rapidly along the iteration. Most of them are  even much larger than the upper bound of color bar in Figure \ref{fig:toy_gradient}. As a result, there is a  phase transition from diverge to converge when increasing $\beta_2$ from 0 to 1.

\begin{figure*}[htbp]
      \vspace{-2mm}
        \centering
        \subfigure[$n=5$]{
        \begin{minipage}[t]{0.25\linewidth}
        \centering
        \includegraphics[width=\linewidth]{./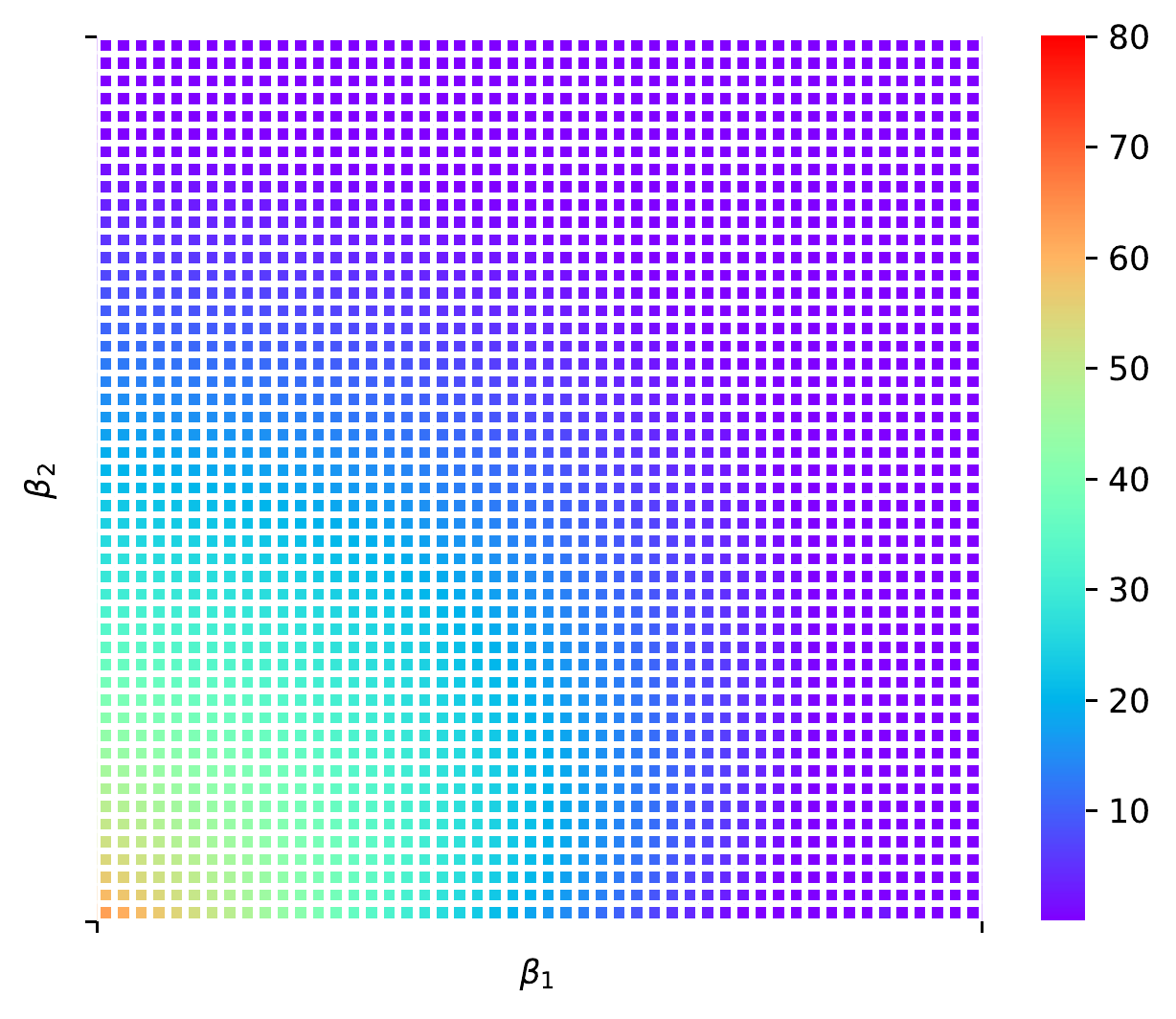}
        \end{minipage}%
        }%
        \subfigure[$n=10$]{
          \begin{minipage}[t]{0.25\linewidth}
          \centering
        \includegraphics[width=\linewidth]{./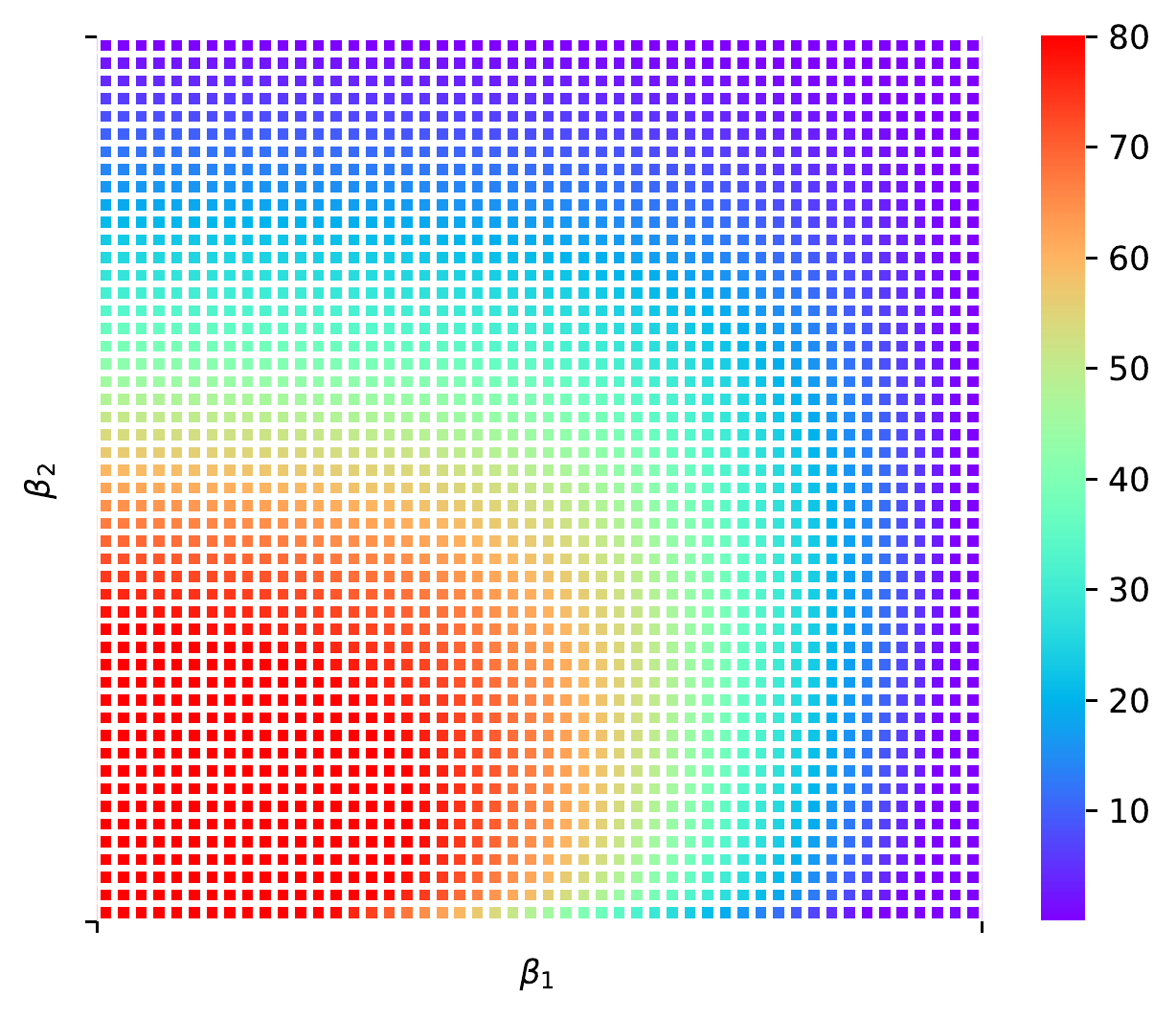}
          \end{minipage}%
          }%
        \subfigure[$n=15$]{
          \begin{minipage}[t]{0.25\linewidth}
          \centering
        \includegraphics[width=\linewidth]{./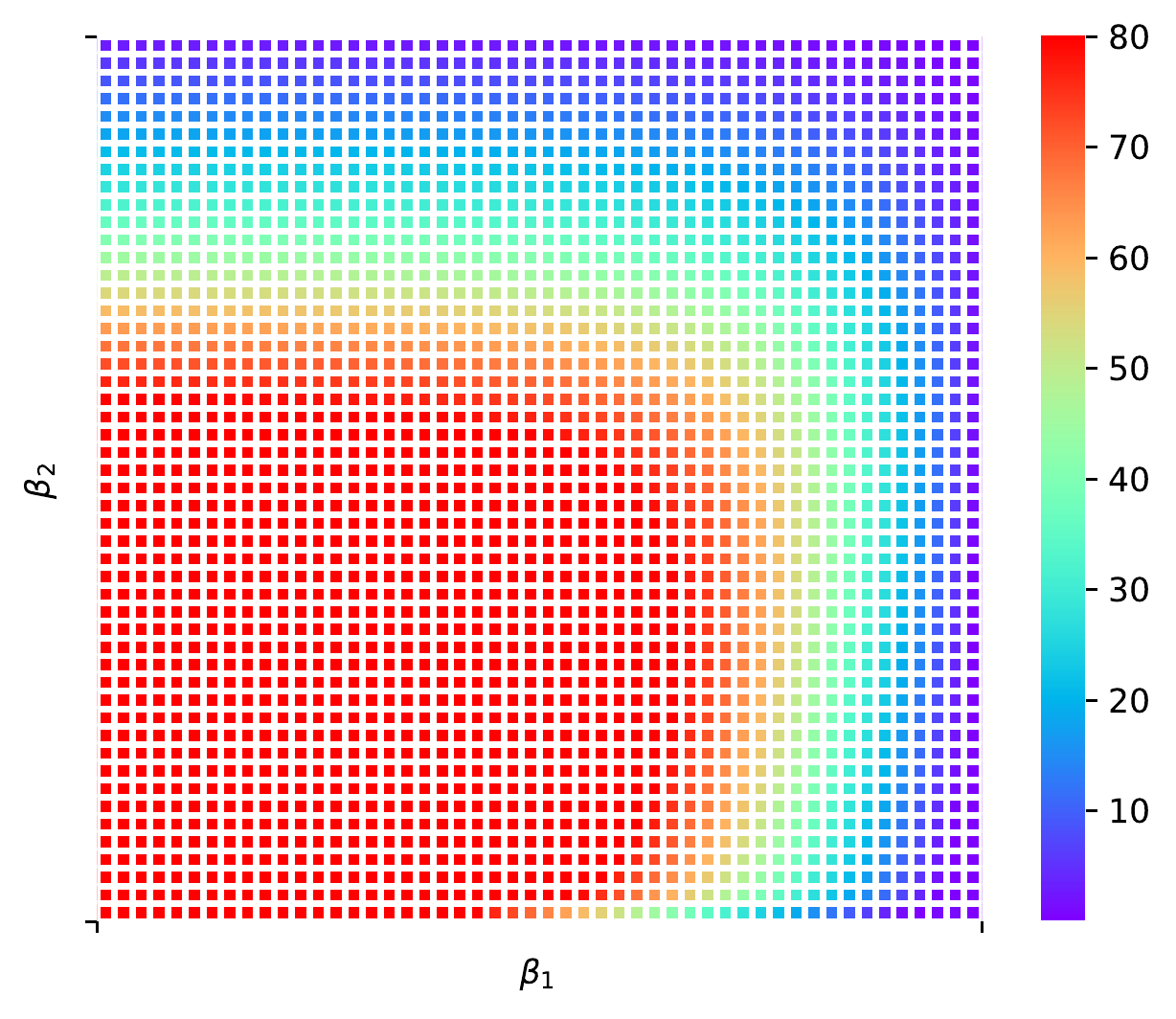}
          \end{minipage}%
          }%
        \subfigure[$n=20$]{
          \begin{minipage}[t]{0.25\linewidth}
          \centering
        \includegraphics[width=\linewidth]{./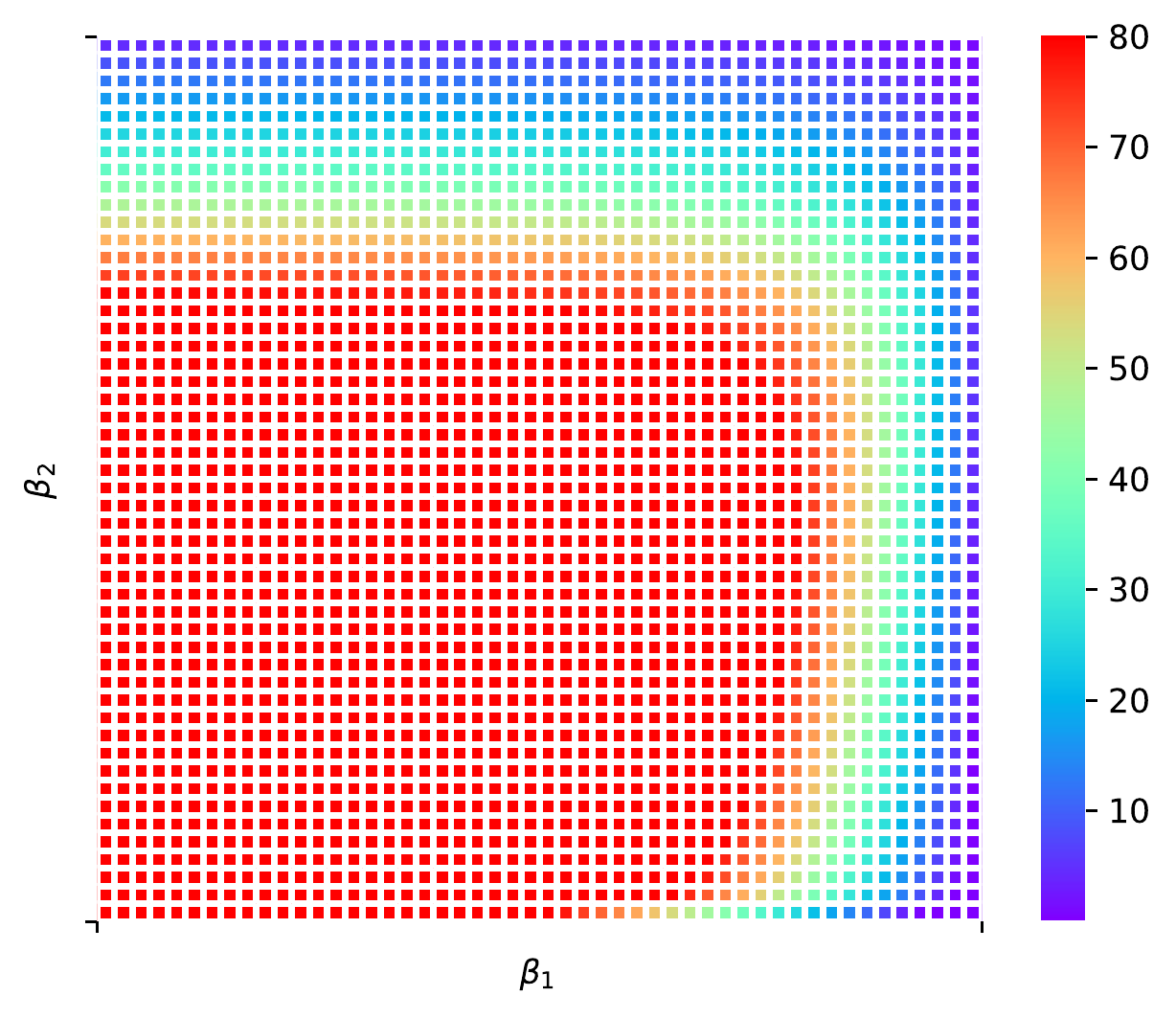}
          \end{minipage}%
          }%
        \centering
        \vspace{-3mm}
        \caption{ The gradient norm of $f(x)$ after running 50k iterations of Adam on function \eqref{counterexample1}. We use initialization $x = -5$.}
        \vspace{-2mm}
        \label{fig:toy_gradient}
\end{figure*}

\begin{figure*}[htbp]
      \vspace{-2mm}
        \centering
        \subfigure[$n=5$]{
        \begin{minipage}[t]{0.25\linewidth}
        \centering
         \includegraphics[width=\linewidth]{./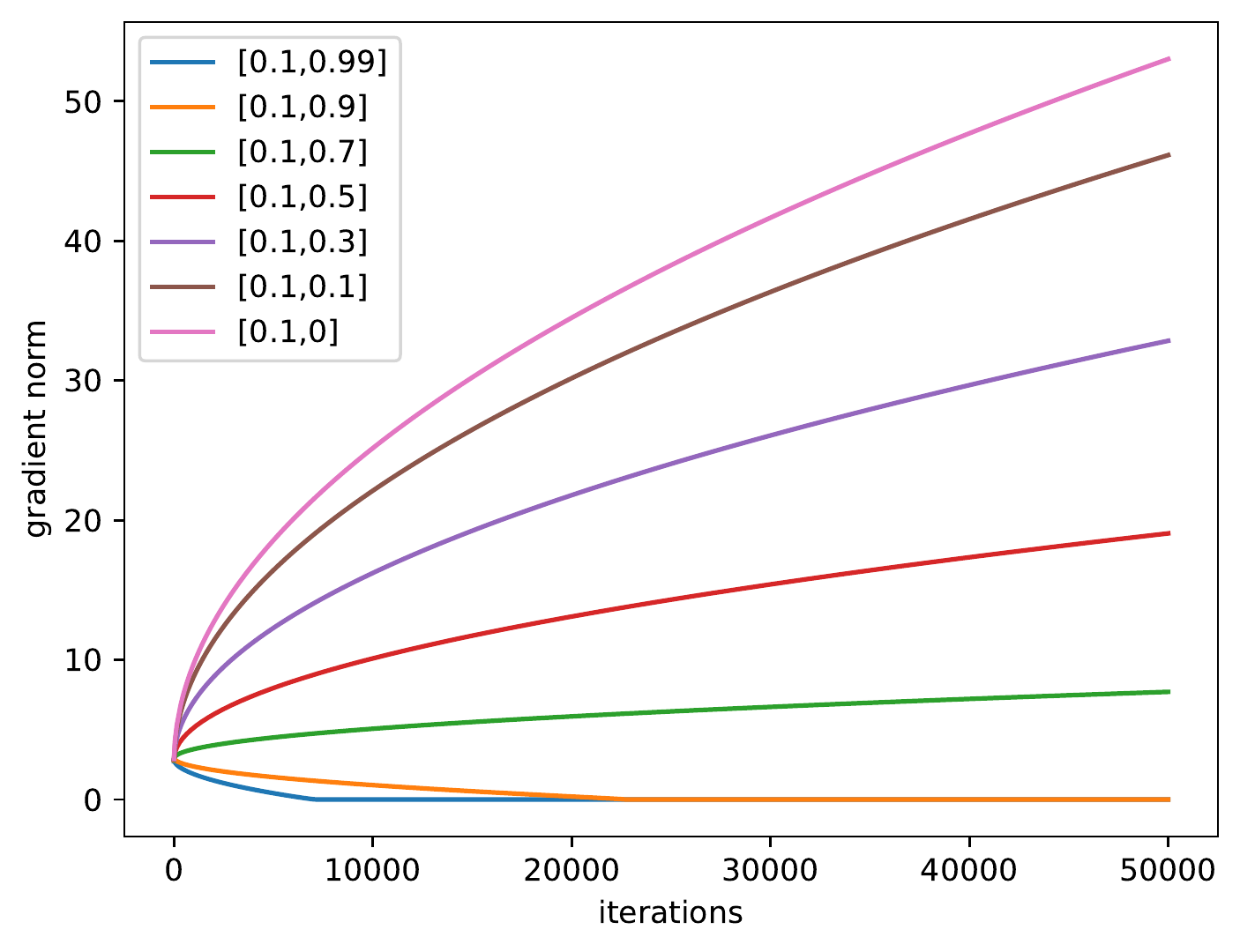}
        \end{minipage}%
        }%
        \subfigure[$n=10$]{
          \begin{minipage}[t]{0.25\linewidth}
          \centering
          \includegraphics[width=\linewidth]{./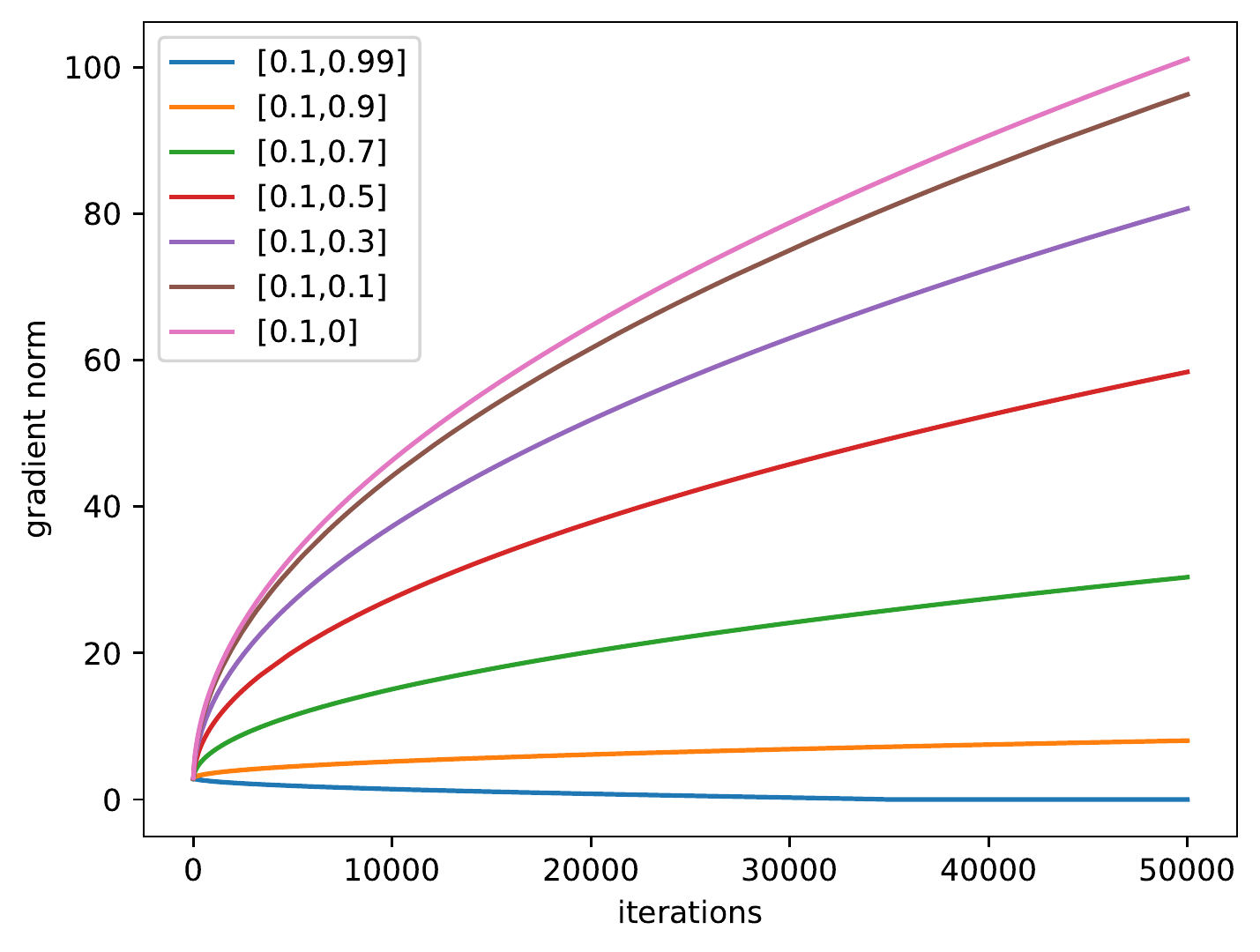}
          \end{minipage}%
          }%
        \subfigure[$n=15$]{
          \begin{minipage}[t]{0.25\linewidth}
          \centering
          \includegraphics[width=\linewidth]{./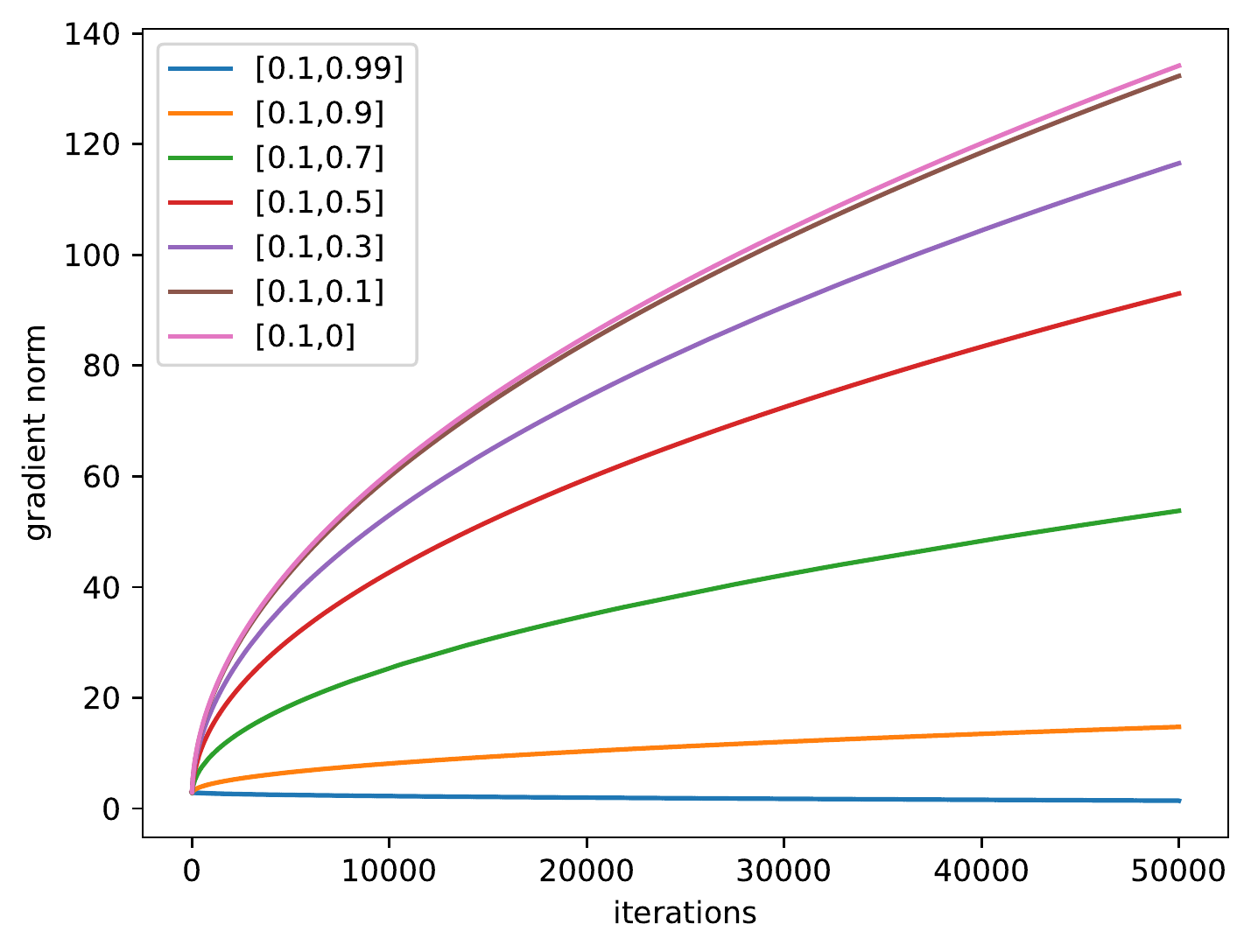}
          \end{minipage}%
          }%
        \subfigure[$n=20$]{
          \begin{minipage}[t]{0.25\linewidth}
          \centering
          \includegraphics[width=\linewidth]{./figures/0127_gradient_colomn_toyexample_50000_20.pdf}
          \end{minipage}%
          }%
        \centering
        \vspace{-3mm}
        \caption{ The change of gradient norm along the iterations of Adam on function \eqref{counterexample1}. We use initialization $x = -5$. We use $[\beta_1,\beta_2]$ to label the curves trained with corresponding hyperparameters.}
        \vspace{-2mm}
     \label{fig:toy_gradient_trajectory}
\end{figure*}

\begin{figure*}[ht]
      \vspace{-2mm}
        \centering
        \subfigure[$\beta_2=0.999$]{
        \begin{minipage}[t]{0.25\linewidth}
        \centering
          \includegraphics[width=\linewidth]{./figures/0128_gradient_SGC_toyexample_150000_20beta_20.999.pdf}
        \end{minipage}%
        }%
        \subfigure[$\beta_2=0.9995$]{
          \begin{minipage}[t]{0.25\linewidth}
          \centering
          \includegraphics[width=\linewidth]{./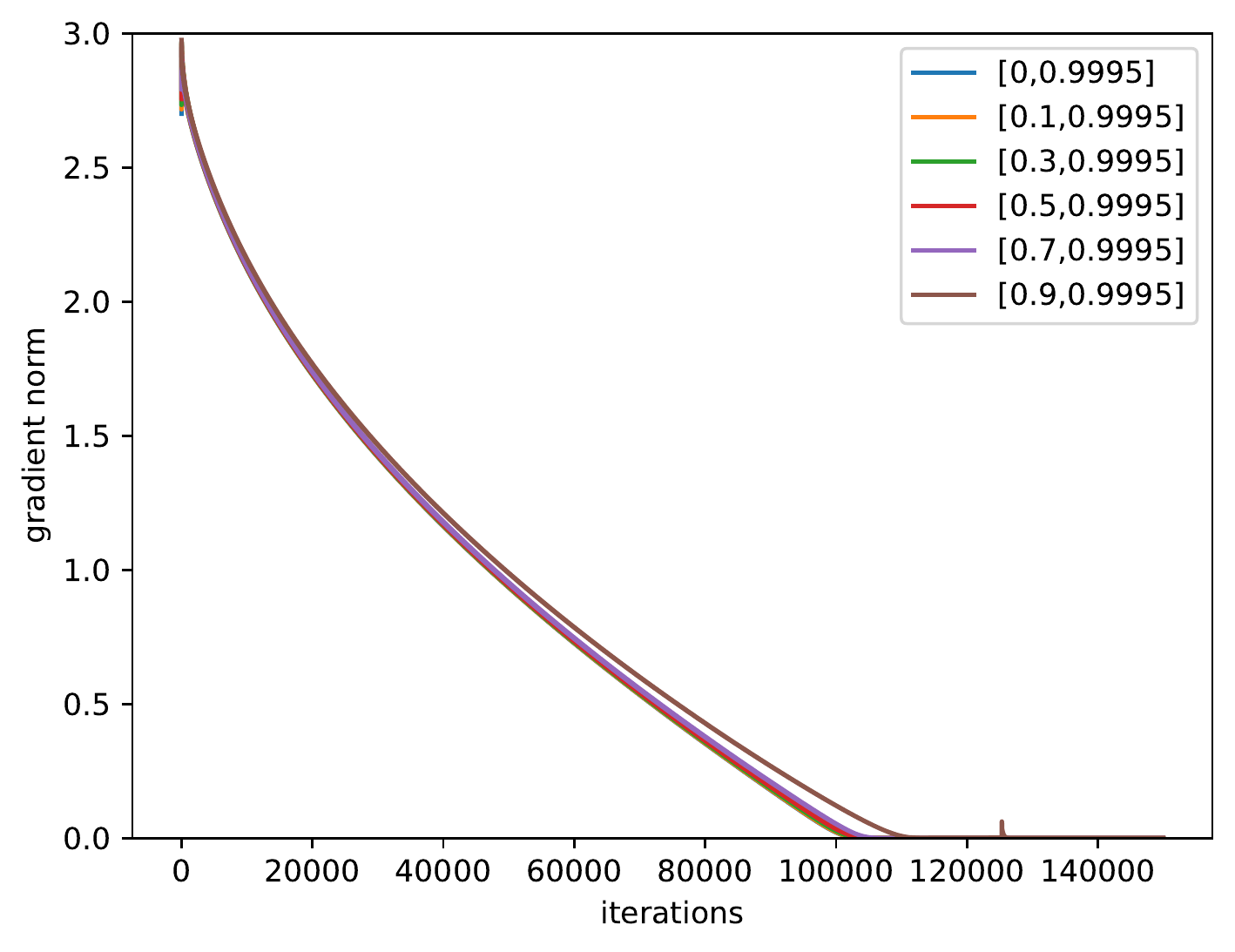}
          \end{minipage}%
          }%
        \subfigure[$\beta_2=0.9999$]{
          \begin{minipage}[t]{0.25\linewidth}
          \centering
          \includegraphics[width=\linewidth]{./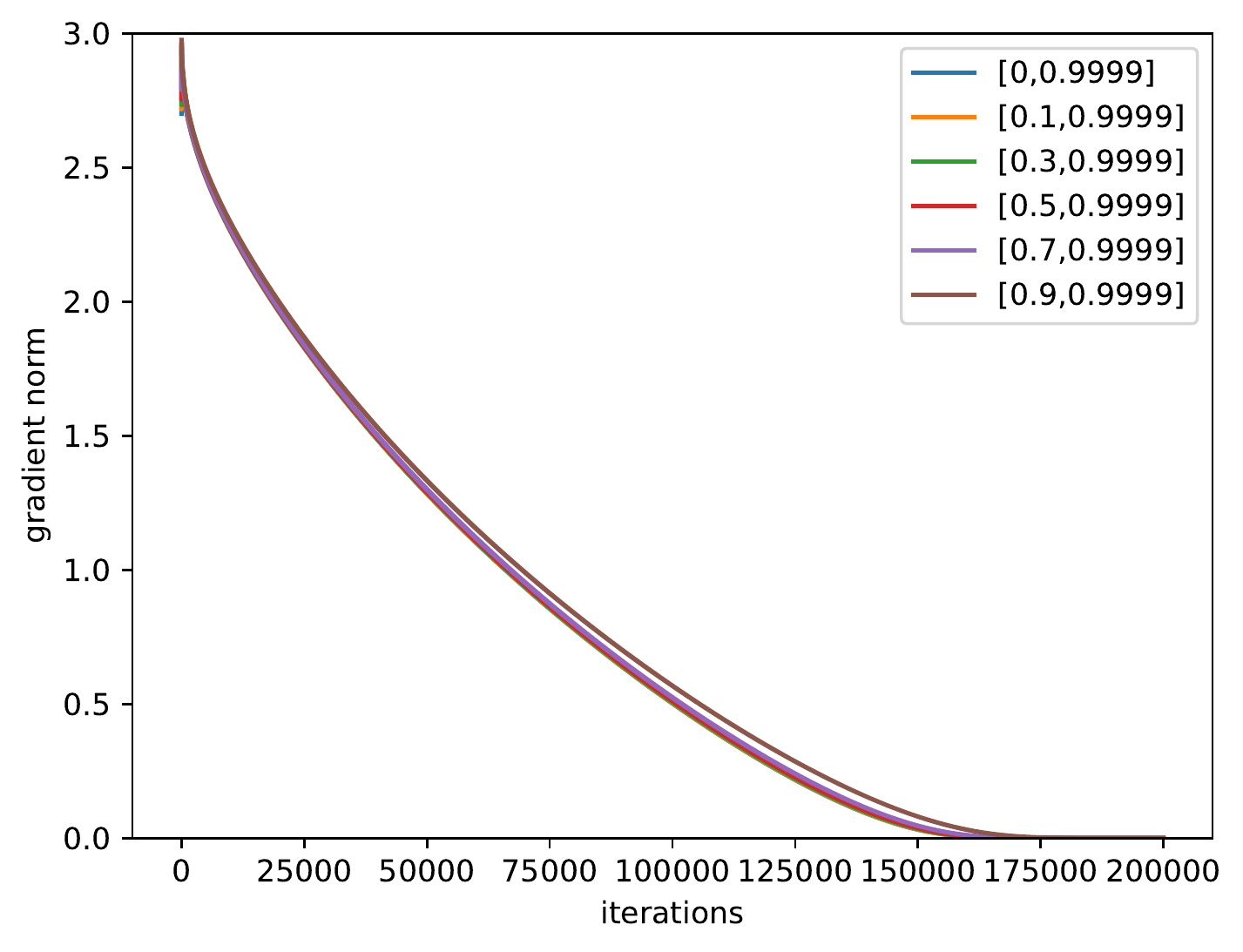}
          \end{minipage}%
          }%
        \subfigure[$\beta_2=0.99995$]{
          \begin{minipage}[t]{0.25\linewidth}
          \centering
          \includegraphics[width=\linewidth]{./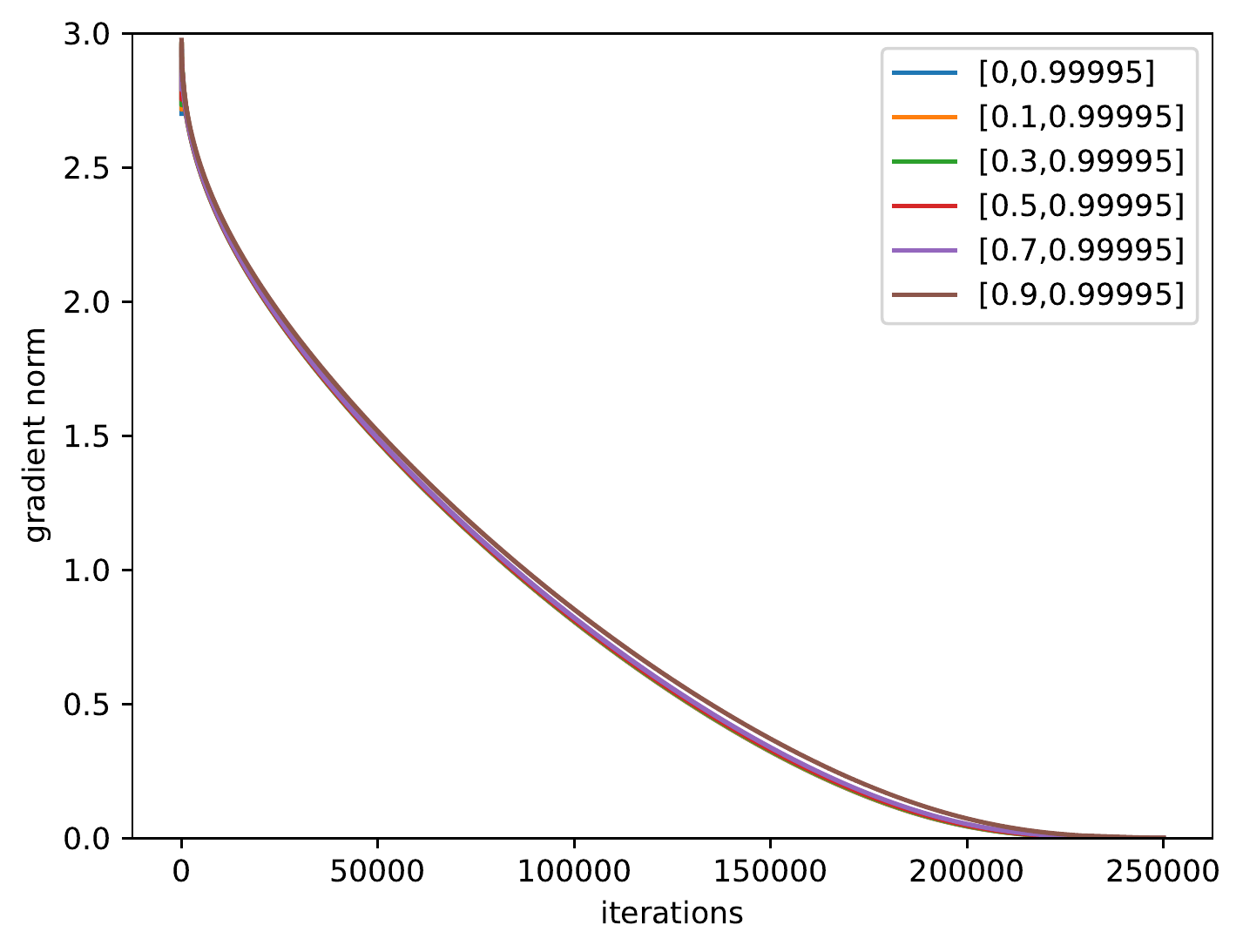}
          \end{minipage}%
          }%
        \centering
        \vspace{-3mm}
        \caption{ Under SGC (When $D_0 = 0$), we observe 0 gradient norm after  Adam converges. We use function \eqref{counterexample1} with $n=20$ and initialization $x=-5$.  We use $[\beta_1,\beta_2]$ to label the curves trained with corresponding hyperparameters.}
        \vspace{-2mm}
     \label{fig:toy_gradient_SGC}
\end{figure*}

\paragraph{Adam converges to a exact critical point when  $D_0=0$.}
Since function \eqref{counterexample1} satisfies $D_0=0$, we further provide empirical evidence that Adam converges to a exact critical point when $D_0=0$. We use  function \eqref{counterexample1} $n=20$ and initialization $x=-5$. We choose some large enough $\beta_2$ to ensure the convergence. 
As shown in Figure \ref{fig:toy_gradient_SGC},
We observe 0 gradient norm after  Adam converges. All the hyperparameter setting is the same as before.

\paragraph{Adam converges to a bounded region when $D_0>0$.} Now we show that Adam may not converge to an exact critical point when $D_0>0$. Instead, it converges to a bounded region near the critical point. For this part, we re-state the example from \citep{shi2020rmsprop}. Consider the following function.

\begin{equation}\label{eq_non_realizable}
    f_{j}(x)=\left\{\begin{array}{l}(x-a)^{2} \text { if  } j=0 \\ -0.1\left(x-\frac{10}{9} a\right)^{2} \text { if  } 1 \leq j \leq 9\end{array}\right.
\end{equation}

Summing up $f_j(x)$ we get

$$f(x)=\sum_{j=0}^{9} f_{j}(x)=\frac{1}{10} x^{2}-\frac{1}{9} a^{2}.$$

This $f(x)$ is lower bounded by $\frac{-a^2}{9}$ with optimal solution $x^*=0$. At the optimal point $x^*=0$, $\nabla f_j(x^*) \neq 0$ so we have $D_0 >0$. When running Algorithm \ref{algorithm} on this function, we observe that Adam with diminishing stepsize does not converge to 0 gradient norm. Instead, it converge to a bounded region.  Further, the size of the region shrinks when $\beta_2$ increases. These phenomena  matches our discussion in {\bf Remark 3.}  The result is shown in Figure \ref{fig:cifar_nlp_mnist} (a). In the experiment, we choose $\beta_1 =0.9$, $a =3$, $x_0 = -2$ and diminishing stepsize $\eta_k = \frac{0.1}{\sqrt{k}}$.



\subsection{Experimental Settings}
\label{appendix:exp_setting}

Here, we introduce our experimental settings.
\begin{itemize}
    \item {\bf Experiments on function \eqref{counterexample1}.} We use Algorithm \ref{algorithm} with cyclic order $f_0$, $f_1$, $f_2$ and so on. We report the optimality gap $x-x^*$ after  50k iteration, or equivalently  $50000/n$ epochs. We use $\epsilon =10^{-8}$ for numerical stability. We use diminishing stepsize $\eta_k=0.1/\sqrt{k}$, where $k$ is the index of epoch.  Unless otherwise stated, this setting applies to all the other experiments on function \eqref{counterexample1}.
    
    In Figure \ref{fig:intro_paper} (a), we use $n=10$ and  initialization $x =1$. We will report more results with different $n$ and different initialization in Appendix \ref{appendix:more_exp}.
    
    \item {\bf MNIST \citep{deng2012mnist}.} We use one-hidden-layer neural network with width =16. We set batchsize =1, weight decay =0, stepsize =0.0001 and train for 20 epochs.  We use $\epsilon =10^{-8}$ for numerical stability. 
    \item {\bf CIFAR-10 \citep{krizhevsky2009learning}.} We use ResNet-18 \citep{he2016deep} as the architecture. We choose batchsize =16, weight decay =5e-4 , initial stepsize=1e-3.   We use a stage-wise constant learning rate scheduling with a multiplicative factor of 0.1 on epoch 30, 60 and 90.   We use $\epsilon =10^{-8}$ for numerical stability.  
    
    For  MNIST and CIFAR-10, larger batchsize will bring similar pattern as that in Figure \ref{fig:intro_paper}, but
the phase transition will occur at some smaller $\beta_2$.

    \item {\bf NLP.} The WikiText-103 dataset is a collection of over 100 million tokens extracted from the set of verified ‘Good’ and ‘Featured’ articles on Wikipedia.  The base model of Transformer XL contains 16 self-attention layers. In each self-attention layer, there are 10 heads. The encoding dimension of each head is set to be 41.  We set batchsize = 60, number of iteration = 200k, and initial stepsize = 0.00025. We use cosine learning rate scheduler, which is a popular choice for training Transformer. We use $\epsilon =10^{-8}$ for numerical stability. 
\end{itemize}

\section{ Some Potential Implications for Practical Users}
\label{sec:implication}

Theorem \ref{thm1}  and Proposition \ref{thm_diverge} establish a clearer image on the relation between $(\beta_1,\beta_2)$ and qualitative behavior of Adam, which may have certain implication for practical users.  Many practitioners are still doing grid or random search over $\betabeta$, which could be costly.   The following advice may cut down a large portion of search space.
Suppose we start with some random point $(\beta_1^*,\beta_2^*)$, we provide the following suggestions for tuning $\beta_1$ and $\beta_2$. 

{\bf Case 1:  If $(\beta_1^*,\beta_2^*)$ lies  above the blue curve in Figure \ref{fig:counter_boundary_paper}.} We point out two sub-cases. 
First,  if Adam with $(\beta_1^*,\beta_2^*)$ diverges, then any points with ``$\beta_1 <\sqrt{\beta_2}$" and ``$\beta_2<\beta_2^*$" shall not be tried due to the risk of divergence.   
Second, if Adam with $(\beta_1^*,\beta_2^*)$ converges, then any points above this point will converge. You may either fix $\beta_1 = \beta_1^*$ and increase $\beta_2$, or fix $\beta_2 = \beta_2^*$ and try smaller $\beta_1$. Both ways have convergence guarantee.


{\bf Case 2: If $(\beta_1^*,\beta_2^*)$ lies  below the blue curve in Figure \ref{fig:counter_boundary_paper}. } We still discuss two sub-cases. First, if  you observe divergence at $(\beta_1^*,\beta_2^*)$, then do not further try any point on the left.  We do not suggest exploring the points in below either, since the majority of them will still face the risk of divergence. Instead, we suggest fix $\beta_1=\beta_1^*$ and increase $\beta_2$. Since our Theorem \ref{thm1} applies for any $\beta_1 <\sqrt{\beta_2}$,  algorithm will converge when $\beta_2$ is large enough. 

Second, if  you observe convergence at $(\beta_1^*,\beta_2^*)$ (which is also possible according to Figure \ref{fig:intro_paper} (a)), then we suggest: (i) either fix $\beta_1 = \beta_1^*$ and increase $\beta_2$, or (ii) fix $\beta_2 = \beta_2^*$ and try smaller $\beta_1$. Both ways push $\betabeta$ into the region of Theorem \ref{thm1} with convergence guarantee.




\section{More Discussion on Related Works.}

\subsection{More Related Works on the Convergence Analysis of Adam Family}
\label{appendix:more_related}

\paragraph{New variants of Adam.}
Ever since \citet{reddi2019convergence} pointed out the non-convergence issue of Adam, one active line of work has tried to design new variants of Adam that can be proved to converge. For instance,  \citet{zou2019sufficient,gadat2020asymptotic,chen2018convergence,chen2021towards} 
replace the constant hyperparameters by iterate-dependent ones e.g. $\beta_{1t}$ or $\beta_{2t}$. AMSGrad \citep{reddi2019convergence} and AdaFom \citep{chen2018convergence} modify $\{v_t\}$ to be an non-decreasing sequence.  Similarly,
AdaBound \citep{luo2018adaptive}  impose lower and upper bounds on $\{v_t\}$ to prevent the effective stepsize from  vanishing or exploding.   \citet{zhou2018adashift} also adopt a new estimate of  $v_t$ to correct the  bias.
There are also attempts to combine Adam with Nesterov momentum \citep{dozat2016incorporating} as well as warm-up techniques \citep{Liu2020On}.  Padam \citep{chen2018closing} also introduce a partial adaptive parameter to improve the generalization performance. 
There are also some works providing theoretical analysis on the variants of Adam.  For instance, \citet{zhou2018convergence} study the convergence of  AdaGrad and AMSGrad under bounded gradient condition. \citet{gadat2020asymptotic} study the asymptotic behavior of a subclass of adaptive gradient methods from landscape point of view. Their analysis applies to  Adam-variants with $\beta_1 =0$ and $\beta_2$ increasing along the iterates (it could also be understood as RMSProp with increasing $\beta_2$). When this script is under review, a new work \citep{iiduka2022theoretical} appear on line. \citet{iiduka2022theoretical} analyze the convergence of AMSGrad by relaxing the Lipschitz-gradient condition. However, their analysis requires extra conditions on both bounded gradient and bounded domain. 

\paragraph{Some more discussions on \citep{guo2021novel} and \citep{huang2021super}.}
Here, we discuss more on two recent works \citet{guo2021novel} and \citet{huang2021super}. As mentioned in Section \ref{section:related}, they both require some extra conditions. First,  both \citet{guo2021novel} and \citet{huang2021super}
requires bounded gradient assumption.
This can be seen in Assumption 2 in \citep{guo2021novel}. In  \citep{huang2021super}, they require bounded iterates ( their Theorem 1) or change Adam into  AdaBound \citep{luo2018adaptive} by clipping (their Remark 2, Corollary 1). Both settings inherent boundedness on gradient. 

Besides bounded gradient, both \citep{huang2021super} and \citep{guo2021novel} requires $1/(\sqrt{v_t}+\epsilon)\leq C_u$. 
This condition is stated in Assumption 2 in \citep{guo2021novel} and Assumption 3 in \citep{huang2021super} (they presented it as $H_{t} \succeq \rho I_{d} \succ 0$, where matrix $H_{t}= \text{diag}(\sqrt{v_t}+ \epsilon)$).  Combining these two conditions,  the effective stepsize of Adam will be bounded in certain interval  $\frac{1}{\sqrt{v_t}+ \epsilon} \in [C_l, C_u]$. Such boundedness condition changes Adam into AdaBound \citep{luo2018adaptive} and thus cannot explain the observations of Adam in Section \ref{section:intro}.

\subsection{A Brief Introduction to \citep{reddi2019convergence}}
\label{appendix:reddi}

Here, we re-state two counter examples by \citep{reddi2019convergence}. For the consistence of notation, we will re-state their results under our notation in the full script. 
They consider the convex problem (\citep{reddi2019convergence}):  $\min \sum_{i=0}^{n-1} f_i(x)$ where $x \in [-1,1]$, $n \geq 3$:
	\begin{equation}\label{counterexample_reddi}
		f_{i}(x)=\left\{\begin{array}{ll}n x, & \text { for } i =0 \\ -x, & \text { otherwise, }\end{array}\right.
	\end{equation}

Note that \eqref{counterexample_reddi} satisfy both Assumption \ref{assum1} and \ref{assum2} (with $D_1= n^2 +n -1$ and $D_0 =0$), so our assumptions do not rule out this counter-example a priori.
This is a constrained problem with feasible set $x \in [-1,1]$, the optimal solution is $x^*=-1$. Since they consider constrained problems, their claimed ``divergence" actually means the iterates will stay in a huge region with the size of whole feasible set. Here, we call it ``non-convergence" to distinguish from our result of ``diverge to infinity" in Proposition \ref{thm_diverge}.


They consider sampling $f_i$ in the  cyclic order: $f_0$, $f_1$, $f_2$.    In  \citep{reddi2019convergence}, 
Function \eqref{counterexample_reddi} is presented  as an ``online optimization problem with non-zero average regret". We choose to use the form of \eqref{counterexample_reddi} since it is more consistent with our notation in Algorithm \ref{algorithm}. We re-state their results as follows.


\begin{theorem}[Theorem 2 in \citep{reddi2019convergence}]
 For any fixed $\betabeta$ s.t. $\beta_1 < \sqrt{\beta_2}$, there exists function \eqref{counterexample_reddi} with large enough $n$, s.t.  Adam will  converge to highly sub-optimal solution $x=1$.
\end{theorem}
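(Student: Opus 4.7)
Because the iterates are projected onto $[-1,1]$ and the cyclic gradient sequence is the deterministic pattern $(n,-1,-1,\ldots,-1)$, the Adam recursion on \eqref{counterexample_reddi} is a fully deterministic nonlinear dynamics in which $m_{k,i},v_{k,i}$ depend only on the position $i$ within the epoch (up to exponentially small historical tails). The plan is to prove that the net per-epoch displacement
\begin{equation*}
\Delta_k \;=\; -\eta_k\sum_{i=0}^{n-1}\frac{m_{k,i}}{\sqrt{v_{k,i}}+\epsilon}
\end{equation*}
is bounded below by $c\,\eta_k$ for an explicit $c=c(\beta_1,\beta_2)>0$ whenever $\beta_1<\sqrt{\beta_2}$ and $n$ is large enough. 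Combined with $\sum_k\eta_k=\infty$ and projection onto $[-1,1]$, this drives $x_{k,0}\to 1$.

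First, I would write $m_{k,i}$ and $v_{k,i}$ in closed form. Summing the exact contributions from the current epoch plus the $\beta_1^n$- (resp.\ $\beta_2^n$-)decayed contributions from all previous epochs (a geometric series) yields, up to tails of order $O(\beta_1^n)$ in $m$ and $O(\beta_2^n)$ in $v$,
\begin{align*}
m_{k,i} &\;=\;(1-\beta_1)\,n\,\beta_1^i-(1-\beta_1^i),\\
v_{k,i} &\;=\;(1-\beta_2)\,n^2\,\beta_2^i+(1-\beta_2^i).
\end{align*}
The sign of the $i$-th summand in $\Delta_k$ is the sign of $m_{k,i}$, which flips from positive to negative at the index $i_1:=\log n/\log(1/\beta_1)$, while the denominator $\sqrt{v_{k,i}}$ stays of order $n\sqrt{1-\beta_2}\,\beta_2^{i/2}$ until the index $i_2:=2\log n/\log(1/\beta_2)$, beyond which it settles to $O(1)$.

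Second, observe the algebraic equivalence $i_1<i_2\iff \beta_1<\sqrt{\beta_2}$. Under this condition, for $i\in(i_1,i_2)$ the summand $-m_{k,i}/\sqrt{v_{k,i}}$ is strictly positive and of order $1/\!\bigl(n\sqrt{1-\beta_2}\,\beta_2^{i/2}\bigr)$; summing this geometric series in $i$ gives a positive contribution to $\Delta_k/\eta_k$ that grows polynomially with $n$. The single negative contribution to $\Delta_k/\eta_k$ comes from the step $i=0$ that processes $f_0$ and has magnitude $(1-\beta_1)/\sqrt{1-\beta_2}=O(1)$ in $n$. For $n$ larger than an explicit threshold $N(\beta_1,\beta_2)$, the positive intermediate-index mass dominates, so $\Delta_k\ge c(\beta_1,\beta_2)\,\eta_k>0$. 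Combined with $\sum_k\eta_k=\infty$, the projected iterate $x_{k,0}$ reaches $+1$ in finitely many epochs; after that the only within-epoch downward excursion is the first step, of size at most $\eta_k(1-\beta_1)/\sqrt{1-\beta_2}=o(1)$, and the subsequent $n-1$ upward steps plus the projection restore $x_{k+1,0}=1$, giving $x_{k,0}\to 1$.

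The main obstacle is the sign/magnitude analysis in the intermediate window $(i_1,i_2)$: it requires a careful matching of the geometric decay rate $\beta_1$ in the numerator against the decay rate $\beta_2^{1/2}$ in the denominator, and the whole argument hinges on the equivalence $i_1<i_2\iff \beta_1<\sqrt{\beta_2}$. A secondary point is controlling the historical tails $O(\beta_1^n),O(\beta_2^n)$ uniformly across epochs so that they do not perturb the sign of $\Delta_k$; since both tails are summable geometric series with ratio strictly less than $1$, this should reduce to routine bookkeeping once the steady-state closed forms above are available.
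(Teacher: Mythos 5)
You should first note that the paper itself does not prove this statement: it is quoted verbatim from \citet{reddi2019convergence} (Theorem 2 there), and the paper only points to Equation (7) in their Appendix B, emphasizing that the period $n$ must be chosen \emph{after} $(\beta_1,\beta_2)$. So the relevant comparison is with Reddi et al.'s own argument, and your plan — compute the steady-state per-epoch drift $\Delta_k$ from the deterministic cyclic pattern $(n,-1,\dots,-1)$, show it is positive once $n$ exceeds a threshold depending on $(\beta_1,\beta_2)$, and invoke $\sum_k\eta_k=\infty$ plus projection — is indeed the same mechanism they use, with the hypothesis $\beta_1<\sqrt{\beta_2}$ entering through the ratio $\beta_1/\sqrt{\beta_2}$ that governs the momentum-carried negative steps. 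The closed forms for $m_{k,i}$, $v_{k,i}$ and the identification of the two crossover indices $i_1,i_2$ are the right skeleton.

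However, your quantitative accounting has two genuine errors, and the second one erases the place where the hypothesis is actually used. First, the positive contribution you attribute to the window $(i_1,i_2)$ does \emph{not} grow polynomially in $n$: the terms $\frac{1}{n\sqrt{1-\beta_2}\,\beta_2^{i/2}}$ form an increasing geometric series whose sum is dominated by its last term, which equals $\frac{1}{\sqrt{1-\beta_2}}$ at $i=i_2$ (since $n\beta_2^{i_2/2}=1$); hence that window contributes only $O\bigl(\frac{1}{(1-\sqrt{\beta_2})\sqrt{1-\beta_2}}\bigr)$, a constant in $n$. The drift that actually grows with $n$ comes from the remaining $n-O(\log n)$ steps with $i>i_2$, where $v_{k,i}=\Theta(1)$ and $m_{k,i}\approx-1$, each contributing $\Theta(\eta_k)$; without this bulk, your comparison of an $O(1)$ positive mass against an $O(1)$ negative mass does not determine the sign of $\Delta_k$. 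Second, the negative contribution is not a ``single'' term at $i=0$: every $i<i_1$ has $m_{k,i}>0$, and the total negative mass is $\frac{1-\beta_1}{\sqrt{1-\beta_2}}\sum_{i\ge0}(\beta_1/\sqrt{\beta_2})^i$, which is bounded by a constant \emph{precisely because} $\beta_1<\sqrt{\beta_2}$ makes this series convergent (if $\beta_1>\sqrt{\beta_2}$ it is dominated by its term at $i\approx i_1$ and can scale like a positive power of $n$, which is why the theorem needs the hypothesis). As written, your estimate never actually uses $\beta_1<\sqrt{\beta_2}$; the equivalence $i_1<i_2\iff\beta_1<\sqrt{\beta_2}$ is observed but not converted into a bound. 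With these two fixes the argument yields $\Delta_k\ge c(\beta_1,\beta_2)\,n\,\eta_k$ for all $n\ge N(\beta_1,\beta_2)$, and the remaining steps (divergence of $\sum_k\eta_k$, projection, and the tail bookkeeping — note the carry-over into inner step $i$ is $O(\beta_1^{i+1})$, not $O(\beta_1^{n})$, but is harmless since $v_{k,i}\ge 1-\beta_2$) go through as you describe.
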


We briefly re-state the non-convergent condition for this Theorem.  As stated in Equation (7), Appendix B in \citep{reddi2019convergence},  for every fixed $\betabeta$, they need a ``constant $n$ that depends on $\beta_1$ and $\beta_2$". 
As such, they require  different $n$ to cause non-convergence on different $\betabeta$. So the considered function class is constantly changing.



For completeness, we further re-state Theorem 1 in \citep{reddi2019convergence}.

\begin{theorem}[Theorem 1 in \citep{reddi2019convergence}]
 For function \eqref{counterexample_reddi}, when $\beta_1=0$ and $\beta_2 = 1/(n^2+1)$, Adam will  converge to highly sub-optimal solution $x=1$.
\end{theorem}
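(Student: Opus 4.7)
The plan is to analyze a single epoch of Adam on function \eqref{counterexample_reddi} under cyclic sampling $f_0,f_1,\dots,f_{n-1}$ and the prescribed hyperparameters $\beta_1=0$, $\beta_2=1/(n^2+1)$, and to show that every epoch produces a strictly positive net drift $x_{k+1,0}-x_{k,0}$ of order $\eta_k$ in the $+1$ direction. Combined with the divergence of $\sum_k\eta_k$ and the projection $\Pi_{[-1,1]}$ onto the feasible set, this will force the iterates to be clamped at the boundary $x=1$ in the limit, which is maximally sub-optimal because the true minimizer of $f(x)=x$ on $[-1,1]$ is $x^*=-1$.

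\textbf{Dynamics of $v_{k,i}$.}
With $\beta_1=0$ the momentum collapses to $m_{k,i}=\nabla f_{\tau_{k,i}}(x_{k,i})\in\{n,-1\}$, so the only non-trivial quantity to track is $v_{k,i}$. Since $1-\beta_2=n^2/(n^2+1)$, processing $f_0$ first injects $g^2=n^2$ and yields $v_{k,0}\ge(1-\beta_2)n^2=n^4/(n^2+1)$, hence $\sqrt{v_{k,0}}\ge c_1 n$ for an explicit $c_1>0$. For $i\ge 1$ the squared gradient equals $1$ and the recursion $v_{k,i+1}=\beta_2 v_{k,i}+(1-\beta_2)$ is a strong contraction with ratio $\beta_2=1/(n^2+1)$; a direct computation then shows $v_{k,1}\le 2$ and $v_{k,i}\in[c_2,c_3]$ for $i\ge 2$, with constants independent of $k$. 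I would package these estimates into a two-sided envelope that is propagated inductively across epochs, which pins down the residual $v_{k,n-1}$ carried into epoch $k+1$ to be $\Theta(1)$.

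\textbf{Per-epoch drift and aggregation.}
These envelopes allow a clean sign-and-magnitude analysis. At $i=0$ the large gradient $+n$ is damped by $\sqrt{v_{k,0}}\ge c_1 n$, so $x$ decreases by at most $\eta_k/c_1$. At each of the $n-1$ subsequent inner steps the gradient $-1$ is amplified by $1/\sqrt{v_{k,i}}\ge 1/\sqrt{c_3}$, so $x$ increases by at least $\eta_k/\sqrt{c_3}$. Summing these contributions gives $x_{k+1,0}-x_{k,0}\ge\bigl((n-1)/\sqrt{c_3}-1/c_1\bigr)\eta_k$, which is a positive constant multiple of $\eta_k$ whenever $n\ge 3$. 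Since $\eta_k=\eta_1/\sqrt{nk}$ and $\sum_k 1/\sqrt{k}=\infty$, the cumulative unconstrained drift diverges; because the iterates are projected into $[-1,1]$, they must eventually reach and be clamped at $x=1$, and a final check shows that at $x=1$ the per-epoch dynamics still point outward, so $x_{k,0}=1$ for all sufficiently large $k$.

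\textbf{Main obstacle.}
The bulk of the work is the inductive bookkeeping of $v_{k,i}$ across successive epochs. Although $\beta_2=1/(n^2+1)$ is a strong contraction, the residual $v_{k,n-1}$ is not exactly the ``stationary'' value used in the heuristic, and one must rule out a cumulative drift in $v$ that could slowly invalidate the per-step sign estimates for $x$. I plan to handle this by verifying the envelope inductively across epochs, using the fact that $\beta_2\to 0$ as $n$ grows makes $v$ essentially memoryless beyond a single inner step, so the constants $c_1,c_2,c_3$ can be chosen uniformly in $k$. The additive numerical-stability term $\epsilon\ge 0$ in the denominator changes none of the order-of-magnitude estimates, so the argument applies uniformly in $\epsilon$.
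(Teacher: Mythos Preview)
The paper does not prove this statement; it merely restates Theorem~1 of Reddi et al.\ (2019) in Appendix~\ref{appendix:reddi} for context, without reproducing the argument. So there is no ``paper's own proof'' to compare against here.

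That said, your proposal is essentially the strategy of Reddi et al.'s original proof: track $v_{k,i}$ through one cyclic epoch, observe that with $\beta_2=1/(n^2+1)$ the injection of $g^2=n^2$ at $i=0$ makes $\sqrt{v_{k,0}}=\Theta(n)$ (damping the large positive gradient), while the subsequent $g^2=1$ injections collapse $v_{k,i}$ to $\Theta(1)$ for $i\ge 1$ (amplifying the $n-1$ small negative gradients), and sum to get a strictly positive per-epoch drift of order $\eta_k$. Your plan to propagate an inductive two-sided envelope on $v_{k,n-1}$ across epochs is the right way to make the one-epoch heuristic rigorous, and the strong contraction $\beta_2\ll 1$ indeed makes this bookkeeping routine.

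One small mismatch: Reddi et al.\ analyze projected Adam on $[-1,1]$ with stepsize $\alpha/\sqrt{t}$; the schedule $\eta_k=\eta_1/\sqrt{nk}$ you invoke is this paper's convention for Algorithm~\ref{algorithm}, not the setting of the cited theorem. It does not change the qualitative argument, but if you are reproducing Reddi et al.'s statement you should match their stepsize convention.
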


This theorem considers choosing $\betabeta$ after $n$. However, this result only show non-convergence on {\it a single point} $\betabeta = (0, 1/(n^2+1))$. This point lies somewhere on the left boundary of Figure \ref{fig:reconcile_case1}. It seems unclear how Adam's behavior would change as we change the $\betabeta $ to anywhere else.




\section{Proof of Proposition \ref{thm_diverge} }
\label{appendix:thm_diverge}

We restate our counter-example here. 
Consider $f(x)=\sum_{i=0}^{n-1} f_i(x)$ for  $x \in \mathbb{R}$
, we define  $ f_i(x)$ as follows:
\begin{eqnarray}
  f_{i}(x)&=&\left\{\begin{array}{ll} nx, &  x \geq -1\\ \frac{n}{2}(x+2)^2 -\frac{3n}{2}, & x < -1 \end{array}\right.  \text{for $i=0$,} \nonumber \\
  f_{i}(x)&=&\left\{\begin{array}{ll} -x, &  x \geq -1\\ -\frac{1}{2}(x+2)^2 +\frac{3}{2}, & x < -1 \end{array}\right.  \text{for $i>0$.} \label{counterexample}
\end{eqnarray}

Summing up all the $f_i(x)$, we can see that 

$$
f(x)= \left\{\begin{array}{ll} x, &  x \geq -1\\ \frac{1}{2}(x+2)^2 -\frac{3}{2}, & x < -1 \end{array}\right.
$$

is a convex smooth function with optimal solution $x^*=-2$ and optimal value $f(x^*)=-3/2$.

 However, we are going to show that, for any fixed $n>2$, there exists an orange region shown in Figure \ref{fig:counter_boundary_paper}, s.t., Adam with any $\betabeta$ combination in the yellow region diverge to $x=\infty$ rather than the optimal solution $x=-2$, causing the divergence. 
Now we introduce the formal statement of Proposition \ref{thm_diverge}.

\begin{proposition}\label{thm_diverge2} {\bf(Formal statement of Proposition \ref{thm_diverge}.)}
Consider the convex function \eqref{counterexample} for a fixed $n$. Starting at the initialization $x=1$ and initial stepsize $\eta_1$,  the iterates of Adam diverge to infinity if the following holds:

\begin{equation}\label{diverge_c1}
    {\bf (C1):}   \left(n-1-\min \left\{n-1, \log_{\beta_2} (\frac{1}{10n^2})   \right\}\right) \frac{1-\beta_1^{\min \left\{n-1, \log_{\beta_2} (\frac{1}{10n^2})   \right\}}}{\sqrt{1+ \max\left\{0.1, \beta_2^{n-1} n^2   \right\}}}
\geq  \frac{1-\beta_1}{\sqrt{1-\beta_2}} + \frac{\beta_1  }{\sqrt{1-\beta_2}} n; 
\end{equation}		

\begin{equation}\label{diverge_c2}
      {\bf (C2):}  (1-\beta_1^{n-1})>(1-\beta_1)\beta_1^{n-1}n.
\end{equation}

\begin{equation}\label{diverge_c3}
    {\bf (C3): } \eta_1 \leq 2\sqrt{(1-\beta_2)\beta_2^n}.
\end{equation}

\end{proposition}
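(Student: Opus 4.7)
The plan is to establish divergence by first showing that the iterates remain in the linear region $\{x \geq -1\}$ throughout the run, and then tracking a closed-form recursion for the per-epoch displacement on this region. The crucial observation is that within $\{x \geq -1\}$ we have $\nabla f_0 \equiv n$ and $\nabla f_i \equiv -1$ for $i \geq 1$, so the stochastic gradients are \emph{constants} independent of $x$; consequently, conditional on the permutation chosen in epoch $k$, the sequences $(m_{k,i}, v_{k,i})$ are entirely determined by the sampling order and are decoupled from the iterate trajectory. This decoupling is what makes an exact analysis of Adam tractable on this toy function.

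First I would argue the ``trapping in the linear region'' step by induction on $k$. The update magnitude satisfies $|\eta_k m_{k,i}/(\sqrt{v_{k,i}}+\epsilon)| \leq \eta_k/\sqrt{v_{k,i}}$, and since at least one $-1$ gradient enters $v$ during each epoch, one gets the uniform lower bound $v_{k,i} \geq (1-\beta_2)\beta_2^n$ for all $k \geq 1$. Combined with Condition (C3), this bounds the single-step movement by $\tfrac{1}{2}$; a telescoped bound along one epoch, together with the starting point $x_{1,0}=1$ and the net per-epoch rightward drift established next, keeps $\xki \geq -1$ inductively.

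Next, fixing a permutation that places $f_0$ at position $0$ of the epoch (the worst case — other positions yield even larger rightward drift after re-indexing the decay), I would explicitly compute $m_{k,i}$ and $v_{k,i}$. For inner indices $i$ with $\beta_2^i n^2 \geq 0.1$, i.e.\ $i \lesssim T:=\min\{n-1,\log_{\beta_2}(1/(10n^2))\}$, the adaptor $v_{k,i}$ is dominated by the large $f_0$ contribution of order $n^2$, so steps are small and their total rightward/leftward budget is at most the RHS of (C1). For $i > T$, the $f_0$ contribution to $v$ has decayed below the cumulative $-1$ contributions, so $v_{k,i}$ is of order $1+\beta_2^{n-1}n^2$ while $m_{k,i}$ has magnitude at least $1-\beta_1^T$ in the negative direction — this latter sign is precisely Condition (C2), which says the $n{-}1$ accumulated $-1$ gradients outweigh the decayed single $+n$ gradient in momentum. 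Each of these $n-1-T$ steps therefore moves $x$ rightward by at least $(1-\beta_1^T)/\sqrt{1+\max\{0.1,\beta_2^{n-1}n^2\}}$, and Condition (C1) is exactly the statement that this rightward motion exceeds the worst-case leftward motion accumulated during the first $T$ steps. Thus $x_{k+1,0} - x_{k,0} \geq \delta > 0$ with $\delta$ a constant (independent of $k$ up to the $\eta_k$ scaling), and iterating yields $x_{k,0}\to\infty$ and $f(x_{k,0})=x_{k,0}\to\infty$.

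The main obstacle I anticipate is handling the random permutation honestly: the position of $f_0$ changes each epoch, which reshapes the decay schedule of $v$. The cleanest workaround is to argue that the placement of $f_0$ at the \emph{start} of the epoch is the worst case for rightward drift (since it maximizes the number of inner iterations during which $v$ has already decayed and $m$ is dominated by $-1$ gradients), so (C1)--(C3) imply divergence for every permutation. A secondary subtlety is carrying the induction across epochs, since the carryover $v_{k+1,-1} = v_{k,n-1}$ must itself lie in a range consistent with the next-epoch calculation; this should follow from the same closed-form evaluation. Finally, for Corollary \ref{corollary_diverge}, my plan is to take $n\to\infty$: the cutoff $T = \log_{\beta_2}(1/(10n^2))$ grows only logarithmically while the ``small-$v$'' window $n-1-T$ grows linearly, so for any fixed $\betabeta \in [0,1)^2$ one can choose $n$ large enough that (C1) and (C2) are satisfied and (C3) is met by shrinking $\eta_1$; the same function-class family therefore witnesses divergence everywhere in the $\betabeta$ square.
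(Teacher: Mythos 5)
Your overall route is the same as the paper's: keep the iterates trapped on the linear branch of \eqref{counterexample} via \eqref{diverge_c3}, then show the per-epoch displacement is positive by weighing the rightward pull of the $-1$ gradients during the late, small-$v$ window (the left side of \eqref{diverge_c1}) against the leftward pull of the single $+n$ gradient (the right side of \eqref{diverge_c1}), with \eqref{diverge_c2} making the carried-over momentum $m_{k-1,n-1}$ negative. However, two of your steps do not go through as written. First, the trapping bound: $|\eta_k m_{k,i}/(\sqrt{v_{k,i}}+\epsilon)|\le \eta_k/\sqrt{v_{k,i}}$ presumes $|m_{k,i}|\le 1$, which is false here since $\nabla f_0\equiv n$ on the linear branch, so $|m_{k,i}|$ can be of order $(1-\beta_1)n$; combined with your crude lower bound $v_{k,i}\ge(1-\beta_2)\beta_2^{n}$, the worst-case leftward travel in one epoch is only controlled at the level $\eta_1 n/\sqrt{(1-\beta_2)\beta_2^n}\le 2n$, so \eqref{diverge_c3} does not keep the iterates above $-1$. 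The paper's fix is to bound only the leftward ($f_0$-driven) part of each step, $n(1-\beta_1)\beta_1^i$, and pair it with the $f_0$ contribution $(1-\beta_2)\beta_2^{i}n^2$ inside $v_{k,i}$, so the factor $n$ cancels and $\eta_1\le 2\sqrt{(1-\beta_2)\beta_2^n}$ suffices. Second, your claim that every step $i>T$ moves right by at least $(1-\beta_1^{T})/\sqrt{1+\max\{0.1,\beta_2^{n-1}n^2\}}$ needs $m_{k,i}\le-(1-\beta_1^{T})$, which does not follow from \eqref{diverge_c2}: at an intermediate step the momentum still carries the positive tail $(1-\beta_1)\beta_1^{i}n$, and \eqref{diverge_c2} only fixes the sign of the end-of-epoch carryover. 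Those tails must be budgeted separately, which is exactly what the term $\beta_1 n/\sqrt{1-\beta_2}$ on the right of \eqref{diverge_c1} does; the paper's bookkeeping splits the epoch displacement into a carryover part (positive by \eqref{diverge_c2}), a $-1$-gradient part (lower-bounded by the left of \eqref{diverge_c1}), and an all-$+n$-gradient part (bounded in magnitude by the right of \eqref{diverge_c1}), avoiding the double promise in your accounting.

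On the obstacle you flag yourself: the assertion that placing $f_0$ first in every epoch is the worst case over permutations is left unproved and is not obvious, since under shuffling the gaps between consecutive occurrences of $f_0$ range from $1$ to $2n-1$ and the carryover $v_{k+1,-1}=v_{k,n-1}$ changes accordingly. The paper does not attempt this either: its proof, following \citep{reddi2019convergence}, simply fixes the cyclic sampling order $f_0,f_1,\dots,f_{n-1}$ and proves divergence for that order, which is how Proposition \ref{thm_diverge2} should be read; if you want the random-shuffle version you need a genuinely new argument, not a one-line worst-case claim. Your $n\to\infty$ sketch for Corollary \ref{corollary_diverge} does match the paper's.
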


The analysis is motivated by that in  \citep[Theorem 1]{reddi2019convergence}. However, \citep[Theorem 1]{reddi2019convergence} considers a simplified case with  $\beta_1 = 0$. Here, we consider non-zero $\beta_1$, especially for those $\beta_1 > \sqrt{\beta_2}$.   To show the divergence, we aim to prove the following claim: (we denote $x_{k, i}$ as the value of $x$ at the $k$-th outer loop and $i$-th inner loop ) 

\begin{center}
  Claim: for any fixed $n>2$, there exists an orange region shown in Figure \ref{fig:counter_boundary_paper} s.t., Adam with any $\beta_1$-$\beta_2$ combination in the orange region gives $x_{k+1,0}>1$ as long as $x_{k,0}=1$. 
\end{center}

Since the gradient stays constant when $x>1$, so $x$ will go to infinity if the claim holds. 
To prove this claim, we only need to analyze the trajectory of Adam within one particular outer loop, e.g., the $k$-th outer loop. We will show that  $x_{k+1,0} >1$ if this outer loop is initialized with $x_{k,0}=1$. 
Similarly as \citep{reddi2019convergence}, we assume $f_i(x)$ are sampled in the order of $f_0(x), f_1(x), \cdots, f_{n-1}(x)$ within the  $k$-th outer loop. 



Now let us prove the claim. For function \eqref{counterexample}, the update rule of Adam is shown as follows.

{\small
\begin{equation}\label{eq_counter_update1}
x_{k,1}= \left(x_{k,0}+\delta_{k,0}\right), \quad \delta_{k,0}=- \frac{\eta_1}{\sqrt{k}} \left(\frac{n(1-\beta_1)+\beta_1 m_{k-1,n-1}}{\sqrt{(1-\beta_2)n^2+\beta_2 v_{k-1,n-1}}}\right)
\end{equation}

\begin{equation}\label{eq_counter_update2}
x_{k,i+1} = \left(x_{k,i}+\delta_{k,i}\right) , \quad i=1, \cdots, n-1;
\end{equation}

where $ \delta_{k,i} = - \frac{\eta_1}{\sqrt{k}} \left( \frac{(1-\beta_1) \sum_{j=0}^{i-1} (-1) \beta_1^{j} + (1-\beta_1)\beta_1^i n+\beta_1^{i+1}m_{k-1,n-1} }{\sqrt{(1-\beta_2)+ \beta_2 v_{k,i-1}}}\right).$

}

We decompose the  total movement $ \sum_{i=0}^{n-1}\delta_{k,i} $ into three terms as follows.

{\small
\begin{eqnarray*}
 \sum_{i=0}^{n-1}\delta_{k,i}  &=& \frac{\eta_1}{\sqrt{k}} \underbrace{\left( -\frac{\beta_1  m_{k-1,n-1}}{\sqrt{(1-\beta_2)n^2+\beta_2 v_{k-1,n-1}}} -  \frac{\beta_1^2  m_{k-1,n-1}}{\sqrt{(1-\beta_2)+\beta_2 v_{k,0}}}   -\cdots -  \frac{\beta_1^n  m_{k-1,n-1}}{\sqrt{(1-\beta_2)+\beta_2 v_{k,n-2}}}   \right) }_{(I)}  \\
 && +  \frac{\eta_1}{\sqrt{k}}  \underbrace{\left(\frac{1-\beta_1 }{ \sqrt{(1-\beta_2) +\beta_2 v_{k,0}}} + \frac{(1-\beta_1)+\beta_1(1-\beta_1) }{ \sqrt{(1-\beta_2) +\beta_2 v_{k,1}}} + \cdots+ \frac{(1-\beta_1)\sum_{j=0}^{n-2} \beta_1^j
  }{ \sqrt{(1-\beta_2) +\beta_2 v_{k,n-2}}}   \right) }_{(II)} \\
  &&+ \frac{\eta_1}{\sqrt{k}} \underbrace{\left( -  \frac{n(1-\beta_1)}{\sqrt{(1-\beta_2)n^2 +\beta_2 v_{k-1,n-1}}}  - \frac{n(1-\beta_1)\beta_1}{\sqrt{(1-\beta_2) +\beta_2 v_{k,0}}} - \cdots- \frac{n(1-\beta_1)\beta_1^{n-1}}{\sqrt{(1-\beta_2)+\beta_2 v_{k,n-2}}}     \right) }_{(III)}.
\end{eqnarray*}

}

We will show that for some $\beta_1$ and $\beta_2$: $(I)$, $(II)>0$ and $(III)<0$. However, $(I)$ and $(II)$  outweigh $(III)$, causing the divergence. 

First, we show that $m_{k-1, n-1}<0 $ when $\beta_1$ is small. 

\begin{eqnarray*}
  -m_{k-1,n-1} &=& (1-\beta_1) \sum_{j=0}^{n-2} \beta_1^j -  (1-\beta_1)\beta_1^{n-1}n -\beta_1^n m_{k-2,n-1} \\
    &=& (1-\beta_1^{n-1}) -  (1-\beta_1)\beta_1^{n-1}n -\beta_1^n m_{k-2,n-1}\\
    &=& \left[(1-\beta_1^{n-1})-(1-\beta_1)\beta_1^{n-1}n \right] \sum_{j=0}^k \left(\beta_1^{n}\right)^j.
 \end{eqnarray*}

when $\beta_1$ is small,we have  $(1-\beta_1^{n-1})>(1-\beta_1)\beta_1^{n-1}n$, which implies  $-m_{k-1,n-1}>0$. For these choices of $\beta_1$, we  have $(I)>0$. Now we derive a lower bound for $(II)$.
 
 \begin{eqnarray*} 
   (II)&\geq & \frac{1-\beta_1 }{ \sqrt{1+\beta_2 n^2}} + \frac{(1-\beta_1)+\beta_1(1-\beta_1) }{ \sqrt{1+\beta_2^2 n^2}} + \cdots+ \frac{(1-\beta_1)\sum_{j=0}^{n-2} \beta_1^j
   }{ \sqrt{1+\beta_2^{n-1} n^2}}  \\
   &=&  \frac{1-\beta_1 }{ \sqrt{1+\beta_2 n^2}} + \frac{1-\beta_1^2 }{ \sqrt{1+\beta_2^2 n^2}} + \cdots+ \frac{1-\beta_1^{n-1}
   }{ \sqrt{1+\beta_2^{n-1} n^2}}.
 \end{eqnarray*}

 The inequality is due to the fact that $v_{k,0}\leq n^2$. Since $\beta_2^{j} n^2$ is small when $\beta_2$ is small and  $j$ is close to $n$, there exists some small $\beta_2$ such that  $\beta_2^{j} n^2 \leq 0.1$ for at least one $j<n$. For these small enough $\beta_2$, we keep the summand with $j \geq \log_{\beta_2} (0.1/n^2)$ and drop the rest.   We have the following lower bound for $(II)$.

 \begin{eqnarray*} 
  (II)&\geq &
  \left(n-1-  \log_{\beta_2} (\frac{1}{10n^2})   \right) \frac{1-\beta_1^{ \log_{\beta_2} (\frac{1}{10n^2})   }}{\sqrt{1+  0.1  }}
\end{eqnarray*}

However, this lower bound only holds for the small $\beta_2$. With simple modification, we derive a universal lower bound of $(II)$ for any $\beta_2 \in (0,1)$.

 \begin{eqnarray*} 
  (II)&\geq &
  \left(n-1-\min \left\{n-1, \log_{\beta_2} (\frac{1}{10n^2})   \right\}\right) \frac{1-\beta_1^{\min \left\{n-1, \log_{\beta_2} (\frac{1}{10n^2})   \right\}}}{\sqrt{1+ \max\left\{0.1, \beta_2^{n-1} n^2   \right\}}}
\end{eqnarray*}

Now we derive a lower bound for $(III)$. 

\begin{eqnarray*}
  (III) & = &   -  \frac{n(1-\beta_1)}{\sqrt{(1-\beta_2)n^2 +\beta_2 v_{k-1,n-1}}}  - \frac{n(1-\beta_1)\beta_1}{\sqrt{(1-\beta_2) +\beta_2 v_{k,0}}} - \cdots- \frac{n(1-\beta_1)\beta_1^{n-1}}{\sqrt{(1-\beta_2)+\beta_2 v_{k,n-2}}} \\
  &\geq & - \frac{1-\beta_1}{\sqrt{1-\beta_2}}\left( 1+n(\sum_{j=1}^{n-1}\beta_1^j)    \right) \\
  &=& - \frac{1-\beta_1}{\sqrt{1-\beta_2}} -  \frac{\beta_1 (1-\beta_1^{n-1}) }{\sqrt{1-\beta_2}} n \\
  &\geq & - \frac{1-\beta_1}{\sqrt{1-\beta_2}} -  \frac{\beta_1  }{\sqrt{1-\beta_2}} n.
\end{eqnarray*}

The remaining step is to show for small enough step size $\eta_1$, the iterates will stay in the linear region, thus the above gradient holds for all iterates in the trajectory.

As we initial $x$ as $x_0 = 1$, if for all $m<n$ and $k>0$, $\sum_{i=0}^m \delta_{k,i}\geq-2$, we can conclude all iterates in the trajectory stay in the linear region.

Because it holds that $m_{k-1,n-1}\leq 0$ and $v_{k-1,n-1}\geq 0$, we have the following result after dropping most negative terms in the definition of $\delta_{k,i}$:
\[
\delta_{k,i} \geq -\frac{\eta_1}{\sqrt{k}} \frac{n(1-\beta_1)\beta_1^i}{\sqrt{n^2(1-\beta_2)\beta_2^i}} \geq -\eta_1\frac{(1-\beta_1)\beta_1^i}{\sqrt{(1-\beta_2)\beta_2^n}}.
\]

Therefore, to make $\sum_{i=0}^m \delta_{k,i}\geq \sum_{i=0}^m -\eta_1\frac{(1-\beta_1)\beta_1^i}{\sqrt{(1-\beta_2)\beta_2^n}}\geq-2$, we have
\[
\eta_1 \leq 2\sqrt{(1-\beta_2)\beta_2^n}.
\]
To show the divergence, we want to show that there exists some $\beta_1$ and $\beta_2$ s.t. the both of the following  conditions hold:


$$ {\bf (C1):}   \left(n-1-\min \left\{n-1, \log_{\beta_2} (\frac{1}{10n^2})   \right\}\right) \frac{1-\beta_1^{\min \left\{n-1, \log_{\beta_2} (\frac{1}{10n^2})   \right\}}}{\sqrt{1+ \max\left\{0.1, \beta_2^{n-1} n^2   \right\}}}
\geq  \frac{1-\beta_1}{\sqrt{1-\beta_2}} + \frac{\beta_1  }{\sqrt{1-\beta_2}} n;  $$

$$
{\bf (C2):} \text{$\beta_1$ is small s.t. } (1-\beta_1^{n-1})>(1-\beta_1)\beta_1^{n-1}n.
$$

$$
{\bf (C3): } \eta_1 \leq 2\sqrt{(1-\beta_2)\beta_2^n}.
$$
The proof of Theorem \ref{thm_diverge} is completed.
With the help of Python, we visualize the region where  $ {\bf (C1)} $ and $ {\bf (C2)} $ hold. The results are shown in Figure \ref{fig:counter_boundary}. We use orange color to indicate the region where $ {\bf (C1)} $ holds.  White color is used for the counter part. As for $ {\bf (C2)} $, we use the gray vertical line to indicate 
the line where $(1-\beta_1^{n-1})=(1-\beta_1)\beta_1^{n-1}n$. Note that there are two solutions to this equation: one solution is $\beta_1=1$ and the other solution lies in  $ 0<\beta_1<1$, this is why there are two vertical lines in the figure. $ {\bf (C2)} $ holds on the left hand side of the left gray vertical line.

\begin{figure}[h]
  \vspace{-2mm}
    \subfigure[$n=5$]{
      \begin{minipage}[t]{0.5\linewidth}
      \centering
      \includegraphics[width=2in]{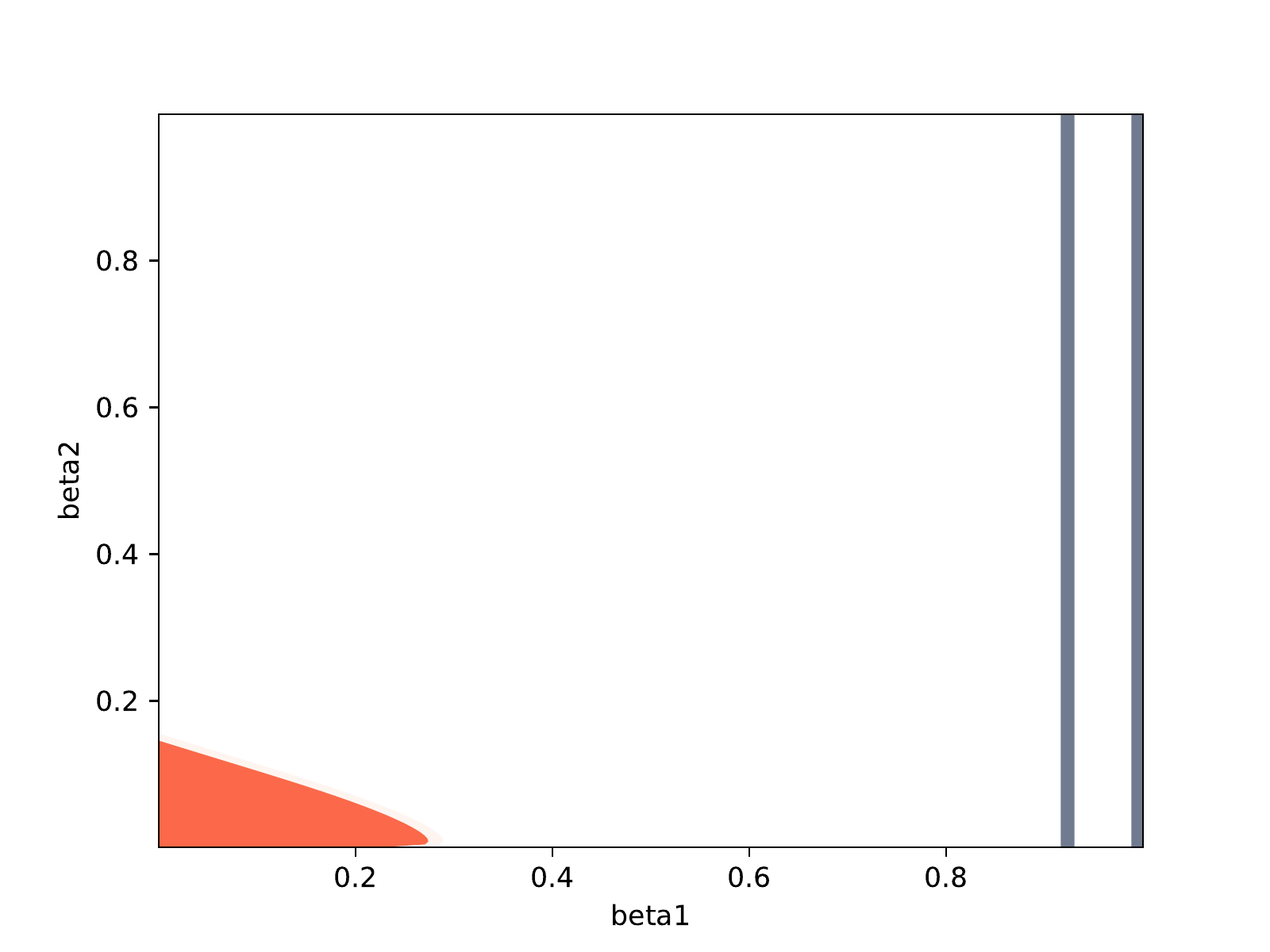}
      \end{minipage}%
      }%
      \subfigure[$n=10$]{
        \begin{minipage}[t]{0.5\linewidth}
        \centering
        \includegraphics[width=2in]{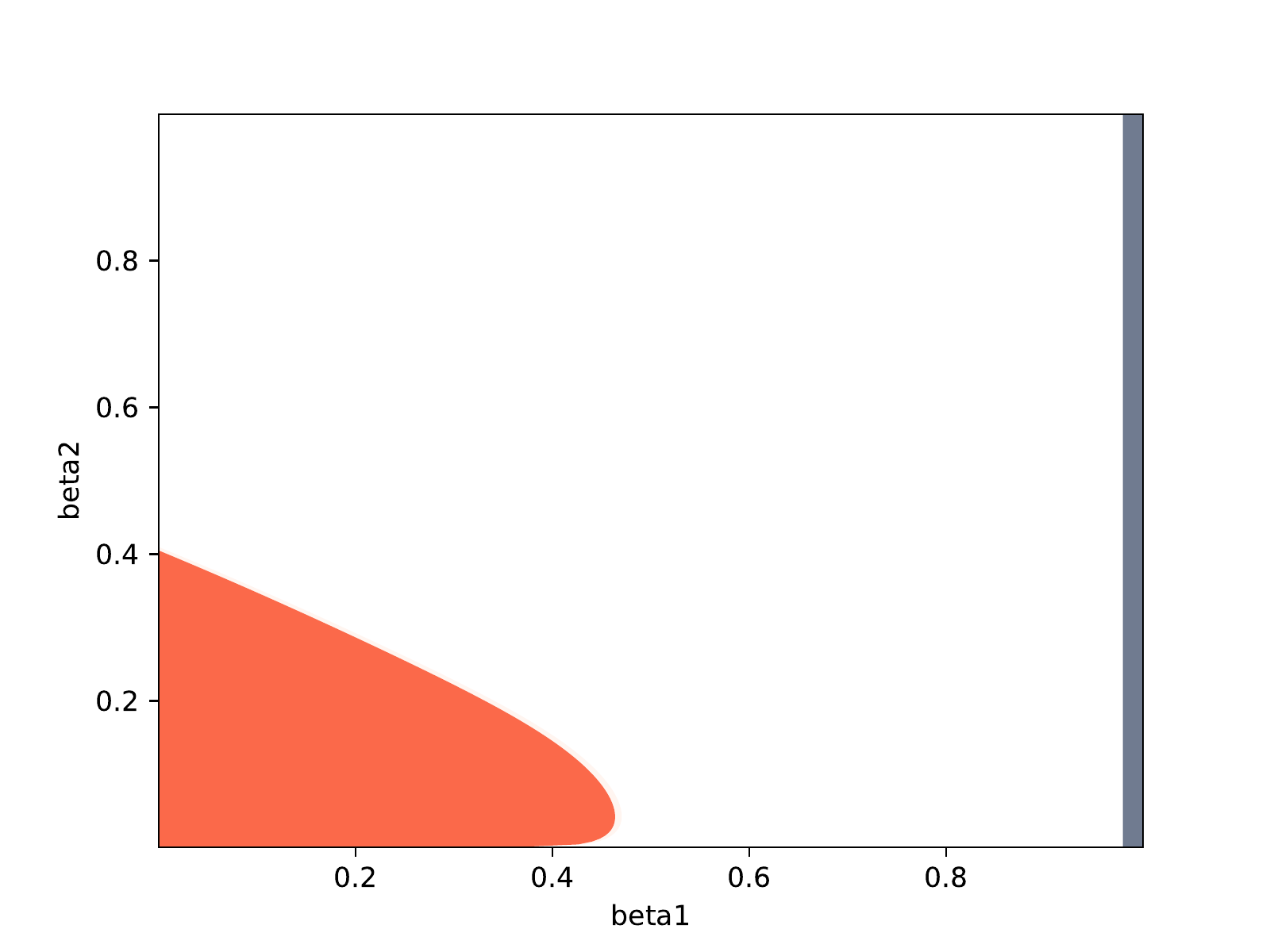}
        \end{minipage}%
        }\\%
        \subfigure[$n=50$]{
        \begin{minipage}[t]{0.5\linewidth}
          \vspace{-3mm}
        \centering
        \includegraphics[width=2in]{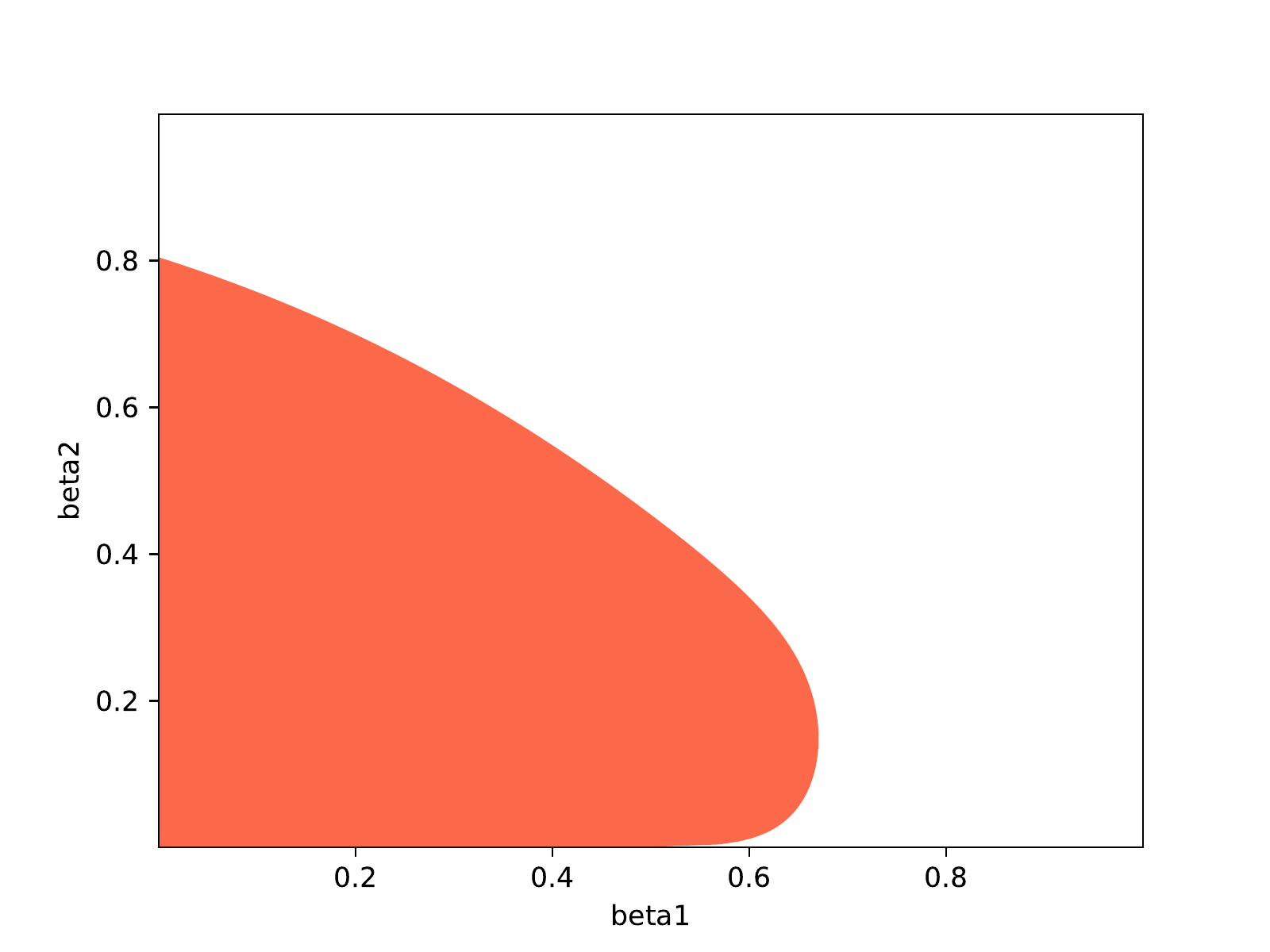}
        \end{minipage}%
        }%
        \subfigure[$n=100$]{
          \begin{minipage}[t]{0.5\linewidth}
          \vspace{-3mm}
          \centering
          \includegraphics[width=2in]{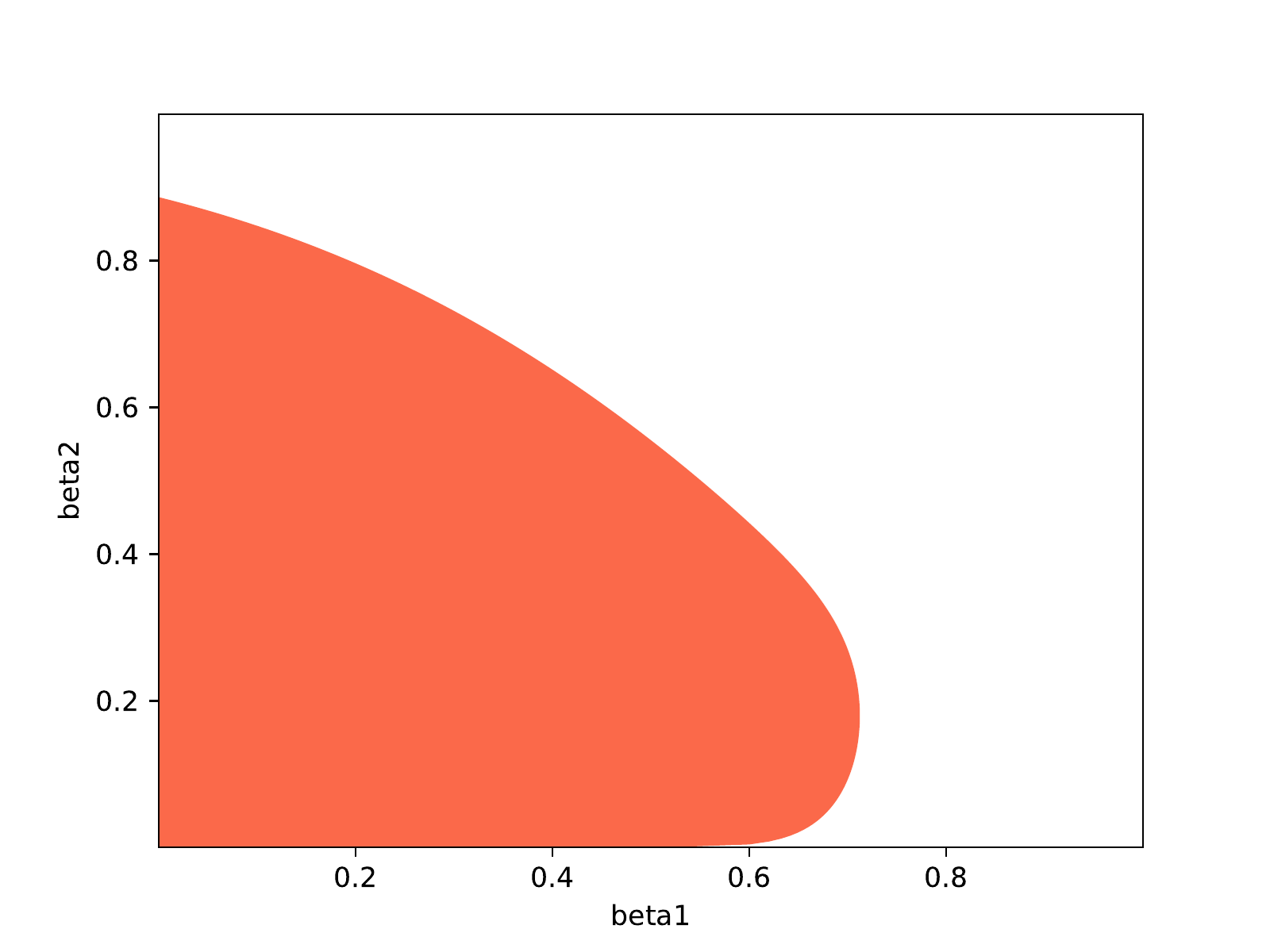}
          \end{minipage}%
          }%
    \caption{ This figure illustrates the region where both  $ {\bf (C1)} $ and $ {\bf (C2)} $ in Proposition \ref{thm_diverge2} hold. The orange color indicates the region where $ {\bf (C1)} $ holds. White color is used for the counter part. The gray vertical lines are used to indicate 
    the boundary of $ {\bf (C2)} $. Note that there are two solutions to the equation in $ {\bf (C2)} $: one solution is $\beta_1=1$ and the other solution lies in  $ 0<\beta_1<1$, this is why there are two vertical lines in the figure. $ {\bf (C2)} $ holds on the left hand side of the left gray vertical line.  
    This figure is visualized in Python. }
    \label{fig:counter_boundary}
    \vspace{-1mm}
  \end{figure}

The intersection of two regions will be the region where Adam diverges, which is actually the orange region in Figure \ref{fig:counter_boundary}. As $n$ increases, $ {\bf (C2)} $ holds for a wide range of $\beta_1$, so the grey vertical lines move towards $\beta_1=1$ and finally get overlapped. In addition, the size of  divergence region increases with $n$

\paragraph{Relation with $\gamma_1(n)$ in Theorem \ref{thm1}} According to Theorem \ref{thm1}, $\gamma_1(n) $ is at least in the order of $1-\mathcal{O}(n^{-2})$. Combining with Figure \ref{fig:counter_boundary}. It is not hard to see that $\gamma_1(n)$ is always larger than the upper boundary of the orange region, so there is no contradiction.

\paragraph{Proof of Corollary \ref{corollary_diverge}.} For any $\betabeta \in [0,1)^2$, condition {\bf (C1), (C2)} and {\bf (C3)} can be satisfied by some sufficiently large $n$. Therefore, Adam will diverge and the proof is concluded.

\section{Some More Notations and Useful Lemmas for Convergence Analysis}
\label{appendix_notations}
\subsection{More notations}
\label{sec_notations}
\begin{itemize}
  \item We denote $x_{k, i}, m_{k, i}, v_{k, i} \in \mathbb{R}^{d}$ as the value of $x, m, v$ at the $k$-th outer loop and $i$-th inner loop. Further, we denote  $x_{l, k, i}, m_{l, k, i}, v_{l, k, i} \in \mathbb{R}$ as the $l$-th component of $x_{k, i}, m_{k, i}, v_{k, i}$.
  \item We denote $\eta_{k}$ as the stepsize at the $k$-th epoch (outer loop). We will focus mainly on diminishing step size, especially $\eta_{k}=\frac{\eta_{1}}{\sqrt{nk}}$, where $n$ is the number of batches (inner loop). 
  \item We denote $ \partial_l f(x)=\frac{\partial}{\partial x_{l}} f(x)$, $ \partial_l f_{j}(x)=\frac{\partial}{\partial x_{l}} f_{j}(x)$. Further, we will use $\tau_{k,i}$ to index  the $i$-th randomly chosen batch in the $k$-th epoch. In this sense,  we denote  $ \partial_l f_{\tau_{k,i}}(x)$ as $\frac{\partial}{\partial x_{l}} f_{\tau_{k,i}}(x)$.
  \item Given an epoch $k$, we denote  $\alpha$ as the index of the coordinate with the greatest gradient:
  $$
  \alpha=\arg \max _{l=1,2, \cdots, d}\left|\partial_{l} f\left(x_{k, 0}\right)\right|.
  $$
  \item We define $\rho_1, \rho_2, \rho_3$ as  constants satisfying the following condition for any $l =1,\cdots, d$:

  $$\rho_{1} \geq \frac{\sum_{i=1}^{n}\left|\partial_l f_i (x_{k,0})\right|}{\sqrt{\sum_{i=1}^{n}\left| \partial_l f_i (x_{k,0})\right|^{2}}};$$
  
  $$\rho_{2} \geq \frac{\left|\max_i \partial_l f_i (x_{k,0})\right|^{2}}{\frac{1}{n} \sum_{i=1}^{n}\left|\partial_l f_i (x_{k,0})\right|^{2}};$$
  
  $$\rho_{3} \geq \frac{\left|\sum_{i=1}^{n} \partial_l f_i (x_{k,0})\right|}{\sqrt{\frac{1}{n} \sum_{i=1}^{n}\left|\partial_l f_i (x_{k,0})\right|^{2}}}.$$
  $\rho_{1}, \rho_{2}, \rho_{3}$ are problem-dependent constants. In worst case, we have $0 \leq \rho_{3} \leq \sqrt{n} \rho_{1} \leq n$.  These constants are firstly introduced by  \citep{shi2020rmsprop}. 
  
  \item We define the constant $\triangle_{x}:= \frac{\eta_{0}}{\sqrt{x}} \frac{L \sqrt{d}}{\sqrt{1-\beta_{2}}} \frac{1-\beta_{1} }{1-\frac{\beta_{1}}{\sqrt{\beta_{2}}}}.$ This constant will be used repeatedly, especially in Lemma \ref{lemma_delta}.
  \item We define $Q_k:= \triangle_1 \frac{n\sqrt{n}}{\sqrt{k}} \frac{32\sqrt{2}}{(1-\btwo)^n \btwo^n}.$ This constant will be used repeatedly.
  \item  We define 
  \begin{equation}\label{eq_delta1}
   \delta_{1}=\frac{(1-\beta_2)4n \rho_2  }{\beta_2^n}+\left(\frac{1}{\sqrt{\beta_{2}^{n}}}-1\right).
  \end{equation}
 
This constant will be used repeatedly. Note that $\delta_{1}\rightarrow 0$ when $\beta_2 \rightarrow 1$.
  \item $\mathbb{E}(\cdot)$ means taking expectation over the whole trajectory.
  \item We will use $\Ext \left[ \cdot \right]$ as a shorthand for  $\Ex [ \cdot \bigm |  x_{k,0}, x_{k-1,n-1}, \cdots, x_{1,0}]$, i.e. the conditional expectation given all the history up to $x_{k,0}$. Similarly, $\mathbb{E}_{k-1}$ stands for  the conditional expectation given all the history up to $x_{k-1,0}$. 
\end{itemize}

We further introduce some `index' notation of the stochastic gradients, these notations will be useful in the proof. For all the stochastic gradient, we have 

\begin{itemize}
  \item {\bf True index:} every stochastic gradient has its own true index: $f_1(\cdot), f_2(\cdot), \cdots, f_{n-1}(\cdot)$. All these $\{f_i(\cdot)\}_{i=0}^{n-1}$ are fixed once the optimization problem is formulated.
  
  Note that for a fixed $x$, we have the following relationship between $f_i$ and $f$:
  
  \begin{equation}\label{prop_i}
    \sum_{i=0}^{n-1}\partial_l f_i(x)= \partial_l f(x).
  \end{equation}

  \item {\bf Random-shuffle index:} At the $k$-th epoch, all the stochastic gradients are sampled in the order of $f_{\tau_{k,0}}(\cdot),f_{\tau_{k,1}}(\cdot), \cdots, f_{\tau_{k,n-1}}(\cdot)$. 
  
  Since Algorithm \ref{algorithm} is sampling without replacement, there is an implicit bijective mapping between $\{f_i(\cdot)\}_{i=0}^{n-1}$ and $\{f_{\tau_{k,i}}(\cdot)\}_{i=0}^{n-1}$. Further, for a fixed $x$, we have the 
following useful property: 

\begin{equation}\label{prop_t}
  \sum_{i=0}^{n-1} \partialtauki(x)= \partial_l f(x).
\end{equation}

  
  
  

  \item {\bf Index for $m$ and $v$:} As shown in Algorithm \ref{algorithm}, we denote :  $$ \mlki= (1-\beta_1)\{\partialtauki(x_{k,i})+ \cdots + \beta_1^{i}\partial_l f_{\tau_{k,0}}(x_{k,0}) \}+\beta_1^{i+1}\mlkz,$$
 $$ \vlki= (1-\beta_2)\{\partialtauki(x_{k,i})^2+ \cdots + \beta_1^{i}\partial_l f_{\tau_{k,0}}(x_{k,0})^2 \}+\beta_2^{i+1}\vlkz.$$ 
 
\end{itemize}

\subsection{Some useful lemmas}

We begin with proving some useful lemmas. 



\begin{lemma} \label{lemma_beta}
  For any $\beta \neq 1$, we have
  $$\left(1-\beta\right) \sum_{j=1}^{\infty} \beta^{j-1}=1,$$
$$\left(1-\beta\right) \sum_{j=1}^{\infty} j \beta^{j-1}=\frac{1}{1-\beta},$$
$$\left(1-\beta\right) \sum_{j=1}^{\infty} j^{2} \beta^{j-1}=\frac{1+\beta}{\left(1-\beta\right)^{2}}.$$
\end{lemma}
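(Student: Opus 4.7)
The three identities are classical consequences of the geometric series, valid for $|\beta|<1$ (the stated hypothesis $\beta \neq 1$ is implicitly accompanied by convergence; this is how the lemma is used downstream, where $\beta \in \{\beta_1,\beta_2\} \subset [0,1)$). My plan is to prove them in order, each reducing to the previous one by term-wise differentiation of an absolutely convergent power series on the open disk $|\beta|<1$.

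For the first identity, I would start from the standard geometric sum $\sum_{j=0}^{N-1}\beta^j = \frac{1-\beta^N}{1-\beta}$, then let $N\to\infty$ using $\beta^N\to 0$ to get $\sum_{j=1}^{\infty}\beta^{j-1} = \frac{1}{1-\beta}$, which after multiplying by $(1-\beta)$ yields the claim. For the second identity, I would use that the power series $g(\beta):=\sum_{j=1}^{\infty}\beta^j = \frac{\beta}{1-\beta}$ converges absolutely on $|\beta|<1$, so term-wise differentiation is justified and gives $g'(\beta) = \sum_{j=1}^{\infty} j\beta^{j-1} = \frac{1}{(1-\beta)^2}$; multiplying by $(1-\beta)$ gives the claim.

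For the third identity, I would differentiate once more: $g''(\beta) = \sum_{j=2}^{\infty} j(j-1)\beta^{j-2} = \frac{2}{(1-\beta)^3}$. Then write
\begin{equation*}
\sum_{j=1}^{\infty} j^2\beta^{j-1} \;=\; \sum_{j=1}^{\infty} j(j-1)\beta^{j-1} + \sum_{j=1}^{\infty} j\beta^{j-1} \;=\; \beta\cdot\frac{2}{(1-\beta)^3} + \frac{1}{(1-\beta)^2},
\end{equation*}
which simplifies to $\frac{2\beta + (1-\beta)}{(1-\beta)^3} = \frac{1+\beta}{(1-\beta)^3}$. Multiplying through by $(1-\beta)$ produces $\frac{1+\beta}{(1-\beta)^2}$, as claimed.

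There is essentially no obstacle here: the only subtlety is justifying term-wise differentiation, which follows from uniform convergence of the power series on compact subsets of $\{|\beta|<1\}$, a standard fact. So the entire lemma is a three-line consequence of differentiating the geometric series twice; I would present it concisely without dwelling on the convergence verification.
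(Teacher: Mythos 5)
Your proof is correct and is exactly the "basic calculation" the paper has in mind (the paper omits the proof of this lemma entirely), namely summing the geometric series and differentiating it twice term-wise. Your observation that the hypothesis should really be $|\beta|<1$ rather than merely $\beta\neq 1$ is also right, and is harmless here since the lemma is only invoked with $\beta\in\{\beta_1,\beta_1^n,\beta_2\}\subset[0,1)$.
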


\begin{proof}
  Proof only involves basic calculation, we omit the proof here.
\end{proof}

\begin{lemma}\label{lemma_delta}
  Under Assumption \ref{assum1}, if $\beta_{1}<\sqrt{\beta_{2}}$,  we have
  \begin{equation}
    \left| \partial_{l} f_{\tau_{k, i}}(x_{k,i+1}) -\partial_{l} f_{\tau_{k, i}}(x_{k,i})\right| \leq \frac{\eta_{0} }{\sqrt{nk}}\frac{L\sqrt{d}}{\sqrt{1-\beta_{2}}} \frac{1-\beta_{1} }{1-\frac{\beta_{1}}{\sqrt{\beta_{2}}}} = \triangle_{nk}.
  \end{equation}
  
\end{lemma}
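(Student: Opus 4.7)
The plan is to combine the Lipschitz-gradient condition (Assumption \ref{assum1}) with a coordinate-wise bound on the effective update $m_{k,i}/\sqrt{v_{k,i}}$, then sum across the $d$ coordinates. First I would invoke Assumption \ref{assum1} to write
\[
|\partial_l f_{\tau_{k,i}}(x_{k,i+1})-\partial_l f_{\tau_{k,i}}(x_{k,i})| \le L\,\|x_{k,i+1}-x_{k,i}\|_2,
\]
and use the Adam update rule $x_{k,i+1}-x_{k,i}=-\eta_k\,m_{k,i}/(\sqrt{v_{k,i}}+\epsilon)$ to reduce the task to bounding $\bigl\|m_{k,i}/(\sqrt{v_{k,i}}+\epsilon)\bigr\|_2$. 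Dropping the non-negative $\epsilon$ only makes the denominator smaller, so it suffices to bound $|m_{l,k,i}|/\sqrt{v_{l,k,i}}$ for each $l$.

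For the coordinate-wise bound I would unroll the recursions. Writing $g_{l,j}$ for the $l$-th component of the stochastic gradient sampled at the $j$-th past (inner) step (counted backward from $(k,i)$), one has
\[
m_{l,k,i}=(1-\beta_1)\sum_{j\ge 0}\beta_1^{j}\,g_{l,j}+\text{(init term)},
\qquad
v_{l,k,i}=(1-\beta_2)\sum_{j\ge 0}\beta_2^{j}\,g_{l,j}^{\,2}+\text{(init term)}.
\]
Since $v_{l,k,i}$ dominates any single summand, $\sqrt{v_{l,k,i}}\ge\sqrt{(1-\beta_2)\beta_2^{j}}\,|g_{l,j}|$ for every $j$. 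Dividing term by term yields
\[
\left|\frac{m_{l,k,i}}{\sqrt{v_{l,k,i}}}\right|
\le \sum_{j\ge 0}\frac{(1-\beta_1)\beta_1^{j}|g_{l,j}|}{\sqrt{(1-\beta_2)\beta_2^{j}}\,|g_{l,j}|}
= \frac{1-\beta_1}{\sqrt{1-\beta_2}}\sum_{j\ge 0}\!\left(\frac{\beta_1}{\sqrt{\beta_2}}\right)^{\!j}
= \frac{1-\beta_1}{\sqrt{1-\beta_2}}\cdot\frac{1}{1-\beta_1/\sqrt{\beta_2}},
\]
where the geometric series converges precisely because of the hypothesis $\beta_1<\sqrt{\beta_2}$. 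Combining with $\|\cdot\|_2\le\sqrt{d}\,\|\cdot\|_\infty$ and $\eta_k=\eta_0/\sqrt{nk}$ gives exactly $\triangle_{nk}$.

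The main obstacle is handling the initialization term $\beta_1^{t+1}m_{l,1,-1}/\sqrt{v_{l,k,i}}$, because $m_{l,1,-1}=\partial_l f(x_0)=\sum_i \partial_l f_i(x_0)$ while $v_{l,1,-1}=\max_i(\partial_l f_i(x_0))^2$ gives a denominator tied to only one sample. The clean fix is to split $m_{l,1,-1}$ into its $n$ component gradients and bound each against $\sqrt{v_{l,1,-1}}\ge|\partial_l f_i(x_0)|$: this matches the structure of the telescoping bound above, with each of the $n$ terms contributing the same geometric factor $(\beta_1/\sqrt{\beta_2})^{t+1}$, which is absorbed into the infinite-series sum $\sum_{j\ge 0}(\beta_1/\sqrt{\beta_2})^j$ when one treats the initialization as $n$ extra terms at the tail. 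Putting all pieces together produces the stated constant $\triangle_{nk}$, completing the proof.
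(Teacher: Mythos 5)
Your proposal is correct and follows essentially the same route as the paper's proof: apply the Lipschitz bound to $\|x_{k,i+1}-x_{k,i}\|_2$, reduce to the coordinate-wise ratio $|m_{l,k,i}|/\sqrt{v_{l,k,i}}$, bound each unrolled term by $\frac{1-\beta_1}{\sqrt{1-\beta_2}}(\beta_1/\sqrt{\beta_2})^{j}$, and sum the geometric series using $\beta_1<\sqrt{\beta_2}$. The only place you go beyond the paper is the explicit treatment of the initialization term, which the paper's proof simply subsumes in an ellipsis; note, however, that your "absorb into the tail" step is not airtight as written, since the $n$ split pieces all carry the same exponent $t+1$ and lack the $\frac{1-\beta_1}{\sqrt{1-\beta_2}}$ prefactor of the series terms, so they are not dominated term-by-term by the tail of $\sum_{j\ge0}(\beta_1/\sqrt{\beta_2})^j$ — this is a shared looseness with (indeed, left entirely implicit in) the paper's own argument rather than a defect relative to it.
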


\begin{proof}
  We start with bounding  $\left| x_{l,k,i+1} - x_{l,k,i}\right| $. By the update formula of Adam, we have

{\small
  \begin{eqnarray}
    \left| x_{l,k,i+1} - x_{l,k,i}\right|  &=& \eta_k \frac{|\mlki|}{\sqrt{\vlki}}  
     \nonumber \\
    &\leq &\eta_k (1-\beta_1)\left\{ \beta_1^0\frac{|\partialtauki(\xki)|}{\sqrt{\vlki}} +  \beta_1\frac{|\partial_l f_{\tau_{k,i-1}}(x_{k,i-1})|}{\sqrt{\vlki}} + \ldots \right\}\nonumber  \\
    & \leq & \eta_k(1-\beta_1)\left\{ \beta_1^0\frac{|\partialtauki(\xki)|}{\sqrt{(1-\beta_2)\beta_2^0}|\partialtauki(\xki)|} +  \beta_1\frac{|\partial_l f_{\tau_{k,i-1}}(x_{k,i-1})|}{\sqrt{(1-\beta_2)\beta_2}|\partial_l f_{\tau_{k,i-1}}(x_{k,i-1})|} + \ldots \right\}  \nonumber \\
    &\leq& \eta_k\frac{(1-\beta_1)}{\sqrt{1-\beta_2}} \left(  (\frac{\beta_1}{\sqrt{\beta_2}})^0 + (\frac{\beta_1}{\sqrt{\beta_2}})^1+(\frac{\beta_1}{\sqrt{\beta_2}})^2 +\ldots\right)\nonumber  \\
    &=& \eta_k\frac{(1-\beta_1)}{\sqrt{1-\beta_2}} \frac{1}{1-\frac{\beta_1}{\sqrt{\beta_2}}}.  \label{eq:m_over_v}
  \end{eqnarray}
}

  Now, by Assumption \ref{assum1}, we have

  \begin{eqnarray}
    \left| \partial_{l} f_{\tau_{k, i}}(x_{k,i+1}) -\partial_{l} f_{\tau_{k, i}}(x_{k,i})\right|
    & \leq & \|\nabla f_{\tau_{k, i}}(x_{k,i+1}) 
    - \nabla f_{\tau_{k, i}}
    (x_{k,i}) \|_2 \nonumber\\
    & \leq & L \|x_{k,i+1} -x_{k,i}\|_2 \nonumber \\
    & \leq & L \sqrt{d} \max_{l}\left| x_{l,k,i+1} - x_{l,k,i}\right| \nonumber \\
    & \leq & \eta_{k} \frac{L \sqrt{d}}{\sqrt{1-\beta_{2}}} \frac{1-\beta_{1} }{1-\frac{\beta_{1}}{\sqrt{\beta_{2}}}} \nonumber \\
    &=& \frac{\eta_{0} }{\sqrt{nk}}\frac{L\sqrt{d}}{\sqrt{1-\beta_{2}}} \frac{1-\beta_{1} }{1-\frac{\beta_{1}}{\sqrt{\beta_{2}}}}  
  \end{eqnarray}

Proof is completed.

\end{proof}

\begin{lemma} \label{lemma_fi_f}
Under Assumption \ref{assum1} and \ref{assum2}, for any integer $i\in [0, n-1]$, we have the following two results,

\begin{equation}
  |\partial_l f_{\tau_{k,i}}(x_{k,i})|  \leq i \triangle_{nk} + \sqrt{D_{1}} \rho_{1} d\left(\left|\partial_{\alpha} f\left(x_{k, 0}\right)\right|+\sqrt{\frac{D_{0}}{D_{1} d}}\right),  \label{eq_fi_f_1}
\end{equation}

\begin{equation} 
  |\partial_l f_{\tau_{k,i}}(x_{k,0}) | \leq \sum_{i=0}^{n-1}  |\partial_l f_{\tau_{k,i}}(x_{k,0}) |  \leq  \sum_{l=1}^d\sum_{i=0}^{n-1}  |\partial_l f_{\tau_{k,i}}(x_{k,0}) |  \leq    \sqrt{D_{1}} \rho_{1} d\left(\left|\partial_{\alpha} f\left(x_{k, 0}\right)\right|+\sqrt{\frac{D_{0}}{D_{1} d}}\right),  \label{eq_fi_f_2}
\end{equation}

where $\left|\partial_{\alpha} f\left(x_{k, 0}\right)\right| = \max_{l\in [1, d]} \left|\partial_{l} f\left(x_{k, 0}\right)\right|$.

\end{lemma}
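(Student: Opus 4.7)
\textbf{Proof proposal for Lemma \ref{lemma_fi_f}.} I would prove the second inequality \eqref{eq_fi_f_2} first, since the first inequality \eqref{eq_fi_f_1} will follow from it together with a displacement bound obtained from the analysis inside the proof of Lemma \ref{lemma_delta}. Because $\{\tau_{k,0},\dots,\tau_{k,n-1}\}$ is a permutation of $\{0,\dots,n-1\}$, the sum $\sum_{i=0}^{n-1}|\partial_l f_{\tau_{k,i}}(x_{k,0})|$ equals $\sum_{i=0}^{n-1}|\partial_l f_i(x_{k,0})|$, which is trivially an upper bound on any single summand; so the first two inequalities in the chain are immediate, and only the last inequality is substantive.

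For that last inequality, I would first apply the definition of $\rho_1$ componentwise, giving $\sum_{i=0}^{n-1}|\partial_l f_i(x_{k,0})| \le \rho_1 \sqrt{\sum_{i=0}^{n-1}|\partial_l f_i(x_{k,0})|^2}$ for each $l$. Then I would sum over $l$ and use Cauchy--Schwarz in the form $\sum_{l=1}^d \sqrt{a_l} \le \sqrt{d}\,\sqrt{\sum_l a_l}$ to get
\begin{equation*}
\sum_{l=1}^d \sum_{i=0}^{n-1}|\partial_l f_i(x_{k,0})| \;\le\; \rho_1 \sqrt{d}\,\sqrt{\sum_{i=0}^{n-1}\|\nabla f_i(x_{k,0})\|_2^2}.
\end{equation*}
Now Assumption \ref{assum2} bounds the inner sum by $D_1\|\nabla f(x_{k,0})\|_2^2+D_0$, and using $\sqrt{a+b}\le\sqrt{a}+\sqrt{b}$ together with $\|\nabla f(x_{k,0})\|_2\le \sqrt{d}\,|\partial_\alpha f(x_{k,0})|$ yields $\rho_1\sqrt{d}(\sqrt{D_1 d}\,|\partial_\alpha f(x_{k,0})|+\sqrt{D_0})=\sqrt{D_1}\rho_1 d\bigl(|\partial_\alpha f(x_{k,0})|+\sqrt{D_0/(D_1 d)}\bigr)$, which is exactly the claimed right-hand side. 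The main subtlety is applying Cauchy--Schwarz before pushing the dimension bound through; doing it in the other order would inflate the constant by a factor of $\sqrt{d}$.

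For \eqref{eq_fi_f_1}, I would write $|\partial_l f_{\tau_{k,i}}(x_{k,i})| \le |\partial_l f_{\tau_{k,i}}(x_{k,i})-\partial_l f_{\tau_{k,i}}(x_{k,0})| + |\partial_l f_{\tau_{k,i}}(x_{k,0})|$. The first term is bounded by Lipschitz smoothness and the per-coordinate step bound $|x_{l,k,j+1}-x_{l,k,j}|\le \eta_k(1-\beta_1)/(\sqrt{1-\beta_2}(1-\beta_1/\sqrt{\beta_2}))$ that was already established in display \eqref{eq:m_over_v} inside the proof of Lemma \ref{lemma_delta}; telescoping from inner-loop index $0$ to $i$ gives $\|x_{k,i}-x_{k,0}\|_2 \le i\sqrt{d}\,\eta_k(1-\beta_1)/(\sqrt{1-\beta_2}(1-\beta_1/\sqrt{\beta_2}))$, and then Assumption \ref{assum1} yields $|\partial_l f_{\tau_{k,i}}(x_{k,i})-\partial_l f_{\tau_{k,i}}(x_{k,0})| \le i\triangle_{nk}$, matching the definition of $\triangle_{nk}$ used throughout. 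The second term is controlled by the already-proven inequality \eqref{eq_fi_f_2}, completing the proof.

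I do not expect any significant obstacle: the only step that needs care is choosing the right order of Cauchy--Schwarz and the componentwise-to-full-norm bound in order to recover the stated constant, and the displacement estimate reuses exactly the calculation already performed in Lemma \ref{lemma_delta}.
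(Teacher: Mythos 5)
Your proposal is correct and follows essentially the same route as the paper: the same chain of the $\rho_1$ definition, Cauchy--Schwarz over coordinates, Assumption \ref{assum2}, the bound $\|\nabla f(x_{k,0})\|_2^2\le d\,|\partial_\alpha f(x_{k,0})|^2$, and subadditivity of the square root, plus the telescoped per-step displacement bound from Lemma \ref{lemma_delta} to pass from $x_{k,i}$ to $x_{k,0}$ (the paper simply invokes Lemma \ref{lemma_delta} for this, with the telescoping left implicit). Proving \eqref{eq_fi_f_2} first and deducing \eqref{eq_fi_f_1} from it is just a reordering of the paper's argument, not a different method.
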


\begin{proof}

  \begin{eqnarray}
    |\partial_l f_{\tau_{k,j}}(x_{k,j}) |  &\overset{\text{Lemma }\ref{lemma_delta}}{\leq}& j \triangle_{nk} +  |\partial_l f_{\tau_{k,j}}(x_{k,0}) |  \nonumber \\
    &\leq & j \triangle_{nk} + \sum_{l=1}^d \sum_{j=0}^{n-1} |\partial_l f_{\tau_{k,j}}(x_{k,0}) |  \nonumber \\  
    &\overset{(*)}{\leq } &j \triangle_{nk} +  \rho_{1} \sum_{l=1}^d \sqrt{\sum_{i=0}^{n-1}\left|\partial_l f_i (x_{k,0}) \right|^{2}}\nonumber  \\
    & \overset{\text{Cauchy-Swartz inequality}}{\leq} & j \triangle_{nk} + \rho_{1} \sqrt{d} \sqrt{ \sum_{l=1}^d \sum_{i=0}^{n-1}\left|\partial_l f_i (x_{k,0}) \right|^{2}} \nonumber  \\
    &\overset{\text{Assumption \ref
    {assum2}}}{\leq }& j \triangle_{nk} + \rho_{1} \sqrt{d}  \sqrt{D_1 \|\nabla f(x_{k,0})\|_2^2 + D_0} \nonumber \\
    &  \overset{ (**)}{\leq}   & j \triangle_{nk} + \sqrt{D_{1}} \rho_{1} d \sqrt{\left|\partial_{\alpha} f\left(x_{k, 0}\right)\right|^{2}+\frac{D_{0}}{D_{1} d}} \nonumber \\
    & \overset{}{\leq} & j \triangle_{nk} + \sqrt{D_{1}} \rho_{1} d\left(\left|\partial_{\alpha} f\left(x_{k, 0}\right)\right|+\sqrt{\frac{D_{0}}{D_{1} d}}\right)   \nonumber
  \end{eqnarray}
  
  where $(*)$ is because of the definition (see Appendix \ref{sec_notations}):  $\rho_1$ is a constant satisfying $\rho_{1} \geq \frac{\sum_{i=1}^{n}\left|\partial_l f_i (x_{k,0})\right|}{\sqrt{\sum_{i=1}^{n}\left| \partial_l f_i (x_{k,0})\right|^{2}}}$;
  $(**)$ is due to
  $\|\nabla f (x_{k,0})\|_{2}^{2} \leq d\left|\partial_{\alpha} f\left(x_{k, 0}\right)\right|^{2}$. 
  The proof of \eqref{eq_fi_f_2} follows the same  procedure as above.
\end{proof}

\clearpage
\section{Proof of Theorem \ref
{thm1} }
\label{appendix:thm1}

\subsection{A roadmap of the proof}
\label{appendix:roadmap}

Our proof is based on the Descent Lemma:

\begin{eqnarray*} 
  \Ex f(x_{k+1,0})- \Ex f(x_{k,0}) &\leq&  \Ex \langle \nabla f(x_{k,0}), x_{k+1,0}-x_{k,0}  \rangle+  \frac{L}{2}\Ex\|x_{k+1,0}-x_{k,0}\|_2^2 
\end{eqnarray*}

The expectation $\Ex(\cdot)$ is taken on the whole trajectory. Summing both sides from the initialization $k=t_{0}$ to $k=T$, we have the following re-arranged inequality: (usually we set $t_0 =1$.)

{\small
\begin{eqnarray} \label{eq_descentlemma}
   \Ex \sum_{k=t_{0}}^T \langle \nabla f(x_{k,0}), x_{k,0}-x_{k+1,0}  \rangle&\leq&   \frac{L}{2} \sum_{k=t_0}^T \Ex \|x_{k+1,0}-x_{k,0}\|_2^2  + \Ex f(x_{t_{0},0})- \Ex f(x_{T+1,0}).
\end{eqnarray}
}

To prove the convergence, we need an upper bound for $ \frac{L}{2}\Ex \|x_{k+1,0}-x_{k,0}\|_2^2$, as well as a lower bound for $ \Ex \langle \nabla f(x_{k,0}), x_{k,0}-x_{k+1,0}  \rangle$ (and such a lower bound should be in the order of  $\frac{1}{\sqrt{k}} \Ex\|\nabla f(x_{k,0})\|$).  

The upper bound for  $\frac{L}{2}\Ex \|x_{k+1,0}-x_{k,0}\|_2^2$ is relatively easy to get: according to the update rule of Adam, this term is in the order of $\mathcal{O}(\eta_k^2 \mlki/\vlki)=\mathcal{O}(\eta_k^2\|\nabla f(x_{k,0})\|/\|\nabla f(x_{k,0})\|)= \mathcal{O}(\frac{1}{k})$. Further, recall $\sum_{k=t_{0}}^{T} \frac{1}{k} \leq \log \frac{T+1}{t_{0}}$, so  $\frac{L}{2} \Ex \|x_{k+1,0}-x_{k,0}\|_2^2$ contributes to the $\log$ term in Theorem \ref{thm1}.  The proof is shown in Lemma \ref{lemma_delta}.

However, the lower bound for $ \Ex \langle \nabla f(x_{k,0}), x_{k,0}-x_{k+1,0}  \rangle=  \Ex \left[\sum_{l=1}^d \sum_{i=0}^{n-1} \partial_l f(\xkz)\frac{\mlki}{\sqrt{\vlki}} \right]$  requires sophisticated derivation, we explain as follows. Before taking the expectation, we first work on every possible realization of $\sum_{l=1}^d \sum_{i=0}^{n-1} \partial_l f(\xkz)\frac{\mlki}{\sqrt{\vlki}}$.
We perform the following decomposition for every $l \in [1,d]$.

{\small
\begin{eqnarray} 
\sum_{i=0}^{n-1} \partial_l f(\xkz)\frac{\mlki}{\sqrt{\vlki}} 
  & = &\sum_{i=0}^{n-1} \frac{\partial_l f(\xkz)}{\sqrt{\vlki}} \left(\partial_l f_i(\xkz)  + \mlki- \partial_l f_i(\xkz)\right)  \nonumber \\
  & = &  \underbrace{ \left[ \sum_{i=0}^{n-1} \frac{\partial_l f(\xkz)}{\sqrt{\vlki}}\partial_l f_i(\xkz) \right]}_{(a)}+ \underbrace{   \left[\partial_l f(\xkz)  \sum_{i=0}^{n-1}   \frac{1}{{\sqrt{\vlki}} } (\mlki-\partial_l f_i(\xkz))   \right]   }_{(b)}. \nonumber  \\
\label{proofsketch_descent}
\end{eqnarray}
}


 


First of all, we introduce the following Lemma \ref{lemma_error_beta2} to further decompose $(a)$ and $(b)$. The intuition of this decomposition is as follows: by increasing $\beta_2$, we can control the movement of the moving average factor $v_{l,k,i}$ (similar idea as  \citep{shi2020rmsprop,zou2019sufficient,chen2021towards}). 

\begin{lemma} \label{lemma_error_beta2}
  Under Assumption \ref{assum1}, for those $l$ satisfying $\max_i |\partial_l f_i(x_{k,0})| \geq Q_k:=\triangle_1 \frac{n\sqrt{n}}{\sqrt{k}} \frac{32\sqrt{2}}{(1-\btwo)^n \btwo^n}$, 
  we have 
  the following lower bound for $(a)$ in \eqref{proofsketch_descent}: 

  \begin{eqnarray}
    \sum_{i=0}^{n-1} \frac{\partial_l f(\xkz)}{\sqrt{\vlki}}\partial_l f_i(\xkz) &\geq& 
    \frac{\partial_{l} f\left(x_{k, 0}\right)^{2}}{\sqrt{v_{l, k, 0}}}-\delta_{1} \left|\frac{\partial_l f(\xkz)}{\sqrt{\vlkz}} \right|\sum_{i}\left|\partial_{l} f_i\left(x_{k, 0} \right)\right|,  \label{eq_a}
  \end{eqnarray}

  where $\delta_{1}=\frac{(1-\beta_2)4n \rho_2  }{\beta_2^n}+\left(\frac{1}{\sqrt{\beta_{2}^{n}}}-1\right)$.  $\rho_{2}, \rho_3$ are constants satisfying $\rho_{2} \geq \frac{\left|\max_i \partial_l f_i (x_{k,0})\right|^{2}}{\frac{1}{n} \sum_{i=1}^{n}\left|\partial_l f_i (x_{k,0})\right|^{2}},$ $\rho_{3} \geq \frac{\left|\sum_{i=1}^{n} \partial_{l} f_{i}\left(x_{k, 0}\right)\right|}{\sqrt{\frac{1}{n} \sum_{i=1}^{n}\left|\partial_{l} f_{i}\left(x_{k, 0}\right)\right|^{2}}}$. Note that $\delta_1 \rightarrow 0$ when $\beta_2 \rightarrow 1$.

  Under the same condition, we also have a lower bound for $(b)$ in \eqref{proofsketch_descent}: 
  
  \begin{eqnarray} \label{eq_b}
    &&\sum_{i=0}^{n-1}   \frac{\partial_{l} f\left(x_{k, 0}\right)}{{\sqrt{\vlki}} } (\mlki-  \partial_{l} f_i(\xkz) )  \nonumber\\
      &\geq& \frac{\partial_l f(\xkz)}{\sqrt{\vlkz}} \sum_{i=0}^{n-1}  \left(\mlki-  \partial_{l} f_i(\xkz) \right)  -\delta_1\left|\frac{\partial_l f(\xkz)}{\sqrt{\vlkz}} \right|\sum_{i=0}^{n-1}|\mlki- \partial_{l} f_i(\xkz)|. 
  \end{eqnarray}

  
\end{lemma}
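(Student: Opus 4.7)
}
The key observation is that both bounds \eqref{eq_a} and \eqref{eq_b} have the same structure: they replace $\frac{1}{\sqrt{v_{l,k,i}}}$ by $\frac{1}{\sqrt{v_{l,k,0}}}$ inside the $i$-summation and pay a multiplicative error of order $\delta_1$. So my plan is to first prove a pointwise estimate
\begin{equation*}
  \left| \frac{1}{\sqrt{v_{l,k,i}}} - \frac{1}{\sqrt{v_{l,k,0}}} \right| \;\le\; \frac{\delta_1}{\sqrt{v_{l,k,0}}}, \qquad i=0,1,\dots,n-1,
\end{equation*}
and then use it to substitute inside the two sums, invoking the finite-sum identity $\sum_{i=0}^{n-1}\partial_l f_i(x_{k,0})=\partial_l f(x_{k,0})$ (property \eqref{prop_i}) to collapse the leading term of $(a)$ into $\partial_l f(x_{k,0})^2/\sqrt{v_{l,k,0}}$. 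For $(b)$ the same substitution pulls out the factor $\partial_l f(x_{k,0})/\sqrt{v_{l,k,0}}$ cleanly, and the leftover is the advertised error.

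To establish the pointwise estimate I will unroll the recursion
\begin{equation*}
 v_{l,k,i} \;=\; \beta_2^{\,i+1} v_{l,k,0} + (1-\beta_2)\sum_{j=0}^{i}\beta_2^{\,i-j}\,\partial_l f_{\tau_{k,j}}(x_{k,j})^2.
\end{equation*}
The lower bound $v_{l,k,i}\ge \beta_2^{n} v_{l,k,0}$ is immediate and produces the $(1/\sqrt{\beta_2^{n}}-1)$ piece of $\delta_1$ through the identity $1/\sqrt{v_{l,k,i}}-1/\sqrt{v_{l,k,0}}=(\sqrt{v_{l,k,0}}-\sqrt{v_{l,k,i}})/(\sqrt{v_{l,k,i}}\sqrt{v_{l,k,0}})$ combined with $\sqrt{v_{l,k,i}}\ge \beta_2^{n/2}\sqrt{v_{l,k,0}}$. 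For the upper-bound direction I will replace $\partial_l f_{\tau_{k,j}}(x_{k,j})^2$ by $\partial_l f_{\tau_{k,j}}(x_{k,0})^2$ up to an $\mathcal{O}(\triangle_{nk})$ error using Lemma \ref{lemma_delta}, then use the definition of $\rho_2$ to bound $\max_j \partial_l f_j(x_{k,0})^2 \le \rho_2 \cdot \tfrac{1}{n}\sum_j \partial_l f_j(x_{k,0})^2$, and finally compare this against $v_{l,k,0}$ using the lower-bound recursion from earlier epochs (which guarantees $v_{l,k,0}$ dominates the historical squared gradients). This chain produces the $(1-\beta_2)\,4n\rho_2/\beta_2^n$ piece of $\delta_1$.

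The main obstacle is the role of the cutoff $\max_i|\partial_l f_i(x_{k,0})|\ge Q_k$. This hypothesis is needed precisely so that the perturbation term $\triangle_{nk}$ (coming from $\partial_l f_{\tau_{k,j}}(x_{k,j})^2 - \partial_l f_{\tau_{k,j}}(x_{k,0})^2$) is dominated by the current squared gradient magnitude stored inside $v_{l,k,0}$, and so that $v_{l,k,0}$ itself is bounded below (via its own recursion against the initialization $v_{l,1,-1}=\max_i\partial_l f_i(x_0)^2$). Without this cutoff the ratio $\triangle_{nk}/\sqrt{v_{l,k,0}}$ could blow up and the $\delta_1$ factor would not be uniform in $k$. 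Concretely, I expect to show, via a short induction on $k$ that invokes $Q_k$'s explicit form, that $\sqrt{v_{l,k,0}}\ge c\,\beta_2^{n/2}\max_i|\partial_l f_i(x_{k,0})|$ so that all $\triangle_{nk}$-errors get absorbed into the advertised $\delta_1$.

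Once the pointwise estimate is in hand, the rest is bookkeeping: for $(a)$ write
\begin{equation*}
  \sum_{i=0}^{n-1}\frac{\partial_l f(x_{k,0})}{\sqrt{v_{l,k,i}}}\partial_l f_i(x_{k,0})
  = \frac{\partial_l f(x_{k,0})}{\sqrt{v_{l,k,0}}}\sum_{i=0}^{n-1}\partial_l f_i(x_{k,0}) + \sum_{i=0}^{n-1}\partial_l f(x_{k,0})\partial_l f_i(x_{k,0})\!\left(\frac{1}{\sqrt{v_{l,k,i}}}-\frac{1}{\sqrt{v_{l,k,0}}}\right),
\end{equation*}
apply \eqref{prop_i} to the first sum, and bound the second using the pointwise estimate together with the triangle inequality $|\partial_l f(x_{k,0})\partial_l f_i(x_{k,0})|\le |\partial_l f(x_{k,0})|\cdot|\partial_l f_i(x_{k,0})|$. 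For $(b)$ I use the same decomposition with $m_{l,k,i}-\partial_l f_i(x_{k,0})$ replacing $\partial_l f_i(x_{k,0})$, and the $m$-error terms inherit the $\delta_1$ factor in exactly the claimed form.
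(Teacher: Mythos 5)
Your proposal is correct and takes essentially the same route as the paper: the paper's proof likewise replaces $1/\sqrt{v_{l,k,i}}$ by $1/\sqrt{v_{l,k,0}}$, controlling the deviation in one direction by $(1/\sqrt{\beta_2^{n}}-1)$ via $v_{l,k,i}\ge \beta_2^{i}v_{l,k,0}$ and in the other by $(1-\beta_2)4n\rho_2/\beta_2^{n}$ via the Lipschitz drift bound of Lemma \ref{lemma_delta} (absorbed using $\max_i|\partial_l f_i(x_{k,0})|\ge Q_k\ge n\triangle_{nk}$), the definition of $\rho_2$, and the Lemma \ref{lemma_f1} estimate $v_{l,k,0}\ge \frac{\beta_2^{n}}{2n}\sum_i \partial_l f_{\tau_{k,i}}(x_{k,0})^2$; your only cosmetic deviation is stating a single two-sided pointwise bound where the paper splits the sum by the sign of each summand and applies the appropriate one-sided bound, which yields the same $\delta_1$ since $\delta_1$ is the sum of the two factors. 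One small correction: the needed lower bound on $v_{l,k,0}$ comes from the recent within-epoch trajectory (exactly Lemma \ref{lemma_f1}, imported from \citet{shi2020rmsprop}), not from a recursion back to the initialization $v_{l,1,-1}$, and your fallback form $\sqrt{v_{l,k,0}}\ge c\,\beta_2^{n/2}\max_i|\partial_l f_i(x_{k,0})|$ alone would replace $\rho_2$ by $n$ in $\delta_1$ — but since you also state the correct $\rho_2$-based chain, the argument goes through as claimed.
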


   The proof can be seen in Appendix \ref{appendix_lemma_error_beta2} and it is motivated from \citep{shi2020rmsprop}. 
Lemma \ref{lemma_error_beta2} decomposes both $(a)$ and $(b)$: in the denominator, we approximate $\vlki$ (which is changing with $i$) by $\vlkz$ (which is fixed). Accordingly, the approximation error can be 
 controlled by increasing $\beta_2$ (since $\delta_1 \rightarrow 0$ when $\beta_2 \rightarrow 1$). However, similiarly as  \citep{shi2020rmsprop}, Lemma \ref{lemma_error_beta2} can only be applied to those $l$ with 
 $\max_i |\partial_l f_i(x_{k,0})| \geq Q_k$, so we need to discuss  two cases as below.

 \paragraph{Case 1: unbounded gradient.} Given $x_{k,0}$, consider those  $l$ with   $\max_i |\partial_l f_i(x_{k,0})| \geq Q_k$, we have the following decomposition:

 {\small
 \begin{eqnarray} 
   \sum_{i=0}^{n-1} \partial_l f(\xkz)\frac{\mlki}{\sqrt{\vlki}} 
  & \overset{\eqref{proofsketch_descent}}{\geq} &  \underbrace{ \left[ \sum_{i=0}^{n-1} \frac{\partial_l f(\xkz)}{\sqrt{\vlki}}\partial_l f_i(\xkz) \right]}_{(a)}+ \underbrace{  \left[\partial_l f(\xkz)  \sum_{i=0}^{n-1}   \frac{1}{{\sqrt{\vlki}} } (\mlki-\partial_l f_i(\xkz))   \right]   }_{(b)}. \nonumber  
  \end{eqnarray}
    \begin{eqnarray}
  \quad \quad \quad \quad \quad \quad \quad \quad&\overset{\text{Lemma \ref{lemma_error_beta2}}}{\geq}&
    \frac{\partial_{l} f\left(x_{k, 0}\right)^{2}}{\sqrt{v_{l, k, 0}}}  -   \delta_{1} \left|\frac{\partial_l f(\xkz)}{\sqrt{\vlkz}} \right|  \sum_{i=0}^{n-1}\left|\partial_{l} f_i\left(x_{k, 0} \right)\right| \nonumber \\
    && +  \frac{\partial_l f(\xkz)}{\sqrt{\vlkz}}\sum_{i=0}^{n-1}  \left(\mlki-  \partial_{l} f_i(\xkz) \right)   -  \delta_1\left|\frac{\partial_l f(\xkz)}{\sqrt{\vlkz}} \right| \sum_{i=0}^{n-1}|\mlki- \partial_{l} f_i(\xkz)|  \nonumber 
    \end{eqnarray}
    \begin{eqnarray}
     \quad \quad \quad \quad \quad \quad \quad \quad&\overset{\text{Lemma \ref{lemma_f1}}}{\geq }&
    \underbrace{ \left[ \frac{\partial_{l} f\left(x_{k, 0}\right)^{2}}{\sqrt{v_{l, k, 0}}} \right] }_{(a_1)}  -  \underbrace{\left[ \delta_{1} \sqrt{\frac{2\rho_3^2}{\beta_2^n} } \sum_{i=0}^{n-1}\left|\partial_{l} f_i\left(x_{k, 0} \right)\right|\right] }_{(a_2)}\nonumber \\
    &&+\underbrace{ \left[ \frac{\partial_l f(\xkz)}{\sqrt{\vlkz}} \sum_{i=0}^{n-1}  \left(\mlki-  \partial_{l} f_i(\xkz) \right) \right] }_{(b_1)} - \underbrace{ \left[\delta_1\sqrt{\frac{2\rho_3^2}{\beta_2^n} }  \sum_{i=0}^{n-1}|\mlki- \partial_{l} f_i(\xkz)| \right]}_{(b_2)}, 
    \nonumber \\
    \label{eq_a1a2b1b2}
\end{eqnarray}
  
 }

 \paragraph{Case 2: bounded gradient.}  Given $x_{k,0}$, consider those  $l$ with  $\max_i |\partial_l f_i(x_{k,0})| \leq Q_k$, the analysis degenerates to the ``bounded gradient'' scenario, we have the following lower bound:

 \begin{eqnarray} 
\sum_{i=0}^{n-1} \partial_l f(\xkz)\frac{\mlki}{\sqrt{\vlki}} 
& \geq &  - \sum_{i=0}^{n-1} |\partial_l f(\xkz) | \left|\frac{\mlki}{\sqrt{\vlki}}\right|  \nonumber \\
& \overset{\eqref{eq:m_over_v}}{\geq} & - |\partial_l f(\xkz) | \frac{1-\beta_1}{\sqrt{1-\beta_2}}\frac{1}{1-\frac{\beta_1}{\sqrt{\beta_2}}} n   \nonumber \\
&\overset{(*) }{\geq }& -n Q_k  \frac{1-\beta_1}{\sqrt{1-\beta_2}}\frac{1}{1-\frac{\beta_1}{\sqrt{\beta_2}}} n \nonumber \\
&= &  -\triangle_1 \frac{n^2\sqrt{n}}{\sqrt{k}} \frac{32\sqrt{2}}{(1-\btwo)^n \btwo^n}  \frac{1-\beta_1}{\sqrt{1-\beta_2}}\frac{1}{1-\frac{\beta_1}{\sqrt{\beta_2}}} n  \nonumber \\
&:=& -F_1 \frac{1}{\sqrt{k}}, \nonumber
\end{eqnarray}

where $F_1:= \triangle_1 n^2\sqrt{n} \frac{32\sqrt{2}}{(1-\btwo)^n \btwo^n}  \frac{1-\beta_1}{\sqrt{1-\beta_2}}\frac{1}{1-\frac{\beta_1}{\sqrt{\beta_2}}} n$, $(*)$ is due to the fact that $|\partial_l f(\xkz) | \leq n \max_i |\partial_l f_i(x_{k,0})| \leq n Q_k$.

Combining {\bf Case 1 \& 2} together, we have the following result (note that $(a_1), (a_2), (b_1), (b_2)$  vary with $l$. A more precise notation will be $(a_1)_l$, etc. We drop the subscript for brevity). 

{\small
\begin{eqnarray}
   \sum_{l=1}^d\sum_{i=0}^{n-1} \partial_l f(\xkz)\frac{\mlki}{\sqrt{\vlki}} &\geq & \underbrace{\sum_{l \text{ large}} \left\{ (a_1) -(a_2) +(b_1) - (b_2) \right\} }_{\text{unbounded gradient }}- \underbrace{\sum_{l \text{ small }}  F_1 \frac{1}{\sqrt{k}}}_{\text{bounded gradient }} \nonumber\\ 
     &\geq & \underbrace{\sum_{l \text{ large}} \left\{ (a_1) -(a_2) +(b_1) - (b_2) \right\} }_{\text{unbounded gradient }}-  \underbrace{\sum_{l=1}^d \left\{ F_1 \frac{1}{\sqrt{k}}   \right\}}_{\text{bounded gradient }}  \nonumber \\
  &\geq & \left\{\sum_{l \text{ large}} (a_1) \right\}  + \left\{ \sum_{l \text{ large} } (b_1)\right\}- \left\{  \sum_{l \text{ large }} \{(a_2) +  (b_2) \}\right\} - d F_1 \frac{1}{\sqrt{k}}, \nonumber\\     \label{eq_zhangdecomposition1}
\end{eqnarray}
}
 
where ``$l \text{ large}$'' stands for the gradient component in {\bf Case 1} and ``$l \text{ small}$'' indicates those in {\bf Case 2}. We assume ``$l \text{ large}$'' is not an empty set, otherwise the analysis degenerates to the case with bounded gradient assumption. 

Now we take expectation on \eqref{eq_zhangdecomposition1}. The expectation is taken on all the possible trajectories up to the $i$-th iteration in $k$-th epoch.


{\small
\begin{eqnarray}
   \Ex \left[ \sum_{l=1}^d\sum_{i=0}^{n-1} \partial_l f(\xkz)\frac{\mlki}{\sqrt{\vlki}} \right] 
   &\overset{\eqref{eq_zhangdecomposition1}}{\geq} & \Ex \left\{\sum_{l \text{ large}} (a_1)   +  \sum_{l \text{ large}} (b_1) -  \sum_{l \text{ large }} \{(a_2) +  (b_2) \}\right\} - d F_1 \frac{1}{\sqrt{k}}   , \nonumber \\ \label{eq_zhangdecomposition}
 \end{eqnarray}
}

In the following context, we will discuss how to bound all terms  in \eqref{eq_zhangdecomposition}, respectively.  First and foremost, we derive a lower bound for $\Ex \left[\sum_{l \textbf{ large}} (b_1) \right]$. Since  $\Ex \left[\sum_{l \textbf{ large}} (b_1) \right]$ contains all the historical gradient information, it involves great effort to handle it. We will show that the lower bound of $\Ex \left[\sum_{l \textbf{ large}} (b_1) \right]$ will  vanish when $\beta_2$ is large and $k$ goes to infinity.


Recall $ \Ex \left[\sum_{l \text{ large}}(b_1)\right]   = \Ex \left[\sum_{l \text{ large}} \frac{\partial_l f(\xkz)}{\sqrt{\vlkz}} \sum_{i=0}^{n-1}  \left(\mlki-  \partial_{l} f_i(\xkz) \right) \right]$. When $\beta_1$ is large, $\mlki$ contains heavy historical signals.  It seems unclear how large $\Ex \left[\sum_{l \text{ large}}(b_1)\right] $ would be  when $\beta_1$ goes to 1.
Existing literatures \citep{Manzil2018adaptive,de2018convergence,shi2020rmsprop} take a naive approach: 
	they set  $\beta_1 \approx 0$ so that  $m_{k,i} \approx  \nabla f_{\tau_{k,i}} (x_{k,i})$. Then we get $\delta(\beta_1)  \approx 0$. 
	However, this naive method cannot be applied here since we are interested in practical cases where $\beta_1$ is large in $[0,1)$. We emphasize the  following technical difficulties in bounding $\Ex \left[\sum_{l \text{ large}} (b_1) \right]$ for any $\beta_1 \in [0,1)$:

\begin{itemize}
    \item{\bf Issue (i)} In order to bound  $ \Ex \left[\sum_{l \text{ large}}(b_1)\right]  $, we need to first know how to bound its simpler version: $ \Ex \left[ \sum_{i=0}^{n-1}  \left(\mlki-  \partial_{l} f_i(\xkz) \right) \right]$. This term measures the difference between the current gradient and weighted historical gradients. It seems unclear that how large this term could be when $\beta_1$ goes to 1.  
    \item{\bf Issue (ii)} Even if we can bound $ \Ex \left[ \sum_{i=0}^{n-1}  \left(\mlki-  \partial_{l} f_i(\xkz) \right) \right]$, it is still unclear how to bound  $ \Ex \left[\frac{\partial_l f(\xkz)}{\sqrt{\vlkz}} \sum_{i=0}^{n-1}  \left(\mlki-  \partial_{l} f_i(\xkz) \right) \right]$, which is further multiplied by  a random variable $\frac{\partial_l f(\xkz)}{\sqrt{\vlkz}} $.
    \item{\bf Issue (iii)} Even if we can bound   $ \Ex \left[\frac{\partial_l f(\xkz)}{\sqrt{\vlkz}} \sum_{i=0}^{n-1}  \left(\mlki-  \partial_{l} f_i(\xkz) \right) \right]$, it is still different from $ \Ex \left[\sum_{l \text{ large}}(b_1)\right]  $ which contains additional operation ``$\sum_{l \text{ large}}$" inside the expectation. Note that the set ``$l \text{ large}$" is a random variable which changes along different trajectories, so there is still non-negligible gap between bounding $ \Ex \left[\sum_{l \text{ large}}(b_1)\right]  $ and $ \Ex \left[\frac{\partial_l f(\xkz)}{\sqrt{\vlkz}} \sum_{i=0}^{n-1}  \left(\mlki-  \partial_{l} f_i(\xkz) \right) \right]$.
\end{itemize}

To our best knowledge, there is no general approach to tackle the above issues. 
In the following content, we will overcome difficulties (i), (ii) and (iii) in Step 1, 2 and 3 respectively.  In Step 1, we will discuss how to bound $ \Ex \left[ \sum_{i=0}^{n-1}  \left(\mlki-  \partial_{l} f_i(\xkz) \right) \right]$, which is a simplified version of $\Ex \left[(b_1) \right]$. Bounding this term will shed light on bounding the whole term $ \Ex \left[\sum_{l \text{ large}}(b_1)\right]  $. Then in Step 2 and 3, we will prove several technical lemmas to handle the effect of $\frac{\partial_l f(\xkz)}{\sqrt{\vlkz}}$ and ``$\sum_{l \text{ large}}$", by which we can tackle {\bf (ii)} and {\bf (iii)}. Combining all together we can bound $ \Ex \left[\sum_{l \text{ large}}(b_1)\right]  $.

\paragraph{Bounding $ \Ex \left[\sum_{l \text{ large}}(b_1)\right]  $: Step 1.} We now introduce the key idea of bounding  $ \Ex \left[ \sum_{i=0}^{n-1}  \left(\mlki-  \partial_{l} f_i(\xkz) \right) \right]$. 
In Appendix \ref{appendix:sketch}, we distill our idea into a toy example called "the color-ball model of the 1st kind" and thus we prove Lemma \ref{lemma_toy_2}. This lemma is crucial for bounding $ \Ex \left[ \sum_{i=0}^{n-1}  \left(\mlki-  \partial_{l} f_i(\xkz) \right) \right]$. We refer the readers to Appendix \ref{appendix:sketch} for more explanation.

Lemma \ref{lemma_toy_2} can provide insights in bounding  $\Ex\left[ \sum_{i=0}^{n-1}  \left(\mlki-  \partial_{l} f_i(\xkz) \right) \right]$.  
However, there is still certain gap between $\Ex\left[ \sum_{i=0}^{n-1}  \left(\mlki-  \partial_{l} f_i(\xkz) \right) \right]$ and the quantities in Lemma \ref{lemma_toy_2}. 
We elaborate as follows:
\begin{itemize}
  \item To mimic the color-ball example, we need to expand $\sum_{i=0}^{n-1}\partial_lf_i(\xkz)$ into an {\it infinite} sum sequence: $\sum_{i=0}^{n-1}\partial_lf_i(\xkz) (1-\beta_1) (1+\beta_1+\cdots \beta_1^\infty) $. However, $\sum_{i=0}^{n-1} \mlki $  is a {\it finite} sum  sequence up to the order of $\beta^{kn}$. In contrast, both sequences in the color-ball example are ``finite sum''. As such, there is an error term caused by ``finite sum v.s. infinite sum''. 
  \item When taking the expectation, the variable $x$ in each possible trajectory is different. In contrast, in the color-ball example, $\{a_i\}_{i=0}^2$ are fixed in all shuffling order (so it is much easier to calculate the expectation by summing them up).
  
  To mimic the color-ball example, we repeatedly take conditional expectation at the beginning of each $k$-th epoch. In this way, $x_{k,0}$ will be fix. Despite $x_{k,i}$ is still changing across the trajectory, we can transform $x_{k,i}$  into $x_{k,0}$ by using Lipschitz property.
  \item In each trajectory of $\sum_{i=0}^{n-1} \mlki $, the variable $x$ in the summand of $\mlki$ varies with $k$ and $i$; while the  variable in $\partial_{l} f_i(\xkz)$  is fixed to be $\xkz$. In contrast, in the color-ball example, $\{a_i\}_{i=0}^2$ are the same across the epoch.

  To mimic the final step in the color-ball example (Figure \ref{fig:toy_2_cancel}), at each step of conditional expectation, we need to simultaneously move the variables in $\mlki$ and $\partial_{l} f_i(\xkz)$ (using Lipschitz property) so that they can match and cancel out with each other. This operation will introduce new error terms and it is our duty to put them under control.
\end{itemize}

We omit the proof for bounding $\Ex\left[ \sum_{i=0}^{n-1}  \left(\mlki-  \partial_{l} f_i(\xkz) \right) \right]$ since this is not our actual goal. Instead, we will directly use the above ideas to bound $\Ex\left[\left(b_1 \right) \right]$. To do so, we  need to further tackle the issue {\bf (ii)}  and {\bf (iii)} mentioned before. We explain as follows.

\paragraph{Bounding $ \Ex \left[\sum_{l \text{ large}}(b_1)\right]  $: Step 2.}  


We now resolve issue  {\bf (ii)}, i.e., handle the effect of $\frac{\partial_l f(\xkz)}{\sqrt{\vlkz}}$. The key idea is as follows: when we calculate $\Ex\left[ (b_1) \right]$, we sequentially take conditional expectation $\Ext (\cdot)$, $\Ex_{k-1}(\cdot)$, etc.. When taking  $\Ext(\cdot)$, we will fix all the historical information up to $k$-th epoch, so  $\frac{\partial_l f(\xkz)}{\sqrt{\vlkz}}$ can be regarded as a constant. In this sense,    $\Ext\left[ \frac{\partial_l f(\xkz)}{\sqrt{\vlkz}} \sum_{i=0}^{n-1}  \left(\mlki-  \partial_{l} f_i(\xkz) \right) \right]$ can be calculated following the same idea as color-ball toy example.

However, when taking $\Ex_{k-1}(\cdot)$, $\frac{\partial_l f(\xkz)}{\sqrt{\vlkz}}$ will become a random variable which changes with different trajectories. 
In this case, the color-ball method {\it cannot} be applied. To fix this issue, we introduce the following lemma to change $\frac{\partial_l f(\xkz)}{\sqrt{\vlkz}}$ into $\frac{\partial_l f(x_{k-1,0})}{\sqrt{v_{k-1,0}}}$, which can again be regarded as a fixed constant when taking $\Ex_{k-1}(\cdot)$.

\begin{lemma} \label{lemma_k-k-1}
  Suppose Assumption \ref{assum1} holds and $\beta_1 < \sqrt{\beta_2}$.
  For any integer $j \in [0, k]$, if $\max_i |\partial_l f_i(x_{k,0})| \geq Q_k$,  $\max_i |\partial_l f_i(x_{k-1,0})| \geq Q_{k-1}$, $\cdots$, $\max_i |\partial_l f_i(x_{k-j,0})| \geq Q_{k-j}$ (where $Q_k:=\triangle_1 \frac{n\sqrt{n}}{\sqrt{k}} \frac{32\sqrt{2}}{(1-\btwo)^n \btwo^n}$),
  then we have  the following result:

\begin{eqnarray*}
  \left| \frac{\partial_l f(x_{k,0})}{\sqrt{v_{l,k,0}}} -   \frac{\partial_l f(x_{k-j,0})}{\sqrt{v_{l,k-j,0}}} \right| &\leq&  \frac{1}{1-\frac{1}{\sqrt{\beta_2^n}}} \frac{n^2\triangle_{n(k-j)}}{\sqrt{v_{l,k-j,0}}} +j\sqrt{\frac{2\rho_3^2}{\beta_2^n}}  \frac{1}{\left(1-\frac{(1-\beta_2)4n \rho_2  }{\beta_2^n} \right)}  \delta_1 ,
\end{eqnarray*}
    where $\delta_1$ is defined in \eqref{eq_delta1}.

\end{lemma}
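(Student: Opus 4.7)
The plan is to telescope the difference over single-epoch increments,
\begin{equation*}
\frac{\partial_l f(x_{k,0})}{\sqrt{v_{l,k,0}}} - \frac{\partial_l f(x_{k-j,0})}{\sqrt{v_{l,k-j,0}}} = \sum_{t=k-j+1}^{k}\left(\frac{\partial_l f(x_{t,0})}{\sqrt{v_{l,t,0}}} - \frac{\partial_l f(x_{t-1,0})}{\sqrt{v_{l,t-1,0}}}\right),
\end{equation*}
and to split each single-epoch difference at index $t \in \{k-j+1,\ldots,k\}$ into a ``numerator'' piece and a ``denominator'' piece,
\begin{equation*}
\frac{\partial_l f(x_{t,0})}{\sqrt{v_{l,t,0}}} - \frac{\partial_l f(x_{t-1,0})}{\sqrt{v_{l,t-1,0}}} = \frac{\partial_l f(x_{t,0}) - \partial_l f(x_{t-1,0})}{\sqrt{v_{l,t,0}}} + \partial_l f(x_{t-1,0})\left(\frac{1}{\sqrt{v_{l,t,0}}} - \frac{1}{\sqrt{v_{l,t-1,0}}}\right).
\end{equation*}
The two pieces will produce the two summands in the target bound, respectively.

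For the numerator piece, I would first use $\partial_l f = \sum_{i=0}^{n-1}\partial_l f_i$ together with the per-summand gradient Lipschitzness of Assumption~\ref{assum1} to get $|\partial_l f(x_{t,0}) - \partial_l f(x_{t-1,0})| \leq nL\sqrt{d}\,\|x_{t,0}-x_{t-1,0}\|_\infty$, and then bound the one-epoch displacement by summing the per-iteration step bound from Lemma~\ref{lemma_delta} over the $n$ inner iterations of epoch $t-1$. Together these give $|\partial_l f(x_{t,0})-\partial_l f(x_{t-1,0})| \leq n^2\triangle_{n(t-1)}$. To convert the sum $\sum_t n^2\triangle_{n(t-1)}/\sqrt{v_{l,t,0}}$ into the first summand of the bound I would use the elementary $v$-monotonicity $v_{l,t,0} \geq \beta_2^n v_{l,t-1,0}$, which is immediate from the update rule together with non-negativity of the squared gradients, to propagate $\sqrt{v_{l,t,0}}$ back to $\sqrt{v_{l,k-j,0}}$ at the cost of a factor $(1/\sqrt{\beta_2^n})^{t-(k-j)}$, and then use monotonicity of $\triangle_{nk'}$ in $k'$ to factor out $\triangle_{n(k-j)}$. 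Summing the resulting geometric series yields (in modulus) the prefactor $1/(1-1/\sqrt{\beta_2^n})$ in front of $n^2\triangle_{n(k-j)}/\sqrt{v_{l,k-j,0}}$.

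For the denominator piece, the key observation is that $v_{l,t,0}$ is a weighted moving average of squared stochastic gradients with epoch-wise retention factor $\beta_2^n$, so $|\sqrt{v_{l,t,0}} - \sqrt{v_{l,t-1,0}}|$ is governed by the same small quantity $\delta_1$ that already appears in Lemma~\ref{lemma_error_beta2}. This is where the cascade of assumptions $\max_i|\partial_l f_i(x_{t,0})|\geq Q_t$ for all $t \in \{k-j,\ldots,k\}$ is needed: under these lower bounds, a Lemma~\ref{lemma_f1}-type estimate yields $|\partial_l f(x_{t-1,0})|/\sqrt{v_{l,t-1,0}} \leq \sqrt{2\rho_3^2/\beta_2^n}$, and the ratio $\sqrt{v_{l,t,0}}/\sqrt{v_{l,t-1,0}}$ can be pinned near $1$ up to a multiplicative $(1-\beta_2)4n\rho_2/\beta_2^n$ correction (the same factor hidden inside $\delta_1$). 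Combining these gives per-epoch $|\partial_l f(x_{t-1,0})(1/\sqrt{v_{l,t,0}}-1/\sqrt{v_{l,t-1,0}})| \leq \sqrt{2\rho_3^2/\beta_2^n}\,\delta_1/(1-(1-\beta_2)4n\rho_2/\beta_2^n)$, and summing over the $j$ epochs in the telescope produces the linear-in-$j$ second summand.

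The main obstacle I anticipate is the clean handling of the denominator piece: unlike the numerator piece, which is small simply because $x_{t,0}$ barely moves in one epoch, the denominator difference is multiplied by the possibly large $\partial_l f(x_{t-1,0})$, so we must control the ratio $|\partial_l f|/\sqrt{v}$ rather than the raw numerator. This forces the $\max_i|\partial_l f_i|\geq Q_\cdot$ lower bound to be propagated along the entire chain of epochs $t\in\{k-j,\ldots,k\}$, and it is a genuine technical check to make sure the multiplicative $\delta_1$-type approximation errors from comparing $v_{l,t,0}$ with $v_{l,t-1,0}$ do not accumulate into a factor worse than $1/(1-(1-\beta_2)4n\rho_2/\beta_2^n)$. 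Once these one-epoch estimates are in place, assembling the full bound is just the triangle inequality on the telescoped sum.
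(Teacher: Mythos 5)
Your proposal is essentially the paper's own proof: the paper likewise telescopes one-epoch differences, bounds the gradient change via Lemma~\ref{lemma_delta} (producing the $n^2\triangle_{n(\cdot)}/\sqrt{v_{l,\cdot,0}}$ terms, then pulled back to epoch $k-j$ through $v_{l,t,0}\geq \beta_2^{n}v_{l,t-1,0}$), and controls the denominator movement through Lemma~\ref{lemma_f1} together with the $(1-\beta_2)4n\rho_2/\beta_2^n$ correction, yielding one increment of $\sqrt{2\rho_3^2/\beta_2^n}\,\delta_1\big/\bigl(1-\tfrac{(1-\beta_2)4n\rho_2}{\beta_2^n}\bigr)$ per epoch and hence the factor $j$; your algebraic split into a numerator piece and a denominator piece is only a cosmetic variant of the paper's sign-based case analysis and gives the same constants. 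One caveat: your bounding of the increasing geometric sum $\sum_{s}(1/\sqrt{\beta_2^n})^{s}$ (ratio greater than one) ``in modulus'' by $1/(1-1/\sqrt{\beta_2^n})$ is exactly as loose as the corresponding step in the paper's own proof, so you are reproducing, not repairing, that step.
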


The proof of Lemma \ref{lemma_k-k-1} can be seen in Appendix \ref{appendix_lemma_k-k-1}. To proceed, we combine Lemma \ref{lemma_k-k-1} and the ``color-ball method of the 2nd kind", and thus we can prove Lemma \ref{lemma_toy_new}. This lemma is crucial for our current goal: bounding  $\Ex (b_1)$. We refer the readers to Appendix \ref{appendix:sketch} for more information. 

We emphasize that here are still the following gap between Lemma \ref{lemma_toy_new} and our goal $\Ex (b_1)$. 

\begin{itemize}
    \item We have the similar gap as discussed at the end of the {\bf Step 1}.
    \item  The condition in Lemma \ref{lemma_k-k-1} has requirement on the gradient norm, while this requirement is temporarily ignored in the  color-ball method of the 2nd kind.
    \item  The result  in Lemma \ref{lemma_k-k-1} has additional error terms other than $\mathcal{O}(1/\sqrt{k})$. This is slightly different from the setting in  Lemma \ref{lemma_toy_new}.
\end{itemize}

It requires some technical lemmas to handle these gaps. We fill in these gaps in Lemma \ref{lemma_b1}. The technical details can be seen Appendix \ref{appendix_lemma_b1}. 

\paragraph{Bounding $ \Ex \left[\sum_{l \text{ large}}(b_1)\right]  $: Step 3.} 
Now we shift gear to tackle (iii): handling the random variable `` $l \text{ large}$". We rewrite `` $l \text{ large}$" into the indicator function as follows:

{\small$$ \Ex \left[\sum_{l \text{ large}}(b_1)\right]    = \Ex\left[ \sum_{l \text{ large}} \frac{\partial_l f(\xkz)}{\sqrt{\vlkz}} \sum_{i=0}^{n-1}  \left(\mlki-  \partial_{l} f_i(\xkz) \right) \right] = \Ex\left[ \sum_{l=1}^d \mathbb{I}_k \frac{\partial_l f(\xkz)}{\sqrt{\vlkz}} \sum_{i=0}^{n-1}  \left(\mlki-  \partial_{l} f_i(\xkz) \right) \right], $$}

where $\mathbb{I}_k  :=\mathbb{I}\left( \max_i |\partial_l f_i(x_{k,0})| \geq Q_k:=\triangle_1 \frac{n\sqrt{n}}{\sqrt{k}} \frac{32\sqrt{2}}{(1-\btwo)^n \btwo^n} \right) $ is the indicator function ($\mathbb{I}(A) =1$ when event $A$ holds and $\mathbb{I}(A) =0$ otherwise.)
Similarly as before, when taking $\Ext (\cdot)$, `` $\mathbb{I}_k$" can be regarded as a constant index. Therefore,    $\Ext\left[ \sum_{l =1}^d \mathbb{I}_k  \frac{\partial_l f(\xkz)}{\sqrt{\vlkz}} \sum_{i=0}^{n-1}  \left(\mlki-  \partial_{l} f_i(\xkz) \right) \right]$ can be calculated following the same idea as color-ball model of the 1st kind.
However, when taking $\Ex_{k-1}(\cdot)$, $\mathbb{I}_k$ will become a random variable which changes with different trajectories. 
In this case, the color-ball method {\it cannot} be applied. Similarly as in {\bf Step 2}, we introduce the following lemma to change  $\mathbb{I}_k$ into  $\mathbb{I}_{k-1}$ (defined later), which can again be regarded as a fixed  when taking $\Ex_{k-1}(\cdot)$.

\begin{lemma} \label{lemma_indicator}
    Suppose Assumption \ref{assum1}  holds and $\beta_1 < \sqrt{\beta_2}$. For $0\leq j \leq k$, we  define $\mathbb{I}_{k-j} : = \mathbb{I}\left( \max_i |\partial_l f_i(x_{k-j,0})| \geq \sum_{p=k-j}^{k}Q_p  \right)$, where $Q_k:=\triangle_1 \frac{n\sqrt{n}}{\sqrt{k}} \frac{32\sqrt{2}}{(1-\btwo)^n \btwo^n}$,
 then we have  the following results.
 
  {\small
  \begin{eqnarray}
    \mathbb{I}_k &=&\mathbb{I}\left( \max_i |\partial_l f_i(x_{k,0})| \geq Q_k \text{ and } \max_i |\partial_l f_i(x_{k-j,0})|  \geq \sum_{p=k-j}^{k}Q_p   \right) \nonumber \\
    &+& \mathbb{I}\left( \max_i |\partial_l f_i(x_{k,0})| \geq Q_k \text{ and } \max_i |\partial_l f_i(x_{k-j,0})|  \leq \sum_{p=k-j}^{k}Q_p   \right), \label{eq_indicator_1}
  \end{eqnarray} }

  {\small
 \begin{equation}
 \mathbb{I}\left( \max_i |\partial_l f_i(x_{k,0})| \geq Q_k \text{ and } \max_i |\partial_l f_i(x_{k-j,0})|  \geq \sum_{p=k-j}^{k}Q_p   \right)  = \mathbb{I}\left( \max_i |\partial_l f_i(x_{k-j,0})|  \geq \sum_{p=k-j}^{k}Q_p  \right) = \mathbb{I}_{k-j}.  
     \label{eq_indicator_2}
 \end{equation} }

\end{lemma}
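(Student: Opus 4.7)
The plan is to handle the two equations separately. Equation \eqref{eq_indicator_1} is purely an algebraic identity for indicator functions: for any events $A$ and $B$, $\mathbb{I}(A) = \mathbb{I}(A\cap B) + \mathbb{I}(A\cap B^c)$. Applying this with $A = \{\max_i |\partial_l f_i(x_{k,0})| \geq Q_k\}$ and $B = \{\max_i |\partial_l f_i(x_{k-j,0})| \geq \sum_{p=k-j}^{k}Q_p\}$ yields \eqref{eq_indicator_1} immediately, with no appeal to the problem assumptions.

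Equation \eqref{eq_indicator_2} is the substantive part. One direction is trivial (the intersection is contained in $B$), so what must be shown is the implication $\mathbb{I}_{k-j} = 1 \Rightarrow \mathbb{I}_k = 1$, i.e., whenever $\max_i |\partial_l f_i(x_{k-j,0})| \geq \sum_{p=k-j}^{k}Q_p$ one also has $\max_i |\partial_l f_i(x_{k,0})| \geq Q_k$. The idea is to control, trajectory-wise, how much $|\partial_l f_i(x_{p,0})|$ can change from one epoch to the next, and then telescope over the $j$ epochs separating $k-j$ and $k$.

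For a single epoch $p \to p+1$, the per-step bound $|x_{l,p,s+1}-x_{l,p,s}| \leq \eta_p \frac{1-\beta_1}{\sqrt{1-\beta_2}(1-\beta_1/\sqrt{\beta_2})}$ derived inside the proof of Lemma \ref{lemma_delta}, combined with Assumption \ref{assum1} (gradient Lipschitzness) and $\eta_p = \eta_0/\sqrt{np}$, gives
\[
|\partial_l f_i(x_{p+1,0}) - \partial_l f_i(x_{p,0})| \leq L\|x_{p+1,0} - x_{p,0}\| \leq L \sqrt{d}\, n\, \eta_p \tfrac{1-\beta_1}{\sqrt{1-\beta_2}(1-\beta_1/\sqrt{\beta_2})} = \triangle_1 \sqrt{n/p}.
\]
By the definition $Q_p = \triangle_1 \frac{n\sqrt{n}}{\sqrt{p}} \cdot \frac{32\sqrt{2}}{(1-\beta_2)^n \beta_2^n}$ and the trivial bound $n \cdot \frac{32\sqrt 2}{(1-\beta_2)^n \beta_2^n} \geq 1$, the per-epoch change is dominated by $Q_p$. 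A telescoping argument together with the reverse triangle inequality for absolute values then gives $\big||\partial_l f_i(x_{k-j,0})| - |\partial_l f_i(x_{k,0})|\big| \leq \sum_{p=k-j}^{k-1}Q_p$ uniformly in $i$.

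To finish, I invoke the standard inequality $|\max_i a_i - \max_i b_i| \leq \max_i |a_i - b_i|$ to lift the bound to maxima, obtaining $\max_i |\partial_l f_i(x_{k,0})| \geq \max_i |\partial_l f_i(x_{k-j,0})| - \sum_{p=k-j}^{k-1} Q_p$. Under the event $\mathbb{I}_{k-j}=1$ the right-hand side is at least $\sum_{p=k-j}^{k}Q_p - \sum_{p=k-j}^{k-1}Q_p = Q_k$, completing the implication and hence \eqref{eq_indicator_2}. The only real obstacle in the proof is the bookkeeping that verifies $\triangle_1\sqrt{n/p} \leq Q_p$; the slack factor $\frac{32\sqrt{2}}{(1-\beta_2)^n\beta_2^n}$ and the extra summand $Q_k$ (making the threshold in $\mathbb{I}_{k-j}$ strictly larger than the telescoping error) are precisely what make the cushion work.
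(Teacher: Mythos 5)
Your proof is correct and takes essentially the same route as the paper: \eqref{eq_indicator_1} is the trivial indicator identity, and \eqref{eq_indicator_2} follows by bounding the per-epoch gradient drift via the per-step displacement inside Lemma \ref{lemma_delta} (giving $n\triangle_{np}\le Q_p$), telescoping over the $j$ epochs, and noting that the extra summand $Q_k$ in the threshold of $\mathbb{I}_{k-j}$ absorbs the accumulated drift $\sum_{p=k-j}^{k-1}Q_p$. The one small difference is that you verify $n\triangle_{np}\le Q_p$ explicitly through the factor $n\cdot\tfrac{32\sqrt{2}}{(1-\beta_2)^n\beta_2^n}\ge 1$, whereas the paper hedges by allowing the redefinition $\tilde{Q}_k:=\max\{Q_k,\,n\triangle_{nk}\}$; your explicit check closes that loose end but does not change the argument.
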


\begin{proof}
Equation \eqref{eq_indicator_1} is straightforward, we only prove \eqref{eq_indicator_2} here.  Under Assumption \ref{assum1}, we have $\left| \partial_l f_i(x_{k,0}) - \partial_l f_i(x_{k-j,0})\right| \leq n \triangle_{n(k-1)}+ \cdots + n \triangle_{n(k-j)} \overset{}{\leq } Q_{k-1}+ \cdots +  Q_{k-j}$. To show the second inequality, it requires comparing the value between $Q_{k}$ and $n \triangle_{n(k)} $, which are both problem-dependent constants. Here, the inequality holds when  $n \triangle_{n(k)} \leq Q_k $. If otherwise, we can always define  $\tilde{Q}_k := \max \{ Q_k, n \triangle_{n(k)} \}$ and the inequality still holds by changing all the $Q_k$ into $\tilde{Q}_k$. We temporarily omit this step for now. 

Since $\left| \partial_l f_i(x_{k,0}) - \partial_l f_i(x_{k-j,0})\right|  \overset{}{\leq } Q_{k-1}+ \cdots +  Q_{k-j}$, the event $\left\{ \max_i |\partial_l f_i(x_{k-j,0})| \geq \sum_{p=k-j}^{k}Q_p  \right\}$ implies the event $\left\{ \max_i |\partial_l f_i(x_{k,0})| \geq Q_k \right\}$, so the proof is completed.
\end{proof}

Now we are ready to bound $ \Ex \left[\sum_{l \text{ large}}(b_1)\right]  $. Combining {\bf Step 1, 2 and 3} together, we prove the following Lemma \ref{lemma_b1}.

\begin{lemma} \label{lemma_b1}
  Under Assumption \ref{assum1}, consider $\beta_1 < \sqrt{\beta_2}$, when k is large such that: $k \geq 4$; $\beta_1^{(k-1)n} \leq \frac{\beta_1^n}{\sqrt{k-1}},$  we have the following result:

\begin{eqnarray*}
  \Ex \left[\sum_{l \text{ large}}(b_1)\right]  \geq -\mathcal{O}\left(\frac{1}{\sqrt{k}}\right) - \mathcal{O}\left( \delta_1 \Ex\left[\sum_{l=1}^d\sum_{i=0}^{n-1} \left|\partial_l f_i(x_{k,0})   \right| \right] \right),
\end{eqnarray*}
where $\delta_1 =\frac{(1-\beta_2)4n \rho_2  }{\beta_2^n}+\left(\frac{1}{\sqrt{\beta_{2}^{n}}}-1\right)$, which goes to 0 when $\beta_2$ goes to 1. 










\end{lemma}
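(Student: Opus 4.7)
\bigskip

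\noindent\textbf{Proof plan for Lemma \ref{lemma_b1}.}

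The plan is to mimic, at the level of the actual Adam iterates, the cancellation that drives the ``color-ball model of the 2nd kind'' (Lemma \ref{lemma_toy_new}). To deploy that toy lemma we first rewrite
\[
 \Ex \left[\sum_{l \text{ large}}(b_1)\right]  = \Ex\left[ \sum_{l=1}^d \mathbb{I}_k \; \frac{\partial_l f(\xkz)}{\sqrt{\vlkz}} \sum_{i=0}^{n-1}\!\left(\mlki-\partial_{l} f_{\tau_{k,i}}(\xkz)\right) \right],
\]
using the finite-sum identity \eqref{prop_t} to replace $\partial_l f_i(\xkz)$ by $\partial_l f_{\tau_{k,i}}(\xkz)$ (they agree after permutation). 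Fixing $l$, the factor $\mathbb{I}_k \,\partial_l f(\xkz)/\sqrt{\vlkz}$ plays the role of the random scalar $r_k$ in the color-ball model, and $\sum_i(m_{l,k,i}-\partial_l f_{\tau_{k,i}}(\xkz))$ plays the role of $\sum_i m_i-\sum_i f_i$ from the toy calculation.

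Next I would peel off the expectation epoch by epoch. Apply $\Ext[\cdot]$ first: all of $\mathbb{I}_k,\ \partial_l f(\xkz),\ \vlkz$ are measurable, so they come outside, and what remains is exactly the ``color-ball'' expectation $\Ext\sum_{i=0}^{n-1}(m_{l,k,i}-\partial_l f_{\tau_{k,i}}(\xkz))$ that was treated informally via Lemma \ref{lemma_paper_toy}. Expand $m_{l,k,i}$ into its geometric series over past epochs, and group by epoch. For the current-epoch summand, random shuffling gives the analog of the upper row of Figure~\ref{fig:toy_2_cancel}: a clean $\beta_1$-weighted residue proportional to $\sum_j\partial_l f_j(\xkz)$. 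For the past-epoch summands, I would pass to $\Ex_{k-1},\Ex_{k-2},\ldots$ one at a time, at each step using (a) Lipschitz continuity (Lemma \ref{lemma_delta}) to replace $x_{k-j,i}$ by $x_{k-j,0}$ inside $m_{l,k-j,i}$ so that the current and previous epoch residues can be matched, and (b) the finite-to-infinite-sum comparison that contributes only the tail beyond $\beta_1^{(k-1)n}$.

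The central obstacle is that the scalar in front, $\mathbb{I}_k\,\partial_l f(\xkz)/\sqrt{\vlkz}$, is \emph{not} $\Ex_{k-j}$-measurable for $j\ge 1$, so I cannot just pull it out as in the toy proof. This is handled by two swaps that introduce controlled slack. First, Lemma \ref{lemma_k-k-1} lets me replace $\partial_l f(\xkz)/\sqrt{\vlkz}$ by $\partial_l f(x_{k-j,0})/\sqrt{v_{l,k-j,0}}$ at the cost of an additive error of order $\tfrac{n^2\triangle_{n(k-j)}}{\sqrt{v_{l,k-j,0}}}+j\sqrt{\tfrac{2\rho_3^2}{\beta_2^n}}\,\delta_1$, mirroring the ``$|r_j-r_{j-1}|=1/\sqrt{j}$'' step in Lemma \ref{lemma_toy_new}. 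Second, Lemma \ref{lemma_indicator} decomposes $\mathbb{I}_k$ into $\mathbb{I}_{k-j}$ plus an ``exceptional'' indicator where the gradient at $x_{k-j,0}$ is too small; the exceptional part, together with the bounded-gradient bookkeeping that is already in place for the ``small $l$'' case, contributes only an $\mathcal{O}(1/\sqrt{k})$ term once multiplied by the $\beta_1^{jn}$ weight from past epochs. Lemma \ref{lemma_k-k-1} and Lemma \ref{lemma_indicator} both require the gradient lower bound on $x_{k-j,0}$, which is precisely what the surviving indicator $\mathbb{I}_{k-j}$ enforces, so the swap is self-consistent.

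Finally I sum over $j=0,1,\ldots,k-1$ and over $l\in[d]$. By the color-ball cancellation only the highest-order leftover survives in each epoch; the total contribution of those leftovers is geometrically dominated by $\beta_1^{(k-1)n}\sum_l\sum_i|\partial_l f_i(\xkz)|$, which is $\mathcal{O}(1/\sqrt{k})$ by the hypothesis $\beta_1^{(k-1)n}\le \beta_1^n/\sqrt{k-1}$ together with Lemma \ref{lemma_fi_f}. The Lipschitz slack collected across epochs telescopes to $\mathcal{O}(\sum_{j=1}^{k-1}\beta_1^{jn}\triangle_{n(k-j)})=\mathcal{O}(1/\sqrt{k})$ because $\triangle_{n k}\propto 1/\sqrt{k}$, and the $\delta_1$-slack from Lemma \ref{lemma_k-k-1} aggregates into the $\mathcal{O}(\delta_1\,\Ex\sum_{l,i}|\partial_l f_i(\xkz)|)$ term in the statement. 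Combining these two types of error gives exactly the bound claimed. The main obstacle throughout is the bookkeeping for the non-measurable scalar and indicator prefactors; once the two swap lemmas are in place, the cancellation structure of Lemma \ref{lemma_toy_new} applies almost verbatim.
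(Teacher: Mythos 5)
Your plan follows essentially the same route as the paper's own proof: the paper likewise rewrites the sum with the indicator $\mathbb{I}_k$, peels off conditional expectations epoch by epoch, swaps the non-measurable prefactors $\partial_l f(x_{k,0})/\sqrt{v_{l,k,0}}$ and $\mathbb{I}_k$ to their $(k-j)$-measurable counterparts via Lemma \ref{lemma_k-k-1} and Lemma \ref{lemma_indicator}, applies Lemma \ref{lemma_delta} to align iterates within each epoch, and performs the color-ball cancellation, collecting the same $\mathcal{O}(1/\sqrt{k})$ and $\mathcal{O}(\delta_1\,\Ex\sum_{l,i}|\partial_l f_i(x_{k,0})|)$ error families. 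The only organizational difference is that the paper isolates the finite-versus-infinite-sum discrepancy into an explicit auxiliary sequence $M_{l,k}^{\prime}$ handled by Lemmas \ref{lemma_M-Mp} and \ref{lemma_Mp-F_2}, a tail you treat only in passing but account for correctly as $\mathcal{O}(1/\sqrt{k})$ via the hypothesis $\beta_1^{(k-1)n}\le \beta_1^n/\sqrt{k-1}$.
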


Proof can be seen in Appendix \ref{appendix_lemma_b1}. 
Now we bound the error terms: $\Ex\left[\sum_{l \text{ large }} \{  (a_2)+(b_2) \}\right]$.  
Since they are multiplied by $\delta_1$, these two terms vanish when $\beta_2 \rightarrow 1$.

\paragraph{Bounding  $\Ex\left[\sum_{l \text{ large }} \{(a_2) +  (b_2) \}\right]$.} We bound these two terms in the following Lemma \ref{lemma_da2_db2}.

\begin{lemma}\label{lemma_da2_db2}
Given all the history up to $x_{k,0}$, we denote  $\alpha$ as the index of the coordinate with the greatest gradient:
  $
  \alpha=\arg \max _{l=1,2, \cdots, d}\left|\partial_{l} f\left(x_{k, 0}\right)\right|.
  $
  Under Assumption \ref{assum1} and \ref{assum2}, consider $\beta_1 < \sqrt{\beta_2}$, when k is large such that: $k \geq 4$; $\beta_1^{(k-1)n} \leq \frac{\beta_1^n}{\sqrt{k-1}},$  we have the following results:

{\small
  \begin{eqnarray*}
    \Ex\left[\sum_{l \text{ large }} (a_2) \right] := \Ex \left[ \sum_{l \text{ large }}\delta_{1} \sqrt{\frac{2\rho_3^2}{\beta_2^n} }\sum_{i=0}^{n-1}\left|\partial_{l} f_i\left(x_{k, 0} \right)\right|\right] \overset{\eqref{eq_fi_f_2}}{\leq}   \mathcal{O}\left(\delta_1 \Ex \left|\partial_{\alpha} f\left(x_{k, 0}\right)\right|\right) + \mathcal{O}\left(\delta_1 \sqrt{D_0}\right),
  \end{eqnarray*}
}

{\small
\begin{eqnarray*}
  \Ex\left[\sum_{l \text{ large }} (b_2) \right]:= \Ex \left[ \sum_{l \text{ large }} \delta_1\sqrt{\frac{2\rho_3^2}{\beta_2^n} }\sum_{i=0}^{n-1}|\mlki- \partial_{l} f_i(\xkz)| \right] \leq    \mathcal{O}\left(\delta_1  \Ex \left|\partial_{\alpha} f\left(x_{k, 0}\right)\right|\right) + \mathcal{O}\left(\delta_1 \sqrt{D_0}\right)+  \mathcal{O} \left( \frac{1}{\sqrt{k}}\right),
\end{eqnarray*}
}

where $\delta_1$ is defined in \eqref{eq_delta1}, $D_0$ is defined in Assumption \ref{assum2}.



\end{lemma}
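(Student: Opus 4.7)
The plan is to handle the two bounds separately: $(a_2)$ falls out of the deterministic estimate in Lemma~\ref{lemma_fi_f}, while $(b_2)$ needs a full unrolling of the momentum recursion. For $(a_2)$, since every summand is nonnegative, $\sum_{l\text{ large}}\le \sum_{l=1}^d$; then inequality \eqref{eq_fi_f_2} directly gives
\[
\Ex\Bigl[\sum_{l\text{ large}}(a_2)\Bigr]\le \delta_1\sqrt{\tfrac{2\rho_3^2}{\btwo^n}}\sqrt{D_1}\rho_1 d\,\Ex\bigl(|\partial_\alpha f(\xkz)|+\sqrt{D_0/(D_1 d)}\bigr),
\]
which is exactly $\mathcal{O}(\delta_1\,\Ex|\partial_\alpha f(\xkz)|)+\mathcal{O}(\delta_1\sqrt{D_0})$ after absorbing problem-dependent constants.

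For $(b_2)$ my first step is the triangle inequality $|\mlki-\partial_l f_i(\xkz)|\le |\mlki|+|\partial_l f_i(\xkz)|$; the second term reproduces the $(a_2)$ bound. For $|\mlki|$, I plan to unroll the Adam recursion all the way back to the special initialization $m_{l,1,-1}$ specified in Algorithm~\ref{algorithm}, obtaining
\[
|\mlki|\le (1-\bone)\sum_{(k',i')\preceq(k,i)}\bone^{\,d((k,i),(k',i'))}|\partial_l f_{\tau_{k',i'}}(x_{k',i'})|+\bone^{(k-1)n+i+1}|m_{l,1,-1}|.
\]
The initialization tail decays like $\bone^{(k-1)n}$; the hypothesis $\bone^{(k_m-1)n}\le \bone^n/\sqrt{k_m-1}$ combined with the monotonicity of $\bone^{(k-1)n}\sqrt{k-1}$ in $k$ for $k\ge k_m$ makes this contribution $\mathcal{O}(1/\sqrt{k})$.

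Each historical summand I then bound via \eqref{eq_fi_f_1} at epoch $k'$: $|\partial_l f_{\tau_{k',i'}}(x_{k',i'})|\le i'\triangle_{nk'}+\sqrt{D_1}\rho_1 d(|\partial_\alpha f(x_{k',0})|+\sqrt{D_0/(D_1 d)})$. After the finite-sum reordering $\sum_{l,i'}|\partial_l f_{\tau_{k',i'}}(x_{k',0})|=\sum_{l,i'}|\partial_l f_{i'}(x_{k',0})|$ and an application of \eqref{eq_fi_f_2}, the Lipschitz perturbations $i'\triangle_{nk'}=\mathcal{O}(1/\sqrt{nk'})$ weighted by $\bone^{(k-k')n}$ sum geometrically to $\mathcal{O}(1/\sqrt{k})$, and the $\sqrt{D_0}$ pieces, multiplied by the prefactor $\delta_1\sqrt{2\rho_3^2/\btwo^n}$, contribute $\mathcal{O}(\delta_1\sqrt{D_0})$.

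The main technical obstacle is collapsing the historical gradient magnitudes $|\partial_\alpha f(x_{k',0})|$ for $k'<k$ into the single $|\partial_\alpha f(\xkz)|$ appearing on the right-hand side. My plan is to use $L$-smoothness: $|\partial_\alpha f(x_{k',0})|\le |\partial_\alpha f(\xkz)|+L\|\xkz-x_{k',0}\|$, and crudely bound the trajectory displacement $\|\xkz-x_{k',0}\|$ via the Adam update estimate from Lemma~\ref{lemma_delta} and $\eta_{k''}=\eta_1/\sqrt{nk''}$. The resulting perturbation, once geometrically damped by $\bone^{(k-k')n}$, contributes at most $\mathcal{O}(1/\sqrt{k})$: for epochs near $k$ the displacement is small, and for ancient epochs the exponential factor $\bone^{(k-k')n}$ dominates any polynomial blow-up. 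The remaining geometric series $\sum_{k'\le k}\bone^{(k-k')n}\le 1/(1-\bone^n)$ then collapses the chain into a single $\mathcal{O}(\delta_1\,\Ex|\partial_\alpha f(\xkz)|)$ term, completing the advertised bound.
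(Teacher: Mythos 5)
Your proposal is correct and follows essentially the same route as the paper's proof: $(a_2)$ is bounded directly via \eqref{eq_fi_f_2}, and for $(b_2)$ you unroll the momentum, kill the initialization tail using $\beta_1^{(k-1)n}\le\beta_1^{n}/\sqrt{k-1}$, transport historical gradients along the trajectory via Lipschitzness together with the per-step movement bound of Lemma \ref{lemma_delta}, and absorb the polynomially growing displacement into the geometric factors $\beta_1^{jn}$ (the paper's constants $\delta_2$ and the sum $\sum_j j\beta_1^{jn}$ make this rigorous). The only cosmetic difference is the order of operations: you invoke Assumption \ref{assum2} (via \eqref{eq_fi_f_1}) at each past epoch $k'$ and then move $|\partial_{\alpha}f(x_{k',0})|$ to $|\partial_{\alpha}f(x_{k,0})|$ by smoothness of $f$, whereas the paper first moves each component gradient to the current iterate $x_{k,0}$ (Lipschitzness of each $f_i$, as in \eqref{eq_fk-tj}) and applies \eqref{eq_fi_f_2} once there — both give the stated bound up to constants.
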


Detailed proof of Lemma \ref{lemma_da2_db2} is shown in Appendix \ref{appendix_lemma_da2_db2}.

\paragraph{Bounding $ \Ex \left\{\sum_{l \text{ large}} (a_1)   +  \sum_{l \text{ large}} (b_1) -  \sum_{l \text{ large }} \{(a_2) +  (b_2) \}\right\}$ .}  
With the help of Lemma \ref{lemma_b1} and  \ref{lemma_da2_db2}, we can bound 
$ \Ex \left\{\sum_{l \text{ large}} (a_1)   +  \sum_{l \text{ large}} (b_1) -  \sum_{l \text{ large }} \{(a_2) +  (b_2) \}\right\}$. 
The intuition is as follows:
\begin{itemize}
  \item $(a_1)$ is in the order of $|\partial_l f(x_{k,0})|$, multiplied by some positive constant;
  \item  $\sum_{l \text{ large}}\{ (a_2)+(b_2) \}$ vanishes when $\beta_2 \rightarrow 1$. 
  \item $\Ex \left\{ \sum_{l \text{ large}} (b_1) \right\}$ vanishes when $\beta_2 \rightarrow 1$ and $k$ goes to infinity.
\end{itemize}
  Therefore, $ \Ex \left\{\sum_{l \text{ large}} (a_1)   +  \sum_{l \text{ large}} (b_1) -  \sum_{l \text{ large }} \{(a_2) +  (b_2) \}\right\}$ is  still in the order of $|\partial_l f(x_{k,0})|$ (multiplied by a positive constant when $\beta_2$ is large). More formal results are shown in Lemma \ref{lemma_a1}.

\begin{lemma} \label{lemma_a1}
  Under Assumption \ref{assum1} and \ref{assum2}, when the hyperparameters satisfy: i) $\beta_1 < \sqrt{\beta_2} <1$,   ii) $\beta_2 $ is large enough such that $A(\beta_2)$ is small enough to satisfy \eqref{condition}, where $A(\beta_2)$ is a non-negative constant  that approaches 0 when $\beta_2$ approaches $1$. More Specifically, $A(\beta_2)$ needs to satisfy ($\rho_1$, $\rho_2$ and $\rho_3$ are defined in Appendix \ref{sec_notations}).

{\tiny
  \begin{eqnarray}\label{condition}
    A(\beta_2) &:=& \left\{ \frac{(1-\beta_2)4n \rho_2  }{\beta_2^n}+\left(\frac{1}{\sqrt{\beta_{2}^{n}}}-1\right) \right\}  \left(\sqrt{\frac{2\rho_3^2}{\beta_2^n} }  4n +     \left(\sqrt{\frac{2\rho_3^2}{\beta_2^n}}  \frac{3n}{\left(1- \frac{(1-\beta_2)4n \rho_2  }{\beta_2^n}\right) }  \right)\frac{1}{(1-\beta_1^n)}  \right)\sqrt{D_{1}} \rho_{1} d \nonumber  \\
    &\leq& \frac{1}{\sqrt{10D_1d}},
  \end{eqnarray} }

  Then, we have the following result when $k$ is large enough such that $\beta_{1}^{(k-1) n} \leq \frac{\beta_{1}^{n}}{\sqrt{k-1}}$ and $k \geq 4$:
  
{\small
  \begin{eqnarray*}
   &&\Ex \left\{\sum_{l \text{ large}} (a_1)   +  \sum_{l \text{ large}} (b_1) -  \sum_{l \text{ large }} \{(a_2) +  (b_2) \}\right\}\\
   &:=&  \mathbb{E}\left[ \sum_{l \text{ large}} \frac{\partial_{l} f\left(x_{k, 0}\right)^{2}}{\sqrt{v_{l, k, 0}}}\right] 
   + \Ex \left[\sum_{l \text{ large}} \frac{\partial_l f(\xkz)}{\sqrt{\vlkz}} \sum_{i=0}^{n-1}  \left(\mlki-  \partial_{l} f_i(\xkz) \right) \right] \\
   &&- d \delta_{1} \sqrt{\frac{2\rho_3^2}{\beta_2^n} }\Ex \left[ \sum_{i=0}^{n-1}\left|\partial_{l} f_i\left(x_{k, 0} \right)\right|\right] - d \delta_1\sqrt{\frac{2\rho_3^2}{\beta_2^n} }\Ex \left[ \sum_{i=0}^{n-1}|\mlki- \partial_{l} f_i(\xkz)| \right]  \\
    &\geq&  \frac{1}{d\sqrt{10D_1 d}}\Ex \min  \left\{ \sqrt{\frac{ 2D_1 d }{D_{0} } }  \|\nabla f(x_{k,0})\|_2^2, \|\nabla f(x_{k,0})\|_1\right\}   \nonumber \\
    &&-\mathcal{O}(\frac{1}{\sqrt{k}})-\mathcal{O}(\sqrt{D_0}).  \nonumber \\
  \end{eqnarray*}}
\end{lemma}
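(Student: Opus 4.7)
The plan is to combine Lemma~\ref{lemma_b1} and Lemma~\ref{lemma_da2_db2} to isolate the dominant term $\Ex[\sum_{l \text{ large}} (a_1)]$, then lower bound it by a quantity involving $\|\nabla f(x_{k,0})\|$ using a denominator estimate derived from Assumption~\ref{assum2}, and finally invoke the smallness condition \eqref{condition} on $A(\beta_2)$ to absorb all $\delta_1$-proportional perturbations into the main term. First, applying Lemma~\ref{lemma_b1} to lower bound $\Ex[\sum_{l \text{ large}}(b_1)]$ and Lemma~\ref{lemma_da2_db2} to upper bound $\Ex[\sum_{l \text{ large}}\{(a_2)+(b_2)\}]$, and then invoking Lemma~\ref{lemma_fi_f} to control $\Ex[\sum_l\sum_i |\partial_l f_i(x_{k,0})|]$ by $\sqrt{D_1}\rho_1 d(\Ex|\partial_\alpha f(x_{k,0})| + \sqrt{D_0/(D_1 d)})$, the target inequality reduces to showing
\[
\Ex\!\left[\sum_{l \text{ large}} (a_1)\right]\; -\; \mathcal{O}\!\left(\delta_1 \Ex|\partial_\alpha f(x_{k,0})|\right) \;\ge\; \frac{1}{d\sqrt{10D_1 d}}\, \Ex\min\!\big\{\sqrt{2D_1d/D_0}\|\nabla f\|_2^2,\, \|\nabla f\|_1\big\} - \mathcal{O}(1/\sqrt{k}) - \mathcal{O}(\sqrt{D_0}).
\]

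Second, to lower bound $(a_1)_l=(\partial_l f(x_{k,0}))^2/\sqrt{v_{l,k,0}}$, I would derive an upper bound on $\sqrt{v_{l,k,0}}$. Since $v_{l,k,0}$ is an exponentially weighted combination of the past squared stochastic gradients $|\partial_l f_{\tau_{j,i}}(x_{j,i})|^2$, and Lemma~\ref{lemma_fi_f} gives $|\partial_l f_{\tau_{j,i}}(x_{j,i})|\le j\triangle_{nj} + \sqrt{D_1}\rho_1 d(|\partial_\alpha f(x_{j,0})|+\sqrt{D_0/(D_1 d)})$, propagating this through the recursion for $v$ and using $\triangle_{nj}=\mathcal{O}(1/\sqrt{j})$ produces a bound of the form $\sqrt{v_{l,k,0}}\le C\sqrt{D_1}\rho_1 d(|\partial_\alpha f(x_{k,0})|+\sqrt{D_0/(D_1 d)}) + \mathcal{O}(1/\sqrt{k})$. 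Plugging this in and summing $(a_1)_l$ over $l\in\text{large}$ yields $\|\nabla f\|_2^2$ divided by the same denominator; the $l$'s excluded from the sum have $|\partial_l f(x_{k,0})|\le n Q_k = \mathcal{O}(1/\sqrt{k})$ by the very definition of the ``large'' threshold, so their missing contribution to either $\|\nabla f\|_2^2$ or $\|\nabla f\|_1$ is absorbable into the $\mathcal{O}(1/\sqrt{k})$ error.

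Third, I would split into two regimes based on whether the gradient or the noise dominates the denominator. When $|\partial_\alpha f(x_{k,0})|\sqrt{D_1 d}\ge \sqrt{D_0}$, the denominator is $\mathcal{O}(\sqrt{D_1}\rho_1 d|\partial_\alpha f|)$, giving $\|\nabla f\|_2^2/|\partial_\alpha f|\ge \|\nabla f\|_2\ge \|\nabla f\|_1/\sqrt{d}$, which produces the $\|\nabla f\|_1$ branch of the min; in the complementary regime, the denominator is $\mathcal{O}(\rho_1\sqrt{D_0 d})$, producing the $\sqrt{2D_1 d/D_0}\|\nabla f\|_2^2$ branch. Taking the minimum of the two branches yields a uniform bound with the explicit constant $1/(d\sqrt{10D_1 d})$ once the $\rho_1, d$ factors are collected.

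Finally, the main obstacle will be the delicate algebraic step of absorbing the error $\mathcal{O}(\delta_1 \Ex|\partial_\alpha f(x_{k,0})|)$ into the main term. The condition \eqref{condition} is engineered so that $A(\beta_2)\le 1/\sqrt{10 D_1 d}$; after carefully tracking how the $\rho_1, \rho_2, \rho_3, d$ constants accumulated in Lemmas~\ref{lemma_b1} and~\ref{lemma_da2_db2} combine to form $A(\beta_2)$, this smallness should guarantee that the $\delta_1$-proportional error is dominated by (say) half of the lower bound on $\sum_{l \text{ large}} (a_1)$, leaving the other half to yield the stated constant $1/(d\sqrt{10 D_1 d})$. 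Making all the constants line up exactly — so that the coefficient in front of the $\min$ matches and the leftover noise term is precisely $\mathcal{O}(\sqrt{D_0})$ — is where I expect the heaviest bookkeeping to occur.
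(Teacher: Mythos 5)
Your proposal is correct and follows essentially the same route as the paper's proof: reduce via Lemmas \ref{lemma_b1}, \ref{lemma_da2_db2} and \ref{lemma_fi_f} to the $(a_1)$ term minus errors of the form $\mathcal{O}(1/\sqrt{k}) + A(\beta_2)\,\Ex|\partial_\alpha f(x_{k,0})| + \mathcal{O}(\sqrt{D_0})$, upper-bound $\sqrt{v_{l,k,0}}$ by the current gradient plus noise plus accumulated Lipschitz drift, split into gradient-dominated versus noise-dominated regimes to obtain the two branches of the min, and invoke \eqref{condition} to absorb the $A(\beta_2)$-proportional error into the coefficient $1/\sqrt{10D_1 d}$ (the paper works with the single maximal coordinate $\alpha$ and pays the factor $1/d$ at the end, which is only cosmetically different from your summing over all large $l$). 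The one regime your two-case split does not cover — made explicit as Case (b) in the paper, $|\partial_\alpha f(x_{k,0})| \lesssim \Delta_{1}/((1-\beta_2)\sqrt{D_1 nkd})$ — is when the $\mathcal{O}(1/\sqrt{k})$ drift term dominates the denominator so neither of your two denominator estimates holds; this is harmless because both branches of the min are then themselves $\mathcal{O}(1/\sqrt{k})$ and can be charged to the error budget, in the same way you handle the excluded small coordinates.
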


Proof of Lemma \ref{lemma_a1} can be seen in Appendix \ref{appendix_lemma_a1}.

\begin{remark}
Condition \eqref{condition} specifies the smallest threshold of $\beta_2$ to ensure the convergence. This condition can be translated into the threshold funtion $\gamma_1(n)$ mentioned in Theorem \ref{thm1}.
As a rough estimate, Lemma \ref{lemma_a1} requires $\beta_2 \geq \gamma_1(n) = 1- \mathcal{O}\left((1-\beta_1^n)/(n^{2}\rho) \right)$, where $\rho=\rho_1 \rho_2 \rho_3$. As discussed in Appendix \ref{sec_notations}, we have $0 \leq \rho_{3} \leq \sqrt{n} \rho_{1} \leq n$. When $\rho_1, \rho_2, \rho_3$ achieve their upper bound at the same time, we get the worst case bound $\beta_2 \geq \gamma_1(n) =1- \mathcal{O}\left((1-\beta_1^n)/n^{4.5} \right)$. However, $\rho$ is highly dependent on the problem instance $f(x)$ and training process. Our experiments in Appendix \ref{appendix:more_exp}   shows that $\rho$ is often much smaller than its worst case bound, making the threshold of $\beta_2$ lower than it appears to be.  Note that for adaptive gradient methods, we are not the first to provide the threshold on $\beta_2$ for convergence guarantee: a similar threshold on $\beta_2$ was firstly provided for RMSProp by \citep{shi2020rmsprop}. We prove that the threshold also exists for Adam, but with  extra dependence on $\beta_1$. 


\label{remark_rho}
\end{remark}

\begin{remark} \label{remark_f4_sketch}
We emphasize that the constant term $\mathcal{O}(\sqrt{D_0})$ will vanish to 0 as $\beta_2$ goes to 1. This can be seen in the proof in Appendix  \ref{appendix_lemma_a1} (the definition of $F_4$) and Remark \ref{remark_f4}. 
\end{remark}

Based on Lemma \ref{lemma_a1}, we can further rewrite \eqref{eq_zhangdecomposition} as follows.

{\small
\begin{eqnarray}
   \Ex \left[ \sum_{l=1}^d\sum_{i=0}^{n-1} \partial_l f(\xkz)\frac{\mlki}{\sqrt{\vlki}} \right] 
   &\overset{\eqref{eq_zhangdecomposition}}{\geq} & \Ex \left\{\sum_{l \text{ large}} (a_1)   +  \sum_{l \text{ large}} (b_1) -  \sum_{l \text{ large }} \{(a_2) +  (b_2) \}\right\} - d F_1 \frac{1}{\sqrt{k}}   , \nonumber \\ 
   &\overset{\text{Lemma \ref{lemma_a1}}}{\geq}&  \frac{1}{d\sqrt{10D_1 d}}\Ex \min  \left\{ \sqrt{\frac{ 2D_1 d }{D_{0} } }  \|\nabla f(x_{k,0})\|_2^2, \|\nabla f(x_{k,0})\|_1\right\}   \nonumber \\
    &&-\mathcal{O}(\frac{1}{\sqrt{k}})-\mathcal{O}(\sqrt{D_0}) - d\frac{F_1}{\sqrt{k}} \label{eq_zhangdecomposition_3}
 \end{eqnarray} }

The proof of Theorem \ref{thm1} is concluded by plugging \eqref{eq_zhangdecomposition_3}  into Descent Lemma  \eqref{eq_descentlemma} and then taking telescope some from $k=t_0$ to $k=T$. These steps are quite standard in non-convex optimization. We finish these calculation in Lemma \ref{lemma_all_together}

\begin{lemma}\label{lemma_all_together}
When  inequality \eqref{eq_zhangdecomposition_3} holds, we have the following results based on Descent Lemma  \eqref{eq_descentlemma}:

\begin{eqnarray*}
  &&\min_{k \in [1,T]} \Ex \left[\min  \left\{ \sqrt{\frac{ 2D_1 d }{D_{0} } }   \|\nabla f(x_{k,0})\|_2^2, \|\nabla f(x_{k,0})\|_1\right\} \right] \nonumber \\
  && =  \mathcal{O}\left(\frac{\log T }{\sqrt{T}} \right)  + \mathcal{O}(\sqrt{D_0}) .\nonumber 
\end{eqnarray*}

\end{lemma}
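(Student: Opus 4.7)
The plan is to plug the coordinate-wise lower bound \eqref{eq_zhangdecomposition_3} directly into the descent inequality \eqref{eq_descentlemma}, telescope from $k=k_m$ to $k=T$, and invoke the ``min is at most the weighted average'' trick with weights $\eta_k$. First I would rewrite the inner product on the left of \eqref{eq_descentlemma} using the Adam update $x_{k,0}-x_{k+1,0}=\eta_k\sum_{i=0}^{n-1} m_{k,i}/(\sqrt{v_{k,i}}+\epsilon)$, so that the LHS of the descent lemma becomes $\sum_{k=k_m}^{T}\eta_k\,\Ex\!\left[\sum_{l,i}\partial_l f(x_{k,0})\,m_{l,k,i}/\sqrt{v_{l,k,i}}\right]$, to which \eqref{eq_zhangdecomposition_3} applies termwise.

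Next, I would control the right-hand side. Lemma \ref{lemma_delta} (and more precisely \eqref{eq:m_over_v}) gives a uniform, problem-dependent bound $\|m_{k,i}/\sqrt{v_{k,i}}\|_\infty \leq (1-\beta_1)/\bigl(\sqrt{1-\beta_2}(1-\beta_1/\sqrt{\beta_2})\bigr)$, which yields $\|x_{k+1,0}-x_{k,0}\|_2^2 \leq C\,\eta_k^2\, n^2 d$ for some explicit constant $C=C(\beta_1,\beta_2)$. Combining with \eqref{eq_zhangdecomposition_3} and the choice $\eta_k=\eta_1/\sqrt{nk}$, the descent inequality becomes
\begin{equation*}
\sum_{k=k_m}^{T}\frac{\eta_k}{d\sqrt{10D_1 d}}\,\Ex\!\left[\min\!\left\{\sqrt{\tfrac{2D_1 d}{D_0}}\|\nabla f(x_{k,0})\|_2^2,\,\|\nabla f(x_{k,0})\|_1\right\}\right]
\leq \sum_{k=k_m}^{T}\!\!\left(\frac{C_1 \eta_k}{\sqrt{k}}+C_2\eta_k\sqrt{D_0}\right)+\frac{L}{2}\sum_{k=k_m}^{T}C\eta_k^2 n^2 d + \bigl(f(x_{k_m,0})-f^*\bigr).
\end{equation*}

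Then I would evaluate the three standard sums under $\eta_k=\eta_1/\sqrt{nk}$: $\sum_{k=k_m}^{T}\eta_k^2=\Theta(\log T/n)$, $\sum_{k=k_m}^{T}\eta_k/\sqrt{k}=\Theta(\log T/\sqrt{n})$, and $\sum_{k=k_m}^{T}\eta_k=\Theta(\sqrt{T/n})$. Substituting, the RHS is $O(\log T)+O(\sqrt{D_0}\sqrt{T})+O(1)$, while the LHS is at least $(\min_{k\in[k_m,T]}\Ex[\min\{\cdot,\cdot\}])\cdot\frac{1}{d\sqrt{10D_1 d}}\sum_{k=k_m}^{T}\eta_k=\Omega(\sqrt{T})\cdot\min_{k}\Ex[\min\{\cdot,\cdot\}]$. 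Dividing both sides by $\Omega(\sqrt{T})$ yields the advertised bound $O(\log T/\sqrt{T})+O(\sqrt{D_0})$.

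There is no genuine obstacle here; the lemma is a telescoping/bookkeeping step after all the real work has been done in Lemmas~\ref{lemma_error_beta2}, \ref{lemma_b1}, \ref{lemma_da2_db2} and \ref{lemma_a1}. The only care points are: (i) absorbing the $dF_1/\sqrt{k}$ term of \eqref{eq_zhangdecomposition_3} into the $\mathcal{O}(1/\sqrt{k})$ bucket since $F_1$ is a fixed problem-dependent constant; (ii) ensuring the quadratic step-size sum stays $O(\log T)$, which is why $\eta_k$ has a $1/\sqrt{nk}$ schedule rather than $1/\sqrt{k}$; and (iii) checking that the starting index $k_m\geq 4$ satisfying $\beta_1^{(k_m-1)n}\leq \beta_1^n/\sqrt{k_m-1}$ does not affect the asymptotic rate, which is immediate since $k_m$ depends only on $(\beta_1,n)$ and contributes a constant to $f(x_{k_m,0})-f^*$.
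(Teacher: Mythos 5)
Your proposal is correct and follows essentially the same route as the paper: plug the lower bound \eqref{eq_zhangdecomposition_3} into the descent inequality \eqref{eq_descentlemma}, bound the quadratic term via \eqref{eq:m_over_v}, use $\sum_k 1/k = \mathcal{O}(\log T)$ and $\sum_k 1/\sqrt{k} = \Omega(\sqrt{T})$, and conclude by the min-versus-weighted-average argument (the paper telescopes from $t_0=1$ rather than $k_m$, but since $\min_{[1,T]}\le\min_{[k_m,T]}$ this is only a bookkeeping difference). No gaps.
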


Proof of Lemma \ref{lemma_all_together} can be seen in Appendix \ref{appendix:lemma_all}. Now, the whole  proof of Theorem \ref{thm1} is completed.

\subsection{Proof of Lemma \ref{lemma_error_beta2}} \label{appendix_lemma_error_beta2}






 To prove Lemma \ref{lemma_error_beta2}, we only prove \eqref{eq_b}. The proof for \eqref{eq_a} can be analogized from the proof of \eqref{eq_b}.  
We discuss the following two cases:

\paragraph{Case 1:} When  $\partial_{l} f\left(x_{k, 0}\right) (\mlki-  \partial_{l} f_i(\xkz) ) \leq 0$,  we have

\begin{eqnarray}
  \frac{\partial_{l} f\left(x_{k, 0}\right) (\mlki-  \partial_{l} f_i(\xkz) ) }{\sqrt{v_{l, k, i}}} 
  \overset{(a)}{\geq}&  \frac{\partial_{l} f\left(x_{k, 0}\right)(\mlki-  \partial_{l} f_i(\xkz) )}{\sqrt{v_{l, k, 0}}}  \frac{1}{\sqrt{\beta_2^i}}, \nonumber
\end{eqnarray}

where $(a)$ is because of  $\sqrt{v_{l, k, i}} \geq  \sqrt{\beta_2^i} \sqrt{v_{l, k, 0}}  $.

\paragraph{Case 2:}  When  $\partial_{l} f\left(x_{k, 0}\right) (\mlki-  \partial_{l} f_i(\xkz) )  \geq 0$,  we have 


{\small
\begin{eqnarray}
 \frac{\partial_{l} f\left(x_{k, 0}\right) (\mlki-  \partial_{l} f_i(\xkz) ) }{\sqrt{v_{l, k, i}}}
  &\overset{}{=} & \frac{\partial_{l} f\left(x_{k, 0}\right) (\mlki-  \partial_{l} f_i(\xkz) ) }{\sqrt{v_{l, k, 0}}} \frac{1}{\sqrt{ \left(1+ (v_{l,k,i}-v_{l,k,0})/v_{l,k,0} \right)}} \nonumber \\
  &\overset{(*)}{\geq}& \frac{\partial_{l} f\left(x_{k, 0}\right) (\mlki-  \partial_{l} f_i(\xkz) ) }{\sqrt{v_{l, k, 0}}}    \left(1- \frac{|\vlki - \vlkz|}{2\vlkz} \right) \label{eq_1-x/2}
\end{eqnarray}
}

where $(*)$ is due to $\frac{1}{\sqrt{1+x}} \geq 1-\frac{x}{2}$. We further have 

{\small 
\begin{eqnarray*}
|\vlki - \vlkz| &=& (1-\beta_2)\Large(\partial_l f_{\tau_{k,i}}(x_{\tau_{k,i}})^2 + \beta_2 \partial_l f_{\tau_{k,i-1}}(x_{\tau_{k,i-1}})^2+ \cdots + \beta_2^{i-1} \partial_l f_{\tau_{k,1}}(x_{\tau_{k,1}})^2    \\
&&+ (\beta_2^i -1) \partial_l f_{\tau_{k,0}}(x_{\tau_{k,0}})^2 + (\beta_2^{i+1}-1)\partial_l f_{\tau_{k-1,n-1}}(x_{\tau_{k-1,n-1}})^2 + \cdots \Large)\\
&\overset{\text{since } \beta_2^i -1 < 0}{\leq} & (1-\beta_2)\sum_{j=0}^{i-1} \beta_2^j  \partial_l f_{\tau_{k,i-j}}(x_{\tau_{k,i-j}})^2  \\
&\leq& (1-\beta_2)\sum_{j=0}^{i-1} \beta_2^j \left(  \partial_l f_{\tau_{k,i-j}}(x_{\tau_{k,0}})^2 + 2(i-j) \triangle_{nk} |\partial_l f_{\tau_{k,i-j}}(x_{\tau_{k,0}})| + (i-j)^2 \triangle_{nk}^2       \right) \\
&\leq& (1-\beta_2)\sum_{j=0}^{i-1} \beta_2^j \left(  \max_i|\partial_l f_{i}(x_{\tau_{k,0}})|^2 + 2(i-j) \triangle_{nk}  \max_i | \partial_l f_{i}(x_{\tau_{k,0}})| + (i-j)^2 \triangle_{nk}^2       \right) \\
&\overset{\text{(**)}}{\leq} &  (1-\beta_2)\sum_{j=0}^{i-1} \beta_2^j \left(  \max_i|\partial_l f_{i}(x_{\tau_{k,0}})|^2 + 2 \max_i | \partial_l f_{i}(x_{\tau_{k,0}})|^2 + \max_i | \partial_l f_{i}(x_{\tau_{k,0}})|^2     \right)  \\
&\leq& (1-\beta_2) 4n \max_i | \partial_l f_{i}(x_{\tau_{k,0}})|^2 
\end{eqnarray*}
}

where $(**)$ is due to the condition in the Lemma  $\max_i|\partial_l f_{i}(x_{\tau_{k,0}})|\geq Q_k \geq n \triangle_{nk}$. Plug the above result into \eqref{eq_1-x/2} we have

{\small
\begin{eqnarray}
 \frac{\partial_{l} f\left(x_{k, 0}\right) (\mlki-  \partial_{l} f_i(\xkz) ) }{\sqrt{v_{l, k, i}}}
  &\overset{}{\geq} &   \frac{\partial_{l} f\left(x_{k, 0}\right) (\mlki-  \partial_{l} f_i(\xkz) ) }{\sqrt{v_{l, k, 0}}} \left( 1- \frac{(1-\beta_2)4n \max_i | \partial_l f_{i}(x_{\tau_{k,0}})|^2 }{2 \vlkz}\right) \nonumber\\
    &\overset{}{\geq} &   \frac{\partial_{l} f\left(x_{k, 0}\right) (\mlki-  \partial_{l} f_i(\xkz) ) }{\sqrt{v_{l, k, 0}}} \left( 1- \frac{(1-\beta_2)4n \rho_2\sum_{i=0}^{n-1}| \partial_l f_{i}(x_{\tau_{k,0}})|^2/n  }{2 \vlkz}\right)  \nonumber\\
  &\overset{(***)}{\geq} & \frac{\partial_{l} f\left(x_{k, 0}\right) (\mlki-  \partial_{l} f_i(\xkz) ) }{\sqrt{v_{l, k, 0}}} \left( 1- \frac{(1-\beta_2)4n \rho_2  }{\beta_2^n} \right) \label{lemma_f3}
\end{eqnarray}
}
In the last step $(***)$, we use the following Lemma \ref{lemma_f1}, which is based on \citep{shi2020rmsprop}. 

\begin{lemma} 
\label{lemma_f1}
  Under Assumption \ref{assum1},   if the $l$-th component of $ \nabla f_i(x_{k,0})$ satisfies 
  $\max_i |\partial_l f_i(x_{k,0})| \geq Q_k:=\triangle_1 \frac{n\sqrt{n}}{\sqrt{k}} \frac{32\sqrt{2}}{(1-\btwo)^n \btwo^n}$,
  we have

  $$\frac{v_{l, k, 0}}{\frac{1}{n} \sum_{i} \partialtauki(x_{k,0})^{2}} \geq \frac{\beta_{2}^{n}}{2},$$

  $$\frac{v_{l, k, 0}}{\left(\partial_{l} f\left(x_{k, 0}\right)\right)^{2}} \geq \frac{\beta_{2}^{n}}{2 \rho_{3}^{2}}.$$
\end{lemma}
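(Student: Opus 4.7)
}
The plan is to prove the first inequality directly and then deduce the second as an immediate corollary. For the second, observe that by the definition of $\rho_3$ in Appendix \ref{sec_notations} together with the finite-sum identity $\sum_i \partial_l f_i(x_{k,0}) = \partial_l f(x_{k,0})$, one has $(\partial_l f(x_{k,0}))^2 \leq \rho_3^2 \cdot \tfrac{1}{n}\sum_i(\partial_l f_i(x_{k,0}))^2$; dividing $v_{l,k,0}$ by both sides and invoking the first inequality delivers $v_{l,k,0}/(\partial_l f(x_{k,0}))^2 \geq \beta_2^n/(2\rho_3^2)$, as claimed.

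For the first inequality, the strategy is to fully unfold the recursion of $v$ back to the initialization and then relate every past squared gradient to its counterpart evaluated at $x_{k,0}$. Specifically, I would write
\[
v_{l,k,0} = (1-\beta_2)\sum_{\ell=1}^{k-1}\sum_{j=0}^{n-1}\beta_2^{n(k-\ell)-j}\bigl(\partial_l f_{\tau_{\ell,j}}(x_{\ell,j})\bigr)^2 + \beta_2^{n(k-1)+1}\,v_{l,1,-1},
\]
with $v_{l,1,-1}=\max_i (\partial_l f_i(x_0))^2$. Applying the elementary inequality $a^2 \geq \tfrac{1}{2}b^2 - (a-b)^2$ with $a = \partial_l f_{\tau_{\ell,j}}(x_{\ell,j})$ and $b = \partial_l f_{\tau_{\ell,j}}(x_{k,0})$, and chaining Assumption \ref{assum1} with the per-step displacement bound from the proof of Lemma \ref{lemma_delta}, transforms each summand into half of the desired quantity at $x_{k,0}$ minus a Lipschitz error. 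The same substitution, together with $\max_i \geq \tfrac{1}{n}\sum_i$, handles the initialization term.

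Once the substitution is in place, the key algebraic simplification is that $\tau_{\ell,\cdot}$ is a permutation of $\{0,\dots,n-1\}$, so the within-epoch sum $\sum_j(\partial_l f_{\tau_{\ell,j}}(x_{k,0}))^2$ collapses to $\sum_i (\partial_l f_i(x_{k,0}))^2$. Combined with the geometric identity $(1-\beta_2)\sum_{s=0}^{k-2}\beta_2^{sn} = (1-\beta_2)(1-\beta_2^{(k-1)n})/(1-\beta_2^n)$ and the basic bound $(1-\beta_2)/(1-\beta_2^n) \geq 1/n$, the epoch contributions supply $\tfrac{\beta_2^n}{2n}(1-\beta_2^{n(k-1)})\sum_i(\partial_l f_i(x_{k,0}))^2$, while the initialization piece supplies the complementary $\tfrac{\beta_2^{n(k-1)+1}}{2n}\sum_i(\partial_l f_i(x_{k,0}))^2$; summing gives a coefficient of at least $\beta_2^n/(2n)$, which is exactly the target.

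The main obstacle is showing that the cumulative squared Lipschitz error is absorbed by the main term. Since $\triangle_{n\ell}$ decays like $1/\sqrt{n\ell}$, the mismatch at $(\ell,j)$ is of order $n(k-\ell)\triangle_{n\ell}$; after squaring, reweighting by $(1-\beta_2)\beta_2^{n(k-\ell)-j}$, and summing over all $(\ell,j)$ together with the initialization, the total error has scale $n^3\triangle_1^2/k$ inflated by a factor of order $1/[(1-\beta_2)^{n-1}\beta_2^n]$. This is precisely the scale hard-coded into $Q_k^2 = \triangle_1^2(n^3/k)\cdot 2048/[(1-\beta_2)^{2n}\beta_2^{2n}]$, so the hypothesis $\max_i|\partial_l f_i(x_{k,0})| \geq Q_k$ together with $\max_i(\partial_l f_i(x_{k,0}))^2 \geq \tfrac{1}{n}\sum_i(\partial_l f_i(x_{k,0}))^2$ lets the main term dominate the error with room for the $1/2$ factor. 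Lining up the constants in $Q_k$ so that this absorption works cleanly is the delicate bookkeeping step in the argument.
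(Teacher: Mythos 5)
Your overall strategy is the intended one (the paper itself omits this proof and defers to Lemma F.1 of \citep{shi2020rmsprop}), and two of your steps are fine: the reduction of the second inequality to the first via $(\partial_l f(x_{k,0}))^2 \le \rho_3^2\cdot\frac{1}{n}\sum_i(\partial_l f_i(x_{k,0}))^2$, and the unrolling of $v_{l,k,0}$ with the permutation structure and the special initialization (note only that your displayed ``equality'' drops the current-epoch term $(1-\beta_2)(\partial_l f_{\tau_{k,0}}(x_{k,0}))^2$, so it should be ``$\ge$''). The genuine gap is in the absorption step. After applying $a^2\ge \tfrac12 b^2-(a-b)^2$ pointwise, your main terms sum to $\frac{1}{2n}\bigl[\beta_2^{n}-\beta_2^{nk}+\beta_2^{n(k-1)+1}\bigr]\sum_i(\partial_l f_i(x_{k,0}))^2$, which exceeds the target $\frac{\beta_2^{n}}{2n}\sum_i(\partial_l f_i(x_{k,0}))^2$ only by $\frac{\beta_2^{n(k-1)}(\beta_2-\beta_2^{n})}{2n}\sum_i(\partial_l f_i(x_{k,0}))^2$, a surplus that vanishes geometrically in $k$. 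The accumulated Lipschitz error you must subtract is a fixed quantity of order $n^3\triangle_1^2/k$ (up to $\beta_2$-dependent factors) that does not scale with $\sum_i(\partial_l f_i(x_{k,0}))^2$, and the hypothesis only guarantees $\sum_i(\partial_l f_i(x_{k,0}))^2\ge\max_i(\partial_l f_i(x_{k,0}))^2\ge Q_k^2$, which is of the same $1/k$ order. Hence for large $k$ the error dwarfs the geometric surplus and ``main $-$ error $\ge$ target'' fails; no choice of the constant in $Q_k$ repairs this, since the surplus decays exponentially in $k$ while both the error and $Q_k^2$ decay only like $1/k$.

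The factor $1/2$ in the lemma should not be spent pointwise. Keep the full main coefficient: the geometric sum over past epochs together with the (transported) initialization term gives a weighted sum $W$ of the $(\partial_l f_i(x_{k,0}))^2$ with $W\ge\frac{\beta_2^{n}}{n}\sum_i(\partial_l f_i(x_{k,0}))^2$, and handle the perturbation through the cross term, e.g. $a^2\ge b^2-2|b|\,|a-b|$, so that $v_{l,k,0}\ge W-2\sum_{\ell,j}w_{\ell,j}\,|\partial_l f_{\tau_{\ell,j}}(x_{k,0})|\,\mathrm{err}_{\ell,j}$ with $w_{\ell,j}$ the exponential weights. Bounding the cross term by Cauchy--Schwarz as $2\sqrt{W}\sqrt{\sum_{\ell,j} w_{\ell,j}\,\mathrm{err}_{\ell,j}^2}$, the condition $\max_i|\partial_l f_i(x_{k,0})|\ge Q_k$ (this is what the $32\sqrt{2}/((1-\beta_2)^n\beta_2^n)$ inflation is for) makes the weighted squared error at most a small fraction of $W$, hence the cross term is at most $W/2$ and $v_{l,k,0}\ge W/2\ge\frac{\beta_2^{n}}{2n}\sum_i(\partial_l f_i(x_{k,0}))^2$, which is the first inequality; this is where the $2$ in $\beta_2^{n}/2$ actually comes from. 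With that restructuring, and your $\rho_3$ reduction for the second inequality, the argument matches the one the paper invokes from \citep{shi2020rmsprop}.
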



\begin{proof}
  The proof idea of Lemma \ref{lemma_f1} is similar as Lemma F.1 in \citep{shi2020rmsprop}, but with the following differences: 
  \begin{itemize}
    \item [1.] The condition of Lemma F.1  in \citep{shi2020rmsprop} both require the full batch gradient $ |\partial_l f(x_{k,0})| \geq n Q_k$, which is different from our condition. Here, we choose the condition of Lemma \ref{lemma_f1} because it meets the need of our decomposition strategy in \eqref{eq_zhangdecomposition} .
    \item[2.]  The constant $Q_k$ is different. The difference is due to the different update formula of RMSProp and general Adam. 
  \end{itemize}

  Anyhow, Lemma  \ref{lemma_f1} can be easily proved following most of the steps in Lemma F.1 in \citep{shi2020rmsprop}. We omit the proof for brevity.
\end{proof}

Combining the above {\bf Case 1} and {\bf Case 2}, we have

{\small 
\begin{eqnarray}
\sum \frac{\partial_{l} f\left(x_{k, 0}\right) (\mlki-  \partial_{l} f_i(\xkz) ) }{\sqrt{v_{l, k, i}}} &\geq&  \sum_{i-} \frac{\partial_{l} f\left(x_{k, 0}\right)(\mlki-  \partial_{l} f_i(\xkz) )}{\sqrt{v_{l, k, 0}}}  \frac{1}{\sqrt{\beta_2^i}} \nonumber \\ 
&&+\sum_{i+} \frac{\partial_{l} f\left(x_{k, 0}\right) (\mlki-  \partial_{l} f_i(\xkz) ) }{\sqrt{v_{l, k, 0}}}  \left( 1- \frac{(1-\beta_2)4n \rho_2  }{\beta_2^n} \right) \nonumber \\
&=& \sum_{i=0}^{n-1} \frac{\partial_{l} f\left(x_{k, 0}\right)(\mlki-  \partial_{l} f_i(\xkz) )}{\sqrt{v_{l, k, 0}}} \nonumber \\ 
 &&  +  \sum_{i-} \frac{\partial_{l} f\left(x_{k, 0}\right)(\mlki-  \partial_{l} f_i(\xkz) )}{\sqrt{v_{l, k, 0}}}  \left(\frac{1}{\sqrt{\beta_2^i}}-1 \right) \nonumber\\
&&+\sum_{i+} \frac{\partial_{l} f\left(x_{k, 0}\right) (\mlki-  \partial_{l} f_i(\xkz) ) }{\sqrt{v_{l, k, 0}}}  \left( - \frac{(1-\beta_2)4n \rho_2  }{\beta_2^n} \right)  \nonumber \\
&\geq & \sum_{i=0}^{n-1} \frac{\partial_{l} f\left(x_{k, 0}\right)(\mlki-  \partial_{l} f_i(\xkz) )}{\sqrt{v_{l, k, 0}}}  \nonumber \\
&&-\delta_{1}\left|\frac{\partial_{l} f\left(x_{k, 0}\right)}{\sqrt{v_{l, k, 0}}}\right| \sum_{i=0}^{n-1}\left|m_{l, k, i}-\partial_{l} f_{i}\left(x_{k, 0}\right)\right| , \nonumber
\end{eqnarray}
}

where $\delta_{1}=\frac{(1-\beta_2)4n \rho_2  }{\beta_2^n}+\left(\frac{1}{\sqrt{\beta_{2}^{n}}}-1\right)$. Proof is completed.

\subsection{Proof of Lemma \ref{lemma_k-k-1}}
\label{appendix_lemma_k-k-1}

 We first prove the result for $j=1$. We discuss the following two cases. 

\paragraph{Case 1: when $\frac{\partial_l f(x_{k,0})}{\sqrt{v_{l,k,0}}} \geq \frac{\partial_l f(x_{k-1,0})}{\sqrt{v_{l,k-1,0}}} $:     } when $\partial_l f(x_{k,0})\leq 0$, we have 

$$\frac{\partial_l f(x_{k,0})}{\sqrt{v_{l,k,0}}}  \overset{\text{\eqref{lemma_f3}}}{\leq} \frac{\partial_l f(x_{k,0})}{\sqrt{v_{l,k-1,0}}} \left(1-\frac{(1-\beta_2)4n \rho_2  }{\beta_2^n}\right).   $$

When $\partial_l f(x_{k,0}) > 0$, we have

$$\frac{\partial_l f(x_{k,0})}{\sqrt{v_{l,k,0}}}  \overset{}{\leq} \frac{\partial_l f(x_{k,0})}{\sqrt{v_{l,k-1,0}}} \frac{1}{\sqrt{\beta_2^n}}  $$. 

In conclusion, we have: 

{\small
\begin{eqnarray*}
  \frac{\partial_l f(x_{k,0})}{\sqrt{v_{l,k,0}}} &\leq &   \max \left\{  \frac{\partial_l f(x_{k,0})}{\sqrt{v_{l,k-1,0}}} \left(1-\frac{(1-\beta_2)4n \rho_2  }{\beta_2^n}\right), \frac{\partial_l f(x_{k,0})}{\sqrt{v_{l,k-1,0}}} \frac{1}{\sqrt{\beta_2^n}} \right\} \\
  &\leq & \frac{\partial_l f(x_{k,0})}{\sqrt{v_{l,k-1,0}}} + \max \left\{  \frac{\partial_l f(x_{k,0})}{\sqrt{v_{l,k-1,0}}} \left(-\frac{(1-\beta_2)4n \rho_2  }{\beta_2^n}\right), \frac{\partial_l f(x_{k,0})}{\sqrt{v_{l,k-1,0}}} \left(\frac{1}{\sqrt{\beta_2^n}}-1\right) \right\}   \\
  &\leq & \frac{\partial_l f(x_{k,0})}{\sqrt{v_{l,k-1,0}}} + \frac{\left|\partial_l f(x_{k,0})\right|}{\sqrt{v_{l,k-1,0}}} \left(\frac{1}{\sqrt{\beta_2^n}}-1  +\frac{(1-\beta_2)4n \rho_2  }{\beta_2^n}  \right)\\
    &\overset{\text{\eqref{lemma_f3}}}{\leq}&\frac{\partial_l f(x_{k,0})}{\sqrt{v_{l,k-1,0}}} + \frac{\left|\partial_l f(x_{k,0})\right|}{\sqrt{v_{l,k,0}}} \left(\frac{1}{\sqrt{\beta_2^n}}-1  +\frac{(1-\beta_2)4n \rho_2  }{\beta_2^n}  \right) \frac{1}{\left(1-\frac{(1-\beta_2)4n \rho_2  }{\beta_2^n} \right) } 
\end{eqnarray*}
\begin{eqnarray*}
    &\overset{\text{Lemma \ref{lemma_f1}}}{\leq}&\frac{\partial_l f(x_{k,0})}{\sqrt{v_{l,k-1,0}}} + \sqrt{\frac{2\rho_3^2}{\beta_2^n}} \left(\frac{1}{\sqrt{\beta_2^n}}-1  +\frac{(1-\beta_2)4n \rho_2  }{\beta_2^n}  \right) \frac{1}{\left(1-\frac{(1-\beta_2)4n \rho_2  }{\beta_2^n} \right) } \\
    &= & \frac{\partial_l f(x_{k,0})}{\sqrt{v_{l,k-1,0}}} + \sqrt{\frac{2\rho_3^2}{\beta_2^n}}  \frac{\delta_1}{\left(1-\frac{(1-\beta_2)4n \rho_2  }{\beta_2^n} \right) } ,
\end{eqnarray*}

}

where $\delta_1 = \left(\frac{1}{\sqrt{\beta_2^n}}-1  +\frac{(1-\beta_2)4n \rho_2  }{\beta_2^n} \right) $ is a constant that goes to 0 when $\beta_2$ goes to 1. Therefore, we have

\begin{eqnarray*}
  \frac{\partial_l f(x_{k,0})}{\sqrt{v_{l,k,0}}} -   \frac{\partial_l f(x_{k-1,0})}{\sqrt{v_{l,k-1,0}}} &\leq & \frac{\partial_l f(x_{k,0})- \partial_l f(x_{k-1,0}) }{\sqrt{v_{l,k-1,0}}} + \sqrt{\frac{2\rho_3^2}{\beta_2^n}}  \frac{1}{\left(1-\frac{(1-\beta_2)4n \rho_2  }{\beta_2^n} \right) }  \delta_1 \\
  &\overset{\text{Lemma \ref{lemma_delta}}}{ \leq } & \frac{n^2\triangle_{n(k-1)}}{\sqrt{v_{l,k-1,0}}} + \sqrt{\frac{2\rho_3^2}{\beta_2^n}}  \frac{1}{\left(1-\frac{(1-\beta_2)4n \rho_2  }{\beta_2^n} \right)  }  \delta_1. 
\end{eqnarray*}

Note that 
\paragraph{Case 2: when $\frac{\partial_l f(x_{k,0})}{\sqrt{v_{l,k,0}}} \leq \frac{\partial_l f(x_{k-1,0})}{\sqrt{v_{l,k-1,0}}} $:     } when $\partial_l f(x_{k,0})\geq 0$, we have 

$$\frac{\partial_l f(x_{k,0})}{\sqrt{v_{l,k,0}}}  \overset{\text{\eqref{lemma_f3}}}{\geq} \frac{\partial_l f(x_{k,0})}{\sqrt{v_{l,k-1,0}}} \left(1-\frac{(1-\beta_2)4n \rho_2  }{\beta_2^n} \right).   $$

When $\partial_l f(x_{k,0}) < 0$, we have

$$\frac{\partial_l f(x_{k,0})}{\sqrt{v_{l,k,0}}}  \overset{}{\geq} \frac{\partial_l f(x_{k,0})}{\sqrt{v_{l,k-1,0}}} \frac{1}{\sqrt{\beta_2^n}}  $$. 

Following the same strategy as in Case 1, we can show that

\begin{eqnarray*}
  \frac{\partial_l f(x_{k,0})}{\sqrt{v_{l,k,0}}} &\geq & 
  \frac{\partial_l f(x_{k,0})}{\sqrt{v_{l,k-1,0}}} - \sqrt{\frac{2\rho_3^2}{\beta_2^n}}  \frac{1}{\left(1-\frac{(1-\beta_2)4n \rho_2  }{\beta_2^n} \right)}  \delta_1,
\end{eqnarray*}

which further implies 

\begin{eqnarray*}
  \frac{\partial_l f(x_{k,0})}{\sqrt{v_{l,k,0}}} -   \frac{\partial_l f(x_{k-1,0})}{\sqrt{v_{l,k-1,0}}} &\geq & \frac{\partial_l f(x_{k,0})- \partial_l f(x_{k-1,0}) }{\sqrt{v_{l,k-1,0}}} - \sqrt{\frac{2\rho_3^2}{\beta_2^n}}  \frac{1}{\left(1-\frac{(1-\beta_2)4n \rho_2  }{\beta_2^n} \right)}  \delta_1 \\
  &\overset{\text{Lemma \ref{lemma_delta}}}{ \geq } & - \frac{n^2\triangle_{n(k-1)}}{\sqrt{v_{l,k-1,0}}} - \sqrt{\frac{2\rho_3^2}{\beta_2^n}}  \frac{1}{\left(1-\frac{(1-\beta_2)4n \rho_2  }{\beta_2^n} \right)}  \delta_1. 
\end{eqnarray*}

Case 1 and Case 2 together, we have 

$$ \left| \frac{\partial_l f(x_{k,0})}{\sqrt{v_{l,k,0}}} -   \frac{\partial_l f(x_{k-1,0})}{\sqrt{v_{l,k-1,0}}} \right| \leq \frac{n^2\triangle_{n(k-1)}}{\sqrt{v_{l,k-1,0}}} + \sqrt{\frac{2\rho_3^2}{\beta_2^n}}  \frac{1}{\left(1-\frac{(1-\beta_2)4n \rho_2  }{\beta_2^n} \right)}  \delta_1 . $$

Now we consider the case when $j>1$. Based on the above inequality, we have

{\small
\begin{eqnarray*}
  \left| \frac{\partial_l f(x_{k,0})}{\sqrt{v_{l,k,0}}} -   \frac{\partial_l f(x_{k-j,0})}{\sqrt{v_{l,k-j,0}}} \right| &\leq& \left( \frac{n^2\triangle_{n(k-1)}}{\sqrt{v_{l,k-1,0}}} + \frac{n^2\triangle_{n(k-2)}}{\sqrt{v_{l,k-2,0}}} + \cdots + \frac{n^2\triangle_{n(k-j)}}{\sqrt{v_{l,k-j,0}}}\right) + j\sqrt{\frac{2\rho_3^2}{\beta_2^n}}  \frac{1}{\left(1-\frac{(1-\beta_2)4n \rho_2  }{\beta_2^n} \right)}  \delta_1  \\
  &\leq &  \left( \frac{1}{\sqrt{\beta_2^{jn}}} + \frac{1}{\sqrt{\beta_2^{(j-1)n}}} + \cdots + 1\right) \frac{n^2\triangle_{n(k-j)}}{\sqrt{v_{k-j,0}}} + j\sqrt{\frac{2\rho_3^2}{\beta_2^n}}  \frac{1}{\left(1-\frac{(1-\beta_2)4n \rho_2  }{\beta_2^n} \right)}   \delta_1  \\
  &\leq & \frac{1}{1-\frac{1}{\sqrt{\beta_2^n}}} \frac{n^2\triangle_{n(k-j)}}{\sqrt{v_{l,k-j,0}}} +j\sqrt{\frac{2\rho_3^2}{\beta_2^n}}  \frac{1}{\left(1-\frac{(1-\beta_2)4n \rho_2  }{\beta_2^n} \right)}  \delta_1 .
\end{eqnarray*}
}

The proof is completed.




\subsection{Proof of Lemma \ref{lemma_b1}} \label{appendix_lemma_b1}

To prove  Lemma \ref{lemma_b1}, we need to further decompose $\Ex[\sum_{l \text{ large}}(b_1)]$. 
First and foremost, we write $\sum_{i=0}^{n-1}\mlki$ in an explicit form.

{\small
\begin{eqnarray}
  \mlki &=&(1-\beta_1) \{\partial_l f_{\tau_{k,i}}(x_{k,i})+\cdots + \beta_1^{i}\partial_l f_{\tau_{k,0}}(x_{k,0}) \nonumber \\
  && + \beta_1^{i+1}  f_{\tau_{k-1,n-1}}(x_{k-1,n-1})+ \cdots + \beta_1^{i+n}f_{\tau_{k-1,0}}(x_{k-1,0}) \nonumber\\
  && +  \nonumber\\
  && \vdots\nonumber\\
  && +  \nonumber \\
  && + \beta_1^{(k-2)n+i+1} \partial_l f_{\tau_{1,n-1}}(x_{1,n-1})  +\cdots+ \beta_1^{(k-1)n+i} \partial_l f_{\tau_{1,0}}(x_{1,0})  \} \nonumber \\
 && + \BLUE{\beta_1^{(k-1)n+i+1} \partial_l f(x_{1,0}) }\label{proofsketch_m}
\end{eqnarray} }

Since $\partial_l f(x_{1,0})=\partial_l f(x_{1,0}) (1-\beta_1) (1+\beta_1 + \cdots +\beta_1^ \infty )$ and $\partial_l f(x_{1,0})=\sum_{i=0}^{n-1}  \partial_l f_{i}(x_{1,0}),$ we have

\BLUE{
{\small
\begin{eqnarray}
  \beta_1^{(k-1)n+i+1} \partial_l f(x_{1,0}) &=&  (1-\beta_1)\{ \beta_1^{(k-1)n+i+1} \partial_l f_{0}(x_{1,0})+ \cdots + \beta_1^{(k-1)n+i+1} \partial_l f_{n-1}(x_{1,0})  \nonumber \\
  && +  \beta_1^{(k-1)n+i+2} \partial_l f_{0}(x_{1,0})+ \cdots + \beta_1^{(k-1)n+i+2} \partial_l f_{n-1}(x_{1,0})  \nonumber \\
  && +  \nonumber\\
  && \vdots\nonumber\\
  && +  \nonumber \\
  && +   \beta_1^{\infty} \partial_l f_{0}(x_{1,0})+ \cdots + \beta_1^{\infty} \partial_l f_{n-1}(x_{1,0}) \} \label{proofsketch_m2}
\end{eqnarray} 
}
}

Plugging \eqref{proofsketch_m2} into \eqref{proofsketch_m}, we have 

{\small
\begin{eqnarray}
  \mlki &=&(1-\beta_1) \{\partial_l f_{\tau_{k,i}}(x_{k,i})+\cdots + \beta_1^{i}\partial_l f_{\tau_{k,0}}(x_{k,0}) \nonumber \\
  && + \beta_1^{i+1}  f_{\tau_{k-1,n-1}}(x_{k-1,n-1})+ \cdots + \beta_1^{i+n}f_{\tau_{k-1,0}}(x_{k-1,0}) \nonumber\\
  && + \cdots  \nonumber\\
  && + \beta_1^{(k-2)n+i+1} \partial_l f_{\tau_{1,n-1}}(x_{1,n-1})  +\cdots+ \beta_1^{(k-1)n+i} \partial_l f_{\tau_{1,0}}(x_{1,0})  \nonumber \\
 &&\BLUE{ +\beta_1^{(k-1)n+i+1} \partial_l f_{0}(x_{1,0})+ \cdots + \beta_1^{(k-1)n+i+1} \partial_l f_{n-1}(x_{1,0})  } \nonumber \\
 && \BLUE{+  \beta_1^{(k-1)n+i+2} \partial_l f_{0}(x_{1,0})+ \cdots + \beta_1^{(k-1)n+i+2} \partial_l f_{n-1}(x_{1,0})  }\nonumber \\
 &&+ \cdots  \nonumber  \\
 && \BLUE{+   \beta_1^{\infty} \partial_l f_{0}(x_{1,0})+ \cdots + \beta_1^{\infty} \partial_l f_{n-1}(x_{1,0}) \}  }
 \label{proofsketch_m3}
\end{eqnarray}
}

Using \eqref{proofsketch_m3}, we have  (For each `epoch', we suggest readers to read from bottom to the top.)

{\small
\begin{eqnarray}
  M_{l,k}:=\sum_{i=0}^{n-1}\mlki &=& m_{l,k,n-1}+ \cdots+ m_{l,k,0}  \nonumber\\
  &=& (1-\beta_1) \{   \nonumber\\
  &&\partial_l f_{\tau_{k,n-1}}(x_{k,n-1})+ \cdots + \beta_1^{n-1}\partial_l f_{\tau_{k,0}}(x_{k,0})  \nonumber\\
  &&+  \partial_l f_{\tau_{k,n-2}}(x_{k,n-2})+ \cdots+ \beta_1^{n-2}\partial_l f_{\tau_{k,0}}(x_{k,0})   \nonumber\\
  &&+ \cdots  \nonumber\\
  &&\underbrace{+ \partial_l f_{\tau_{k,0}}(x_{k,0}) \quad  \quad  \quad  \quad  \quad  \quad  \quad  \quad  \quad  \quad  \quad  \quad  \quad  \quad }_{k\text{-th epoch}} \nonumber 
  \end{eqnarray}
  
  \begin{eqnarray}
  \quad \quad \quad \quad \quad \quad \quad \quad \quad \quad \quad \quad \quad \quad \quad  
  && + \beta_1^n \partial_l f_{\tau_{k-1,n-1}}(x_{k-1,n-1})+\cdots + \beta_1^{n+n-1}\partial_l f_{\tau_{k-1,0}}(x_{k-1,0})  \nonumber\\
  && + \beta_1^{n-1} \partial_l f_{\tau_{k-1,n-1}}(x_{k-1,n-1})+\cdots + \beta_1^{n+n-2}\partial_l f_{\tau_{k-1,0}}(x_{k-1,0})  \nonumber\\
  && + \cdots  \nonumber\\
  && \underbrace{+ \beta_1 \partial_l f_{\tau_{k-1,n-1}}(x_{k-1,n-1})+\cdots + \beta_1^{n}\partial_l f_{\tau_{k-1,0}}(x_{k-1,0})}_{k-1\text{-th epoch}}  \nonumber\\
  &&+  \nonumber\\
  && \vdots\nonumber\\
  && +  \nonumber 
  \end{eqnarray}
  \begin{eqnarray}
   \quad \quad \quad \quad \quad \quad \quad \quad \quad \quad \quad \quad \quad \quad \quad 
  && + \beta_1^{(k-1)n} \partial_l f_{\tau_{1,n-1}}(x_{1,n-1})+\cdots + \beta_1^{(k-1)n+n-1}\partial_l f_{\tau_{1,0}}(x_{1,0})  \nonumber\\
  && + \beta_1^{(k-2)n+n-1} \partial_l f_{\tau_{1,n-1}}(x_{1,n-1})+\cdots + \beta_1^{(k-1)n+n-2}\partial_l f_{\tau_{1,0}}(x_{1,0})  \nonumber\\
  && + \cdots  \nonumber\\
  && \underbrace{+ \beta_1^{(k-2)n+1} \partial_l f_{\tau_{1,n-1}}(x_{1,n-1})+\cdots + \beta_1^{(k-1)n}\partial_l f_{\tau_{1,0}}(x_{1,0}) }_{1\text{-th epoch}} \nonumber  
 \end{eqnarray}
  \begin{eqnarray}
  \quad \quad \quad \quad \quad \quad \quad \quad \quad \quad \quad \quad \quad \quad\quad && + \BLUE{  \beta_1^{(k-1)n+n}\partial_l f_{0}(x_{1,0}) +\cdots  +\beta_1^{(k-1)n+n}\partial_l f_{n-1}(x_{1,0})   }  \nonumber \\
  &&+ \BLUE{  \beta_1^{(k-1)n+n-1}\partial_l f_{0}(x_{1,0}) +\cdots  +\beta_1^{(k-1)n+n-1}\partial_l f_{n-1}(x_{1,0})   }  \nonumber \\
  && + \cdots \nonumber \\
  &&\underbrace{+ \BLUE{  \beta_1^{(k-1)n+1}\partial_l f_{0}(x_{1,0}) +\cdots  +\beta_1^{(k-1)n+1}\partial_l f_{n-1}(x_{1,0})   + } }_{0\text{-th epoch}}  \nonumber \\
  && \vdots  \} \label{proofsketch_sum_m}
\end{eqnarray}
}



Note that in \eqref{proofsketch_sum_m}, the power of $\beta_1$ grows slower in the blue part (when $k =0$), such a transition will cause trouble in bounding the  $\sum_{i=0}^{n-1}(\mlki- \partial_l f_i(\xkz) ). $  Therefore, we need to define an auxillary sequence $M^{\prime}_{l,k}$ which does not involve such a phase transition. We define  $M^{\prime}_{l,k}$ as follows. (For each `epoch', we suggest readers to read from bottom to the top.) 


{\small
\begin{eqnarray}
  M_{l,k}^{\prime} &:=&  (1-\beta_1) \{   \nonumber\\
  &&\partial_l f_{\tau_{k,n-1}}(x_{k,n-1})+ \cdots + \beta_1^{n-1}\partial_l f_{\tau_{k,0}}(x_{k,0})  \nonumber\\
  &&+  \partial_l f_{\tau_{k,n-2}}(x_{k,n-2})+ \cdots+ \beta_1^{n-2}\partial_l f_{\tau_{k,0}}(x_{k,0})   \nonumber\\
  &&+ \cdots  \nonumber\\
  &&\underbrace{+ \partial_l f_{\tau_{k,0}}(x_{k,0}) \quad  \quad  \quad  \quad  \quad  \quad  \quad  \quad  \quad  \quad  \quad  \quad  \quad  \quad }_{k\text{-th epoch}} \nonumber 
  \end{eqnarray}
  \begin{eqnarray}
  \quad  \quad  \quad  \quad  \quad  \quad  \quad  \quad \quad\quad\quad\quad&& + \beta_1^n \partial_l f_{\tau_{k-1,n-1}}(x_{k-1,n-1})+\cdots + \beta_1^{n+n-1}\partial_l f_{\tau_{k-1,0}}(x_{k-1,0})  \nonumber\\
  && + \beta_1^{n-1} \partial_l f_{\tau_{k-1,n-1}}(x_{k-1,n-1})+\cdots + \beta_1^{n+n-2}\partial_l f_{\tau_{k-1,0}}(x_{k-1,0})  \nonumber\\
  && + \cdots  \nonumber\\
  && \underbrace{+ \beta_1 \partial_l f_{\tau_{k-1,n-1}}(x_{k-1,n-1})+\cdots + \beta_1^{n}\partial_l f_{\tau_{k-1,0}}(x_{k-1,0})  }_{k-1\text{-th epoch}} \nonumber\\
  && +  \nonumber\\
  && \vdots\nonumber\\
  && +  \nonumber \\
  && + \beta_1^{(k-1)n} \partial_l f_{\tau_{1,n-1}}(x_{1,n-1})+\cdots + \beta_1^{(k-1)n+n-1}\partial_l f_{\tau_{1,0}}(x_{1,0})  \nonumber\\
  && + \beta_1^{(k-2)n+n-1} \partial_l f_{\tau_{1,n-1}}(x_{1,n-1})+\cdots + \beta_1^{(k-1)n+n-2}\partial_l f_{\tau_{1,0}}(x_{1,0})  \nonumber\\
  && + \cdots  \nonumber\\
  && \underbrace{+ \beta_1^{(k-2)n+1} \partial_l f_{\tau_{1,n-1}}(x_{1,n-1})+\cdots + \beta_1^{(k-1)n}\partial_l f_{\tau_{1,0}}(x_{1,0}) }_{1\text{-th epoch}}  \nonumber \\ 
  && + \BLUE{  \beta_1^{(k-1)n+n}\partial_l f_{0}(x_{1,0}) +\cdots  +\beta_1^{(k-1)n+n+n-1}\partial_l f_{n-1}(x_{1,0})   }  \nonumber \\
  &&+ \BLUE{  \beta_1^{(k-1)n+n-1}\partial_l f_{0}(x_{1,0}) +\cdots  +\beta_1^{(k-1)n+2(n-1)}\partial_l f_{n-1}(x_{1,0})   }  \nonumber \\
  && + \cdots \nonumber \\
  &&\underbrace{+ \BLUE{  \beta_1^{(k-1)n+1}\partial_l f_{0}(x_{1,0}) +\cdots  +\beta_1^{(k-1)n+n}\partial_l f_{n-1}(x_{1,0})   + } }_{0\text{-th epoch}}  \nonumber \\
  && \vdots  \} \label{proofsketch_sum_mp}
\end{eqnarray}
}

Now we bound $\sum_{i=0}^{n-1}(\mlki- \partial_l f_i(\xkz) )$ with the help of $ M^{\prime}_{l,k}$.  Denoting  $F_{l,k}:=  \sum_{i=0}^{n-1} \partial_l f_i(\xkz)$, we have:


\begin{eqnarray}
  \sum_{i=0}^{n-1}(\mlki- \partial_l f_i(\xkz) ) &=& \sum_{i=0}^{n-1} \mlki-  \sum_{i=0}^{n-1} \partial_l f_i(\xkz)  \nonumber \\
  &:=& M_{l,k}-  F_{l,k}  \nonumber \\
  &\geq &-| M_{l,k}- M^{\prime}_{l,k} |   +  M^{\prime}_{l,k}- F_{l,k}
  \label{proofsketch2}
\end{eqnarray}

To proceed, we  use the relation $1=(1-\beta_1) (1+\beta_1+\beta_1^2+\cdots)$ to rewrite $F_{l,k}$.

\begin{eqnarray}
 F_{l,k}:= \sum_{i=0}^{n-1} \partial_l f_i(\xkz)&=&  (1-\beta_1) \{   \nonumber\\
  &&\partial_l f_{n-1}(x_{k,0})+ \cdots + \beta_1^{n-1}\partial_l f_{n-1}(x_{k,0})  \nonumber\\
  &&+  \partial_l f_{n-2}(x_{k,0})+ \cdots+ \beta_1^{n-1}\partial_l f_{n-2}(x_{k,0})   \nonumber\\
  &&+ \cdots  \nonumber\\
  &&\underbrace{+\partial_l f_{0}(x_{k,0})+ \cdots+ \beta_1^{n-1}\partial_l f_{0}(x_{k,0})  }_{k\text{-th epoch}} \nonumber  \\
  && +  \nonumber\\
  && \vdots\nonumber\\
  && +  \nonumber \end{eqnarray}
  \begin{eqnarray}
  \quad  \quad  \quad  \quad  \quad  \quad  \quad  \quad \quad\quad\quad\quad \quad \quad \quad
  && + \beta_1^{(k-1)n} \partial_l f_{n-1}(x_{k,0})+\cdots + \beta_1^{(k-1)n+n-1}\partial_l f_{n-1}(x_{k,0})  \nonumber\\
  && + \beta_1^{(k-1)n} \partial_l f_{n-2}(x_{k,0})+\cdots + \beta_1^{(k-1)n+n-1}\partial_l f_{n-2}(x_{k,0})  \nonumber\\
  && + \cdots  \nonumber\\
  && \underbrace{+ \beta_1^{(k-1)n} \partial_l f_{0}(x_{k,0})+\cdots + \beta_1^{(k-1)n+n-1}\partial_l f_{0}(x_{k,0}) }_{1\text{-th epoch}}  \nonumber\end{eqnarray}
  \begin{eqnarray}
  \quad  \quad  \quad  \quad  \quad  \quad  \quad  \quad \quad\quad\quad\quad 
  && + \BLUE{  \beta_1^{kn}\partial_l f_{n-1}(x_{k,0}) +\cdots  +\beta_1^{kn+n-1}\partial_l f_{n-1}(x_{k,0})   }  \nonumber \\
  &&+ \BLUE{  \beta_1^{kn}\partial_l f_{n-2}(x_{k,0}) +\cdots  +\beta_1^{kn+n-1}\partial_l f_{n-2}(x_{k,0})   }  \nonumber \\
  && + \cdots \nonumber \\
  &&\underbrace{+ \BLUE{  \beta_1^{kn}\partial_l f_{0}(x_{k,0}) +\cdots  +\beta_1^{kn+n-1}\partial_l f_{0}(x_{k,0})   } }_{0\text{-th epoch}}  \nonumber \\
  && \vdots  \} \label{proofsketch_sum_fp}
\end{eqnarray}

For the sake of better presentation, we rewrite $F_{l,k}$ and $M_{l,k}^{\prime}$ as follows:

$$
F_{l,k}:= \left(F_{l,k}\right)_k + \cdots + \left(F_{l,k}\right)_1+ \left(F_{l,k}\right)_{0} + \left(F_{l,k}\right)_{-1} +\cdots,   
$$

$$
M_{l,k}^{\prime}:= \left(M_{l,k}^{\prime}\right)_k + \cdots + \left(M_{l,k}^{\prime}\right)_1+ \left(M_{l,k}^{\prime}\right)_{0} + \left(M_{l,k}^{\prime}\right)_{-1} +\cdots ,  
$$

where $\left(F_{l,k}\right)_j$ contains the summand of  $F_{l,k}$ in the $j$-th epoch, $j=k,k-1, \cdots -\infty$. Similarly for $M_{l,k}^{\prime}$. Now, we separate $M_{l,k}^{\prime}- F_{l,k}$ into two parts.

\begin{eqnarray*}
 M_{l,k}^{\prime} - F_{l,k}  &=& \left[\left(M_{l,k}^{\prime}\right)_k - \left(F_{l,k}\right)_k +  \cdots \left(M_{l,k}^{\prime}\right)_1- \left(F_{l,k}\right)_1 \right]  \\
  &&+ \left[\left(M_{l,k}^{\prime}\right)_{0} - \left(F_{l,k}\right)_0 +  \cdots \left(M_{l,k}^{\prime}\right)_{-\infty}- \left(F_{l,k}\right)_{-\infty} \right]
\end{eqnarray*}

Now, we rewrite $\Ex[\sum_{l \text{ large}}(b_1)]$: 

\begin{eqnarray}
 && \Ex \left[\sum_{l \text{ large}} \frac{\partial_l f(x_{k,0})}{\sqrt{v_{k,0}}}\sum_{i=0}^{n-1}(\mlki- \partial_l f_i(\xkz) )\right] \nonumber\\ 
 &=&  \Ex \left[ \sum_{l \text{ large}} \frac{\partial_l f(x_{k,0})}{\sqrt{v_{k,0}}} \left(M_{l,k}-  F_{l,k} \right) \right] \nonumber \end{eqnarray}
 \begin{eqnarray}
   \quad  \quad  \quad \quad\quad  \quad  \quad \quad \quad &\geq & -\Ex \left[ \sum_{l \text{  large}} \frac{|\partial_l f(x_{k,0}) |}{\sqrt{v_{k,0}}}| M_{l,k}- M^{\prime}_{l,k} |    \right] + \Ex\left[ \sum_{l \text{ large}} \frac{\partial_l f(x_{k,0}) }{\sqrt{v_{k,0}}}(M^{\prime}_{l,k}- F_{l,k})  \right] \nonumber\\
  &\geq & \underbrace{ - \Ex \left[\sum_{l \text{ large}} \frac{|\partial_l f(x_{k,0}) |}{\sqrt{v_{k,0}}}  | M_{l,k}- M^{\prime}_{l,k} |   \right]   }_{(1)}   \nonumber\end{eqnarray}
 \begin{eqnarray}
   \quad  \quad  \quad \quad\quad  \quad  \quad \quad \quad
  &&+  \underbrace{\Ex \left[ \sum_{l \text{ large}} \frac{\partial_l f(x_{k,0}) }{\sqrt{v_{k,0}}} \left(\left(M_{l,k}^{\prime}\right)_k - \left(F_{l,k}\right)_k +  \cdots+ \left(M_{l,k}^{\prime}\right)_1- \left(F_{l,k}\right)_1   \right) \right] }_{(2)}\nonumber\\
  && - \underbrace{ \Ex \left[ \sum_{l \text{ large}}\frac{ \left|\partial_l f(x_{k,0}) \right|}{\sqrt{v_{k,0}}} \left|\left(M_{l,k}^{\prime}\right)_{0} - \left(F_{l,k}\right)_0 +  \cdots \left(M_{l,k}^{\prime}\right)_{-\infty}- \left(F_{l,k}\right)_{-\infty}  \right| \right]}_{(3)} \nonumber\\
  \label{eq_decompose_b1} 
\end{eqnarray}

We bound $(1)$, $(2)$ and $(3)$ respectively. 
 We bound $(1)$ and $(3)$ in the following Lemma \ref{lemma_M-Mp} and Lemma \ref{lemma_Mp-F_2}. Since the difference of $M_{l,k}-M_{l,k}^\prime$ only occurs in the high order terms of the infinite sum sequence, 
 $(1)$  is expected to vanish as $k$ grows. Similarly for $(3)$.

\begin{lemma} \label{lemma_M-Mp}
  When $k$ is large enough such that: $\beta_1^{(k-1)n} \leq \frac{\beta_1^n}{\sqrt{k-1}}$ and $k \geq 4$, then we have 

  \begin{eqnarray}
   \Ex  \left[ \sum_{l \text{ large}}\frac{ \left|\partial_l f(x_{k,0}) \right|}{\sqrt{v_{k,0}}} \left| M_{l,k}-M_{l,k}^\prime  \right| \right]\leq G_4\frac{1}{\sqrt{k}},
  \end{eqnarray}
  where $G_4 = d \AAA \beta_1^n \sqrt{2}n (n-1)\sum_{i=0}^{n-1} \Ex \left(  |\partial_\alpha f_i (x_{1,0})| \right).$
\end{lemma}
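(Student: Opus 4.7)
The plan is to exploit the fact that $M_{l,k}$ and $M_{l,k}^\prime$ agree on all summands from epochs $1$ through $k$, and differ only in how the tail (the ``0-th epoch and below'' blue blocks) distributes powers of $\beta_1$ across the $n$ stochastic gradients evaluated at $x_{1,0}$. First, I would group the contribution of each $m_{l,k,i}$ to this tail. In $M_{l,k}$, every $\partial_l f_j(x_{1,0})$ for $j=0,\dots,n-1$ is multiplied by the same coefficient $(1-\beta_1)\beta_1^{(k-1)n+i+1+p}$ at level $p$; whereas in $M_{l,k}^\prime$, the coefficient of $\partial_l f_j(x_{1,0})$ carries an extra shift $\beta_1^j$. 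Summing the geometric tails in $p$, the tail contribution of $m_{l,k,i}$ becomes $\beta_1^{(k-1)n+i+1}\partial_l f(x_{1,0})$ in $M_{l,k}$ and $\frac{1-\beta_1}{1-\beta_1^n}\beta_1^{(k-1)n+i+1}\sum_{j=0}^{n-1}\beta_1^j\,\partial_l f_j(x_{1,0})$ in $M_{l,k}^\prime$.

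Next, I would take the difference and collect terms: for each $j$, the coefficient is $\beta_1^{(k-1)n+i+1}\bigl[\tfrac{(1-\beta_1)\beta_1^j}{1-\beta_1^n}-1\bigr]$, whose absolute value is at most $1$ (since $\tfrac{(1-\beta_1)\beta_1^j}{1-\beta_1^n}\in[0,1]$ for $j=0,\dots,n-1$). Summing over $j=0,\dots,n-1$ and over $i=0,\dots,n-1$, I obtain the pointwise bound
\[
|M_{l,k}-M_{l,k}^\prime|\ \leq\ n(n-1)\,\beta_1^{(k-1)n}\sum_{j=0}^{n-1}|\partial_l f_j(x_{1,0})|,
\]
up to constants absorbed into $G_4$. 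Under the hypothesis $\beta_1^{(k-1)n}\leq \beta_1^n/\sqrt{k-1}$ together with $k\geq 4$ (which yields $\sqrt{k-1}\geq \sqrt{k}/\sqrt{2}$), this upgrades to an $O(\beta_1^n/\sqrt{k})$ bound.

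Finally, I would multiply by the prefactor and sum over $l$. Because the expectation is restricted to ``$l$ large,'' Lemma \ref{lemma_f1} applies and gives $\tfrac{|\partial_l f(x_{k,0})|}{\sqrt{v_{l,k,0}}}\leq \sqrt{2\rho_3^2/\beta_2^n}$ uniformly. Since the right-hand side no longer depends on $l$, I can pull it outside, drop the indicator ``$l$ large'' to the full sum $\sum_{l=1}^d$, and use $\sum_{l=1}^d|\partial_l f_j(x_{1,0})|\leq d\,|\partial_\alpha f_j(x_{1,0})|$ (by definition of $\alpha$). Taking expectations yields the stated constant $G_4=d\sqrt{2\rho_3^2/\beta_2^n}\,\beta_1^n\sqrt{2}\,n(n-1)\sum_{i=0}^{n-1}\Ex|\partial_\alpha f_i(x_{1,0})|$, and the extra $1/\sqrt{k}$ rate.

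The only subtle step is the pointwise bound on the difference: one must carefully line up the two expansions for $m_{l,k,i}$ and recognize that the ``uniform-coefficient'' version (used to derive $M_{l,k}$) and the ``shifted-coefficient'' version (used to derive $M_{l,k}^\prime$) have identical total mass once summed geometrically, so the per-summand discrepancy is at most unity. Everything else is a direct combination of Lemma \ref{lemma_f1}, the hypothesis on $\beta_1^{(k-1)n}$, and the definition of $\alpha$.
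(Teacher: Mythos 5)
Your argument is correct and follows essentially the paper's own route: you isolate the difference $M_{l,k}-M_{l,k}^{\prime}$ to the tail blocks built from $\partial_l f_j(x_{1,0})$, bound it pointwise by $\mathcal{O}(\beta_1^{(k-1)n})\,n(n-1)\sum_{j}|\partial_l f_j(x_{1,0})|$, use the hypothesis $\beta_1^{(k-1)n}\le \beta_1^{n}/\sqrt{k-1}$ together with $1/\sqrt{k-1}\le \sqrt{2}/\sqrt{k}$ to get the $1/\sqrt{k}$ rate, and invoke Lemma \ref{lemma_f1} to bound $|\partial_l f(x_{k,0})|/\sqrt{v_{l,k,0}}$ by $\AAA$ on the ``$l$ large'' set before padding the sum to all $l\le d$, exactly as in the paper. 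The only (cosmetic) difference is bookkeeping: you sum each tail's geometric series in closed form and compare coefficients per $(i,j)$ (which in fact gives the sharper factor $n$ before you relax it to $n(n-1)$ to match $G_4$), whereas the paper bounds the discrepancy epoch by epoch and sums the weighted series $1+2\beta_1+3\beta_1^2+\cdots$; note also that your closing remark about ``identical total mass'' is unnecessary (and, summed over $j$, the two masses are $n\beta_1^{(k-1)n+i+1}$ versus $\beta_1^{(k-1)n+i+1}$, so not equal) — your explicit observation that $\tfrac{(1-\beta_1)\beta_1^{j}}{1-\beta_1^{n}}\in[0,1]$ is what carries the bound, and it is correct.
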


\begin{lemma} \label{lemma_Mp-F_2}
  When $k$ is large enough such that: $\beta_1^{(k-1)n} \leq \frac{\beta_1^n}{\sqrt{k-1}}$ and $k \geq 4$, then we have 

  \begin{eqnarray}
   \Ex \left[ \sum_{l \text{ large}}\frac{ \left|\partial_l f(x_{k,0}) \right|}{\sqrt{v_{k,0}}} \left|\left(M_{l,k}^{\prime}\right)_{0} - \left(F_{l,k}\right)_0 +  \cdots \left(M_{l,k}^{\prime}\right)_{-\infty}- \left(F_{l,k}\right)_{-\infty}  \right|\right]  \leq G_5\frac{1}{\sqrt{k}}, 
  \end{eqnarray}
  where $G_5 = d \AAA \left(\frac{\beta_1^{2n} 2n^3\triangle_1 \sqrt{2}}{\sqrt{n}} \frac{1-\beta_1}{(1-\beta_1^n)^2} + n  \left(1-\beta_1^{n-1} \right) \sum_{i=0}^{n-1}  \Ex|\partial_\alpha f_i (x_{1,0})|  \frac{(1-\beta_1)\beta_1^{n}\sqrt{2}}{1-\beta_1^n} \right) $. 

\end{lemma}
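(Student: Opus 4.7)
The plan is to bound each "negative epoch" difference $(M_{l,k}^\prime)_{-j} - (F_{l,k})_{-j}$ for $j \geq 0$ term-by-term, split into two contributions (a coefficient mismatch and an evaluation-point mismatch), then sum the geometric tail in $\beta_1$ and apply Lemma \ref{lemma_f1} to replace $|\partial_l f(x_{k,0})|/\sqrt{v_{l,k,0}}$ by $\sqrt{2\rho_3^2/\beta_2^n}$ on the set of large $l$. The hypothesis $\beta_1^{(k-1)n} \leq \beta_1^n/\sqrt{k-1}$ will be what converts the overall $\beta_1^{(k-1)n}$ prefactor into the $1/\sqrt{k}$ decay in $G_5$.

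First I would expand each block using \eqref{proofsketch_sum_mp} and \eqref{proofsketch_sum_fp}. Every summand in $(M_{l,k}^\prime)_{-j}$ takes the form $(1-\beta_1)\beta_1^{(k-1)n + jn + r}\,\partial_l f_i(x_{1,0})$ with $r \in [1,n]$, while every summand in $(F_{l,k})_{-j}$ has the form $(1-\beta_1)\beta_1^{kn + jn + r'}\,\partial_l f_i(x_{k,0})$ with $r' \in [0,n-1]$. Adding and subtracting $(1-\beta_1)\beta_1^{(k-1)n+jn+r}\,\partial_l f_i(x_{k,0})$ inside each summand cleanly splits the $j$-th block difference into: (a) a \emph{Lipschitz piece} involving $|\partial_l f_i(x_{1,0}) - \partial_l f_i(x_{k,0})|$, which by Lemma \ref{lemma_delta} chained across the $k-1$ epochs is $\mathcal{O}\!\left(n\sum_{p=1}^{k-1}\triangle_{np}\right) = \mathcal{O}(n^{3/2}\triangle_1\sqrt{k-1})$; and (b) a \emph{coefficient piece} $(1-\beta_1)(\beta_1^{(k-1)n+jn+r} - \beta_1^{kn+jn+r'})\,\partial_l f_i(x_{k,0})$, whose $\beta_1$-discrepancy is bounded by $\beta_1^{(k-1)n+jn+1}(1-\beta_1^{n-1})$.

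Next I would sum the geometric tail $\sum_{j\geq 0}\beta_1^{jn} = (1-\beta_1^n)^{-1}$ and collapse the $\beta_1^{(k-1)n}$ prefactor using the hypothesis, which for $k \geq 4$ yields $\beta_1^{(k-1)n} \leq \beta_1^n/\sqrt{k-1} \lesssim 1/\sqrt{k}$. This produces exactly a $\beta_1^{2n}/\sqrt{k}$ prefactor for the Lipschitz piece (matching the first summand of $G_5$, after noting the $\sqrt{k-1}$ from Lipschitz-telescoping must be paired against the tighter bound $\beta_1^{(k-1)n}\leq \beta_1^n/\sqrt{k-1}$ so the net dependence on $k$ collapses), and a $\beta_1^{(k-1)n+n}(1-\beta_1^{n-1})/(1-\beta_1^n)$ prefactor for the coefficient piece evaluated at $x_{k,0}$. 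To convert the coefficient-piece summands back to gradient magnitudes at $x_{1,0}$ (which is what appears in $G_5$), I would re-apply Lemma \ref{lemma_delta} and absorb the resulting smooth perturbation into the Lipschitz piece. Finally, applying Lemma \ref{lemma_f1} to pull out $|\partial_l f(x_{k,0})|/\sqrt{v_{l,k,0}} \leq \sqrt{2\rho_3^2/\beta_2^n}$ on the set of large $l$, and bounding $\sum_{l \text{ large}}(\cdot) \leq d\,\max_l(\cdot)$ using the $\alpha$-component, produces the exact two-term structure of $G_5$.

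The main obstacle will be the bookkeeping of the two mismatches without losing tightness: in particular, ensuring that the Lipschitz telescoping across $k-1$ epochs, which contributes a factor $\sqrt{k-1}$, is paired with the $\beta_1^{(k-1)n}$ prefactor in exactly the right way so the final bound is truly $\mathcal{O}(1/\sqrt{k})$ rather than $\mathcal{O}(\sqrt{k-1}\cdot \beta_1^n/\sqrt{k-1}) = \mathcal{O}(1)$ — this is where the non-trivial use of the hypothesis $\beta_1^{(k-1)n}\leq \beta_1^n/\sqrt{k-1}$ occurs, and care must be taken not to leak an extra $\sqrt{k-1}$ from the Lipschitz chain before the geometric sum over $j$ is performed.
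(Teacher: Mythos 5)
Your decomposition is essentially the paper's: each negative-epoch block is split by an add--subtract step into a coefficient-mismatch piece and an evaluation-point (Lipschitz) piece (the paper's $\delta_i^a$ and $\delta_i^b$ in its treatment of $\left(M_{l,k}^{\prime}\right)_0-\left(F_{l,k}\right)_0$), the tail over blocks is summed geometrically, and Lemma \ref{lemma_f1} plus the crude bound over the ``large'' coordinates produce the factor $d\sqrt{2\rho_3^2/\beta_2^n}$ in $G_5$. The genuine gap is exactly at the point you flag as the main obstacle and then leave unresolved: with your bound $|\partial_l f_i(x_{1,0})-\partial_l f_i(x_{k,0})|=\mathcal{O}\big(\sqrt{n}\,\triangle_1\sqrt{k-1}\big)$, pairing the $\sqrt{k-1}$ against the hypothesis $\beta_1^{(k-1)n}\le\beta_1^n/\sqrt{k-1}$ yields only an $\mathcal{O}(1)$ contribution from the Lipschitz piece, and nothing in your outline manufactures the additional $1/\sqrt{k}$; your parenthetical claim that this ``produces exactly a $\beta_1^{2n}/\sqrt{k}$ prefactor'' does not follow from the pairing you describe.

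The paper closes this step by a different accounting: it keeps the telescoped Lipschitz sum in the form $\frac{2n(k-1)\triangle_1}{\sqrt{n(k-1)-1}}$ (linear numerator, square-root denominator), lets the exponential prefactor absorb the \emph{entire} linear factor --- the block sum $\beta_1^{(k-2)n}(k-1)+\beta_1^{(k-1)n}k+\cdots$ is bounded by $1+2\beta_1^{n}+3\beta_1^{2n}+\cdots=(1-\beta_1^n)^{-2}$, which is where ``$k$ large enough'' really enters (and in fact requires $\beta_1^{(k-2)n}(k-1)\lesssim 1$, slightly more than the displayed hypothesis) --- and only then uses the leftover $\frac{1}{\sqrt{n(k-1)-1}}\le\sqrt{2/(nk)}$ (valid for $k\ge 4$) to supply the $1/\sqrt{k}$ and the $\sqrt{2}/\sqrt{n}$ appearing in the first summand of $G_5$; the stated hypothesis $\beta_1^{(k-1)n}\le\beta_1^n/\sqrt{k-1}$ is used only for the coefficient-mismatch piece, where the gradient factor is $k$-independent. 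So the missing idea is that the $\beta_1^{kn}$-type prefactor must kill the full polynomial growth of the Lipschitz telescope, while the $1/\sqrt{k}$ rate must come from the denominator of the telescoped sum rather than from the hypothesis; as written, your plan proves only an $\mathcal{O}(1)$ bound for that piece. Your extra conversion of the coefficient piece from $x_{k,0}$ back to $x_{1,0}$ (the paper avoids it by anchoring that piece at $x_{1,0}$ from the start) is harmless in principle, but it feeds yet another Lipschitz term into the same unresolved step.
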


We relegate the proof of Lemma \ref{lemma_M-Mp} and \ref{lemma_Mp-F_2} to Appendix \ref{appendix:lemma_M-Mp} and \ref{appendix:lemma_Mp-F_2}.

Now we  bound $(2)$. This part involves the difficulties (i), (ii) and (iii) mentioned in Appendix \ref{appendix:roadmap}. 
We bound $(2)$ in Lemma \ref{lemma_Mp-F_1}.

\begin{lemma} \label{lemma_Mp-F_1}
  When $k$ is large enough such that: $\beta_1^{(k-1)n} \leq \frac{\beta_1^n}{\sqrt{k-1}}$ and $k \geq 2$, then we have

\begin{eqnarray*}
  &&\Ex \left[ \sum_{l \text{ large}} \frac{\partial_l f(x_{k,0}) }{\sqrt{v_{k,0}}} \left( \left(M_{l,k}^{\prime}\right)_k - \left(F_{l,k}\right)_k +  \cdots+ \left(M_{l,k}^{\prime}\right)_1- \left(F_{l,k}\right)_1   \right)  \right] \\
  &\geq  &  -\frac{G_6}{\sqrt{k}} - G_7 \delta_1 \Ex\left(\sum_{l=1}^d\sum_{i=0}^{n-1} \left|\partial_l f_i(x_{k,0})   \right| \right).
\end{eqnarray*}

  The constant terms $G_6$ and $G_7$ are specified in Appendix \ref{appendix:lemma_Mp-F_1}.

\end{lemma}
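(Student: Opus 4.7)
} The plan is to rewrite the ``$l$ large'' sum via the indicator $\mathbb{I}_k$ (so that $\sum_{l=1}^d$ can be moved outside the expectation), and then, for each fixed coordinate $l$, to peel off epochs one at a time using a tower of conditional expectations $\Ex[\cdot] = \Ex\Ex_1\cdots\Ex_{k-1}\Ext[\cdot]$. At the innermost layer $\Ext[\cdot]$, both $\mathbb{I}_k$ and $R_k := \partial_l f(x_{k,0})/\sqrt{v_{l,k,0}}$ are constants, so the quantity reduces to $\mathbb{I}_k R_k \cdot \Ext[(M_{l,k}')_k - (F_{l,k})_k]$; the color-ball calculation of the first kind (Lemma \ref{lemma_toy_2}, as illustrated in Figures \ref{fig:toy_2_k}--\ref{fig:toy_2_cancel}) computes this expectation exactly and leaves only a residue of order $\beta_1^{(n-1)}(\text{gradient-sum})$ within epoch $k$, plus Lipschitz-correction errors of size $n^2 \triangle_{nk}$ coming from replacing $x_{k,i}$ with $x_{k,0}$ inside $(M_{l,k}')_k$.

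Next I would take $\Ex_{k-1}[\cdot]$. Here the random factors $R_k$ and $\mathbb{I}_k$ are no longer constants, so I would invoke Lemma \ref{lemma_k-k-1} to write $R_k = R_{k-1} + E_k$ with $|E_k| \le n^2\triangle_{n(k-1)}/\sqrt{v_{l,k-1,0}} + \sqrt{2\rho_3^2/\beta_2^n}\,\delta_1/(1-(1-\beta_2)4n\rho_2/\beta_2^n)$, and Lemma \ref{lemma_indicator} to write $\mathbb{I}_k = \mathbb{I}_{k-1} + (\text{complement})$, where the complement event $\{\max_i|\partial_l f_i(x_{k-1,0})| \le \sum_{p=k-1}^k Q_p\}$ forces the gradient to be small and hence can be absorbed into the $\mathcal{O}(1/\sqrt{k})$ error using the same bounded-gradient argument that handled Case~2 in \eqref{eq_zhangdecomposition1}. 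After these substitutions, both $R_{k-1}$ and $\mathbb{I}_{k-1}$ are $\mathcal{F}_{k-1}$-measurable, and the color-ball calculation of the second kind (Lemma \ref{lemma_toy_new}) applies: the leading ``epoch-$k$ residue'' from the previous step cancels against the leading term of $\Ex_{k-1}[(M_{l,k}')_{k-1} - (F_{l,k})_{k-1}]$ exactly as in the cancellation displayed in Figure \ref{fig:step2_3}, leaving behind only a residue that is a factor $\beta_1^n$ smaller.

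Iterating this peeling from epoch $k$ down to epoch $1$, every intermediate residue is killed by the next epoch's cancellation and only the final tail of order $\beta_1^{(k-1)n}\cdot(\text{gradient-sum at }x_{1,0})$ survives. By the hypothesis $\beta_1^{(k-1)n} \le \beta_1^n/\sqrt{k-1}$ and Lemma \ref{lemma_fi_f}, this tail is bounded by $G_6/\sqrt{k}$ for an appropriate constant $G_6$ depending on $\betabeta$, $n$, $d$, $\rho_1$, $\rho_3$, $D_0$, $D_1$, $L$, $\eta_1$, and $\Ex|\partial_\alpha f(x_{1,0})|$. All the Lipschitz-correction errors $n^2\triangle_{n(k-j)}$ accumulated at each peeling step are, via $\triangle_{n(k-j)} = \mathcal{O}(1/\sqrt{n(k-j)})$ and the geometric factor $\beta_1^{jn}$ from the $(k-j)$-th epoch's coefficients in $(M_{l,k}')_{k-j}$, summable to $\mathcal{O}(1/\sqrt{k})$ contributing to $G_6$. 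Meanwhile, all the $\delta_1$-errors arising from the $R$-substitutions at each epoch are multiplied by $|(M_{l,k}')_{k-j} - (F_{l,k})_{k-j}|$, which is controlled by $\sum_i|\partial_l f_i(x_{k,0})|$ plus Lipschitz drift; summing over epochs (again using $\beta_1^{jn}$-decay) yields the $G_7\delta_1\,\Ex[\sum_l\sum_i|\partial_l f_i(x_{k,0})|]$ term.

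The main obstacle will be the third paragraph: controlling the accumulation of $\delta_1$-errors across all $k$ peeling steps without introducing a spurious $k$-factor. The resolution is that at the $(k-j)$-th peeling, the $\delta_1$-error is multiplied not by the full $|M_{l,k}'-F_{l,k}|$ but only by its epoch-$(k-j)$ component, whose coefficients carry the geometric weight $\beta_1^{jn}(1-\beta_1)$; summing the resulting geometric series over $j=0,\ldots,k-1$ gives a $k$-independent constant of order $1/(1-\beta_1^n)$. A parallel subtlety is the indicator-complement term from Lemma \ref{lemma_indicator}: I would show that on its event, $|\partial_l f(x_{k,0})| \le n\sum_{p=k-j}^k Q_p = \mathcal{O}(\sum_p 1/\sqrt{p})$, so multiplying by the bounded-momentum estimate \eqref{eq:m_over_v} keeps everything inside the $\mathcal{O}(1/\sqrt{k})$ budget after telescoping. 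Finally summing over the $d$ coordinates gives the stated bound.
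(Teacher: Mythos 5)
Your proposal follows essentially the same route as the paper's proof: rewriting the ``$l$ large'' sum with the indicator $\mathbb{I}_k$, peeling epochs via nested conditional expectations with the color-ball cancellations, using Lemma \ref{lemma_k-k-1} and Lemma \ref{lemma_indicator} to replace $\partial_l f(x_{k,0})/\sqrt{v_{l,k,0}}$ and $\mathbb{I}_k$ by their $\mathcal{F}_{k-1}$-measurable counterparts, absorbing the indicator-complement and Lipschitz corrections into the $\mathcal{O}(1/\sqrt{k})$ budget, and summing the geometrically weighted $\delta_1$-errors into the $G_7\delta_1$ term, with the final $\beta_1^{(k-1)n}$ tail controlled by the hypothesis $\beta_1^{(k-1)n}\le \beta_1^n/\sqrt{k-1}$. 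This matches the paper's argument in structure and in all key lemmas, so I see no substantive gap.
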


Proof is shown in Appendix \ref{appendix:lemma_Mp-F_1}. 

Combining Lemma \ref{lemma_M-Mp}, \ref{lemma_Mp-F_2}  and \ref{lemma_Mp-F_1} together, we conclude the proof. 

\begin{eqnarray*}
  \Ex[\sum_{l \text{ large}}(b_1)] \geq - \frac{1}{\sqrt{k}} \left(G_4+ G_5 +G_6 \right) - G_7 \delta_1 \Ex\left(\sum_{l=1}^d\sum_{i=0}^{n-1} \left|\partial_l f_i(x_{k,0})   \right| \right).
\end{eqnarray*}






\subsection{Proof of Lemma \ref{lemma_M-Mp}}
\label{appendix:lemma_M-Mp}

  By definition of $M_{l,k}$ and $M_{l,k}^\prime$ (see \eqref{proofsketch_sum_m} and \eqref{proofsketch_sum_mp}), they only differ when $k\leq 0$.  More specifically, We have
  (For each `epoch', we suggest readers to read from bottom to the top.)
  \begin{eqnarray}
    \frac{M_{l,k}-M_{l,k}^\prime}{1-\beta_1} & =&   \left(\beta_1^{(k-1)n+n} - \beta_1^{(k-1)n+n+1} \right) \partial_l f_1(x_{1,0})  +\cdots +\left(\beta_1^{(k-1)n+n} - \beta_1^{(k-1)n+n+n-1}\right) \partial_l f_{n-1}(x_{1,0}) \nonumber \\
     && + \cdots  \nonumber \\
     && + \left(\beta_1^{(k-1)n+2} - \beta_1^{(k-1)n+3}  \right) \partial_l f_1(x_{1,0})  +\cdots +  \left(\beta_1^{(k-1)n+2} - \beta_1^{(k-1)n+n+1}\right) \partial_l f_{n-1}(x_{1,0})  \nonumber \\
     && \underbrace{+ \left(\beta_1^{(k-1)n+1} - \beta_1^{(k-1)n+2}\right) \partial_l f_1(x_{1,0})   +\cdots +  \left(\beta_1^{(k-1)n+1} - \beta_1^{(k-1)n+n}\right) \partial_l f_{n-1}(x_{1,0})   }_{\text{0-th epoch} }\nonumber \\
     && + \left(\beta_1^{(k-1)n+n+1} - \beta_1^{kn+n}  \right) \partial_l f_0(x_{1,0})  +\cdots +  \left(\beta_1^{(k-1)n+n+1} - \beta_1^{kn+n+n-1}\right) \partial_l f_{n-1}(x_{1,0})  \nonumber \\
     && + \cdots  \nonumber \\
     && + \left(\beta_1^{(k-1)n+3} - \beta_1^{kn+2}  \right) \partial_l f_0(x_{1,0})  +\cdots +  \left(\beta_1^{(k-1)n+3} - \beta_1^{kn+n+1}\right) \partial_l f_{n-1}(x_{1,0})  \nonumber \\
     && \underbrace{+ \left(\beta_1^{(k-1)n+2} - \beta_1^{kn+1}\right) \partial_l f_0(x_{1,0})   +\cdots +  \left(\beta_1^{(k-1)n+2} - \beta_1^{kn+n}\right) \partial_l f_{n-1}(x_{1,0})   }_{\text{-1-th epoch} }\nonumber \\
     && + \left(\beta_1^{(k-1)n+n+2} - \beta_1^{(k+1)n+n}\right) \partial_l f_0(x_{1,0})   +\cdots +  \left(\beta_1^{(k-1)n+n+2} - \beta_1^{(k+1)n+n+n-1}\right) \partial_l f_{n-1}(x_{1,0})  \nonumber \\
     && + \cdots  \nonumber \\
    && + \left(\beta_1^{(k-1)n+4} - \beta_1^{(k+1)n+2}\right) \partial_l f_0(x_{1,0})   +\cdots +  \left(\beta_1^{(k-1)n+4} - \beta_1^{(k+1)n+n+1}\right) \partial_l f_{n-1}(x_{1,0})  \nonumber \\
     && \underbrace{+ \left(\beta_1^{(k-1)n+3} - \beta_1^{(k+1)n+1}\right) \partial_l f_0(x_{1,0})   +\cdots +  \left(\beta_1^{(k-1)n+3} - \beta_1^{(k+1)n+n}\right) \partial_l f_{n-1}(x_{1,0})   }_{\text{-2-th epoch} }\nonumber \\
     &&+ \cdots \label{proofsketch_m-mp}
  \end{eqnarray}

  We start with the 1st column in the `0-th epoch' (from the bottom to the top).

  \begin{eqnarray}
    \text{The 1st column in the `0-th epoch'} &= &  \beta_1^{(k-1)n+1}(1-\beta_1) | \partial_l f_1 (x_{1,0})  |  \nonumber \\
    && +\beta_1^{(k-1)n+2}(1-\beta_1) | \partial_l f_1 (x_{1,0})  | \nonumber \\
    &&+ \cdots  \nonumber \\
    && + \beta_1^{(k-1)n+n}(1-\beta_1) | \partial_l f_1 (x_{1,0})  | \nonumber \\
    &\leq & \beta_1^{(k-1)n+1} n (1-\beta_1) | \partial_l f_1 (x_{1,0})  |
  \end{eqnarray}

  Similarly, we can bound every column in the in the `0-th epoch', e.g. last ($n-1$-th)  column can be  bounded as follows.
  \begin{eqnarray}
    \text{The last column in the `0-th epoch'} &= &  \beta_1^{(k-1)n+1}(1-\beta_1^{n-1}) | \partial_l f_{n-1} (x_{1,0})  |  \nonumber \\
    && +\beta_1^{(k-1)n+2}(1-\beta_1^{n-1}) | \partial_l f_{n-1} (x_{1,0})  | \nonumber \\
    &&+ \cdots  \nonumber \\
    && + \beta_1^{(k-1)n+n}(1-\beta_1^{n-1}) | \partial_l f_{n-1} (x_{1,0})  | \nonumber \\
    &\leq & \beta_1^{(k-1)n+1} n (n-1)(1-\beta_1) | \partial_l f_{n-1}(x_{1,0})  |
  \end{eqnarray}

  Summing up all the columns, we have

  \begin{eqnarray}
    \text{The `0-th epoch'} &\leq &  \beta_1^{(k-1)n+1} n (n-1)(1-\beta_1) \sum_{i=1}^{n-1}| \partial_l f_{i}(x_{1,0})  | \nonumber \\
    &\leq &  \beta_1^{(k-1)n+1} n (n-1)(1-\beta_1)  \sum_{i=0}^{n-1} |\partial_l 
     f_{i}(x_{1,0})  |
  \end{eqnarray}

  Using the same technique, it can be shown that 

  \begin{eqnarray}
    \text{The `-1-th epoch'} &\leq & 
     \beta_1^{(k-1)n+2} 2n (n-1)(1-\beta_1)  \sum_{i=0}^{n-1} |\partial_l 
    f_{i}(x_{1,0})  |,
  \end{eqnarray}

  \begin{eqnarray}
    \text{The `-2-th epoch'} &\leq & 
     \beta_1^{(k-1)n+3} 3n (n-1)(1-\beta_1)  \sum_{i=0}^{n-1} |\partial_l 
    f_{i}(x_{1,0})  |
  \end{eqnarray}

  $$\cdots $$

  Plugging all these results in \eqref{proofsketch_m-mp},  we have 

{\small
  \begin{eqnarray}
    \frac{ \left|M_{l,k}-M_{l,k}^\prime \right|}{1-\beta_1} &\leq&  \beta_1^{(k-1)n+1} n (n-1)(1-\beta_1)  \sum_{i=0}^{n-1} |\partial_l 
     f_{i}(x_{1,0})  | \left( 1+2\beta_1+3\beta_1^2 +\cdots \right) \nonumber  \\
    &=& \beta_1^{(k-1)n+1} n (n-1)\sum_{i=0}^{n-1} |\partial_l 
     f_{i}(x_{1,0})  |  \frac{1}{1-\beta_1}.\nonumber \\
    &\leq& \beta_1^{(k-1)n} n (n-1)\sum_{i=0}^{n-1} |\partial_l 
     f_{i}(x_{1,0})  |  \frac{1}{1-\beta_1}.\nonumber 
  \end{eqnarray} }

That is to say, when $k$ is large enough such that $\beta_1^{(k-1)n} \leq \frac{\beta_1^n}{\sqrt{k-1}}$ we have 

{\small
\begin{eqnarray}
  \left|M_{l,k}-M_{l,k}^\prime \right| &\leq&\frac{\beta_1^n}{\sqrt{k-1}} n (n-1)\sum_{i=0}^{n-1}|\partial_l 
   f_{i}(x_{1,0})  |\\
   &\overset{\text{When $k \geq 2$}}{\leq }&\frac{\beta_1^n\sqrt{2}}{\sqrt{k}} n (n-1)\sum_{i=0}^{n-1}|\partial_l 
   f_{i}(x_{1,0})  |.
\end{eqnarray} }

Combining with Lemma \ref{lemma_f1}, the proof is completed.

\subsection{Proof of Lemma \ref{lemma_Mp-F_2}}
\label{appendix:lemma_Mp-F_2}
We start with $\left(M_{l,k}^{\prime}\right)_0-\left(F_{l,k}\right)_0$.

{\small
\begin{eqnarray}
  \frac{\left(M_{l,k}^{\prime}\right)_0-\left(F_{l,k}\right)_0}{1-\beta_1} &=& \BLUE{  \beta_1^{(k-1)n+n}\partial_l f_{0}(x_{1,0}) +\cdots  +\beta_1^{(k-1)n+n+n-1}\partial_l f_{n-1}(x_{1,0})   }  \nonumber \\
  &&+ \BLUE{  \beta_1^{(k-1)n+n-1}\partial_l f_{0}(x_{1,0}) +\cdots  +\beta_1^{(k-1)n+2(n-1)}\partial_l f_{n-1}(x_{1,0})   }  \nonumber \\
  && + \cdots \nonumber \\
  &&\underbrace{+ \BLUE{  \beta_1^{(k-1)n+1}\partial_l f_{0}(x_{1,0}) +\cdots  +\beta_1^{(k-1)n+n}\partial_l f_{n-1}(x_{1,0})    } }_{ \left(M_{l,k}^{\prime}\right)_0}  \nonumber \end{eqnarray}
 \begin{eqnarray}
   \quad  \quad  \quad \quad
  &&-  \Bigm\{  \BLUE{  \beta_1^{kn}\partial_l f_{n-1}(x_{k,0}) +\cdots  +\beta_1^{kn+n-1}\partial_l f_{n-1}(x_{k,0})   }  \nonumber \\
  &&+ \BLUE{  \beta_1^{kn}\partial_l f_{n-2}(x_{k,0}) +\cdots  +\beta_1^{kn+n-1}\partial_l f_{n-2}(x_{k,0})   }  \nonumber \\
  && + \cdots \nonumber \\
  &&\underbrace{+ \BLUE{  \beta_1^{kn}\partial_l f_{0}(x_{k,0}) +\cdots  +\beta_1^{kn+n-1}\partial_l f_{0}(x_{k,0})   } }_{\left(F_{l,k}\right)_0} \Bigm\} \nonumber \\
  &:=& \sum_{i=0}^{n-1}\delta_i, \nonumber
\end{eqnarray}
}

where 

{\small
  \begin{eqnarray}
  \delta_i &=&  \sum_{j=1}^n  \left(\beta_1^{(k-1)n+j} \partial_l f_{i}(x_{1,0})  - \beta_1^{kn+j-1}  \partial_l f_{i}(x_{k,0}) \right) \nonumber\\
  &=&  \underbrace{\sum_{j=1}^n \left(\beta_1^{(k-1)n+j}- \beta_1^{kn+j-1} \right) \partial_l f_{i}(x_{1,0})}_{\delta_i^a} + \underbrace{ \sum_{j=1}^n \beta_1^{kn+j-1} \left( \partial_l f_{i}(x_{1,0}) - \partial_l f_{i}(x_{k,0})\right) }_{\delta_i^b}.   \nonumber 
  \end{eqnarray} }

  We further have
  
  {\small
\begin{eqnarray}
  \delta_i^a &=&  \sum_{j=1}^n \beta_1^{(k-1)n+j} \left(1-\beta_1^{n-1} \right) \partial_l f_{i}(x_{1,0}) \nonumber \\
  &\leq & n \beta_1^{(k-1)n+1} \left(1-\beta_1^{n-1} \right) | \partial_l f_{i}(x_{1,0}) |  \nonumber \\
\end{eqnarray}}

{\small
\begin{eqnarray}
  \delta_i^b &\overset{}{\leq }&   \sum_{j=1}^n \beta_1^{kn+j-1} \frac{\triangle_1 (k-1)2n}{\sqrt{n(k-1)-1}} \nonumber \\
  &\leq & \beta_1^{kn} \frac{\triangle_1 (k-1)2n^2}{\sqrt{n(k-1)-1}} . 
\end{eqnarray} }

Therefore, 

{\small
\begin{eqnarray}
  \frac{\left|\left(M_{l,k}^{\prime}\right)_0-\left(F_{l,k}\right)_0 \right|}{1-\beta_1} &= &  \left|\sum_{i=0}^{n-1}\delta_i \right| \nonumber \\
  &\leq & \sum_{i=0}^{n-1} \left(  n \beta_1^{(k-1)n+1} \left(1-\beta_1^{n-1} \right) |\partial_l f_{i}(x_{1,0}) |+  \beta_1^{kn} \frac{\triangle_1 (k-1)2n^2}{\sqrt{n(k-1)-1}}\right) \nonumber \\
  &=&     \beta_1^{kn} \frac{\triangle_1 (k-1)2n^3}{\sqrt{n(k-1)-1}} +   n \beta_1^{(k-1)n+1} \left(1-\beta_1^{n-1} \right) \sum_{i=0}^{n-1} |\partial_l f_{i}(x_{1,0}) |
\end{eqnarray} }

Using the same calculation, we can get the following result: 

{\small
\begin{eqnarray}
  \frac{\left|\left(M_{l,k}^{\prime}\right)_{-1}-\left(F_{l,k}\right)_{-1} \right|}{1-\beta_1}
  &\leq &     \beta_1^{(k+1)n} \frac{\triangle_1 k 2n^3}{\sqrt{n(k-1)-1}} +   n \beta_1^{kn+1} \left(1-\beta_1^{n-1} \right) \sum_{i=0}^{n-1} |\partial_l f_{i}(x_{1,0}) |. \nonumber
\end{eqnarray}
}

Repeat this calculation, we have

{\small
\begin{eqnarray}
 &&\frac{1}{1-\beta_1}\left[\left(M_{l,k}^{\prime}\right)_{0} - \left(F_{l,k}\right)_0 +  \cdots \left(M_{l,k}^{\prime}\right)_{-\infty}- \left(F_{l,k}\right)_{-\infty}  \right]\nonumber \\
 &\leq &  
 \frac{\beta_1^{2n} 2n^3\triangle_1}{\sqrt{n(k-1)-1}} \left( \beta_1^{(k-2)n}(k-1) +\beta_1^{(k-1)n} k +\cdots \right)    \nonumber \\
 &&+ n  \left(1-\beta_1^{n-1} \right) \sum_{i=0}^{n-1} |\partial_l f_{i}(x_{1,0}) | \left(\beta_1^{(k-1)n+1} + \beta_1^{kn+1}+ \cdots \right) \nonumber\\
 &\leq &  \frac{\beta_1^{2n} 2n^3\triangle_1}{\sqrt{n(k-1)-1}} \left( 1 +\beta_1^{n} 2 +\cdots \right)  \nonumber \\
 &&+ n  \left(1-\beta_1^{n-1} \right) \sum_{i=0}^{n-1} |\partial_l f_{i}(x_{1,0}) | \left(\beta_1^{(k-1)n+1} + \beta_1^{kn+1}+ \cdots \right) \nonumber\\
 &\overset{\text{Lemma \ref{lemma_beta}}}{\leq }&\frac{\beta_1^{2n} 2n^3\triangle_1}{\sqrt{n(k-1)-1}} \frac{1}{(1-\beta_1^n)^2} \nonumber \\
 &&+ n  \left(1-\beta_1^{n-1} \right) \sum_{i=0}^{n-1} |\partial_l f_{i}(x_{1,0}) | \frac{\beta_1^{(k-1)n}}{1-\beta_1^n}\nonumber
 \end{eqnarray} }

 $\text{When } \beta_1^{(k-1)n} \leq \frac{\beta_1^n }{\sqrt{k-1}}  $, we further have:

 {\small
 \begin{eqnarray}
 &&\frac{1}{1-\beta_1}\left[\left(M_{l,k}^{\prime}\right)_{0} - \left(F_{l,k}\right)_0 +  \cdots \left(M_{l,k}^{\prime}\right)_{-\infty}- \left(F_{l,k}\right)_{-\infty}  \right]\nonumber \\
 &\overset{}{\leq }& \frac{\beta_1^{2n} 2n^3\triangle_1}{\sqrt{n(k-1)-1}} \frac{1}{(1-\beta_1^n)^2} \nonumber \\
 &&+ n  \left(1-\beta_1^{n-1} \right) \sum_{i=0}^{n-1} |\partial_l f_{i}(x_{1,0}) | \frac{\beta_1^{n}}{1-\beta_1^n} \frac{1}{\sqrt{k-1}}\nonumber\\
 &\overset{\frac{1}{\sqrt{k-1}} \leq \sqrt{\frac{2}{k}}  } {\leq}&  \frac{\beta_1^{2n} 2n^3\triangle_1}{\sqrt{n(k-1)-1}} \frac{1}{(1-\beta_1^n)^2} \nonumber \\
 &&+ n  \left(1-\beta_1^{n-1} \right) \sum_{i=0}^{n-1} |\partial_l f_{i}(x_{1,0}) | \frac{\beta_1^{n}}{1-\beta_1^n} \sqrt{\frac{2}{k}}\nonumber\\
 &\overset{(*)} {\leq} & \frac{\beta_1^{2n} 2n^3\triangle_1 \sqrt{2}}{\sqrt{nk}} \frac{1}{(1-\beta_1^n)^2} \nonumber \\
 &&+ n  \left(1-\beta_1^{n-1} \right) \sum_{i=0}^{n-1} |\partial_l f_{i}(x_{1,0}) | \frac{\beta_1^{n}}{1-\beta_1^n} \sqrt{\frac{2}{k}},\nonumber
\end{eqnarray}
}

where $(*): \frac{1}{\sqrt{n(k-1)-1}} \leq \sqrt{\frac{2}{nk}} \text{ when } k \geq 4 $. Therefore, we have

$$ \left|\left(M_{l,k}^{\prime}\right)_{0} - \left(F_{l,k}\right)_0 +  \cdots \left(M_{l,k}^{\prime}\right)_{-\infty}- \left(F_{l,k}\right)_{-\infty}  \right|\leq \frac{1}{\sqrt{k}} \tilde{G_5}$$,

$\tilde{G_5} = \frac{\beta_1^{2n} 2n^3\triangle_1 \sqrt{2}}{\sqrt{n}} \frac{1-\beta_1}{(1-\beta_1^n)^2} + n  \left(1-\beta_1^{n-1} \right) \sum_{i=0}^{n-1} |\partial_l f_{i}(x_{1,0}) | \frac{(1-\beta_1)\beta_1^{n}\sqrt{2}}{1-\beta_1^n} $. 

Combining with Lemma \ref{lemma_f1}, the proof is completed with constant $G_5$ defined in Lemma \ref{lemma_Mp-F_2}.

\subsection{Proof of Lemma \ref{lemma_Mp-F_1}}
\label{appendix:lemma_Mp-F_1}

In this section, we derive a lower  bound for $\Ex \left[ \sum_{l \text{ large}} \frac{\partial_l f(x_{k,0}) }{\sqrt{v_{k,0}}} \left( \left(M_{l,k}^{\prime}\right)_k - \left(F_{l,k}\right)_k +  \cdots+ \left(M_{l,k}^{\prime}\right)_1- \left(F_{l,k}\right)_1  \right)   \right]$. 
We will use the ideas mentioned in Appendix \ref{appendix:roadmap} (i.e., Step 1,2 and 3). We first rewrite ``$l \text{ large}$" into indicator function as follows:

{\tiny
$$\Ex \left[ \sum_{l \text{ large}} \frac{\partial_l f(x_{k,0}) }{\sqrt{v_{k,0}}} \left( \left(M_{l,k}^{\prime}\right)_k - \left(F_{l,k}\right)_k +  \cdots+ \left(M_{l,k}^{\prime}\right)_1- \left(F_{l,k}\right)_1  \right)   \right] = \Ex \left[ \sum_{l=1}^d  \mathbb{I}_k\frac{\partial_l f(x_{k,0}) }{\sqrt{v_{k,0}}} \left( \left(M_{l,k}^{\prime}\right)_k - \left(F_{l,k}\right)_k +  \cdots+ \left(M_{l,k}^{\prime}\right)_1- \left(F_{l,k}\right)_1  \right)   \right],$$}

where $\mathbb{I}_{k}:=\mathbb{I}\left(\max _{i}\left|\partial_{l} f_{i}\left(x_{k, 0}\right)\right| \geq Q_{k}:=\triangle_{1} \frac{n \sqrt{n}}{\sqrt{k}} \frac{32 \sqrt{2}}{\left(1-\beta_{2}\right)^{n} \beta_{2}^{n}}\right)$. We also define $\mathbb{I}_{k-j}:=\mathbb{I}\left(\max _{i}\left|\partial_{l} f_{i}\left(x_{k-j, 0}\right)\right| \geq \sum_{p=k-j}^{k} Q_{p}\right)$, it will be used later.

We take the conditional expectation over the history before $x_{k,0}$.  We first focus on $ \Ext \left[ \sum_{l=1}^d \mathbb{I}_k \frac{\partial_l f(x_{k,0}) }{\sqrt{v_{k,0}}} \left(\left(M_{l,k}^{\prime}\right)_k - \left(F_{l,k}\right)_k  \right)\right]$ and we delegate the history part for later analysis.

For each possible $\left(M_{l,k}^{\prime}\right)_k - \left(F_{l,k}\right)_k$, we first convert all $x_{k,i}$ into $x_{k,0}$ using Lemma \ref{lemma_delta}.  Since $\left(M_{l,k}^{\prime}\right)_k $ contains $(n-1)+\cdots+1= \frac{n(n-1)}{2}$ terms of $x_{k,i}$ (with $i\neq 0$), we have

\begin{eqnarray*}
  \left(M_{l,k}^{\prime}\right)_k  & \overset{\text{ Lemma \ref{lemma_delta}} }{\geq} & -   \frac{(1-\beta_1)n^2 (n-1)\triangle_{nk}}{2} +(1-\beta_1) \{   \nonumber\\
  &&\partial_l f_{\tau_{k,n-1}}(x_{k,0})+ \cdots + \beta_1^{n-1}\partial_l f_{\tau_{k,0}}(x_{k,0})  \nonumber\\
  &&+  \partial_l f_{\tau_{k,n-2}}(x_{k,0})+ \cdots+ \beta_1^{n-2}\partial_l f_{\tau_{k,0}}(x_{k,0})   \nonumber\\
  &&+ \cdots  \nonumber\\
  &&\underbrace{+ \partial_l f_{\tau_{k,0}}(x_{k,0}) \quad  \quad  \quad  \quad  \quad  \quad  \quad  \quad  \quad  \quad  \quad  \quad  \quad  \quad }_{k\text{-th epoch}} \nonumber  \}\\
  &:=& -  \frac{(1-\beta_1)n^2 (n-1)\triangle_{nk}}{2} + \mathscr{M}_{l,k}.
\end{eqnarray*}

Using the same strategy for analyzing the color-ball toy example, we have

{\small
\begin{eqnarray*}
  \frac{\Ext \left[ \mathscr{M}_{l,k} -\left(F_{l,k}\right)_k  \right] }{1-\beta_1}& = & \frac{1}{n!} \left(-(n-1)!\beta_1-2 (n-1)!\beta_1^2 -\cdots -(n-1)(n-1)! \beta_1^{n-1} \right) \sum_{i=0}^{n-1} \partial_l f_i(x_{k,0}) \nonumber \\
  & =& \left(-\frac{1}{n}\beta_1-\frac{2}{n}\beta_1^2 -\cdots -\frac{n-1}{n} \beta_1^{n-1}\right) \sum_{i=0}^{n-1} \partial_l f_i(x_{k,0})
\end{eqnarray*}
}

{\tiny
\begin{eqnarray*}
 \Ext \left[ \sum_{l=1}^d \mathbb{I}_k \frac{\partial_l f(x_{k,0}) }{\sqrt{v_{k,0}}} \left(\left(M_{l,k}^{\prime}\right)_k - \left(F_{l,k}\right)_k  \right)\right]& \geq & -d\AAA \frac{(1-\beta_1)n^2 (n-1)\triangle_{nk}}{2} + \Ext \left[\sum_{l=1}^d \mathbb{I}_k \frac{\partial_l f(x_{k,0}) }{\sqrt{v_{k,0}}} \left(  \mathscr{M}_{l,k} -\left(F_{l,k}\right)_k \right) \right] \nonumber \\
  &= & -d\AAA\frac{(1-\beta_1)n^2 (n-1)\triangle_{nk}}{2} \nonumber\\
  &&+ \sum_{l=1}^d \mathbb{I}_k (1-\beta_1)\left(-\frac{1}{n}\beta_1-\frac{2}{n}\beta_1^2 -\cdots -\frac{n-1}{n} \beta_1^{n-1}\right)\sum_{i=0}^{n-1} \partial_l f_i(x_{k,0}) \nonumber\\
  &:=& -d\AAA \frac{(1-\beta_1)n^2 (n-1)\triangle_{nk}}{2} + \sum_{l=1}^d \mathbb{I}_k \frac{\partial_l f(x_{k,0}) }{\sqrt{v_{k,0}}} (1-\beta_1)J_1\sum_{i=0}^{n-1} \partial_l f_i(x_{k,0})
\end{eqnarray*} 
}

We denote $J_1:= \left(-\frac{1}{n}\beta_1-\frac{2}{n}\beta_1^2 -\cdots -\frac{n-1}{n} \beta_1^{n-1}\right)$, it will be used repeatedly in the following derivation. 

Now we move one step further exert $ \Ex_{k-1}(\cdot)$. In particular, we need to calculate
{\small 
\begin{eqnarray}
  &&\Ex_{k-1} \left\{\Ext  \left[ \BLUE{\sum_{l=1}^d \mathbb{I}_k \frac{\partial_l f(x_{k,0}) }{\sqrt{v_{k,0}}} \left( \left(M_{l,k}^{\prime}\right)_k-   \left(F_{l,k}\right)_k  \right)} + \RED{\sum_{l=1}^d \mathbb{I}_k \frac{\partial_l f(x_{k,0}) }{\sqrt{v_{k,0}}} \left(
  \left(M_{l,k}^{\prime}\right)_{k-1}     -  \left(F_{l,k}\right)_{k-1}  \right) } \right] \right\} \nonumber \\
  &\geq& 
  \Ex_{k-1} \left\{ \underbrace{\BLUE{ \sum_{l=1}^d \mathbb{I}_k \frac{\partial_l f(x_{k,0}) }{\sqrt{v_{k,0}}} 
    (1-\beta_1)J_1\sum_{i=0}^{n-1} \partial_l f_i(x_{k,0}) }}_{(a)}+  \underbrace{ \RED{\sum_{l=1}^d \mathbb{I}_k \frac{\partial_l f(x_{k,0}) }{\sqrt{v_{k,0}}} \left(
  \left(M_{l,k}^{\prime}\right)_{k-1}     -  \left(F_{l,k}\right)_{k-1}  \right) }}_{(b)}  \right\} \nonumber\\
  &&-  d\AAA\frac{(1-\beta_1)n^2 (n-1)\triangle_{nk}}{2}. 
     \label{eq_cancel_k-1}
 \end{eqnarray}
}

The blue term is the residue from the $k$-th epoch. The red term is the main component in the $(k-1)$-th epoch. Before calculating $\Ex_{k-1}(\cdot)$, we need to  convert all  the variable $x$ into  $x_{k-1,0}$ using Lipschitz property. First of all, we work on $(a)$.
 
 {\small
 \begin{eqnarray*}
   (a)&=&\sum_{l=1}^d \mathbb{I}_k \frac{\partial_l f(x_{k,0}) }{\sqrt{v_{k,0}}} 
    (1-\beta_1)J_1\sum_{i=0}^{n-1} \partial_l f_i(x_{k,0})
    \\
    &\overset{\text{Lemma \ref{lemma_indicator}}}{=}&     \sum_{l=1}^d \mathbb{I}_{k,k-1} \frac{\partial_l f(x_{k,0}) }{\sqrt{v_{k,0}}} 
    (1-\beta_1)J_1\sum_{i=0}^{n-1} \partial_l f_i(x_{k,0}) \\
    &&+ \sum_{l=1}^d \tilde{\mathbb{I}}_{k,k-1} \frac{\partial_l f(x_{k,0}) }{\sqrt{v_{k,0}}} 
    (1-\beta_1)J_1\sum_{i=0}^{n-1} \partial_l f_i(x_{k,0}) \\
    &\overset{\text{Lemma \ref{lemma_f1}}}{\geq}& \sum_{l=1}^d \mathbb{I}_{k,k-1} \frac{\partial_l f(x_{k,0}) }{\sqrt{v_{k,0}}} 
    (1-\beta_1)J_1\sum_{i=0}^{n-1} \partial_l f_i(x_{k,0}) -\sum_{l=1}^d \tilde{\mathbb{I}}_{k,k-1} \AAA
    (1-\beta_1)|J_1||\sum_{i=0}^{n-1} \partial_l f_i(x_{k,0})| \\
    &\geq&  \!\!\!\! \sum_{l=1}^d \mathbb{I}_{k,k-1} \frac{\partial_l f(x_{k,0}) }{\sqrt{v_{k,0}}} 
    (1-\beta_1)J_1\sum_{i=0}^{n-1} \partial_l f_i(x_{k,0}) -\sum_{l=1}^d \tilde{\mathbb{I}}_{k,k-1} \AAA
    (1-\beta_1)|J_1|\left(\sum_{i=0}^{n-1} |\partial_l f_i(x_{k-1,0})|  + n^2 \triangle_{n(k-1)} \right)   \\
    &\geq&  \sum_{l=1}^d \mathbb{I}_{k,k-1} \frac{\partial_l f(x_{k,0}) }{\sqrt{v_{k,0}}} 
    (1-\beta_1)J_1\sum_{i=0}^{n-1} \partial_l f_i(x_{k,0})
    \\
    &&
   -\sum_{l=1}^d \tilde{\mathbb{I}}_{k,k-1} \AAA
    (1-\beta_1)|J_1|\sum_{i=0}^{n-1} |\partial_l f_i(x_{k-1,0})|    - d\AAA
    (1-\beta_1)|J_1|
     n^2 \triangle_{n(k-1)}
    \end{eqnarray*}
        }
     {\small
    \begin{eqnarray*}
    &\overset{\text{Lemma \ref{lemma_f1} and \ref{lemma_delta}}}{\geq} &\sum_{l=1}^d \mathbb{I}_{k,k-1} \frac{\partial_l f(x_{k,0}) }{\sqrt{v_{k,0}}} 
    (1-\beta_1)J_1\sum_{i=0}^{n-1} \partial_l f_i(x_{k-1,0}) \\
    &&- (1-\beta_1)d \AAA |J_1| n^2 \triangle_{n(k-1)} 
    - (1-\beta_1)d \AAA |J_1| n (Q_k+Q_{k-1}) - (1-\beta_1)d \AAA |J_1| n^2 \triangle_{n(k-1)}  \\
    & \geq &\sum_{l=1}^d \mathbb{I}_{k,k-1} \frac{\partial_l f(x_{k-1,0}) }{\sqrt{v_{k-1,0}}} 
    (1-\beta_1)J_1\sum_{i=0}^{n-1} \partial_l f_i(x_{k-1,0}) \\
    && - \sum_{l=1}^d \mathbb{I}_{k,k-1}  \left|\frac{\partial_l f(x_{k-1,0}) }{\sqrt{v_{k-1,0}}} - \frac{\partial_l f(x_{k,0}) }{\sqrt{v_{k,0}}}  \right|
    (1-\beta_1)|J_1 | \left|\sum_{i=0}^{n-1} \partial_l f_i(x_{k-1,0})\right| \\
    &&-  2 (1-\beta_1)d \AAA |J_1| n^2 \triangle_{n(k-1)} 
    - (1-\beta_1)d \AAA |J_1| n (Q_k+Q_{k-1}) \\
              \end{eqnarray*}
        }
     {\small
    \begin{eqnarray*}
    &\overset{\text{Lemma \ref{lemma_k-k-1}}}{\geq}& \sum_{l=1}^d \mathbb{I}_{k,k-1} \frac{\partial_l f(x_{k-1,0}) }{\sqrt{v_{k-1,0}}} 
    (1-\beta_1)J_1\sum_{i=0}^{n-1} \partial_l f_i(x_{k-1,0}) \\
    && - \sum_{l=1}^d \mathbb{I}_{k,k-1}  
    (1-\beta_1)|J_1 | \left|\sum_{i=0}^{n-1} \partial_l f_i(x_{k-1,0})\right| \left( \frac{1}{1-\frac{1}{\sqrt{\beta_2^n}}} \frac{n^2\triangle_{n(k-1)}}{\sqrt{v_{k-1,0}}} +\sqrt{\frac{2\rho_3^2}{\beta_2^n}}  \frac{1}{\left(1- \frac{(1-\beta_2)4n \rho_2  }{\beta_2^n}\right) }  \delta_1\right)\\
    &&-  2(1-\beta_1)d \AAA |J_1| n^2 \triangle_{n(k-1)} 
    - (1-\beta_1)d \AAA |J_1| n(Q_k+Q_{k-1}) \\
\end{eqnarray*} }

 where  $\mathbb{I}_{k,k-j} := \mathbb{I}\left( \max_i |\partial_l f_i(x_{k,0})| \geq Q_k \text{ and } \max_i |\partial_l f_i(x_{k-j,0})|  \geq \sum_{p=k-j}^{k}Q_p   \right)$ and  $\tilde{\mathbb{I}}_{k,k-j} := \mathbb{I}\left( \max_i |\partial_l f_i(x_{k,0})| \geq Q_k \text{ and } \max_i |\partial_l f_i(x_{k-j,0})|  \leq \sum_{p=k-j}^{k}Q_p   \right).$
By Lemma \ref{lemma_indicator}, we know $\mathbb{I}_{k,k-1} = \mathbb{I}_{k-1}$, so we have:

{\small
 \begin{eqnarray*} 
 (a)   & \overset{\text{Lemma \ref{lemma_f1}}}{\geq} & \sum_{l=1}^d \mathbb{I}_{k-1} \frac{\partial_l f(x_{k-1,0}) }{\sqrt{v_{k-1,0}}} 
    (1-\beta_1)J_1\sum_{i=0}^{n-1} \partial_l f_i(x_{k-1,0}) \\
    && - d
    (1-\beta_1)|J_1 |  \frac{1}{1-\frac{1}{\sqrt{\beta_2^n}}} n^2\triangle_{n(k-1)}\sqrt{ \frac{2}{n \beta_2^n}} \\
    &&-   \sum_{l=1}^d
    (1-\beta_1)|J_1 |  \left|\sum_{i=0}^{n-1} \partial_l f_i(x_{k-1,0}) \right|
    \left(\sqrt{\frac{2\rho_3^2}{\beta_2^n}}  \frac{1}{\left(1- \frac{(1-\beta_2)4n \rho_2  }{\beta_2^n}\right) }  \delta_1\right)\\
    &&-  2(1-\beta_1)d \AAA |J_1| n^2 \triangle_{n(k-1)} 
    - (1-\beta_1)d \AAA |J_1| n (Q_k+Q_{k-1})
    \end{eqnarray*} }
    
    {\small 
    \begin{eqnarray*}
     & \overset{\text{Lemma \ref{lemma_delta}}}{\geq} & \BLUE{\sum_{l=1}^d \mathbb{I}_{k-1} \frac{\partial_l f(x_{k-1,0}) }{\sqrt{v_{k-1,0}}} 
    (1-\beta_1)J_1\sum_{i=0}^{n-1} \partial_l f_i(x_{k-1,0})} \\
    && - d
    (1-\beta_1)|J_1 |  \frac{1}{1-\frac{1}{\sqrt{\beta_2^n}}} n^2\triangle_{n(k-1)}\sqrt{ \frac{2}{n \beta_2^n}} \\
    &&-  \sum_{l=1}^d
    (1-\beta_1)|J_1 |  \left(\left|\sum_{i=0}^{n-1} \partial_l f_i(x_{k,0}) \right| + n^2\triangle_{n(k-1)}\right)
    \left(\sqrt{\frac{2\rho_3^2}{\beta_2^n}}  \frac{1}{\left(1- \frac{(1-\beta_2)4n \rho_2  }{\beta_2^n}\right) }   \delta_1\right)\\
    &&- 2(1-\beta_1)d \AAA |J_1| n^2 \triangle_{n(k-1)} 
    - (1-\beta_1)d \AAA |J_1| n (Q_k+Q_{k-1}). \\
 \end{eqnarray*} }
 
 The blue term will be handled using color-ball method when taking conditional expectation $\Ex_{k-1}(\cdot)$.
Now we derive a lower bound for (b). Similarly as before, we rewrite $\left(M_{l,k}^{\prime}\right)_{k-1}$  and $\left(F_{l,k}\right)_{k-1}$ as follows.

  $$
    \left(M_{l,k}^{\prime}\right)_{k-1}   \overset{ \text{Lemma \ref{lemma_delta}}}{\geq}   \mathscr{M}_{l,k-1} - (1-\beta_1)\beta_1 n^3 \triangle_{n(k-1)}     ;
  $$

  $$
  \left(F_{l,k}\right)_{k-1}  \overset{ \text{Lemma \ref{lemma_delta}}}{\geq}   \left(F_{l,k-1}\right)_{k-1}   -  (1-\beta_1)\beta_1^n n^3 \triangle_{n(k-1)};  
  $$

  where

\begin{eqnarray*}
  \mathscr{M}_{l,k-1} &:=&  (1-\beta_1) \{  \\
  && + \beta_1^n \partial_l f_{\tau_{k-1,n-1}}(x_{k-1,0})+\cdots + \beta_1^{n+n-1}\partial_l f_{\tau_{k-1,0}}(x_{k-1,0})  \nonumber\\
  && + \beta_1^{n-1} \partial_l f_{\tau_{k-1,n-1}}(x_{k-1,0})+\cdots + \beta_1^{n+n-2}\partial_l f_{\tau_{k-1,0}}(x_{k-1,0})  \nonumber\\
  && + \cdots  \nonumber\\
  && \underbrace{+ \beta_1 \partial_l f_{\tau_{k-1,n-1}}(x_{k-1,0})+\cdots + \beta_1^{n}\partial_l f_{\tau_{k-1,0}}(x_{k-1,0})  }_{k-1\text{-th epoch}} 
  \} 
\end{eqnarray*}

\begin{eqnarray*}
  \left(F_{l,k-1}\right)_{k-1}:= \sum_{i=0}^{n-1} \partial_l f_i(x_{k-1,0})&=&  (1-\beta_1)\beta_1^n \{   \nonumber\\
   &&\partial_l f_{n-1}(x_{k-1,0})+ \cdots + \beta_1^{n-1}\partial_l f_{n-1}(x_{k-1,0})  \nonumber\\
   &&+  \partial_l f_{n-2}(x_{k-1,0})+ \cdots+ \beta_1^{n-1}\partial_l f_{n-2}(x_{k-1,0})   \nonumber\\
   &&+ \cdots  \nonumber\\
   &&\underbrace{+\partial_l f_{0}(x_{k-1,0})+ \cdots+ \beta_1^{n-1}\partial_l f_{0}(x_{k-1,0})  }_{(k-1)\text{-th epoch}} 
    \}
 \end{eqnarray*}
 
 We now calculate $(b)$. Using the same idea as in $(a)$, we have
 
 {\small
 \begin{eqnarray*}
   (b) &= & \sum_{l=1}^d \mathbb{I}_k \frac{\partial_l f(x_{k,0}) }{\sqrt{v_{k,0}}} \left(
  \left(M_{l,k}^{\prime}\right)_{k-1}    -  \left(F_{l,k}\right)_{k-1}  \right) \\
  &\overset{\text{Lemma \ref{lemma_f1} and \ref{lemma_delta} }}{\geq} & \sum_{l=1}^d \mathbb{I}_k \frac{\partial_l f(x_{k,0}) }{\sqrt{v_{k,0}}} \left(
 \mathscr{M}_{l,k-1}   -  \left(F_{l,k-1}\right)_{k-1}  \right)- d\AAA (1-\beta_1)\left(\beta_1 +\beta_1^n \right) n^3 \triangle_{n(k-1)} \\
  &\overset{\text{Lemma \ref{lemma_indicator}  }}{=} & \sum_{l=1}^d \mathbb{I}_{k,k-1} \frac{\partial_l f(x_{k,0}) }{\sqrt{v_{k,0}}} \left(
 \mathscr{M}_{l,k-1}   -  \left(F_{l,k-1}\right)_{k-1}  \right) + \sum_{l=1}^d \tilde{\mathbb{I}}_{k,k-1} \frac{\partial_l f(x_{k,0}) }{\sqrt{v_{k,0}}} \left(
 \mathscr{M}_{l,k-1}   -  \left(F_{l,k-1}\right)_{k-1}  \right)\\
  &&- d\AAA (1-\beta_1)\left(\beta_1 +\beta_1^n \right) n^3 \triangle_{n(k-1)} \\
    &\overset{\text{Lemma \ref{lemma_f1}  }}{\geq} & \sum_{l=1}^d \mathbb{I}_{k,k-1} \frac{\partial_l f(x_{k,0}) }{\sqrt{v_{k,0}}} \left(
 \mathscr{M}_{l,k-1}   -  \left(F_{l,k-1}\right)_{k-1}  \right) -  \sum_{l=1}^d \tilde{ \mathbb{I}}_{k,k-1}  \AAA \left|
 \mathscr{M}_{l,k-1}   -  \left(F_{l,k-1}\right)_{k-1}  \right|\\
  &&- d\AAA (1-\beta_1)\left(\beta_1 +\beta_1^n \right) n^3 \triangle_{n(k-1)} \\
      &\overset{\text{Def of $\tilde{ \mathbb{I}}_{k,k-1}$ }}{\geq} & \sum_{l=1}^d \mathbb{I}_{k,k-1} \frac{\partial_l f(x_{k,0}) }{\sqrt{v_{k,0}}} \left(
 \mathscr{M}_{l,k-1}   -  \left(F_{l,k-1}\right)_{k-1}  \right) -  d \AAA(1-\beta_1)(\beta_1+\beta_1^n)n^2  \left( Q_k+Q_{k-1}\right)\\
  &&- d\AAA (1-\beta_1)\left(\beta_1 +\beta_1^n \right) n^3 \triangle_{n(k-1)} \\
  &\overset{\text{Lemma \ref{lemma_k-k-1} }}{\geq} & \sum_{l=1}^d \mathbb{I}_{k,k-1} \frac{\partial_l f(x_{k-1,0}) }{\sqrt{v_{k-1,0}}} \left(
 \mathscr{M}_{l,k-1} -  \left(F_{l,k-1}\right)_{k-1}  \right) \\
  &&-  \sum_{l=1}^d\left(\frac{1}{1-\frac{1}{\sqrt{\beta_{2}^{n}}}} \frac{n^2 \triangle_{n(k-1)}}{\sqrt{v_{k-1,0}}}+\sqrt{\frac{2 \rho_{3}^{2}}{\beta_{2}^{n}}} \frac{1}{\left(1- \frac{(1-\beta_2)4n \rho_2  }{\beta_2^n}\right) } 
 \delta_{1}\right) \left|
 \mathscr{M}_{l,k-1} -  \left(F_{l,k-1}\right)_{k-1}  \right|  \\ 
  && -  d \AAA(1-\beta_1)(\beta_1+\beta_1^n)n^2  \left( Q_k+Q_{k-1}\right)- d\AAA (1-\beta_1)\left(\beta_1 +\beta_1^n \right) n^3 \triangle_{n(k-1)} \\
 \end{eqnarray*} }

To proceed, we derive an upper bound for $\left|
 \mathscr{M}_{l,k-1} -  \left(F_{l,k-1}\right)_{k-1}  \right|$.  

\begin{eqnarray*}
  \left|
  \mathscr{M}_{l,k-1}   -  \left(F_{l,k-1}\right)_{k-1}  \right| &\leq &   \left|
  \mathscr{M}_{l,k-1} \right|  +\left|  \left(F_{l,k-1}\right)_{k-1}  \right| \\
  &\leq & (1-\beta_1) \beta_1 n \sum_{i=0}^{n-1} \left|\partial_l f_i(x_{k-1,0})   \right| + (1-\beta_1) \beta_1^n n \sum_{i=0}^{n-1} \left|\partial_l f_i(x_{k-1,0})   \right| 
\end{eqnarray*}

Therefore, we have:

{\small
\begin{eqnarray*}
  (b)   &\geq & \sum_{l=1}^d \mathbb{I}_{k,k-1} \frac{\partial_l f(x_{k-1,0}) }{\sqrt{v_{k-1,0}}} \left(
  \mathscr{M}_{l,k-1}  -  \left(F_{l,k-1}\right)_{k-1}  \right) \\
  &&-  \sum_{l=1}^d\left(\frac{1}{1-\frac{1}{\sqrt{\beta_{2}^{n}}}} \frac{n^2 \triangle_{n(k-1)}}{\sqrt{v_{k-1,0}}}+\sqrt{\frac{2 \rho_{3}^{2}}{\beta_{2}^{n}}} \frac{1}{\left(1- \frac{(1-\beta_2)4n \rho_2  }{\beta_2^n}\right) }  \delta_{1}\right)(1-\beta_1) (\beta_1+\beta_1^n) n \sum_{i=0}^{n-1} \left|\partial_l f_i(x_{k-1,0})   \right|   \\ 
  && -  d \AAA(1-\beta_1)(\beta_1+\beta_1^n)n^2  \left( Q_k+Q_{k-1}\right)- d\AAA (1-\beta_1)\left(\beta_1 +\beta_1^n \right) n^3 \triangle_{n(k-1)}  \end{eqnarray*} }
    
    {\small 
    \begin{eqnarray*}
   &\overset{\text{Lemma \ref{lemma_f1} and \ref{lemma_indicator}}}{\geq} & \RED{ \sum_{l=1}^d \mathbb{I}_{k-1} \frac{\partial_l f(x_{k-1,0}) }{\sqrt{v_{k-1,0}}} \left(
  \mathscr{M}_{l,k-1}   -  \left(F_{l,k-1}\right)_{k-1}  \right)} \\
   &&- d\left(\frac{1}{1-\frac{1}{\sqrt{\beta_{2}^{n}}}} 
   \sqrt{ \frac{2}{n \beta_2^n}}
   \right)(1-\beta_1) (\beta_1+\beta_1^n) n^3   \triangle_{n(k-1)} \\ 
  &&-  \sum_{l=1}^d\sqrt{\frac{2 \rho_{3}^{2}}{\beta_{2}^{n}}} \frac{1}{\left(1- \frac{(1-\beta_2)4n \rho_2  }{\beta_2^n}\right) }  \delta_{1}(1-\beta_1) (\beta_1+\beta_1^n) n \sum_{i=0}^{n-1} \left(\left|\partial_l f_i(x_{k-1,0})   \right|  \right)  \\ 
  && -  d \AAA(1-\beta_1)(\beta_1+\beta_1^n)n^2  \left( Q_k+Q_{k-1}\right)- d\AAA (1-\beta_1)\left(\beta_1 +\beta_1^n \right) n^3 \triangle_{n(k-1)} \\
\end{eqnarray*}
}

The red term will be handled using color-ball method when taking conditional expectation $Ex_{k-1}(\cdot)$. Now we have derived lower bounds for both $(a)$ and $(b)$. Combining together, we have:

{\small
\begin{eqnarray*}
  &&\Ex_{k-1} \left\{\Ext  \left[ \BLUE{\sum_{l=1}^d \mathbb{I}_k \frac{\partial_l f(x_{k,0}) }{\sqrt{v_{k,0}}} \left( \left(M_{l,k}^{\prime}\right)_k-   \left(F_{l,k}\right)_k  \right)} + \RED{\sum_{l=1}^d \mathbb{I}_k \frac{\partial_l f(x_{k,0}) }{\sqrt{v_{k,0}}} \left(
  \left(M_{l,k}^{\prime}\right)_{k-1}     -  \left(F_{l,k}\right)_{k-1}  \right) } \right] \right\} \nonumber \\
  &\overset{\eqref{eq_cancel_k-1}}{\geq}& 
  \Ex_{k-1} \left\{ \underbrace{\BLUE{ \sum_{l=1}^d \mathbb{I}_k \frac{\partial_l f(x_{k,0}) }{\sqrt{v_{k,0}}} 
    (1-\beta_1)J_1\sum_{i=0}^{n-1} \partial_l f_i(x_{k,0}) }}_{(a)}+  \underbrace{ \RED{\sum_{l=1}^d \mathbb{I}_k \frac{\partial_l f(x_{k,0}) }{\sqrt{v_{k,0}}} \left(
  \left(M_{l,k}^{\prime}\right)_{k-1}     -  \left(F_{l,k}\right)_{k-1}  \right) }}_{(b)}  \right\} \nonumber\\
  &&-  d\AAA\frac{(1-\beta_1)n^2 (n-1)\triangle_{nk}}{2} \\
  &\overset{}{\geq}& 
  \Ex_{k-1} \left\{\BLUE{ \sum_{l=1}^d \mathbb{I}_{k-1} \frac{\partial_l f(x_{k-1,0}) }{\sqrt{v_{k-1,0}}} 
    (1-\beta_1)J_1\sum_{i=0}^{n-1} \partial_l f_i(x_{k-1,0}) } +  \RED{ \sum_{l=1}^d \mathbb{I}_{k-1} \frac{\partial_l f(x_{k-1,0}) }{\sqrt{v_{k-1,0}}} \left(
  \mathscr{M}_{l,k-1}   -  \left(F_{l,k-1}\right)_{k-1}  \right)}  \right\}  \nonumber\\
  &&\BLUE{- d
    (1-\beta_1)|J_1 |  \frac{1}{1-\frac{1}{\sqrt{\beta_2^n}}} n^2\triangle_{n(k-1)}\sqrt{ \frac{2}{n \beta_2^n}}  }\\
    && \BLUE{-  \sum_{l=1}^d
    (1-\beta_1)|J_1 |  \left(\left|\sum_{i=0}^{n-1} \partial_l f_i(x_{k,0}) \right| + n^2\triangle_{n(k-1)}\right)
    \left(\sqrt{\frac{2\rho_3^2}{\beta_2^n}}  \frac{1}{\left(1- \frac{(1-\beta_2)4n \rho_2  }{\beta_2^n}\right) }   \delta_1\right) }\\
    &&\BLUE{- 2(1-\beta_1)d \AAA |J_1| n^2 \triangle_{n(k-1)} 
    -  (1-\beta_1)d \AAA |J_1| n (Q_k+Q_{k-1})} \\
    && \RED{- d\left(\frac{1}{1-\frac{1}{\sqrt{\beta_{2}^{n}}}} 
   \sqrt{ \frac{2}{n \beta_2^n}}
   \right)(1-\beta_1) (\beta_1+\beta_1^n) n^2   \triangle_{n(k-1)} }\\ 
  &&\RED{-  \sum_{l=1}^d\sqrt{\frac{2 \rho_{3}^{2}}{\beta_{2}^{n}}} \frac{1}{\left(1- \frac{(1-\beta_2)4n \rho_2  }{\beta_2^n}\right) }  \delta_{1}(1-\beta_1) (\beta_1+\beta_1^n) n \left(\sum_{i=0}^{n-1} \left|\partial_l f_i(x_{k,0})   \right|  +n^3 \triangle_{n(k-1)}\right) } \\ 
  && \RED{-  d \AAA(1-\beta_1)(\beta_1+\beta_1^n)n^2  \left( Q_k+Q_{k-1}\right)- d\AAA (1-\beta_1)\left(\beta_1 +\beta_1^n \right) n^3 \triangle_{n(k-1)} }\\
    &&-  d\AAA\frac{(1-\beta_1)n^2 (n-1)\triangle_{nk}}{2} 
 \end{eqnarray*}
 }

For the first term in the above inequality, we can calculate it using the idea in the color-ball toy example:

{\small
\begin{eqnarray*}
   &&\Ex_{k-1} \left\{\BLUE{ \sum_{l=1}^d \mathbb{I}_{k-1} \frac{\partial_l f(x_{k-1,0}) }{\sqrt{v_{k-1,0}}} 
    (1-\beta_1)J_1\sum_{i=0}^{n-1} \partial_l f_i(x_{k-1,0}) } +  \RED{ \sum_{l=1}^d \mathbb{I}_{k-1} \frac{\partial_l f(x_{k-1,0}) }{\sqrt{v_{k-1,0}}} \left(
  \mathscr{M}_{l,k-1}   -  \left(F_{l,k-1}\right)_{k-1}  \right)}  \right\}  \\
  &=&\sum_{l=1}^d \mathbb{I}_{k-1} \frac{\partial_l f(x_{k-1,0}) }{\sqrt{v_{k-1,0}}} 
    (1-\beta_1) \beta_1^n J_1\sum_{i=0}^{n-1} \partial_l f_i(x_{k-1,0}). 
\end{eqnarray*}
}

To proceed, we further take $\Ex_{k-2} (\cdot)$ and bound  

{\small
  $$\Ex_{k-2} \left\{ \underbrace{\BLUE{ \sum_{l=1}^d \mathbb{I}_{k-1} \frac{\partial_l f(x_{k-1,0}) }{\sqrt{v_{k-1,0}}} 
    (1-\beta_1)\beta_1^n J_1\sum_{i=0}^{n-1} \partial_l f_i(x_{k-1,0}) }}_{(a)}+  \underbrace{ \RED{\sum_{l=1}^d \mathbb{I}_k \frac{\partial_l f(x_{k,0}) }{\sqrt{v_{k,0}}} \left(
  \left(M_{l,k}^{\prime}\right)_{k-2}     -  \left(F_{l,k}\right)_{k-2}  \right) }}_{(b)}  \right\}. $$}
  
  Repeat this process until $k=1$, we have:
  
\begin{eqnarray*}
     && \Ex \left[ \sum_{l \text{ large}} \frac{\partial_l f(x_{k,0}) }{\sqrt{v_{k,0}}} \left( \left(M_{l,k}^{\prime}\right)_k - \left(F_{l,k}\right)_k +  \cdots+ \left(M_{l,k}^{\prime}\right)_1- \left(F_{l,k}\right)_1  \right)   \right]  \\
      &\geq& (1-\beta_1)\beta_1^{kn} J_1 \Ex \left[\sum_{l=1}^d \mathbb{I}_1 \frac{\partial_l f(x_{1,0}) }{\sqrt{v_{1,0}}} 
   \sum_{i=0}^{n-1} \partial_l f_i(x_{1,0})\right] \\
   &&+\BLUE{ \text{Error}_1 } + \RED{\text{Error}_2 } +\text{Error}_3 \\
    &\overset{(*)}{\geq}& -   \frac{\beta_1^n}{\sqrt{k-1}}(1-\beta_1)\left|J_1 \Ex \left[\sum_{l=1}^d \mathbb{I}_1 \frac{\partial_l f(x_{1,0}) }{\sqrt{v_{1,0}}} 
   \sum_{i=0}^{n-1} \partial_l f_i(x_{1,0})\right]  \right|\\
   &&+\BLUE{ \text{Error}_1 } + \RED{\text{Error}_2 } +\text{Error}_3
\end{eqnarray*}

 where $(*)$ holds for large $k$ such that $\beta^{(k-1)n} \leq \frac{\beta_1^n}{\sqrt{k-1}}$.
We specify $\BLUE{ \text{Error}_1 },  \RED{\text{Error}_2 }, \text{Error}_3$ as follows.

{\small
\begin{eqnarray*}
  \BLUE{ \text{Error}_1 } &= &  \BLUE{- d
    (1-\beta_1)|J_1 |  \frac{1}{1-\frac{1}{\sqrt{\beta_2^n}}} n^2\sqrt{ \frac{2}{n \beta_2^n}}  \left(\sum_{j=0}^{\infty}  \beta_1^{jn}\triangle_{n(k-j)}  \right) }\\
    && \BLUE{-  \sum_{l=1}^d
    (1-\beta_1)|J_1 |  \left(\left|\sum_{i=0}^{n-1} \partial_l f_i(x_{k,0}) \right| \right)
    \left(\sqrt{\frac{2\rho_3^2}{\beta_2^n}}  \frac{1}{\left(1- \frac{(1-\beta_2)4n \rho_2  }{\beta_2^n}\right) }   \delta_1\right) \left(\sum_{j=0}^{\infty}  \beta_1^{jn} (j+1)  \right)}   \\
    && \BLUE{-  d
    (1-\beta_1) \triangle_{n(k-1)}|J_1 |  n^2
    \left(\sqrt{\frac{2\rho_3^2}{\beta_2^n}}  \frac{1}{\left(1- \frac{(1-\beta_2)4n \rho_2  }{\beta_2^n}\right) }   \delta_1\right)\left(\sum_{j=0}^{\infty}  \beta_1^{jn} (j+1)^2 \right) }\\
    &&\BLUE{- 2(1-\beta_1)d \AAA |J_1| n^2 \left(\sum_{j=0}^{\infty}  \beta_1^{jn} \triangle_{n(k-1-j)}  \right)} \\
    && \BLUE{- (1-\beta_1)d \AAA |J_1| n \left[ Q_k +Q_{k-1} + \beta_1^n(Q_k+Q_{k-1} + Q_{k-2})+     \cdots   \right]} 
\end{eqnarray*}
}

Since $\delta_2 = \lim_{k \rightarrow \infty} \sum_{j=1}^{k-1} (\beta_1^n)^j \sqrt{\frac{k}{k-j}}$ is a finite constant, we have $ \sum_{j=0}^{\infty}  \beta_1^{jn}\triangle_{n(k-j)}  = \frac{\triangle_{1} \delta_2}{\sqrt{nk}}$. In addition, we have

\begin{eqnarray*}
 && \left[Q_k + Q_{k-1}+ \beta_1^n(Q_k+Q_{k-1}+Q_{k-2})+  \cdots \right] \\
  & \overset{(*)}{ \leq}&  Q_k (1+ \beta_1^n + \cdots) \\
  && + Q_{k-1} + \beta_1^n (Q_{k-1} + Q_{k-2}) + \beta_1^{2n} (Q_{k-1} + Q_{k-2} + Q_{k-3})+\cdots \\
  &\overset{(*)}{\leq}& Q_k (1+ \beta_1^n + \cdots)
  +2Q_{k-1}+ 2\left(\beta_1^n 2Q_{k-1}+ \beta_1^{2n} 3Q_{k-1} +\cdots\right)  \\
  &\overset{\text{Lemma \ref{lemma_beta}}}{ \leq}&
  Q_k \frac{1}{(1-\beta_1^n)} + 
  2Q_{k-1}  \frac{1}{(1-\beta_1^n)^2} \\
  &\leq & 5 Q_k \frac{1}{(1-\beta_1^n)^2}
\end{eqnarray*}

where $(*)$ uses the following fact:  consider integers $k>J>0$, we have $\sum_{j=1}^{J} \frac{1}{\sqrt{k-j}} \leq 2 \frac{J}{\sqrt{k-1}} $.  This inequality can be simply proved by taking the integral over $\frac{1}{\sqrt{k}}$.  The final inequality is due to $Q_{k-1}< 2Q_{k}$.
Now, we have


\begin{eqnarray*}
   \BLUE{ \text{Error}_1 } &\geq &   \BLUE{- d
    (1-\beta_1)|J_1 |  \frac{1}{1-\frac{1}{\sqrt{\beta_2^n}}} n^2\sqrt{ \frac{2}{n \beta_2^n}}  \frac{\triangle_{1} \delta_2}{\sqrt{nk}}}\\
    && \BLUE{-  \sum_{l=1}^d
    (1-\beta_1)|J_1 |  \left(\left|\sum_{i=0}^{n-1} \partial_l f_i(x_{k,0}) \right| \right)
    \left(\sqrt{\frac{2\rho_3^2}{\beta_2^n}}  \frac{1}{\left(1- \frac{(1-\beta_2)4n \rho_2  }{\beta_2^n}\right) }   \delta_1\right) \frac{1}{(1-\beta_1^n)^2}}   \\
    && \BLUE{- 2 d
    (1-\beta_1) |J_1 |  n^2
    \left(\sqrt{\frac{2\rho_3^2}{\beta_2^n}}  \frac{1}{\left(1- \frac{(1-\beta_2)4n \rho_2  }{\beta_2^n}\right) }   \delta_1\right)\frac{1+\beta_1^n}{(1-\beta_1^n)^3} \frac{\triangle_{1}}{\sqrt{n(k-1)}}} \\
    &&\BLUE{- (1-\beta_1)d \AAA |J_1| n^2 \frac{\triangle_{1} \delta_2}{\sqrt{n(k-1)}}}  \BLUE{- (1-\beta_1)d \AAA |J_1| n  5 Q_k  \frac{1}{(1-\beta_1^n)^2}} 
\end{eqnarray*}

\begin{eqnarray*}
 \RED{\text{Error}_2}& = & \RED{- d\left(\frac{1}{1-\frac{1}{\sqrt{\beta_{2}^{n}}}} 
   \sqrt{ \frac{2}{n \beta_2^n}}
   \right)(1-\beta_1) (\beta_1+\beta_1^n) n^3   \left(\sum_{j=0}^{\infty}  \beta_1^{jn} \triangle_{n(k-1-j)}  \right)   }\\ 
  &&\RED{-  \sum_{l=1}^d\sqrt{\frac{2 \rho_{3}^{2}}{\beta_{2}^{n}}} \frac{1}{\left(1- \frac{(1-\beta_2)4n \rho_2  }{\beta_2^n}\right) }  \delta_{1}(1-\beta_1) (\beta_1+\beta_1^n) n \left(\sum_{i=0}^{n-1} \left|\partial_l f_i(x_{k,0})   \right| \right) \left(\sum_{j=0}^{\infty}(j+1)\beta_1^{nj} \right) } \\ 
    &&\RED{-  d\sqrt{\frac{2 \rho_{3}^{2}}{\beta_{2}^{n}}} \frac{1}{\left(1- \frac{(1-\beta_2)4n \rho_2  }{\beta_2^n}\right) }  \delta_{1}(1-\beta_1) (\beta_1+\beta_1^n) n^3\triangle_{n(k-1)}  \left( \sum_{j=0}^{\infty} (j+1)^2\beta_1^{nj}   \right) } \\ 
  && \RED{-  d \AAA(1-\beta_1)(\beta_1+\beta_1^n)n^2  \left(  (Q_k+Q_{k-1}) + \beta_1^n (Q_k+Q_{k-1}+ Q_{k-2})  +\cdots \right) }\\
  &&\RED{- d\AAA (1-\beta_1)\left(\beta_1 +\beta_1^n \right) n^3 \left(\sum_{j=0}^{\infty}  \beta_1^{jn} \triangle_{n(k-1-j)}  \right) }
\end{eqnarray*}

Based on the calculation in Lemma \ref{lemma_beta}, we have

\begin{eqnarray*}
 \RED{\text{Error}_2}& \geq & \RED{- d\left(\frac{1}{1-\frac{1}{\sqrt{\beta_{2}^{n}}}} 
   \sqrt{ \frac{2}{n \beta_2^n}}
   \right)(1-\beta_1) (\beta_1+\beta_1^n) n^3   \frac{\triangle_1}{\sqrt{n}} \frac{\delta_2}{\sqrt{k}}  }\\ 
  &&\RED{-  \sum_{l=1}^d\sqrt{\frac{2 \rho_{3}^{2}}{\beta_{2}^{n}}} \frac{1}{\left(1- \frac{(1-\beta_2)4n \rho_2  }{\beta_2^n}\right) }  \delta_{1}(1-\beta_1) (\beta_1+\beta_1^n) n \left(\sum_{i=0}^{n-1} \left|\partial_l f_i(x_{k,0})   \right| \right) \frac{1}{(1-\beta_1^n)^2}} \\ 
    &&\RED{-  d\sqrt{\frac{2 \rho_{3}^{2}}{\beta_{2}^{n}}} \frac{1}{\left(1- \frac{(1-\beta_2)4n \rho_2  }{\beta_2^n}\right) }  \delta_{1}(1-\beta_1) (\beta_1+\beta_1^n) n^3 \frac{1+\beta_1^n}{(1-\beta_1^n)^3} \frac{\triangle_{1} \delta_2}{\sqrt{n(k-1)}} }  \\ 
  && \RED{-  d \AAA(1-\beta_1)(\beta_1+\beta_1^n)n^2  2Q_k \frac{1}{\beta_1^n(1-\beta_1^n)^2} }\\
  &&\RED{- d\AAA (1-\beta_1)\left(\beta_1 +\beta_1^n \right) n^3 \frac{\triangle_1}{\sqrt{n}} \frac{\delta_2}{\sqrt{k}}. }
\end{eqnarray*}

\begin{equation*}
 \text{Error}_3 =  -  d\AAA(1-\beta_1)n^3  \left(\sum_{j=0}^{\infty}  \beta_1^{jn}\triangle_{n(k-j)}  \right) =-\frac{d\AAA(1-\beta_1)n^3}{\sqrt{k}} \frac{\triangle_{1}}{\sqrt{n}} \delta_2,
\end{equation*}

Since  $\frac{1-\beta_1}{1-\beta_1^n} \leq 1$  and $|J_1| \leq n$, we have

\begin{eqnarray*}
  &&\BLUE{  
    |J_1 |  
    \left(\sqrt{\frac{2\rho_3^2}{\beta_2^n}}  \frac{1}{\left(1- \frac{(1-\beta_2)4n \rho_2  }{\beta_2^n}\right) }   \right) \frac{1-\beta_1}{(1-\beta_1^n)^2}} \RED{+ \sqrt{\frac{2 \rho_{3}^{2}}{\beta_{2}^{n}}} \frac{1}{\left(1- \frac{(1-\beta_2)4n \rho_2  }{\beta_2^n}\right) }  \frac{ 2 n (1-\beta_1)}{(1-\beta_1^n)^2}} \\
    &\leq&     \left(\sqrt{\frac{2\rho_3^2}{\beta_2^n}}  \frac{3n}{\left(1-\frac{1-\beta_{2}}{2}\left(-1+\frac{4 \rho_{2} n}{\beta_{2}^{n}}\right)\right) }  \right)\frac{1}{(1-\beta_1^n)}.
\end{eqnarray*}

Using the fact that
 $\frac{1}{\sqrt{k-1}} \leq \frac{\sqrt{2}}{\sqrt{k}}$ (for $k \geq 2$), we have:

{\small
\begin{eqnarray*}
      &&\Ex \left[ \sum_{l \text{ large}} \frac{\partial_l f(x_{k,0}) }{\sqrt{v_{k,0}}} \left( \left(M_{l,k}^{\prime}\right)_k - \left(F_{l,k}\right)_k +  \cdots+ \left(M_{l,k}^{\prime}\right)_1- \left(F_{l,k}\right)_1  \right)   \right]  \\
      &\geq& 
    -   \frac{\beta_1^n}{\sqrt{k-1}}(1-\beta_1)\left|J_1 \Ex \left[\sum_{l=1}^d \mathbb{I}_1 \frac{\partial_l f(x_{1,0}) }{\sqrt{v_{1,0}}} 
   \sum_{i=0}^{n-1} \partial_l f_i(x_{1,0})\right]  \right|\\
   &&+\BLUE{ \text{Error}_1 } + \RED{\text{Error}_2 } +\text{Error}_3 \\
    &\geq & -\frac{G_6}{\sqrt{k}} - G_7 \delta_1 \Ex\left(\sum_{l=1}^d\sum_{i=0}^{n-1} \left|\partial_l f_i(x_{k,0})   \right| \right),
\end{eqnarray*}
}
where 

\begin{eqnarray*}
  G_6 &:=&  \beta_1^n(1-\beta_1) \sqrt{2}\left|J_1 \right| \AAA \Ex \left[\sum_{l=1}^d  \left|\mathbb{I}_1  
   \sum_{i=0}^{n-1} \partial_l f_i(x_{1,0}) \right| \right]  +\frac{\triangle_{1}}{\sqrt{n}} \delta_2 d\AAA(1-\beta_1)n^3 \\
    &&\BLUE{+ d
    (1-\beta_1)|J_1 |  \frac{1}{1-\frac{1}{\sqrt{\beta_2^n}}} n^2\sqrt{ \frac{2}{n \beta_2^n}}  \frac{\triangle_{1} \delta_2}{\sqrt{n}}}\\
    && \BLUE{+  d
    (1-\beta_1) |J_1 |  n^2
    \left(\sqrt{\frac{2\rho_3^2}{\beta_2^n}}  \frac{1}{\left(1- \frac{(1-\beta_2)4n \rho_2  }{\beta_2^n}\right) }   \delta_1\right)\frac{1+\beta_1^n}{(1-\beta_1^n)^3} \frac{\triangle_{1}\sqrt{2}}{\sqrt{n}}} \\
    &&\BLUE{+ 2(1-\beta_1)d \AAA |J_1| n^2 \frac{\triangle_{1} \delta_2 \sqrt{2}}{\sqrt{n}}}  \BLUE{- 5(1-\beta_1)d \AAA |J_1| n  \frac{1}{(1-\beta_1^n)^2}\Delta_{1} n \sqrt{n} \frac{32 \sqrt{2}}{\left(1-\beta_{2}\right)^{n} \beta_{2}^{n}}  }  \\
    &&\RED{+ d\left(\frac{1}{1-\frac{1}{\sqrt{\beta_{2}^{n}}}} 
   \sqrt{ \frac{2}{n \beta_2^n}}
   \right)(1-\beta_1) (\beta_1+\beta_1^n) n^3   \frac{\triangle_1}{\sqrt{n}} \delta_2  }\\ 
    &&\RED{+  d\sqrt{\frac{2 \rho_{3}^{2}}{\beta_{2}^{n}}} \frac{1}{\left(1- \frac{(1-\beta_2)4n \rho_2  }{\beta_2^n}\right) }  \delta_{1}(1-\beta_1) (\beta_1+\beta_1^n) n^3 \frac{1+\beta_1^n}{(1-\beta_1^n)^3} \frac{\triangle_{1} \delta_2 \sqrt{2}}{\sqrt{n}} }  \\ 
  && \RED{+  2d \AAA(1-\beta_1)(\beta_1+\beta_1^n)n^2   \frac{1}{\beta_1^n(1-\beta_1^n)^2} \Delta_{1} n \sqrt{n} \frac{32 \sqrt{2}}{\left(1-\beta_{2}\right)^{n} \beta_{2}^{n}}  }\\
  &&\RED{+ d\AAA (1-\beta_1)\left(\beta_1 +\beta_1^n \right) n^3 \frac{\triangle_1}{\sqrt{n}} \delta_2, }
\end{eqnarray*}

\begin{eqnarray*}
  G_7 &:=& 
    \left(\sqrt{\frac{2\rho_3^2}{\beta_2^n}}  \frac{3n}{\left(1-\frac{1-\beta_{2}}{2}\left(-1+\frac{4 \rho_{2} n}{\beta_{2}^{n}}\right)\right) }  \right)\frac{1}{(1-\beta_1^n)} 
\end{eqnarray*}

 where $J_1 = \left(-\frac{1}{n} \beta_{1}-\frac{2}{n} \beta_{1}^{2}-\cdots-\frac{n-1}{n} \beta_{1}^{n-1}\right)$,  $\delta_2 = \lim_{k \rightarrow \infty} \sum_{j=1}^{k-1} (\beta_1^n)^j \sqrt{\frac{k}{k-j}}$ (if needed, we can further bound $J_1$ by $n$ for simplicity).
 This conclude the  proof.

\subsection{Proof of Lemma \ref{lemma_da2_db2}}
\label{appendix_lemma_da2_db2}

We now derive upper bounds for $ \sum_{l \text{ large }} (a_2)  $ and $\sum_{l \text{ large }} (b_2) $. The upper bound for $ \sum_{l \text{ large }} (a_2) $ is very straightforward using inequality \eqref{eq_fi_f_2}.

  \begin{eqnarray*}
    \sum_{l \text{ large }} (a_2)  =   \delta_{1} \sqrt{\frac{2\rho_3^2}{\beta_2^n} }\sum_{l \text{ large }}\sum_{i=0}^{n-1}\left|\partial_{l} f_i\left(x_{k, 0} \right)\right| \overset{\eqref{eq_fi_f_2}}{\leq}  \delta_{1} \sqrt{\frac{2\rho_3^2}{\beta_2^n} } \sqrt{D_{1}} \rho_{1} d\left(\left|\partial_{\alpha} f\left(x_{k, 0}\right)\right|+\sqrt{\frac{D_{0}}{D_{1} d}}\right).
  \end{eqnarray*}

Now we shift gear to $\sum_{l \text{ large }} (b_2):=   \delta_1\sqrt{\frac{2\rho_3^2}{\beta_2^n} }\sum_{l \text{ large }} \sum_{i=0}^{n-1}|\mlki- \partial_{l} f_i(\xkz)|  $. To proceed, we need an upper bound for $ \sum_{l \text{ large }} \sum_{i=0}^{n-1}|\mlki- \partial_{l} f_i(\xkz)|$.  For each $i$, we perform the following decomposition.

  \begin{eqnarray}
    \sum_{l \text{ large }}|\mlki- \partial_{l} f_i(\xkz)|  &\leq & \underbrace{ \sum_{l=1}^d  |(1-\beta_1) \left[\beta_1^i \partial_l f_{\tau_{k,0}}(x_{k,0})+ \cdots+ \partialtauki (x_{k,i})\right] -\partial_{l} f_i(\xkz) |}_{(d_1)}   \nonumber \\
    &&+ \underbrace{ \sum_{l=1}^d  \left| \beta_1^{i+1} m_{l,k-1,n-1}\right| }_{(d_2)}. \label{eq_mlki-f}
  \end{eqnarray}
    
  To start, we bound $(d_1)$. 
  



  \begin{eqnarray}
    (d_1) &\leq & \sum_{l=1}^d \left\{ |\partial_l f_{\tau_{k,0}}(x_{k,0})|+ |\partial_l f_{\tau_{k,1}}(x_{k,1})|+\cdots+ |\partialtauki (x_{k,i})| +|\partial_{l} f_i(x_{k,0}) | \right\}  \nonumber \\
    &\leq & \sum_{l=1}^d \left\{ |\partial_l f_{\tau_{k,0}}(x_{k,0})|+ |\partial_l f_{\tau_{k,1}}(x_{k,1})|+\cdots+ |\partial_l f_{\tau_{k,n-1}}(x_{k,n-1})| +|\partial_{l} f_i(x_{k,0}) | \right\}  \nonumber \\
    &\leq & \sum_{l=1}^d \sum_{i=0}^{n-1} \bigm\{  |\partial_{l} f_i(x_{k,0}) |  \bigm\} + \sum_{l=1}^d |\partial_{l} f_i(x_{k,0}) | 
    + d\triangle_{nk} + \cdots + n d\triangle_{nk}   \nonumber \\
    &\overset{\eqref{eq_fi_f_2}}{ \leq } &  
    2\sqrt{D_{1}} \rho_{1} d\left(\left|\partial_{\alpha} f\left(x_{k, 0}\right)\right|+\sqrt{\frac{D_{0}}{D_{1} d}}\right) +\frac{n(n+1)d}{2}  \triangle_{nk}  \nonumber \\
    & =&    2\sqrt{D_{1}} \rho_{1} d\left(\left|\partial_{\alpha} f\left(x_{k, 0}\right)\right|+\sqrt{\frac{D_{0}}{D_{1} d}}\right)   +\frac{(n+1)\sqrt{n}d}{2\sqrt{k}}  \triangle_{1} .
    \label{eq_d1}
  \end{eqnarray}

  Now, we bound $(d_2)$. Recall $\sum_{i=0}^{n-1} \partial_l f_i(x_{1,0}) =\partial_l f(x_{1,0}) $, we have 

  \begin{eqnarray}
    |m_{l,k-1,n-1}| &\leq& (1-\beta_1) \left[   |\partial_l f_{\tau_{k-1,n-1}} (x_{k-1,n-1})|  +\beta_1  |\partial_l f_{\tau_{k-1,n-2}} (x_{k-1,n-2})|  +\cdots \right] \nonumber\\
    & &  +\beta_1^{(k-1)n} \sum_{i=0}^{n-1} |\partial_l f_i(x_{1,0}) | \label{eq_mlk-1n-1}
  \end{eqnarray}
  
  Note that for any $i\in [0,n-1]$, $j\in [0,n-1]$, $t\in [1,k-1]$, we have the following result.

  \begin{eqnarray}
    |\partial_l f_i(x_{k-t,j})| &\leq &  |\partial_l f_i(x_{k,0})|  +  |\partial_l f_i(x_{k,0}) -  \partial_l f_i(x_{k-t,j}) |  \nonumber \\
    & \overset{\text{Lemma }\ref{lemma_delta}}{\leq}&    |\partial_l f_i(x_{k,0})|  +(n-j)\triangle_{(k-t)n} + n \triangle_{(k-t+1)n} + \cdots + n \triangle_{(k-1)n} \nonumber \\
    &\leq & |\partial_l f_i(x_{k,0})| + \triangle_{(k-1)n}+ \triangle_{(k-1)n-1} +\cdots +\triangle_{(k-1)n- [(n-j)+(t-1)n-1]} \nonumber \\
    &\leq & |\partial_l f_i(x_{k,0})| + \triangle_{(k-1)n-1}+ \triangle_{(k-1)n-2} +\cdots +\triangle_{(k-1)n- [(n-j)+(t-1)n]} \nonumber \\
    &\overset{(*)}{\leq}& |\partial_l f_i(x_{k,0})| + \frac{2[(n-j)+(t-1)n] \triangle_1}{ \sqrt{n(k-1)-1}}, \label{eq_fk-tj}
  \end{eqnarray}
  
  where $(*)$ uses the following fact:  consider integers $k>J>0$, we have $\sum_{j=1}^{J} \frac{1}{\sqrt{k-j}} \leq 2 \frac{J}{\sqrt{k-1}} $.



Plugging  \eqref{eq_fk-tj} into \eqref{eq_mlk-1n-1} and  re-arranging the index, we have
{\small
\begin{eqnarray}
  \sum_{l=1}^d   |m_{l,k-1,n-1}| &\overset{\eqref{eq_mlk-1n-1}}{ \leq}&  (1-\beta_1)\sum_{l=1}^d  \left[   |\partial_l f_{\tau_{k-1,n-1}} (x_{k-1,n-1})|  +\beta_1  |\partial_l f_{\tau_{k-1,n-2}} (x_{k-1,n-2})|  +\cdots \right] \nonumber\\
    & &  +\beta_1^{(k-1)n} \sum_{l=1}^d \sum_{i=0}^{n-1} |\partial_l f_i(x_{1,0}) |  \nonumber\\
    &\overset{\eqref{eq_fi_f_2}, \eqref{eq_fk-tj}}{ \leq} & (1-\beta_1) \sum_{q=1}^{(k-1)n} \beta_1^{q-1} \left[ \sqrt{D_{1}} \rho_{1} d\left(\left|\partial_{\alpha} f\left(x_{k, 0}\right)\right|+\sqrt{\frac{D_{0}}{D_{1} d}}\right) +  \frac{2qd \triangle_1}{ \sqrt{n(k-1)-1}} \right]   \nonumber \\
    && +\beta_1^{(k-1)n}  \sum_{l=1}^d\sum_{i=0}^{n-1} |\partial_l f_i(x_{1,0}) |  \nonumber \\
    &\leq & (1-\beta_1) \sum_{q=1}^{\infty} \beta_1^{q-1} \left[ \sqrt{D_{1}} \rho_{1} d\left(\left|\partial_{\alpha} f\left(x_{k, 0}\right)\right|+\sqrt{\frac{D_{0}}{D_{1} d}}\right) +  \frac{2qd \triangle_1}{ \sqrt{n(k-1)-1}} \right]   \nonumber \\
    && +\beta_1^{(k-1)n} \sum_{l=1}^d \sum_{i=0}^{n-1} |\partial_l f_i(x_{1,0}) |  \nonumber \\
    &\overset{\text{Lemma }\ref{lemma_beta}}{ \leq}& \sqrt{D_{1}} \rho_{1} d\left(\left|\partial_{\alpha} f\left(x_{k, 0}\right)\right|+\sqrt{\frac{D_{0}}{D_{1} d}}\right)  +  \frac{1}{1-\beta_1}\frac{2d \triangle_1}{ \sqrt{n(k-1)-1}} \nonumber \\
    && +\beta_1^{(k-1)n} \sum_{l=1}^d \sum_{i=0}^{n-1} |\partial_l f_i(x_{1,0}) |   \nonumber \\
  & \overset{ \text{When $k \geq 4$}
      }{\leq} &  \sqrt{D_{1}} \rho_{1} d\left(\left|\partial_{\alpha} f\left(x_{k, 0}\right)\right|+\sqrt{\frac{D_{0}}{D_{1} d}}\right)  +  \frac{1}{1-\beta_1}\frac{2\sqrt{2}d\triangle_1}{ \sqrt{nk}} \nonumber \\
      && +\beta_1^{(k-1)n} \sum_{l=1}^d \sum_{i=0}^{n-1} |\partial_l f_i(x_{1,0}) | \nonumber \\ 
      & \overset{ (i)
      }{\leq} &  \sqrt{D_{1}} \rho_{1} d\left(\left|\partial_{\alpha} f\left(x_{k, 0}\right)\right|+\sqrt{\frac{D_{0}}{D_{1} d}}\right)  +  \frac{1}{1-\beta_1}\frac{2\sqrt{2}d\triangle_1}{ \sqrt{nk}} \nonumber \\
      && +\frac{\sqrt{2}\beta_1^n}{\sqrt{k}} \sum_{l=1}^d \sum_{i=0}^{n-1} |\partial_l f_i(x_{1,0}) |  \label{eq_mlk-1n-1_done}
\end{eqnarray}
}

where the second last inequality holds because: when $k \geq 4$, $\frac
{1}{\sqrt{n(k-1)-1}} \leq \frac{\sqrt{2}}{\sqrt{nk}}$. $(i)$ holds when $k$ is large enough such that $\beta_1^{(k-1)n} \leq \frac{\beta_1^n}{\sqrt{k-1}}.$ Further,  $ \frac{\beta_1^n}{\sqrt{k-1}}\leq  \frac{\sqrt{2}\beta_1^n}{\sqrt{k}} $ when $k \geq 2$.

Now, we have derived upper bounds for $(d_1)$ and $(d_2)$. Plugging \eqref{eq_mlk-1n-1_done} and \eqref{eq_d1} into \eqref{eq_mlki-f}, we conclude the proof.

{\small
\begin{eqnarray}
  \sum_{l=1}^d  \sum_{i=0}^{n-1}|\mlki- \partialtauki(\xkz)|  &\overset{\eqref{eq_d1}, \eqref{eq_mlk-1n-1_done} }{\leq }& 3n \sqrt{D_{1}} \rho_{1} d\left(\left|\partial_{\alpha} f\left(x_{k, 0}\right)\right|+\sqrt{\frac{D_{0}}{D_{1} d}}\right)+\frac{(n+1)nd \sqrt{n}}{2 \sqrt{k}} \Delta_{1} \nonumber  \\
  && +  \frac{nd}{1-\beta_1}\frac{2\sqrt{2}\triangle_1}{ \sqrt{nk}}+\frac{\sqrt{2}n\beta_1^n}{\sqrt{k}} \sum_{l=1}^d \sum_{i=0}^{n-1} |\partial_l f_i(x_{1,0}) |  \nonumber \\
  &=& \frac{1}{\sqrt{k}} \left[  \frac{d(n+1) n^{\frac{3}{2}}}{2 } \Delta_{1}  +  \frac{d 2\sqrt{2} \sqrt{n}\triangle_1 }{1-\beta_1}+\sqrt{2}n \beta_1^n \sum_{i=0}^{n-1} \|\nabla f_i(x_{1,0}) \|_1  \right] \nonumber \\
  && +3n \sqrt{D_{1}} \rho_{1} d\left(\left|\partial_{\alpha} f\left(x_{k, 0}\right)\right|+\sqrt{\frac{D_{0}}{D_{1} d}}\right). \label{eq_boundingd}
\end{eqnarray}
}

\begin{eqnarray*}
  \sum_{l \text{ large }} (b_2) &= &\delta_1\sqrt{\frac{2\rho_3^2}{\beta_2^n} }\sum_{l \text{ large }} \sum_{i=0}^{n-1}|\mlki- \partial_{l} f_i(\xkz)|  \\
       &\leq&   \delta_1\sqrt{\frac{2\rho_3^2}{\beta_2^n} } \Bigm\{
       \frac{1}{\sqrt{k}} \left[  \frac{d(n+1) n^{\frac{3}{2}}}{2 } \Delta_{1}  +  \frac{d2\sqrt{2} \sqrt{n}\triangle_1 }{1-\beta_1}+\sqrt{2}n \beta_1^n \sum_{i=0}^{n-1}  \|\nabla f_i(x_{1,0}) \|_1   \right] \nonumber \\
    && + 3n \sqrt{D_{1}} \rho_{1} d\left(\left|\partial_{\alpha} f\left(x_{k, 0}\right)\right|+\sqrt{\frac{D_{0}}{D_{1} d}}\right) \Bigm\}.
\end{eqnarray*}

The proof is concluded by adding expectation on both sides of the inequality.

\subsection{Proof of Lemma \ref{lemma_a1}}
\label{appendix_lemma_a1}

\begin{eqnarray*}
  &&   \Ex \left\{\sum_{l \text{ large}} (a_1)   +  \sum_{l \text{ large}} (b_1) -  \sum_{l \text{ large }} \{(a_2) +  (b_2) \}\right\}\\
  &=& \mathbb{E}\left[ \sum_{l \text{ large}} \frac{\partial_{l} f\left(x_{k, 0}\right)^{2}}{\sqrt{v_{l, k, 0}}}\right] 
   + \Ex \left[\sum_{l \text{ large}} \frac{\partial_l f(\xkz)}{\sqrt{\vlkz}} \sum_{i=0}^{n-1}  \left(\mlki-  \partial_{l} f_i(\xkz) \right) \right] \\
   &&- d \delta_{1} \sqrt{\frac{2\rho_3^2}{\beta_2^n} }\Ex \left[ \sum_{i=0}^{n-1}\left|\partial_{l} f_i\left(x_{k, 0} \right)\right|\right] - d \delta_1\sqrt{\frac{2\rho_3^2}{\beta_2^n} }\Ex \left[ \sum_{i=0}^{n-1}|\mlki- \partial_{l} f_i(\xkz)| \right]  \\
  &\overset{\text{Lemma \ref{lemma_b1} and  \ref{lemma_da2_db2} and \ref{lemma_fi_f}}}{\geq}& \Ex\left[\sum_{l \text{ large}}\frac{\partial_{l} f\left(x_{k, 0}\right)^{2}}{\sqrt{v_{l, k, 0}}}  \right] - \frac{1}{\sqrt{k}} \left(G_4+ G_5 +G_6 \right) - G_7 \delta_1 \sqrt{D_{1}} \rho_{1} d\left(\Ex\left|\partial_{\alpha} f\left(x_{k, 0}\right)\right|+\sqrt{\frac{D_{0}}{D_{1} d}}\right) \\
  &&-  \delta_{1} \sqrt{\frac{2\rho_3^2}{\beta_2^n} } \sqrt{D_{1}} \rho_{1} d\left(\Ex\left|\partial_{\alpha} f\left(x_{k, 0}\right)\right|+\sqrt{\frac{D_{0}}{D_{1} d}}\right) \nonumber \\
  &&-   \delta_1\sqrt{\frac{2\rho_3^2}{\beta_2^n} } \left[ \frac{d(n+1) n^{\frac{3}{2}}}{2 } \Delta_{1}  +  \frac{d2\sqrt{2} \sqrt{n}\triangle_1 }{1-\beta_1}+\sqrt{2}n \beta_1^n \sum_{i=0}^{n-1}  \Ex\|\nabla f_i(x_{1,0}) \|_1 \right]\frac{1}{\sqrt{k}} \nonumber  \\
  && - \delta_1\sqrt{\frac{2\rho_3^2}{\beta_2^n} }  3n \sqrt{D_{1}} \rho_{1} d\left( \Ex\left|\partial_{\alpha} f\left(x_{k, 0}\right)\right|+\sqrt{\frac{D_{0}}{D_{1} d}}\right),  \nonumber 
  \end{eqnarray*}

  where constant terms $G_4, G_5, G_6, G_7$ can be seen at the end of  Appendix \ref{appendix:lemma_Mp-F_1}.

Since
$\alpha=\arg \max _{l=1,2, \cdots, d}\left|\partial_{l} f\left(x_{k, 0}\right)\right|$ and $  \sum_{l \text{ large}}\frac{\partial_{l} f\left(x_{k, 0}\right)^{2}}{\sqrt{v_{l, k, 0}}}  \geq \frac{\partial_{\alpha} f\left(x_{k, 0}\right)^{2}}{\sqrt{v_{\alpha, k, 0}}}$, we have

  \begin{eqnarray}
    &&   \Ex \left\{\sum_{l \text{ large}} (a_1)   +  \sum_{l \text{ large}} (b_1) -  \sum_{l \text{ large }} \{(a_2) +  (b_2) \}\right\} \nonumber\\
  &\overset{}{\geq}& \Ex \left\{\frac{\partial_{\alpha} f\left(x_{k, 0}\right)^{2}}{\sqrt{v_{\alpha, k, 0}}} \right\} \nonumber \\
  &&- \left[ G_4+G_5+G_6+ \delta_1\sqrt{\frac{2\rho_3^2}{\beta_2^n} } \left( \frac{d(n+1) n^{\frac{3}{2}}}{2 } \Delta_{1}  +  \frac{d2\sqrt{2} \sqrt{n}\triangle_1 }{1-\beta_1}+\sqrt{2}n \beta_1^n \sum_{i=0}^{n-1}  \Ex\|\nabla f_i(x_{1,0}) \|_1 \right) \right]\frac{1}{\sqrt{k}} \nonumber  \\
  && - \delta_1 \left(\sqrt{\frac{2\rho_3^2}{\beta_2^n} }  4n + G_7  \right)\sqrt{D_{1}} \rho_{1} d\left( \Ex\left|\partial_{\alpha} f\left(x_{k, 0}\right)\right|+\sqrt{\frac{D_{0}}{D_{1} d}}\right)  \nonumber \\
  &=& \Ex \left\{\frac{\partial_{\alpha} f\left(x_{k, 0}\right)^{2}}{\sqrt{v_{\alpha, k, 0}}}\right\}
  - F_2 \frac{1}{\sqrt{k}} -F_3 \Ex\left|\partial_{\alpha} f\left(x_{k, 0}\right)\right| - F_4, \label{eq_a1-da2-db2} 
\end{eqnarray}

where $F_2 := 
\delta_1\sqrt{\frac{2\rho_3^2}{\beta_2^n} } \left[ \frac{d(n+1) n^{\frac{3}{2}}}{2 } \Delta_{1}  +  \frac{d2\sqrt{2} \sqrt{n}\triangle_1 }{1-\beta_1}+\sqrt{2}n \beta_1^n \sum_{i=0}^{n-1}  \|\nabla f_i(x_{1,0}) \|_1 \right]  + G_4+ G_5+ G_6$;

$F_3:= \delta_1 \left(\sqrt{\frac{2\rho_3^2}{\beta_2^n} }  4n + G_7  \right)\sqrt{D_{1}} \rho_{1} d$;

$F_4:=  \delta_1 \left(\sqrt{\frac{2\rho_3^2}{\beta_2^n} }  4n + G_7  \right)\sqrt{D_{1}} \rho_{1} d  \sqrt{\frac{D_{0}}{D_{1} d}}. $





Now, we discuss two cases.

\paragraph{Case (a): when $\left|\partial_{\alpha} f\left(x_{k, 0}\right)\right| \geq 4 \sqrt{2} \frac{\Delta_{1}}{\left(1-\beta_{2}\right) \sqrt{D_{1} n k d}}$.} 

In this case, we have the following result.



{\small
\begin{eqnarray*}
  \frac{\partial_\alpha f(\xkz)^2}{\sqrt{v_{\alpha,k,0}}}  
   & \geq &   \frac{\partial_\alpha f(\xkz)^2}{\sqrt{(1-\beta_2)\left( |\partial_\alpha f_{\tau_{k,0}}(x_{k,0})|^2 + \beta_2 |\partial_\alpha f_{\tau_{k-1,n-1}}(x_{k-1,n-1})|^2 + \cdots   \right)    }} \\ 
  &\geq &  \frac{\partial_\alpha f(\xkz)^2}{\sqrt{(1-\beta_2)\left( |\partial_\alpha f_{\tau_{k,0}}(x_{k,0})|^2 + \beta_2 |\partial_\alpha f_{\tau_{k-1,n-1}}(x_{k,0}) +  \triangle_{n(k-1)}|^2  \cdots      \right) }}
  \\
  &\overset{\text{Lemma \ref{lemma_fi_f}}}{\geq}& \frac{\partial_\alpha f(\xkz)^2}{\sqrt{(1-\beta_2) \left(  \sum_{j=0}^{\infty}\left(\sqrt{\left|\partial_{\alpha} f\left(x_{k, 0}\right)\right|^{2}+\frac{D_{0}}{D_{1} d}} \sqrt{D_{1} d}+\sum_{t=1}^{j} \Delta_{n(k-1)-t}\right)^{2} \beta_{2}^{j}    \right) }}
  \\
  &\geq& 
  \frac{\partial_\alpha f(\xkz)^2}{\sqrt{ (1-\beta_2) \left(  \sum_{j=0}^{\infty} \beta_{2}^{j} \left( \left(\left|\partial_{\alpha} f\left(x_{k, 0}\right)\right|^{2}+\frac{D_{0}}{D_{1} d}\right) D_{1} d + 4 \sqrt{2}j \triangle_{nk} \sqrt{\left|\partial_{\alpha} f\left(x_{k, 0}\right)\right|^{2}+\frac{D_{0}}{D_{1} d}} \sqrt{D_{1}d} + 8j^2\triangle_{nk}^2  \right)    \right) }}
   \\
   &\overset{\text{Lemma \ref{lemma_beta}}}{\geq} & \frac{\partial_\alpha f(\xkz)^2}{\sqrt{D_{1} d\left(\left(\left|\partial_{\alpha} f\left(x_{k, 0}\right)\right|^{2}+\frac{D_{0}}{D_{1} d}\right)+  \sqrt{\left|\partial_{\alpha} f\left(x_{k, 0}\right)\right|^{2}+\frac{D_{0}}{D_{1} d}} \frac{4 \sqrt{2} \triangle_{nk}}{(1-\beta_2)\sqrt{D_{1} d}} + \frac{16 \triangle_{nk}^2}{D_1 d (1-\beta_2)^2  }\right) } } \\
  &\overset{\textbf{Case (a)}}{\geq}&  \frac{\partial_\alpha f(\xkz)^2}{\sqrt{\frac{5}{2} D_{1} d\left(\left|\partial_{\alpha} f\left(x_{k, 0}\right)\right|^{2}+\frac{D_{0}}{D_{1} d}\right)}},
\end{eqnarray*}
}

Now we consider the following two sub-cases.

\begin{itemize}
  \item When $\partial_\alpha f(\xkz)^2 \leq \frac{D_{0}}{D_{1} d} $: 
  \begin{eqnarray*}
    &&
    \frac{\partial_{\alpha} f\left(x_{k, 0}\right)^{2}}{\sqrt{v_{\alpha, k, 0}}}
  - F_2 \frac{1}{\sqrt{k}} -F_3 \left|\partial_{\alpha} f\left(x_{k, 0}\right)\right| - F_4 \\ 
    &&\geq     
    \frac{\partial_{\alpha} f\left(x_{k, 0}\right)^{2}}{\sqrt{5D_0}}
  - F_2 \frac{1}{\sqrt{k}} -F_3 \sqrt{\frac{D_{0}}{D_{1} d}} - F_4 
  \end{eqnarray*}
  
  
  \item When $\partial_\alpha f(\xkz)^2 \geq \frac{D_{0}}{D_{1} d} $: 
  \begin{eqnarray*}
    &&    \frac{\partial_{\alpha} f\left(x_{k, 0}\right)^{2}}{\sqrt{v_{\alpha, k, 0}}}
    - F_2 \frac{1}{\sqrt{k}} -F_3 \left|\partial_{\alpha} f\left(x_{k, 0}\right)\right| - F_4  \\
    && \geq \frac{|\partial_\alpha f(\xkz)|}{\sqrt{5D_1d}}  - F_2 \frac{1}{\sqrt{k}} -F_3 \left|\partial_{\alpha} f\left(x_{k, 0}\right)\right| - F_4  \\
    && =  \left|\partial_{\alpha} f\left(x_{k, 0}\right)\right| \left(\frac{1}{\sqrt{5D_1d}}  -F_3\right) - F_2 \frac{1}{\sqrt{k}}- F_4  \\
    &&\geq   \left|\partial_{\alpha} f\left(x_{k, 0}\right)\right| \left(\frac{1}{\sqrt{5D_1d}}  -F_3\right) - F_2 \frac{1}{\sqrt{k}}-F_3 \sqrt{\frac{D_{0}}{D_{1} d}} -F_4  
  \end{eqnarray*}
  
  
\end{itemize}

Combining together, we have the following results for {\bf Case (a)}.

{\small
\begin{equation*}
   \left\{\frac{\partial_{\alpha} f\left(x_{k, 0}\right)^{2}}{\sqrt{v_{\alpha, k, 0}}}\right\}
  - F_2 \frac{1}{\sqrt{k}} -F_3  \left|\partial_{\alpha} f\left(x_{k, 0}\right)\right| - F_4 \geq  \min\left\{   \frac{\partial_\alpha f(\xkz)^2}{\sqrt{5D_0}},   \left|\partial_{\alpha} f\left(x_{k, 0}\right)\right| \left(\frac{1}{\sqrt{5D_1d}}  -F_3\right)  \right\}   - F_2 \frac{1}{\sqrt{k}}-F_3 \sqrt{\frac{D_{0}}{D_{1} d}} -F_4.
\end{equation*}
}



\paragraph{Case (b):}
When 
$\left|\partial_{\alpha} f\left(x_{k, 0}\right)\right| < 4 \sqrt{2} \frac{\Delta_{1}}{\left(1-\beta_{2}\right) \sqrt{D_{1} n k d}}$,   we have 

\begin{eqnarray}
   \left\{\frac{\partial_{\alpha} f\left(x_{k, 0}\right)^{2}}{\sqrt{v_{\alpha, k, 0}}}\right\}
  - F_2 \frac{1}{\sqrt{k}} -F_3 \left|\partial_{\alpha} f\left(x_{k, 0}\right)\right| - F_4  
  &\geq &  - F_2 \frac{1}{\sqrt{k}} -F_3 \left|\partial_{\alpha} f\left(x_{k, 0}\right)\right| - F_4 \nonumber \\
  &\geq & -F_2 \frac{1}{\sqrt{k}} -F_3 4 \sqrt{2} \frac{\Delta_{1}}{\left(1-\beta_{2}\right) \sqrt{D_{1} n k d}} - F_4 \nonumber\\
   &=& -\frac{1}{\sqrt{k}}G_1- F_4, \label{eq_a1-da2-db2_caseb}
\end{eqnarray}

where $G_1:=   F_2 +  F_3 4 \sqrt{2} \frac{\Delta_{1}}{\left(1-\beta_{2}\right) \sqrt{D_{1} n  d}} $.
Now, the following two claims are both true.

  \begin{itemize}
    \item Claim 1:
    \begin{eqnarray}
      \eqref{eq_a1-da2-db2_caseb} &\geq & \frac{\partial_{\alpha} f\left(x_{k, 0}\right)^{2}}{\sqrt{5 D_{0}}} - \frac{1}{k}\frac{\left(4 \sqrt{2} \frac{\Delta_{1}}{\left(1-\beta_{2}\right) \sqrt{D_{1} n  d}}\right)^2}{\sqrt{5 D_{0}}} -\frac{1}{\sqrt{k}}G_1- F_4. \nonumber 
    \end{eqnarray}
    \item Claim 2: 
    \begin{eqnarray}
      \eqref{eq_a1-da2-db2_caseb} &\geq & \left|\partial_{\alpha} f\left(x_{k, 0}\right)\right|\left(\frac{1}{\sqrt{5 D_{1} d}}-F_3\right) -\frac{1}{\sqrt{k}}4 \sqrt{2} \frac{\Delta_{1}}{\left(1-\beta_{2}\right) \sqrt{D_{1} n  d}}\left(\frac{1}{\sqrt{5 D_{1} d}}-F_3\right) \nonumber \\
      && -\frac{1}{\sqrt{k}}G_1- F_4.  \nonumber
    \end{eqnarray}
  \end{itemize}



   Combining Claim 1 and Claim 2, we have 

    {\small
    \begin{eqnarray}
    \eqref{eq_a1-da2-db2_caseb}  &\geq &  \min \left\{ \frac{\partial_{\alpha} f\left(x_{k, 0}\right)^{2}}{\sqrt{5 D_{0}}}, \left|\partial_{\alpha} f\left(x_{k, 0}\right)\right|\left(\frac{1}{\sqrt{5 D_{1} d}}-F_3\right)\right\} -F_4  \nonumber \\
      && - \frac{1}{\sqrt{k}}\left[\max\left\{ \frac{\left(4 \sqrt{2} \frac{\Delta_{1}}{\left(1-\beta_{2}\right) \sqrt{D_{1} n  d}}\right)^2}{\sqrt{5 D_{0}}}, \frac{4 \sqrt{2}\Delta_{1}}{\left(1-\beta_{2}\right) \sqrt{D_{1} n  d}}\left(\frac{1}{\sqrt{5 D_{1} d}}-F_3\right) \right\}  +G_1 \right] \nonumber \\
      &:= & \min \left\{\frac{\partial_{\alpha} f\left(x_{k, 0}\right)^{2}}{\sqrt{5 D_{0}}},\left|\partial_{\alpha} f\left(x_{k, 0}\right)\right|\left(\frac{1}{\sqrt{5 D_{1} d}}-F_3\right)\right\}  -\frac{1}{\sqrt{k}} G_2 -F_4,
    \end{eqnarray}}

   where $G_2:= \max\left\{ \frac{\left(4 \sqrt{2} \frac{\Delta_{1}}{\left(1-\beta_{2}\right) \sqrt{D_{1} n  d}}\right)^2}{\sqrt{5 D_{0}}}, \frac{4 \sqrt{2}\Delta_{1}}{\left(1-\beta_{2}\right) \sqrt{D_{1} n  d}}\left(\frac{1}{\sqrt{5 D_{1} d}}-F_3\right) \right\}  +G_1 .$

  Combining {\bf Case (a)} and {\bf Case (b)} together, we have 

\begin{eqnarray*}
  \Ex \left\{\sum_{l \text{ large}} (a_1)   +  \sum_{l \text{ large}} (b_1) -  \sum_{l \text{ large }} \{(a_2) +  (b_2) \}\right\} &\geq &  \Ex \min \left\{ \frac{\partial_{\alpha} f\left(x_{k, 0}\right)^{2}}{\sqrt{5 D_{0}}}, \left|\partial_{\alpha} f\left(x_{k, 0}\right)\right|\left(\frac{1}{\sqrt{5 D_{1} d}}-F_3\right)\right\}  \nonumber\\
      &&-\frac{1}{\sqrt{k}} G_2 -F_4   \nonumber \\
  &\geq&  \Ex\min \left\{\frac{ \|\nabla f(x_{k,0})\|_2^2 }{d \sqrt{5 D_{0}} }, \|\nabla f(x_{k,0})\|_1\left(\frac{1}{d\sqrt{5 D_{1} d}}-\frac{F_3}{d}\right)\right\}   \nonumber \\
  &&-\frac{1}{\sqrt{k}} G_2 -F_4  ,
\end{eqnarray*}

where the last inequality is because of $\|\nabla f(x_{k,0})\|_2^2 \leq d \partial_{\alpha} f\left(x_{k, 0}\right)^{2}$,  $\|\nabla f(x_{k,0})\|_1 \leq d |\partial_{\alpha} f\left(x_{k, 0}\right)|$.

Recall $F_3 \rightarrow 0$ when $\beta_2 \rightarrow 1$, so there exists an interval $(1-\epsilon,1 ]$, such that $ \frac{1}{\sqrt{5D_1d}}- F_3 \geq \frac{1}{\sqrt{10D_1 d}}$, or equivalently  
$F_3 \leq \frac{1}{\sqrt{10D_1 d}}$ (note that $F_3$ is the same as ``$A(\beta_2)$'' stated in the condition of Lemma \ref{lemma_a1}). With such a choice of $\beta_2$,  we have the following results by changing all the $F_3$ into $\frac{1}{\sqrt{10D_1 d}}$:

\begin{eqnarray*}
  \Ex \left\{\sum_{l \text{ large}} (a_1)   +  \sum_{l \text{ large}} (b_1) -  \sum_{l \text{ large }} \{(a_2) +  (b_2) \}\right\} 
  &\geq&  \Ex \min \left\{\frac{ \|\nabla f(x_{k,0})\|_2^2 }{d \sqrt{5 D_{0}} }, \|\nabla f(x_{k,0})\|_1\frac{1}{d\sqrt{10D_1 d}}\right\}   \nonumber \\
  &&-\frac{1}{\sqrt{k}} G_2 -F_4.  \nonumber \\
  &\overset{(*)}{\geq}&  \frac{1}{d\sqrt{10D_1 d}} \Ex \min  \left\{ \sqrt{\frac{ 2D_1 d }{D_{0} } }   \|\nabla f(x_{k,0})\|_2^2, \|\nabla f(x_{k,0})\|_1\right\}   \nonumber \\
  &&-\frac{1}{\sqrt{k}} G_2 -F_5.  \nonumber \\
\end{eqnarray*}

In inequality $(*)$, we change $F_4$ into 
$F_5:=  \sqrt{\frac{D_{0}}{D_{1} d}} \frac{1}{\sqrt{10D_1 d}}
$. This is because: first, $F_4 = F_3 \sqrt{\frac{D_0}{D_1d}}$; second, $F_3 \leq \frac{1}{\sqrt{10D_1d}}$.

The proof of Lemma \ref{lemma_a1} is completed.

We restate all the constants as follows ($G_4, G_5, G_6$ are specified in Appendix \ref{appendix:lemma_Mp-F_1}):


$G_2:= \max\left\{ \frac{\left(4 \sqrt{2} \frac{\Delta_{1}}{\left(1-\beta_{2}\right) \sqrt{D_{1} n  d}}\right)^2}{\sqrt{5 D_{0}}}, \frac{4 \sqrt{2}\Delta_{1}}{\left(1-\beta_{2}\right) \sqrt{D_{1} n  d} \sqrt{10 D_{1} d}}\right\}  +G_1$;


$G_1:=   F_2 +   4 \sqrt{2} \frac{\Delta_{1}}{\left(1-\beta_{2}\right) \sqrt{D_{1} n  d} \sqrt{10D_1 d}} $;

 $F_2 := 
\delta_1\sqrt{\frac{2\rho_3^2}{\beta_2^n} } \left[ \frac{d(n+1) n^{\frac{3}{2}}}{2 } \Delta_{1}  +  \frac{d2\sqrt{2} \sqrt{n}\triangle_1 }{1-\beta_1}+\sqrt{2}n \beta_1^n \sum_{i=0}^{n-1}  \|\nabla f_i(x_{1,0}) \|_1 \right]  + G_4+ G_5+ G_6$;




$F_5:=  \sqrt{\frac{D_{0}}{D_{1} d}} \frac{1}{\sqrt{10D_1 d}}.
$

\begin{remark} \label{remark_f4}
We comment that we can always replace $F_5$ in the final result by $F_4$, i.e., we can choose not to apply the last inequality $(*)$ in the proof. The benefit of using $F_4$ is that $F_4$ monotonously decrease to 0 when increasing  $\beta_2$ to 1. This monotone property is not shown in the notation of $F_5$.
Nevertheless, we choose to use $F_5$ since it is a much cleaner constant.
\end{remark}


 \subsection{Proof of Lemma \ref{lemma_all_together}}
 \label{appendix:lemma_all}

Based on Descent Lemma, we have

\begin{eqnarray}
  \sum_{k=t_{0}}^{T} \frac{\eta_0}{\sqrt{nk}} \text{(r.h.s. of \eqref{eq_zhangdecomposition_3})} 
  &\leq& 
  \sum_{k=t_{0}}^{T} \Ex \left\langle\nabla f\left(x_{k, 0}\right), x_{k, 0}-x_{k+1,0}\right\rangle \nonumber \\
  &\leq & \Ex f\left(x_{t_{0}, 0}\right)- \Ex f\left(x_{T+1,0}\right) +\sum_{k=t_{0}}^{T}\frac{L}{2} \Ex \left\|x_{k+1,0}-x_{k, 0}\right\|_{2}^{2}\nonumber\\
  &\overset{\eqref{eq:m_over_v}}{\leq} &  \Ex f\left(x_{t_{0}, 0}\right)-f^*+\sum_{k=t_{0}}^{T} \frac{L  d }{2 }  \left(\frac{ n \eta_{0}}{\sqrt{nk}}\frac{\left(1-\beta_{1}\right)}{\sqrt{1-\beta_{2}}} \frac{1}{1-\frac{\beta_{1}}{\sqrt{\beta_{2}}}}  \right) ^2 \nonumber  \\
  &= &  \Ex f\left(x_{t_{0}, 0}\right)-f^*+\sum_{k=t_{0}}^{T} \frac{L  d n }{2 k}  \left(\eta_{0}\frac{\left(1-\beta_{1}\right)}{\sqrt{1-\beta_{2}}} \frac{1}{1-\frac{\beta_{1}}{\sqrt{\beta_{2}}}}  \right) ^2 \nonumber 
\end{eqnarray}

Plugging in  the r.h.s. of \eqref{eq_zhangdecomposition_3}, we have the following relation after rearranging.

\begin{eqnarray}
  &&\sum_{k=t_{0}}^{T} \frac{\eta_{0}}{\sqrt{n k}} \left[ \frac{1}{d\sqrt{10D_1 d}}  \Ex \min  \left\{ \sqrt{\frac{ 2D_1 d }{D_{0} } }  \|\nabla f(x_{k,0})\|_2^2, \|\nabla f(x_{k,0})\|_1\right\}  
  -\Ex \left[\mathcal{O}(\frac{1}{\sqrt{k}})\right]-\mathcal{O}(\sqrt{D_0})\right] \nonumber\\
  &&\leq \Ex f\left(x_{t_{0}, 0}\right)-f^*+\sum_{k=t_{0}}^{T} \frac{1}{k} \left[ \frac{L  d n }{2}   \left(\eta_{0}\frac{\left(1-\beta_{1}\right)}{\sqrt{1-\beta_{2}}} \frac{1}{1-\frac{\beta_{1}}{\sqrt{\beta_{2}}}}  \right) ^2  \right] \nonumber.
\end{eqnarray}

Recall we have $2\left(\sqrt{T}-\sqrt{t_{0}-1}\right) \leq  \sum_{k=t_{0}}^{T}\frac{1}{\sqrt{k}}$, $\sum_{k=t_{0}}^{T} \frac{1}{k} \leq \log{\frac{T+1}{t_0}}$. Further, since  $\| \cdot\|_1 \geq \| \cdot\|_2$,  we get the following relation when $t_0=1$.



\begin{eqnarray}
  &&\min_{k \in [1,T]}  \Ex \left[\min  \left\{ \sqrt{\frac{ 2D_1 d }{D_{0} } }  \|\nabla f(x_{k,0})\|_2^2, \|\nabla f(x_{k,0})\|_2\right\} \right] \nonumber \\
    && \leq \frac{1 }{\sqrt{T}} \frac{d \sqrt{10D_1 dn}}{2 \eta_0}\left[ \Ex f\left(x_{1, 0}\right)-f^* +  \log (T+1) \left(  \frac{L  d n }{2 }  \left(\eta_{0}\frac{\left(1-\beta_{1}\right)}{\sqrt{1-\beta_{2}}} \frac{1}{1-\frac{\beta_{1}}{\sqrt{\beta_{2}}}}  \right) ^2 +H_2  \right)   \right]+F_5,  \nonumber \\
\end{eqnarray}

We specify all the constant as follows. For all the following constants, We keep the same notation as their appearance in their corresponding lemmas.

$F_5:=   \sqrt{\frac{D_{0}}{D_{1} d}} \frac{1}{\sqrt{10D_1 d}};
$

$H_2:= \frac{\eta_0 \left(G_2+dF_1 \right)}{\sqrt{n}};$

$F_1:= \triangle_1 n^2\sqrt{n} \frac{32\sqrt{2}}{(1-\btwo)^n \btwo^n}  \frac{1-\beta_1}{\sqrt{1-\beta_2}}\frac{1}{1-\frac{\beta_1}{\sqrt{\beta_2}}} n$;


$G_2:= \max\left\{ \frac{\left(4 \sqrt{2} \frac{\Delta_{1}}{\left(1-\beta_{2}\right) \sqrt{D_{1} n  d}}\right)^2}{\sqrt{5 D_{0}}}, \frac{4 \sqrt{2}\Delta_{1}}{\left(1-\beta_{2}\right) \sqrt{D_{1} n  d} \sqrt{10 D_{1} d}}\right\}  +G_1$;


$G_1:=   F_2 +   4 \sqrt{2} \frac{\Delta_{1}}{\left(1-\beta_{2}\right) \sqrt{D_{1} n  d} \sqrt{10D_1 d}} $;

 $F_2 := 
\delta_1\sqrt{\frac{2\rho_3^2}{\beta_2^n} } \left[ \frac{d(n+1) n^{\frac{3}{2}}}{2 } \Delta_{1}  +  \frac{d2\sqrt{2} \sqrt{n}\triangle_1 }{1-\beta_1}+\sqrt{2}n \beta_1^n \sum_{i=0}^{n-1}  \|\nabla f_i(x_{1,0}) \|_1 \right]  + G_4+ G_5+ G_6$;










 $G_5 = d \AAA \left(\frac{\beta_1^{2n} 2n^3\triangle_1 \sqrt{2}}{\sqrt{n}} \frac{1-\beta_1}{(1-\beta_1^n)^2} + n  \left(1-\beta_1^{n-1} \right) \sum_{i=0}^{n-1}  \Ex|\partial_\alpha f_i (x_{1,0})|  \frac{(1-\beta_1)\beta_1^{n}\sqrt{2}}{1-\beta_1^n} \right); $


$G_4 = d \AAA \beta_1^n \sqrt{2}n (n-1)\sum_{i=0}^{n-1} \Ex \left(  |\partial_\alpha f_i (x_{1,0})| \right);$






{\small

\begin{eqnarray*}
  G_6 &:=&  \beta_1^n(1-\beta_1) \sqrt{2}\left|J_1 \right| \AAA \Ex \left[\sum_{l=1}^d  \left|\mathbb{I}_1  
   \sum_{i=0}^{n-1} \partial_l f_i(x_{1,0}) \right| \right]  +\frac{\triangle_{1}}{\sqrt{n}} \delta_2 d\AAA(1-\beta_1)n^3 \\
    &&\BLUE{+ d
    (1-\beta_1)|J_1 |  \frac{1}{1-\frac{1}{\sqrt{\beta_2^n}}} n^2\sqrt{ \frac{2}{n \beta_2^n}}  \frac{\triangle_{1} \delta_2}{\sqrt{n}}}\\
    && \BLUE{+  d
    (1-\beta_1) |J_1 |  n^2
    \left(\sqrt{\frac{2\rho_3^2}{\beta_2^n}}  \frac{1}{\left(1- \frac{(1-\beta_2)4n \rho_2  }{\beta_2^n}\right) }   \delta_1\right)\frac{1+\beta_1^n}{(1-\beta_1^n)^3} \frac{\triangle_{1}\sqrt{2}}{\sqrt{n}}} \\
    &&\BLUE{+ 2 (1-\beta_1)d \AAA |J_1| n^2 \frac{\triangle_{1} \delta_2 \sqrt{2}}{\sqrt{n}}}  \BLUE{- 5(1-\beta_1)d \AAA |J_1| n  \frac{1}{(1-\beta_1^n)^2}\Delta_{1} n \sqrt{n} \frac{32 \sqrt{2}}{\left(1-\beta_{2}\right)^{n} \beta_{2}^{n}}  }  \\
    &&\RED{+ d\left(\frac{1}{1-\frac{1}{\sqrt{\beta_{2}^{n}}}} 
   \sqrt{ \frac{2}{n \beta_2^n}}
   \right)(1-\beta_1) (\beta_1+\beta_1^n) n^3   \frac{\triangle_1}{\sqrt{n}} \delta_2  }\\ 
    &&\RED{+  d\sqrt{\frac{2 \rho_{3}^{2}}{\beta_{2}^{n}}} \frac{1}{\left(1- \frac{(1-\beta_2)4n \rho_2  }{\beta_2^n}\right) }  \delta_{1}(1-\beta_1) (\beta_1+\beta_1^n) n^3 \frac{1+\beta_1^n}{(1-\beta_1^n)^3} \frac{\triangle_{1} \delta_2 \sqrt{2}}{\sqrt{n}} }  \\ 
  && \RED{+  2d \AAA(1-\beta_1)(\beta_1+\beta_1^n)n^2   \frac{1}{\beta_1^n(1-\beta_1^n)^2} \Delta_{1} n \sqrt{n} \frac{32 \sqrt{2}}{\left(1-\beta_{2}\right)^{n} \beta_{2}^{n}}  }\\
  &&\RED{+ d\AAA (1-\beta_1)\left(\beta_1 +\beta_1^n \right) n^3 \frac{\triangle_1}{\sqrt{n}} \delta_2, }
\end{eqnarray*}
}

$\Delta_{1}:=\eta_{0}\frac{L \sqrt{d}}{\sqrt{1-\beta_{2}}} \frac{1-\beta_{1}}{1-\frac{\beta_{1}}{\sqrt{\beta_{2}}}}$;

$\delta_{1} :=\frac{(1-\beta_2)4n \rho_2  }{\beta_2^n}+\left(\frac{1}{\sqrt{\beta_{2}^{n}}}-1\right)$;

 where $J_1 = \left(-\frac{1}{n} \beta_{1}-\frac{2}{n} \beta_{1}^{2}-\cdots-\frac{n-1}{n} \beta_{1}^{n-1}\right)$,  $\delta_2 = \lim_{k \rightarrow \infty} \sum_{j=1}^{k-1} (\beta_1^n)^j \sqrt{\frac{k}{k-j}} = \frac{\beta_1^n}{1-\beta_1^n}$. If needed, we can further bound $J_1$ by $n$ for simplicity.

Further, $\rho_1, \rho_2, \rho_3$ are constants satisfying the following conditions for $\forall l =1,\cdots, d$. Usually, these constants can be different for different problems. In the worst case, we have $0 \leq \rho_{3} \leq \sqrt{n} \rho_{1} \leq n$. $\rho_{3}$ is larger when $\partial_l f_i (x_{k,0})$ are more aligned.

$\rho_{1} \geq \frac{\sum_{i=1}^{n}\left|\partial_l f_i (x_{k,0})\right|}{\sqrt{\sum_{i=1}^{n}\left| \partial_l f_i (x_{k,0})\right|^{2}}};$

$\rho_{2} \geq \frac{\left|\max_i \partial_l f_i (x_{k,0})\right|^{2}}{\frac{1}{n} \sum_{i=1}^{n}\left|\partial_l f_i (x_{k,0})\right|^{2}};$

$\rho_{3} \geq \frac{\left|\sum_{i=1}^{n} \partial_l f_i (x_{k,0})\right|}{\sqrt{\frac{1}{n} \sum_{i=1}^{n}\left|\partial_l f_i (x_{k,0})\right|^{2}}}.$

The  proof of Lemma \ref{lemma_all_together} is  now completed.  This also concludes the whole proof of Theorem \ref{thm1}.



\subsection{Dissussion on Bias Correction Terms and Non-Zero $\epsilon$ (Hyperparameter for Numerical Stability)}
 \label{appendix:bias_correction}

In the above analysis, we focus on  Adam without bias correction terms and we consider $\epsilon=0$ ($\epsilon$ is the hyperparameter for numerical stability in Algorithm \ref{algorithm}).  
For completeness, we now briefly discuss how to incorperate the bias correction terms and non-zero $\epsilon$   into our analysis above. Based on the current convergence proof, we only requires several additional simple changes. We list them as follows.

\paragraph{Adam with bias correction terms:} This bias correction terms are introduced  by \citep{kingma2014adam}). It can be implemented by (1) changing the stepsize $\eta_k$ into $\hat{\eta_k} = \frac{\sqrt{1-\beta_1^k}}{1-\beta_1^k}\eta_k = \frac{\sqrt{1-\beta_1^k}}{1-\beta_1^k} \frac{\eta_0}{\sqrt{k}}$; (2) change the initialization of Algorithm \ref{algorithm} into $m_{1,-1} = v_{1,-1} =0$. More details can be seen in \citep{kingma2014adam}). We now explain how to inlcude this two changes into our analysis.

(1) Change of stepsize: We observe that the new stepsize  $\hat{\eta_k}$ is well bounded around the old stepsize $\eta_k$, i.e.,  $\hat{\eta_k} \in [\sqrt{1-\beta_2} \eta_k, \frac{1}{1-\beta_1}\eta_k]$. Therefore, to prove the convergence of Adam with $\hat{\eta_k}$, we add the follwing steps into the current proof. 

\begin{itemize}
  \item Whenever we need an upper bound on  $\hat{\eta_k}$, we use $\hat{\eta_k}\leq \frac{1}{1-\beta_1}\eta_k$. Then we follow the original analysis with an extra constant $\frac{1}{1-\beta_1}$. This step will appear in Lemma \ref{lemma_delta}. It turns out we only need to  change the constant $\triangle_{nk}$ in Lemma \ref{lemma_delta} into $\frac{1}{1-\beta_1}\frac{\eta_{0} }{\sqrt{nk}}\frac{L\sqrt{d}}{\sqrt{1-\beta_{2}}} \frac{1-\beta_{1} }{1-\frac{\beta_{1}}{\sqrt{\beta_{2}}}} $. The rest of the analysis remains the same.  
  \item Whenver we need a lower bound on  $\hat{\eta_k}$, we use $\hat{\eta_k}\geq \sqrt{1-\beta_2}\eta_k$. Then we follow the original analysis with an extra constant $\sqrt{1-\beta_2}$. This step will appear in Lemma \ref{lemma_all_together} and we only need to change the constant terms in the final result. The rest of the analysis remains the same. 
\end{itemize}

(2) Change of initialization: In our current analysis, we use initialization  $m_{1,-1} = \nabla f(x_0)$ and $v_{1,-1} = \max_i\ \nabla f_i(x_0) \circ \nabla f_i(x_0)$.
Now we explain how to prove convergence with initialization  $m_{1,-1} = v_{1,-1} =0$. 

\begin{itemize}
  \item We use   $m_{1,-1} = \nabla f(x_0)$ at   Lemma \ref{lemma_M-Mp}, in which we bound the difference between $M_{l,k}$ and $M_{l,k}^\prime$. The goal of this lemma is to control $M_{l,k} - M_{l,k}^\prime = \mathcal{O}(\beta_1^{(k-1)n})$. 
  
  As explained in the proof of Lemma \ref{lemma_M-Mp}, after expanding  $M_{l,k}$ and $M_{l,k}^\prime$ into serieses, they only differ when ``$k\leq 0$'' (as shown in equation \eqref{proofsketch_m-mp}). When we use $m_{1,-1}=0$, half of the terms in equation \eqref{proofsketch_m-mp} will become 0. However, it does not affect the result of  Lemma \ref{lemma_M-Mp} since all the terms in  equation \eqref{proofsketch_m-mp} are (at least)  weighted by $\beta_1^{(k-1)n}$. So even if half of them to be 0, the rest of the terms is still in the order of  $\beta_1^{(k-1)n}$. Therefore,  Lemma \ref{lemma_M-Mp} still holds with a few changes on the constant terms. 
  

  \item We use $v_{1,-1} = \max_i\ \nabla f_i(x_0) \circ \nabla f_i(x_0)$ at Lemma \ref{lemma_error_beta2},which is mainly based on Lemma F.1 in \citep{shi2020rmsprop}.  According to \citep{shi2020rmsprop}: to include bias correction terms into analysis, we just need to add one more constraint $k > \frac{8 \sqrt{2}}{1-\beta_2^n} +1$ and then we can reach the same conclusion.  
\end{itemize}

\paragraph{Adam with non-zero $\epsilon$:}In our current analysis, we consider $\epsilon = 0$. In practice, $\epsilon$ is often set to be a small postive number such as $10^{-8}$.
Proving convergence with $\epsilon > 0$
is strictly simpler. 
It only requires a few simple changes based on the current proof. We explain as below. 

When $\epsilon \neq 0$, the new 2nd-order momentum becomes $\hat{v}_{k, 0}:=\sqrt{v_{k, 0}}+\epsilon$. This brings the following changes:

\begin{itemize}
  \item Whenever we want an upper bound on $\hat{v}_{k, 0}$, we can choose one of the following upper bound.
  \begin{itemize}
    \item When $\sqrt{v_{k, 0}} \geq \epsilon$, we have $\hat{v}_{k, 0} \leq 2 \sqrt{v_{k, 0}}$. Then, we follow the same steps in the current proof with minor changes on the constant.
    \item When $\sqrt{v_{k, 0}}<\epsilon$, we have $\hat{v}_{k, 0} \leq 2 \epsilon$. This step, again, decouple the statistical dependency between $\nabla f_{\tau k, 0}(x)$ and $\sqrt{v_{k, 0}}$. It changes Adam into SGD and thus simplifies the proof. Many technical lemmas could be skipped in this case.
  \end{itemize}
  \item Whenever we want a lower bound on $\hat{v}_{k, 0}$, we have $\hat{v}_{k, 0} \geq \epsilon$. This step decouples the statistical dependency between $\nabla f_{\tau k, 0}(x)$ and $\sqrt{v_{k, 0}}$. It changes Adam into SGD and thus simplifies the proof. Many technical lemmas could be skipped in this case.
\end{itemize}


\end{document}